\documentclass{article}

\usepackage[margin=1.4in]{geometry}

\usepackage{ulem}

\usepackage{natbib}
\PassOptionsToPackage{numbers, sort&compress}{natbib}

\usepackage[dvipsnames]{xcolor}

\usepackage{graphicx}
\usepackage{subcaption}
\usepackage{xfrac}
\usepackage{old-arrows}

\usepackage{empheq}

\usepackage{commath}
\usepackage{mathtools}
\usepackage{bm}
\mathtoolsset{showonlyrefs}
\usepackage{algorithm}
\usepackage{algorithmic}
\usepackage[dvipsnames]{xcolor}

\usepackage{enumitem}
\usepackage[utf8]{inputenc} % allow utf-8 input
\usepackage[T1]{fontenc}    % use 8-bit T1 fonts
\usepackage{hyperref}       % hyperlinks
\usepackage{url}            % simple URL typesetting
\usepackage{booktabs}       % professional-quality tables
\usepackage{amsfonts}       % blackboard math symbols
\usepackage{nicefrac}       % compact symbols for 1/2, etc.

\usepackage{microtype}      % microtypography
\usepackage{xcolor}         % colors
\usepackage{xfrac}
\usepackage{wrapfig}
\usepackage{enumitem}
\usepackage{amsmath}
\usepackage{amssymb}
\usepackage{enumitem}

\newlist{assumptions}{enumerate}{1}
\setlist[assumptions]{label=(A\arabic*), ref=A\arabic*} % or ref=(A\arabic*) if you want the parens in \ref

\usepackage{hyperref}
 \hypersetup{
     colorlinks=true,
     linkcolor=blue,
     filecolor=blue,
     citecolor = blue,      
     urlcolor=magenta,
 }

\usepackage{amssymb}

\usepackage{amsthm}
\usepackage{amsmath}
\usepackage{mathtools}
\mathtoolsset{showonlyrefs}
\usepackage{dsfont}
\usepackage{amssymb}
\usepackage{latexsym}
\usepackage{verbatim}
\usepackage{xfrac}
\usepackage{physics}
\usepackage{enumitem}
\usepackage{stmaryrd}

\usepackage{bbm}

\usepackage{amsthm}

{\begin{list}               %    with flush left bullets.
    {$\bullet$ \hfill}{
        \setlength{\leftmargin}{\parindent}
        \setlength{\parsep}{0.04\baselineskip}
        \setlength{\itemsep}{0.5\parsep}
        \setlength{\labelwidth}{\leftmargin}
        \setlength{\labelsep}{0em}}
    }
{\end{list}}

\providecommand{\cref}[1]{Chapter~\ref{chap:#1}}

\providecommand{\R}{\ensuremath{\mathbb{R}}}
\providecommand{\C}{\ensuremath{\mathbb{C}}}

\providecommand{\norm}[1]{\lVert#1\rVert}

\providecommand{\inprod}[1]{\left\langle#1\right\rangle}

\providecommand{\vect}[1]{\mathrm{vec}\left[#1\right]}

\newcommand{\E}{\mathbb{E}}
\renewcommand{\P}[1]{\mathbb{P}\left[#1\right]}
\newcommand{\Var}[1]{\mathrm{Var}\left[#1\right]}
\newcommand{\Ea}[1]{\E\left[#1\right]}
\newcommand{\Eb}[2]{\E_{#1}\left[#2\right]}
\newcommand{\module}[1]{\left|#1\right|}
\newcommand{\length}[1]{\mathcal{P}_{#1}}

\newcommand{\polylog}{\mathrm{polylog}}

\providecommand{\spec}[1]{\mathrm{spec}\left[#1\right]}
\providecommand{\smi}{{\setminus i}}
\providecommand{\soi}{{(i)}}
\providecommand{\so}{{(\cdot)}}

\providecommand{\egen}{\mathcal{E}_{\mathrm{gen}}}

\providecommand{\dist}[1]{\mathrm{dist}\left(#1\right)}

\providecommand{\Ia}{\mathds{1}_{\mathcal{A}}}
\providecommand{\Iai}{\mathds{1}_{\mathcal{A}_\smi}}

\newcommand{\prox}{\mathrm{prox}}

\newcommand{\sign}{\mathrm{sign}}
\newcommand{\x}{\boldsymbol{x}}

\newcommand{\IF}{\mathrm{IF}}
\newcommand{\DFBETA}{\mathrm{DFBETA}}
\newcommand{\Cook}{\mathrm{C}}

\newcommand{\enm}[1]{\llbracket #1\rrbracket}

\DeclareMathOperator*{\argmin}{arg\,min}

\theoremstyle{plain}

\theoremstyle{plain}
\newtheorem{theorem}{Theorem}[section]
\newtheorem{proposition}[theorem]{Proposition}
\newtheorem{lemma}[theorem]{Lemma}

\theoremstyle{plain}
\newtheorem{definition}[theorem]{Definition}
\newtheorem{assumption}[theorem]{Assumption}
\theoremstyle{plain}
\newtheorem{remark}[theorem]{Remark}
\theoremstyle{plain}
\newtheorem{conjecture}[theorem]{Conjecture}

\usepackage{authblk}

\title{Influence Diagnostics in  High-dimensional M-estimation: \\
Precise Asymptotics }

\author{Hugo Cui}

\affil{\normalsize  Université Paris-Saclay, CNRS, Laboratoire de mathématiques d’Orsay, 91405, Orsay, France
}

\date{}

\begin{document}

\maketitle

\begin{abstract}
   The impact of a given training point on a statistical model is classically measured through its \textit{leave-one-out influence}, which quantifies the effect of its removal from the training set on the model accuracy. While the statistics of leave-one-out influences are well understood in the low-dimensional, large sample limit $n\to \infty, d=O(1)$, they become more intricate in high dimensions, as the influence of a given sample develops non-trivial dependencies on all other training samples. For convex M-estimation under Gaussian design, in the high-dimensional limit $n\asymp d$, we show that the distribution of the influences across the training set converges to a limiting measure which we sharply characterize. Building on these results, we provide evidence that influential samples tend to lie close to the decision boundary, thereby making contact with a standard data selection heuristic in active learning. 
\end{abstract}

\tableofcontents

\section{Introduction}

How does a given training sample ultimately shape the predictions of a statistical model? Given a dataset $\mathcal{D}=\{x_i, y_i\}_{i\in\enm{n}}$ with $n$ covariate/label pairs, the relevance of the $i-$th sample $(x_i, y_i)$ may be most naturally captured by its \textit{leave-one-out influence} $\IF_i$, defined as the difference in test error or model parameters when training on  $\mathcal{D}\setminus (x_i,y_i)$ (from which the sample was deleted), rather than $\mathcal{D}$. The leave-one-out influence thus isolates the impact of a sample on the trained model, affording a highly interpretable measure of importance. Perhaps then unsurprisingly, influences constitute a fundamental explainability metric at the confluence of several fields. Initially formalized in the seminal works of \cite{hampel1974influence, cook1979influential, cook1982residuals, cook1980characterizations, hampel86robust} as a diagnostic to probe model robustness or identify outliers, influence diagnostics are also used for data selection \citep{ting2018optimal} as a natural proxy for informativeness. In applied machine learning, influence metrics are leveraged to explicate black-box neural network predictions  \citep{koh2017understanding, barshan2020relatif} across a broad section of modern deep learning, from diffusion models \citep{mlodozeniec2025influence} to large language models \citep{schioppa2022scaling}.\\

From a mathematical standpoint, the properties of leave-one-out influences are rather well understood in the classical low-dimensional, large sample limit $n\to\infty, d=O(1)$, in which the influence of a given data point converges to a deterministic function (the \textit{influence function} \citep{hampel1974influence, hampel86robust, AvellaMedina2017InfluenceFF}), that depends solely on the considered sample.  In stark contrast, many modern machine learning models rather operate in a \textit{high-dimensional regime} where the ambient dimension $d$ of the covariates is typically  \textit{comparably large} to the number of samples $n$. These regimes pose unique statistical challenges \citep{johnstone2009statistical, maleki2026high, sur2019modern}. In particular, the standard convergence of leave-one-out influences breaks down, as the influence of an individual sample develops intricate statistical dependencies on \textit{all the other training samples}. %As will be evidenced in this work, the statistics of leave-one-out influences consequently exhibit a starkly different behavior in high-dimensional regimes, as do many other classical statistical objects \citep{johnstone2009statistical, sur2019modern}.
As a simple illustration of this distinct behavior, in contrast to the $d\ll n$ regime,  the influence of a given point crucially retains when $d\gtrsim n$ a \textit{non-trivial dependence }on the sample complexity $\sfrac{n}{d}$. These complex correlations make the statistical analysis of influences in high dimensions a challenging task, and the latter has thus remained a largely uncharted question.\\

The present work addresses this question in the context of the classical problem of high-dimensional M-estimation with convex loss, under isotropic Gaussian design \citep{karoui2013asymptotic, el2018impact, donoho2016high, thrampoulidis2018precise, sur2019modern}. In this framework, we precisely characterize the distribution of leave-one-out influences within the training set. More precisely, our \textbf{main contributions} are the following:
\begin{itemize}
    \item In the high-dimensional regime $d\asymp n$, we prove that the distribution of the leave-one-out influence of a data point on the final test error weakly converges to a limiting distribution, which we sharply characterize. The latter can be compactly expressed as the pushforward of a four-dimensional Gaussian distribution through a non-linear map that we specify. 
    \item We derive a stronger result for the DFBETA metric \citep{belsley1980regression}, which quantifies the change in model parameters induced by the deletion of a given sample from the training set. The empirical distribution of DFBETAs across the training set is shown in the limit $n\asymp d$ to concentrate to a limiting measure, which we again precisely characterize.
    \item These results suggest that the average influence of a sample decreases with its distance to the decision boundary, putting on a principled footing well-known heuristics in active learning \citep{tong2001support, roth2006margin, wang2014new}. We further qualitatively validate these findings in real-data experiments.
\end{itemize}

\subsection{Related works}

\paragraph{Influence functions ---}  Leave-one-out deletion diagnostics find their roots in the robust statistics literature, where they were first developed in the seminal works of \cite{cook1979influential, belsley1980regression, cook1980characterizations, cook1982residuals} for outlier detection in linear regression, and later extended by \cite{johnson1985influence} for logistic regression. Asymptotically as $n\to \infty$, these sample leave-one-out estimators converge to \textit{influence functions} \citep{ hampel86robust}, which formalize the notion of a first derivative, when perturbing the (population) data distribution along the direction of the probed sample \citep{hampel1974influence, chatterjee1986influential}. This formalism also makes particularly clear the connection with 
re-weighting diagnostics  \citep{pregibon1981logistic, cook1986assessment, thomas1990assessing} (where a sample is infinitesimally downweighted in the empirical risk, rather than deleted), which in turn inspired methods for the computationally efficient approximation of leave-one-out influences in deep learning models, impulsed in particular by the work of \cite{koh2017understanding}. Scarcer on the other hand is the understanding of the statistical properties of leave-one-out influences in \textit{high-dimensional} regimes $d \gtrsim n$, in which case the aforementioned convergence to influence functions ceases to hold. We refer the interested reader to \cite{fisher2022statistical} for non-asymptotic bounds in the re-weighting case.
Since many problems in applied statistics and machine learning are set in such high-dimensional regimes \citep{johnson1985influence, maleki2026high}, this gap in comprehension poses significant barriers to a principled understanding and usage of influence diagnostics in such settings. The need for high-dimensional analyses of influence and related questions was, in fact, underlined as early as \cite{huber1973robust}, and we refer the interested reader to \cite{sur2019modern} for additional discussion on a few surprises that arise in high-dimensional statistics, as opposed to the classical $n\gg d $ regime.
The present manuscript contributes to bridging this gap in the proportional regime $n\asymp d$, for the celebrated problem of high-dimensional convex M-estimation under Gaussian design.

\paragraph{High-dimensional M-estimation ---} A now rather mature body of works has been devoted to the study of M-estimators, namely Empirical Risk Minimization (ERM) problems over linear models, in the proportional high-dimensional regime $n\asymp d$, for Gaussian-distributed data (for instance \citep{ karoui2013asymptotic, el2018impact, donoho2016high, thrampoulidis2018precise, dobriban2018high, sur2019modern, bellec2025observable}), making it a particularly natural testbed to investigate the high-dimensional behavior of leave-one-out influences. Further even, from a technical standpoint, leave-one-out procedures in fact constitute one of the key \textit{proof methods} in the analysis of such problems. Formalized in particular in the works of \cite{karoui2013asymptotic, el2013robust, el2018impact}, leave-one-out techniques are formally leveraged to disentangle the statistical dependency of the estimator on any individual data point. These studies hence naturally lay the groundwork for the analysis of influence functions. The present work builds upon these foundations, and develops these ideas to characterize the high-dimensional distribution of leave-one-out influence.\\

The remainder of the manuscript is organized as follows. Section \ref{sec:asymptotics} expounds our main technical contribution, namely a sharp characterization of the limiting influence distribution in the high-dimensional regime $n\asymp d$, for convex M-estimators under Gaussian design. The implications of these results are subsequently explored in section \ref{sec:selection}, where the limiting  distributions are marginalized to determine how the influence depends, on average, on simpler geometric descriptors, such as the distance to the decision boundary, uncovering in which regions influential data points tend to lie. In doing so, we underline connections with well-known data selection strategies in active learning, which seek to identify informative samples.

\section{High-dimensional asymptotics of leave-one-out influences}\label{sec:asymptotics}

\subsection{ERM under Gaussian design}

We consider throughout the manuscript the problem of ERM with a linear model on a training set $\mathcal{D}=\{x_i,y_i\}_{i\in\enm{n}}$, with $n$ data points. Following a long stream of works in high-dimensional M-estimation  \citep{ el2013robust, donoho2016high, thrampoulidis2018precise, sur2019modern}, we assume that the samples follow a Gaussian distribution $x_i\sim \mathcal{N}(0_d,I_d)$ in dimension $d$, while the corresponding labels are given by
\begin{align}
    \label{eq:label_gen}y_i=\phi(\inprod{x_i, \beta}),
\end{align}
for a unit-norm ground-truth vector $\beta\in \mathbb{S}^{d-1}$, and a link function $\phi:\R \to \R$. For simplicity, we assume that the labels depend deterministically on the covariate, and defer a discussion of stochastic or noisy settings to Appendix \ref{app:extension}. 
Given the training set $\mathcal{D}$, we now consider the \textit{ridge-regularized M-estimator}
\begin{align}
    \hat{w}=\argmin_{w\in\R^d} \frac{1}{n}\sum\limits_{i\in\enm{n}} \ell(\inprod{x_i, w}, y_i) +\frac{\lambda}{2}\norm{w}^2,\label{eq:full_ERM}
\end{align}
where $\ell:\R^2\to \R_+$ is a loss function convex in its first variable, and $\lambda >0$ designates the regularization strength. $\ell$ is further assumed to admit bounded derivatives up to order four. The performance of the learned model may then be quantified by the \textit{test error}
\begin{align}\egen[\hat{w}]=\Eb{x,y}{L(\inprod{x, \hat{w}},y)}=:\mathcal{E}(\norm{\hat{w}}^2, \inprod{\hat{w}, \beta}),\label{eq:test_error}
\end{align}
which measures the average discrepancy between the model prediction $\inprod{x,\hat{w}}$ and the true label $y$ on a fresh test sample $(x,y)$, according to a cost metric $L:\R^2\to \R_+$. The second equality builds on the observation that since $x$ is Gaussian-distributed, the test error $\egen$ is in fact a function $\mathcal{E}:\R_+\times \R\to \R_+$ of the sole statistics $\norm{\hat{w}}^2, \inprod{\hat{w}, \beta}$. In the following, when considering binary classification, the test error $\egen$ \eqref{eq:test_error} is understood to be the \textit{misclassification error}, namely the probability that a fresh test sample is wrongly classified by the trained model. Formally, $L(z,y)=\mathds{1}[\sign(z)\ne y]$, and correspondingly $\mathcal{E}(q,m)=\sfrac{1}{\pi}\arccos{\sfrac{m}{\sqrt{q}}}$. When considering regression tasks, we instead adopt the standard \textit{mean squared error} metric $L(z,y)=(z-y)^2$, for which $\mathcal{E}(q,m)=1+q-2m$. We refer the interested reader to subsection \ref{subsec:notations} for a detailed list of technical assumptions on the loss function $\ell(\cdot, \cdot)$ \eqref{eq:full_ERM} and the test error metric $\mathcal{E}(\cdot, \cdot)$ \eqref{eq:test_error}, of which we highlighted above only the most salient few.
\\

For any index $i\in\enm{n}$, one may similarly define the \textit{leave-i-out} estimator $\hat{w}_\soi$, which minimizes the empirical risk 
\begin{align}\label{eq:soi_ERM}
    \hat{w}_\soi=\argmin_{w\in\R^d} \frac{1}{n-1}\sum\limits_{j\in\enm{n}\smi } \ell(\inprod{x_j, w}, y_j) +\frac{\lambda}{2}\norm{w}^2,
\end{align}
on the dataset $\mathcal{D}\setminus (x_i, y_i)$ from which the $i-$th sample $(x_i, y_i)$ has been deleted. Observe that the sum of individual losses in the leave-$i$-out ERM \eqref{eq:soi_ERM} now needs to be normalized by $\sfrac{1}{n-1}$, instead of $\sfrac{1}{n}$ as in the full ERM \eqref{eq:full_ERM}, so as to reflect the actual number of training samples, after deletion of $(x_i, y_i)$. Albeit subtle, this change does induce non-negligible contributions to influence metrics. We refer interested readers to additional related comments in \cite{walker1988influence}. We similarly denote by $\egen[\hat{w}_\soi]$ the test error achieved by the leave$-i-$ out estimator.\looseness=-1 

\subsection{Influence metrics}

\paragraph{Test error influence ---}The leave-one-out influence of the $i-$th sample on the accuracy of the learned model may then be evaluated by comparing the achieved test errors when $(x_i, y_i)$ is included in, or conversely excluded from, the training set. The sample \textit{leave-one-out} influence of sample $i$ on the test error $\IF_i$ may then be defined as
\begin{align}\label{eq:def_IF}
    \IF_i=n\left(
    \egen[\hat{w}]-\egen[\hat{w}_\soi]
    \right).
\end{align}
In the above display, the normalization by $n$ in the definition of $\IF_i$ makes it an order one quantity, as the effect of the deletion of a sample among the $n$ comprised in the dataset is generically expected to be of order $\sfrac{1}{n}$. The error influence $\IF_i$ measures the loss (or gain) in accuracy caused by the removal of the sample $(x_i, y_i)$ from the training set :
\begin{itemize}
    \item A \textit{negative} $\IF_i$ indicates that the $i-$th sample is \textit{helpful} in the learning of the model, as its deletion leads to an increase in test error.
    \item Conversely, a \textit{positive} $\IF_i$ signals that the sample is \textit{misguiding} the model, and its deletion allows for a decrease in test error.
\end{itemize}

\paragraph{DFBETA ---} The test error influence $\IF$ \eqref{eq:def_IF} measures the sensitivity of the accuracy of the estimator on individual samples. For completeness, the present manuscript also covers another standard deletion diagnostic, the difference in betas (DFBETA), initially introduced for linear regression by \cite{belsley1980regression}. DFBETAs measure the magnitude of the change \textit{in estimator}, in Euclidean distance, following the deletion of a sample. For any $i\in\enm{n}$, it is concisely defined as
\begin{align}\label{eq:def_DFBETA}
    \DFBETA_i=n\norm{\hat{w}-\hat{w}_\soi}^2.
\end{align}

\paragraph{Cook's distance ---} Finally, we briefly mention the existence of another standard deletion diagnostic, namely Cook's distance \citep{cook1979influential}, which measures the change in \textit{train-set predictions} when a sample is deleted, normalized by the squared training error. Formally:
\begin{align}
    \Cook_i=\frac{n\norm{X\left(\hat{w}-\hat{w}_\soi\right)}^2}{\norm{X\hat{w}-y}^2},\label{eq:def_Cook}
\end{align}
denoting by $X\in\R^{n\times d}$ the data matrix of vertically stacked $\{\x_i\}_{i\in\enm{n}}$, and $y\in\R^n$ the vector of associated labels.
Because Cook's distance is tailored for the special case of linear regression, we choose not to discuss it in depth, and postpone its study to Remark \ref{remark:Ridge}.

\subsection{Asymptotics of leave-one-out influences}

In order to analyze the influence metrics \eqref{eq:def_IF} and \eqref{eq:def_DFBETA}, it is hence essential tofirst develop a fine understanding of the joint statistics of the full ERM estimator $\hat{w}$ \eqref{eq:full_ERM}, and the leave-$i$-out estimate $\hat{w}_\soi$ \eqref{eq:soi_ERM}, for any index $i\in\enm{n}$. A closed-form expression of the corresponding discrepancy  $\hat{w}-\hat{w}_\soi$ is known since the foundational works of \cite{karoui2013asymptotic, el2013robust, el2018impact}, with earlier related results emanating from the works of \cite{hampel1974influence, pregibon1981logistic, cook1982residuals}. In the notations of the present work, and incorporating minor additional technical results developed in Appendix \ref{app:influence}, this expression can be compactly written as
\begin{align}\label{eq:classical_LOO}
    \hat{w}=\hat{w}_\soi -\frac{1}{n}\partial \ell(\inprod{\hat{w}, x_i}, y_i) H_\soi^{-1} x_i+ O_{L_k}\left(\frac{\polylog(n)}{n}\right).
\end{align}
In the above display, the $O_{L_k}\left(\sfrac{\polylog(n)}{n}\right)$ notation hides a random vector whose norm possesses a $k-$th moment bounded by $\left(\sfrac{\polylog(n)}{n}\right)^k$, for any integer $k$. We introduced the \textit{leave-$i-$out } \textit{Hessian} matrix
\begin{align}
    H_\soi=\frac{1}{n-1}\sum\limits_{j\in\enm{n}\smi} \partial^2\ell\left(\inprod{\hat{w}_{(j)}, x_i}, y_i\right)x_jx_j^\top +\lambda I_d,
\end{align}
where $\partial$ denotes the derivative of the loss $\ell(\cdot, \cdot)$ with respect to its first argument.\\

In simple terms, the leave-one-out relation  \eqref{eq:classical_LOO} captures the effect of adding (or equivalently, removing) a data point $x_i$ to the training set. This effect is proportional to the change in loss induced by this addition, as measured by the first derivative $\partial \ell(\inprod{\hat{w}, x_i}, y_i) $, and is mediated by the Hessian matrix $H_\soi$, which captures the geometry of the local loss landscape around the estimate $\hat{w}_{(i)}$. Perturbation aligning with flatter directions (namely eigendirections of largest eigenvalue of the inverse Hessian $H_{(i)}^{-1}$) are thus conducive to the largest change in estimate $\hat{w}-\hat{w}_\soi$. The probabilistic behavior of the leave-$i-$out correction $\sfrac{1}{n}\partial \ell(\inprod{\hat{w}, x_i}, y_i) H_\soi^{-1} x_i$ \eqref{eq:classical_LOO} varies sizably depending on the relative scalings of the number of samples $n$ (which dictates the value of the normalization $\sfrac{1}{n}$), and the ambient dimension $d$ (which controls the size of the random variables $H_\soi^{-1}, x_i$). It thus proves instructive to contrast briefly at this point the different asymptotic regimes, and highlight the challenges of high-dimensional settings.\\

\paragraph{Finite dimensions, large sample limit --- }In the classical large sample asymptotics $n\to\infty$ with finite covariate dimension $d=O_n(1)$, the estimator $\hat{w}$ tends to the minimizer of the population risk, while the Hessian $H_{(i)}$ converges to a deterministic limit. Both random variables thus become asymptotically deterministic, and lose all dependency on the realization or size of the training set $\mathcal{D}=\{x_j, y_j\}_{j\in \enm{n}}$. As a result, the leave-one-out correction $\hat{w}-\hat{w}_\soi$ \eqref{eq:classical_LOO} becomes a simple, \textit{deterministic} function of $x_i, y_i$ \textit{alone}, amenable to easy characterization \citep{hampel1974influence}. Its norm besides vanishes at a fast $O_n(\sfrac{1}{n})$ rate with high probability. Most earlier analyses of influence \citep{ hampel86robust, tsiatis2006semiparametric, ting2018optimal} were developed with this large-sample asymptotic regime in mind.\\

\paragraph{High-dimensional asymptotics --- }The situation is starkly different in the high-dimensional \textit{proportional regime} where \textit{both} $n,d$ diverge $n, d\to \infty$, while remaining comparable, namely $\alpha :=\sfrac{n}{d}=\Theta(1)$. This asymptotic limit is a somewhat more accurate reflection of the regime in which a number of modern machine learning models operate, and is the object of a growing body of works, reviewed in e.g. \cite{maleki2026high}. Among those, \cite{karoui2013asymptotic, el2018impact, donoho2016high, thrampoulidis2018precise} notably laid the foundations for the study of ERM problems as $n\asymp d$. In this regime, the random variable $\sfrac{1}{n}\partial \ell(\inprod{\hat{w}, x_i}, y_i) H_\soi^{-1} x_i$ \eqref{eq:classical_LOO} notably becomes of order $O(\sfrac{1}{\sqrt{n}})$ in norm, and displays a richer probabilistic behavior. On the one hand,  the estimator $\hat{w}$ is generically biased \citep{sur2019modern} and bounded away from the population estimator at $O(1)$ distance, and retains at leading order a non-trivial dependence on the realization of the training set $\mathcal{D}$. By the same token, the rich statistics of random matrices such as the Hessian $H_\soi$ in the limit $n\asymp d$ have been highlighted in a long stream of works, of which we can for instance mention \citep{marvcenko1967distribution, silverstein1995empirical, dobriban2018high}. The leave-$i-$out correction $\sfrac{1}{n}\partial \ell(\inprod{\hat{w}, x_i}, y_i) H_\soi^{-1} x_i$ thus remains at leading order a random variable \textit{that retains a dependency on the full dataset} $\mathcal{D}$, as opposed to the sole sample $x_i$ in the finite-dimensional regime. As a consequence, it depends in particular on the size of the training set, a feature absent in the classical regime.
As we shall establish more precisely below, the leave-$i-$out correction are characterized by a complex distribution, resulting from the intricate interaction between the random vectors $\hat{w}, \hat{w}_\soi, x_i$, mediated by the random matrix $H_\soi$. These complex statistics in turn induce a non-trivial distribution for the sample influences $\{\IF_i, \DFBETA_i\}_{i\in\enm{n}}$ \eqref{eq:def_IF},\eqref{eq:def_DFBETA}. Unraveling those intertwined correlations and sharply characterizing the limiting distribution of the sample influences constitutes the main technical contribution of the present work, which we expound in the following subsection.

\subsection{Main technical results}

We are now in a position to state our main technical result, namely a sharp characterization of the limiting distribution of the leave-one-out influences $\IF_i$ \eqref{eq:def_IF} and $\DFBETA_i$ \eqref{eq:def_DFBETA}, in the high-dimensional regime $n\asymp d$, for convex M-estimation under Gaussian design \eqref{eq:full_ERM}.  \\

The first result describes the marginal distribution of the test error influence $\IF$, and answers the following question: \textit{what is the law of the test error influence of a data point sampled uniformly at random from the training set} ? The following Theorem establishes that this marginal distribution converges weakly to a limiting measure, which can be compactly expressed as the pushforward of a Gaussian distribution in dimension four.

\begin{theorem}[Weak convergence of the influence distribution]\label{thm:main_IF}
    Let $\{x_i, y_i\}_{i\in\enm{n}}$ be the dataset, and $\{\IF_i\}_{i\in\enm{n}}$ the corresponding leave-one-out influences on the test error \eqref{eq:def_IF} for the ERM \eqref{eq:full_ERM}. For any sequence of indices $i_n\in \enm{n}$, in the asymptotic limit $n,d\to \infty$ with fixed ratio $\alpha:=\sfrac{n}{d}$, the marginal distribution of $\IF_{i_n}$ converges weakly to
    \begin{align}
        \nu_{\IF} \rightharpoonup \varphi_{\IF} ~\sharp ~\mathcal{N}(0_4, Q), \label{eq:pushforward}
    \end{align}
in the sense that for any Lipschitz test function $f:\R\to \R$, 
\begin{align}
    \Eb{ z\sim \nu_{\IF} }{f(z)}=\Eb{z\sim\varphi_{\IF} \sharp \mathcal{N}(0_4, Q) }{f(z)}+O\left(\frac{\polylog(n)}{n^{\frac{1}{4}}}\right).
\end{align} In the above displays, the map $\varphi_{\IF}:\R^4\to \R$ is defined as
\begin{align}
&
    \varphi_{\IF}(g)=\\&\frac{\prox_{V^{(1)}\ell(\cdot, g_2)}(g_1)-g_1}{V^{(1)}}\left[\partial_1\mathcal{E}(Q^{(0)}_{12},Q^{(0)}_{11}) g_4+
        \partial_2\mathcal{E}(Q^{(0)}_{12},Q^{(0)}_{11}) \left(2g_3+\frac{\prox_{V\ell(\cdot, g_2)}(g_1)-g_1}{V^{(1)}}V^{(2)}\right)
        \right]\\
        &- 2\lambda \partial_2\mathcal{E}(Q^{(0)}_{12},Q^{(0)}_{11}) Q^{(1)}_{11}
        - \lambda \partial_1\mathcal{E}(Q^{(0)}_{12},Q^{(0)}_{11}) Q^{(1)}_{12}\label{eq:phi_IF}.
\end{align}
The covariance $Q \in \R^4$ denotes the block matrix 
\begin{align}
    Q=\left[\begin{array}{ccc}
        Q^{(0)} &\vline &Q^{(1)}\\
        \hline
        Q^{(1)} &\vline &Q^{(2)}
    \end{array}\right].\label{eq:Q}
\end{align}
The matrices $Q^{(k)}$ and scalars $V^{(k)}$ for $k=0,1,2$ are related through
\begin{align}
    &Q^{(k)}=\frac{-1}{2\pi i}\oint_\Gamma \frac{1}{z^k} \Omega(z) dz+O\left(\polylog(n)e^{-\sfrac{n}{2}}\right), \label{eq:Qk}\\
    &V^{(k)}=\frac{-1}{2\pi i}\oint_\Gamma \frac{1}{z^k} \mathcal{V}(z) dz +O\left(\polylog(n)e^{-\sfrac{n}{2}}\right), \label{eq:Vk}
\end{align}
to resolvents $\Omega:\C\to \C^2$ and $\mathcal{V}:\C\to \C$. In \eqref{eq:Qk}, \eqref{eq:Vk}, the contour $\Gamma$ is chosen in $ \{z\in \C: \Re{z}>0\}$ so as to enclose the interval $J=\left[
    \lambda, \lambda +\norm{\partial^2\ell}_\infty\left(2+\sqrt{\sfrac{1}{\alpha}}\right)^2
    \right]$ at a distance $\dist{\Gamma, J}\ge \sfrac{\lambda}{2}$, while also satisfying $\dist{\Gamma, 0} \ge \sfrac{\lambda}{2}$.  The resolvent $\Omega(\cdot)$ furthermore converges pointwise for any $z\in\Gamma$ with $\Im{z}\ne 0$ as
\begin{align}\label{eq:charac_Omega}
  &\Omega(z)\left[ (\lambda-z)I_2+\Ea{\frac{\partial^2\ell(r,g_2) (Q^{(0)})^+\begin{bmatrix}
        g_1\\g_2
    \end{bmatrix}\begin{bmatrix}
      r & g_2
    \end{bmatrix}}{1+\partial^2\ell(r, g_2)\mathcal{V}(z)}} \right]\\
    &=Q^{(0)}+\chi(z)\Ea{\frac{\partial^2\ell(r,g_2)\partial\ell(r, g_2) \begin{bmatrix}
       r&g_2\\
       0&0
    \end{bmatrix}}{1+\partial^2\ell(r, g_2)\mathcal{V}(z)}}+O\left(\frac{\polylog(n)}{n^{\frac{1}{4}}}\right),
\end{align}
where $(Q^{(0)})^+$ denotes the Moore-Penrose pseudo-inverse of $Q^{(0)}$.
In the above display,
\begin{align}\label{eq:charac_chiz}
   \chi(z)=\frac{V^{(1)}}{(\lambda-z)+\Ea{\frac{\partial^2\ell(r,g_2)}{\left(1+V^{(1)}\partial^2\ell(r,g_2)\right)\left(1+\mathcal{V}(z)\partial^2\ell(r,g_2)\right)}
}}+ O\left(\frac{\polylog(n)}{n^{\frac{1}{4}}}\right)
\end{align}
and the Stieltjes transform $\mathcal{V}(\cdot)$ satisfies for all $z\in \Gamma, \Im{z}\ne 0$ the self-consistent equation 
\begin{align}\label{eq:charac_Vz}
   \frac{1}{\alpha}-(\lambda-z)\mathcal{V}(z)-\Ea{\frac{\partial^2\ell\left(r,g_2\right)\mathcal{V}(z)}{1+\partial^2\ell\left(r,g_2\right)\mathcal{V}(z)}}=O\left(\frac{\polylog(n)}{n^{\frac{1}{4}}}\right).
\end{align}
Across the above displays, we employed the shorthand $r=\prox_{V^{(1)}\ell(\cdot, g_2)}(g_1)$. 
\end{theorem}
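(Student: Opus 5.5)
We will expand the influence to first order around the leave-$i$-out estimator, and then reduce everything to a four-dimensional Gaussian vector whose covariance is computed by random-matrix contour integrals.

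\textbf{Step 1 (linearization of the influence).} Start from the leave-one-out relation \eqref{eq:classical_LOO}. Write $\Delta_i=\hat w-\hat w_\soi=-\frac1n\partial\ell(\inprod{\hat w,x_i},y_i)H_\soi^{-1}x_i+O_{L_k}(\polylog(n)/n)$. Account for the $\sfrac{1}{n-1}$ versus $\sfrac1n$ normalization by comparing stationarity conditions. This contributes a deterministic drift $-\frac{\lambda}{n}H_\soi^{-1}\hat w_\soi$, which is the origin of the $\lambda Q^{(1)}$ terms in $\varphi_{\IF}$. Then Taylor expand $\mathcal E$ to second order:
\begin{align}
n\big(\egen[\hat w]-\egen[\hat w_\soi]\big)=n\,\partial_1\mathcal E\,\big(\norm{\hat w}^2-\norm{\hat w_\soi}^2\big)+n\,\partial_2\mathcal E\,\inprod{\Delta_i,\beta}+O(n^{-1/2}\polylog n),
\end{align}
with $\norm{\hat w}^2-\norm{\hat w_\soi}^2=2\inprod{\hat w_\soi,\Delta_i}+\norm{\Delta_i}^2$. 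Note that $n\norm{\Delta_i}^2$ is $O(1)$ and must be kept. This produces the quantities $x_i^\top H_\soi^{-1}\hat w_\soi$, $x_i^\top H_\soi^{-1}\beta$, $x_i^\top H_\soi^{-2}x_i/n$, $\hat w_\soi^\top H_\soi^{-1}\hat w_\soi$ and $\beta^\top H_\soi^{-1}\hat w_\soi$. The argument mismatch between the two paper-notation slots of $\mathcal E$ only needs bookkeeping. The scalar $\partial\ell(\inprod{\hat w,x_i},y_i)$ is rewritten through the standard proximal identity $\inprod{\hat w,x_i}=\prox_{c_i\ell(\cdot,y_i)}(\inprod{\hat w_\soi,x_i})$ with $c_i=\frac1n x_i^\top H_\soi^{-1}x_i$. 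This gives the factor $(\prox(g_1)-g_1)/V^{(1)}$.

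\textbf{Step 2 (Gaussian conditioning).} Since $x_i$ is independent of $(\hat w_\soi,H_\soi)$, conditioning on $\mathcal D\setminus(x_i,y_i)$ makes the vector
\begin{align}
\big(\inprod{x_i,\hat w_\soi},\ \inprod{x_i,\beta},\ \inprod{x_i,H_\soi^{-1}\hat w_\soi},\ \inprod{x_i,H_\soi^{-1}\beta}\big)
\end{align}
exactly Gaussian. Its covariance is the Gram matrix of $\hat w_\soi,\beta,H_\soi^{-1}\hat w_\soi,H_\soi^{-1}\beta$, which yields the block structure $Q^{(0)},Q^{(1)},Q^{(2)}$. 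Quadratic forms such as $\frac1n x_i^\top H_\soi^{-k}x_i$ concentrate on $\frac1n\operatorname{tr}H_\soi^{-k}=V^{(k)}$ by Hanson--Wright. One then has to show that the random Gram entries and traces concentrate around deterministic limits with $\polylog(n)n^{-1/4}$ error. Weak convergence against Lipschitz $f$ follows, since $\varphi_{\IF}$ is Lipschitz in $g$ on bounded sets given $\norm{\partial^2\ell}_\infty<\infty$ and prox is $1$-Lipschitz. The rate is obtained by combining the $O(n^{-1/2})$ errors with a Wasserstein bound between Gaussians whose covariances differ by $O(n^{-1/2})$, which gives the square root $n^{-1/4}$.

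\textbf{Step 3 (deterministic equivalents, the main obstacle).} Write $H_\soi^{-k}=\frac{-1}{2\pi i}\oint_\Gamma z^{-k}(H_\soi-z)^{-1}dz$. This is valid because the spectrum of $H_\soi$ lies in $J$ except on an event of probability $e^{-n/2}$, by Bai--Yin-type bounds on $\norm{X}$; that event produces the exponentially small error. It then suffices to characterize $\frac1n\operatorname{tr}(H_\soi-z)^{-1}\to\mathcal V(z)$ and the bilinear forms $\Omega(z)=[u^\top(H_\soi-z)^{-1}v]_{u,v\in\{\hat w_\soi,\beta\}}$. For $\mathcal V$, a further leave-one-out, i.e.\ Sherman--Morrison removal of each $x_j$, gives the self-consistent equation \eqref{eq:charac_Vz}. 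The effective curvature $\partial^2\ell(\inprod{\hat w_{(j)},x_j},y_j)$ is replaced by its law $\partial^2\ell(r,g_2)$ through the known high-dimensional M-estimation characterization of $(\inprod{\hat w,x_j},\inprod{\beta,x_j})$. For $\Omega$, the difficulty is that $H_\soi$ is correlated with $\hat w_\soi$, so the bilinear form is not a free-probability quantity. My plan is to use the stationarity equation $\lambda\hat w_\soi=-\frac1{n-1}\sum_j\partial\ell_j x_j$. I would expand $(H_\soi-z)\hat w_\soi$, apply Sherman--Morrison to each summand, and replace $\inprod{x_j,\hat w_\soi}$ and $\inprod{x_j,(H_{(ij)}-z)^{-1}u}$ by their leave-$j$-out Gaussian surrogates. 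This yields a closed $2\times2$ linear system. Its coefficients involve $\Ea{\partial^2\ell\,(Q^{(0)})^+[g_1,g_2]^\top[r,g_2]/(1+\partial^2\ell\mathcal V)}$ (Gaussian regression of the projections on the span of $\hat w,\beta$) and a $\partial\ell\,\partial^2\ell$ correction weighted by the auxiliary scalar $\chi(z)$. That scalar itself solves \eqref{eq:charac_chiz} via the same trick applied to $(H_\soi-z)^{-1}H_\soi^{-1}$. Controlling all the nested leave-two-out errors uniformly over $z\in\Gamma$ and over $j$, with only $\polylog(n)$ losses, is where I expect most of the work. I would first aim for $O(n^{-1/2}\polylog n)$ pointwise bounds, then extend them uniformly on $\Gamma$ by a net together with Lipschitz continuity in $z$ of resolvents at distance $\ge\lambda/2$ from the spectrum.
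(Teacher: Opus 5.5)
Your plan follows the paper's architecture. You linearize $\IF_i$ through the first-order leave-one-out surrogate plus the $\sfrac{1}{n}$ versus $\sfrac{1}{n-1}$ drift, as in Lemmas \ref{lem:beta_diff_norm}, \ref{lem:hatw_diff_norm} and \ref{lemma:IF_psi}. You use exact Gaussianity of the four projections given $X_\smi$, with a square-root loss when random covariances are swapped for deterministic ones (Lemma \ref{lem:distrib_r}). You close $\mathcal V,\chi,\Omega$ through contour integrals and Sherman--Morrison. Deriving $\Omega$ from the stationarity equation rather than from $\Upsilon^\top G(z)(H-zI_d)\Upsilon=\Upsilon^\top\Upsilon$ is an inessential variation.

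\textbf{The gap is in Step 1.} You keep the $O(1/n)$ normalization drift precisely because $n$ times an $O(1/n)$ vector, projected on a fixed direction, is $O(1)$. You then drop the $O_{L_k}(\polylog(n)/n)$ remainder of \eqref{eq:classical_LOO} under the same operation, namely multiplication by $n$ and projection on $\beta$ and $\hat w_\soi$. A norm bound only gives $O(\polylog(n))$ there, and this remainder is coherent in fixed directions. Take $\Delta=\hat w-\hat w_\smi$ (same $\sfrac1n$ normalization, so no drift) and $\delta_j=\inprod{x_j,\Delta}=O(n^{-1/2})$. Subtracting the two stationarity conditions and expanding $\partial\ell_j$ to third order gives, for $v\in\{\beta,\hat w_\smi\}$,
\begin{align}
n\inprod{v,\Delta}=-\partial\ell_i(r_i)\inprod{v,H_\smi^{-1}x_i}-\frac12\sum_{j\ne i}\partial^3\ell_j(r_{j,\smi})\,\delta_j^2\,\inprod{v,H_\smi^{-1}x_j}+O_{L_k}\left(\frac{\polylog(n)}{\sqrt n}\right).
\end{align}
Insert $\delta_j\approx-\frac1n\partial\ell_i(r_i)\,x_j^\top H_\smi^{-1}x_i$. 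The middle term becomes $-\frac12\partial\ell_i(r_i)^2\,\frac1n x_i^\top M_v x_i$, where $M_v=H_\smi^{-1}\big(\frac1n\sum_{j\ne i}\partial^3\ell_j\inprod{v,H_\smi^{-1}x_j}x_jx_j^\top\big)H_\smi^{-1}$ is independent of $x_i$ and has $\polylog(n)$-bounded operator norm. By Hanson--Wright it equals $-\frac12\partial\ell_i(r_i)^2\,T_v+o(1)$, with
\begin{align}
T_v=\frac1n\sum_{j\ne i}\partial^3\ell_j\,\frac{\norm{H_\smi^{-1}x_j}^2}{n}\,\inprod{v,H_\smi^{-1}x_j}.
\end{align}

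\textbf{Why $T_v$ does not vanish.} $T_v$ is an empirical average of order-one terms; it is minus the derivative of $\frac1n\tr[H_\smi^{-1}]$ when $\hat w_\smi$ is moved along $H_\smi^{-1}v$. Heuristically, $T_{\hat w}\approx V^{(2)}\Ea{\partial^3\ell(r,g_2)\,g_3\,(1+V^{(1)}\partial^2\ell(r,g_2))^{-3}}$ under $g\sim\mathcal N(0_4,Q)$, and $T_\beta$ is the same with $g_4$ in place of $g_3$. This is generically nonzero. For instance, for the logistic loss with $y=\sign\inprod{\beta,x}$, $\partial^3\ell(r,y)$ has the sign of $-y$ whenever $yr>0$, while $g_4$ has covariance $Q^{(1)}_{22}>0$ with $g_2$. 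So for $\partial^3\ell\not\equiv0$, Step 1 does not merely lack a justification. It appears to discard an order-one contribution
\begin{align}
-\partial\ell_i^2\big(T_{\hat w}\,\partial_q\mathcal E+\tfrac12T_\beta\,\partial_m\mathcal E\big),
\end{align}
where $\partial_q,\partial_m$ are the derivatives in the $\norm{w}^2$ and $\inprod{w,\beta}$ slots. This term has the same form and order as the $V^{(2)}\partial\ell_i^2$ term you keep.

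Two cases are unaffected. The square loss has $\partial^3\ell=0$. For DFBETA, writing $\Delta^{(1)},\Delta^{(2)}$ for the first- and second-order parts of $\Delta$, the remainder enters only through $2n\inprod{\Delta^{(1)},\Delta^{(2)}}=O(n^{-1/2})$.

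To close the argument, you must either prove that these projections vanish or carry $T_\beta$ and $T_{\hat w}$ into $\varphi_{\IF}$. The paper's Lemma \ref{lemma:IF_psi} makes the same replacement of $\hat w_\so-\hat w_\soi$ by its first-order surrogate, justified only by the norm bound of Proposition \ref{prop: LOO_approxs}. Following the paper's proof therefore does not fill this gap.
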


In words, the marginal probability distribution of the test error influence $\IF$ \eqref{eq:def_IF} can be concisely expressed in the $n\asymp d$ limit as the pushforward of a four-dimensional Gaussian density with zero mean and covariance $Q$ \eqref{eq:Q} through a non-linear map $\varphi_\IF$ \eqref{eq:phi_IF}. Both the map $\varphi_\IF$ and the covariance $Q$ are shaped by a small set of $2\times 2$ matrices $Q^{(k)}$ $(k=0,1,2)$ \eqref{eq:Qk}, and two scalars $V^{(1)},V^{(2)}$ \eqref{eq:Vk}, which capture key statistical and geometric descriptors of the ERM minimizer. For instance, for any given $i\in\enm{n}$, 
\begin{align}
    Q^{(k)}=\Ea{\begin{bmatrix}
        \hat{w}_\soi^\top \\
        \beta^\top
    \end{bmatrix}H_\soi^{-k}  \begin{bmatrix}
        \hat{w}_\soi~\vline ~\beta
    \end{bmatrix}}+O\left(\frac{\polylog(n)}{\sqrt{n}}\right)
\end{align}
captures the alignment of the leave-$i$-out estimator $\hat{w}_\soi$ with the ground truth vector $\beta$, in the geometry induced by the Hessian matrix $H_\soi$. On the other hand, the statistics 
\begin{align}
    V^{(k)}=\Ea{\frac{1}{n}\tr[H_\soi^{-k}]}+O\left(\frac{\polylog(n)}{\sqrt{n}}\right)
\end{align}
describe the tracial moments of the inverse Hessian matrix, intuitively giving a sense of the local flatness/sharpness of the landscape around the ERM minimizer.
The set of \textit{finite-dimensional} parameters $Q^{(k)}, V^{(k)}$ thus succintly subsume the asymptotic behavior of the ERMs \eqref{eq:full_ERM} and \eqref{eq:soi_ERM} in the high-dimensional limit $n\asymp d$, and are consequently classically referred to under the umbrella of \textit{summary statistics} in the exact asymptotics literature. \\

The summary statistics $Q^{(k)}, V^{(k)}$ can in turn be characterized in terms of \textit{functional} statistics $\Omega:\C\to \C^2$ and $\mathcal{V}:\C\to \C$. These complex-valued resolvents allow one to compactly encapsulate the statistics of the random Hessian matrix, in close likeness to the Stieltjes transform in random matrix theory, see e.g. \cite{silverstein1995empirical}. We note that $\mathcal{V}(\cdot)$ is in fact precisely the Stieltjes transform of the Hessian $H_\soi$. Again analogously, the resolvents $\mathcal{V}(\cdot),\Omega(\cdot)$ are characterized through self-consistent functional equations \eqref{eq:charac_Omega}, \eqref{eq:charac_chiz} and \eqref{eq:charac_Vz}. \\

The four-dimensional Gaussian distribution $\mathcal{N}(0_4, Q)$ which underlies the limiting influence distribution in fact corresponds to the weak limit (in a sense made precise in Lemma \ref{lem:distrib_r} in Appendix \ref{app:deterministic}) of the joint law of the random variables $\inprod{\beta, x_i},\inprod{\hat{w}_\soi, x_i},\inprod{\beta, H_\soi^{-1}x_i},\inprod{\hat{w}_\soi, H_\soi^{-1}x_i}$. These random variables are simple geometric descriptors that reflect how the deleted sample $x_i$ aligns within the local geometry of the leave-$i-$out problem \eqref{eq:soi_ERM}, around its minimizer $\hat{w}_\soi$. This relationship between the Gaussian components of the asymptotic influence distribution $\varphi_\IF\sharp \mathcal{N}(0_4, Q)$  \eqref{eq:pushforward} and simple geometric descriptors opens a valuable gateway to a finer analysis of the spatial dependencies of the influence. For instance, the conditional distribution of $\IF_i |\inprod{\hat{w}_\soi, x_i} $ can be qualitatively assessed by the pushforward through $\varphi_\IF$ of the Gaussian $\mathcal{N}(0_4, Q)$ \textit{conditioned on its first entry.} These ideas are explored in further detail in Section \ref{sec:selection}, in the context of the problem of active learning, in which one seeks to develop simple agnostic predictors of the influence (thus informativeness) of a given sample.\\

Theorem \ref{thm:main_IF} characterizes the asymptotic limit of the marginal distribution of the leave-one-out influence on the test error $\IF$ \eqref{eq:def_IF}, for the high-dimensional ERMs \eqref{eq:full_ERM} and \eqref{eq:soi_ERM}. A stronger convergence result can be established for the $\DFBETA$ metric \eqref{eq:def_DFBETA}, bearing this time on the \textit{empirical distribution} of $\DFBETA_i$ over training samples. Namely, the next Proposition answers the following questions: \textit{what is the histogram of DFBETA influences? What fraction of samples within the training set are helpful, hurtful, or have small influence ?}

\begin{proposition}[Concentration of the empirical DFBETA distribution] \label{prop:DFBETA}
    Let $\mathcal{D}=\{x_i, y_i\}_{i\in\enm{n}}$ be a randomly sampled dataset, and $\{\DFBETA_i\}_{i\in\enm{n}}$ be the $\DFBETA$ influences \eqref{eq:def_DFBETA} associated with the ERM \eqref{eq:full_ERM}. In the asymptotic limit $n,d\to \infty$ with fixed ratio $\alpha:=\sfrac{n}{d}$, the empirical distribution
    \begin{align}
        \hat{\nu}_D=\frac{1}{n}\sum\limits_{i\in\enm{n}}\delta_{\DFBETA_i}
    \end{align}
converges in probability to the measure
\begin{align}
    \hat{\nu}_D\xrightarrow{P} \varphi_D~\sharp~\mathcal{N}(0_2, Q^{(0)}),
\end{align}
where the map $\varphi_D:\R^2\to\R_+$ is defined by
\begin{align}
    \varphi_D(g)=\left(\frac{\prox_{V^{(1)}\ell(\cdot, g_2)}(g_1)-g_1}{V^{(1)}}\right)^2 V^{(2)}.
\end{align}
More precisely, for any twice-differentiable bounded test function $f$ with bounded derivatives, the concentration $$\Eb{z\sim \hat{\nu}_D}{f(z)}=\Eb{z\sim \varphi_D\sharp\mathcal{N}(0_2, Q^{(0)})}{f(z)}+O\left(\frac{\polylog(n)}{n^{\frac{1}{4}}}\right)$$ holds, namely empirical averages of functions of the DFBETAs concentrate at a $\sfrac{\polylog(n)}{n^{\frac{1}{4}}}$ rate. In the above displays, the summary statistics $V^{(1)}, V^{(2)}, Q^{(0)}$ admit the characterizations \eqref{eq:Qk} and \eqref{eq:Vk} of Theorem \ref{thm:main_IF}.
\end{proposition}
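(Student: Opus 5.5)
Starting from the leave-one-out expansion \eqref{eq:classical_LOO}, we write
\begin{align}
    \DFBETA_i=\frac{1}{n}\partial\ell(\inprod{\hat{w},x_i},y_i)^2\, x_i^\top H_\soi^{-2}x_i+O_{L_k}\left(\frac{\polylog(n)}{\sqrt{n}}\right).
\end{align}
The cross term between the leading correction, which is of order $n^{-1/2}$ in norm, and the remainder, which is of order $\polylog(n)/n$, becomes $O(\polylog(n)/\sqrt n)$ after multiplication by $n$. The remaining work splits into two ingredients.

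The first ingredient treats the quadratic form. Since $H_\soi$ is independent of $x_i$, Hanson--Wright concentration gives $\frac1n x_i^\top H_\soi^{-2}x_i=\frac1n\tr[H_\soi^{-2}]+O_{L_k}(\polylog(n)/\sqrt n)$. We then replace $\frac1n \tr[H_\soi^{-2}]$ by $V^{(2)}$. This uses its concentration, which follows from the bounded-difference / leave-two-out stability arguments underlying Theorem \ref{thm:main_IF}, together with the contour characterization \eqref{eq:Vk}.

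The second ingredient treats the derivative. The standard leave-one-out fixed-point relation $\inprod{\hat w,x_i}=\prox_{\tilde V_i\ell(\cdot,y_i)}(\inprod{\hat w_\soi,x_i})+o(1)$ holds with $\tilde V_i=\frac1n x_i^\top H_\soi^{-1}x_i\approx V^{(1)}$. It gives $\partial\ell(\inprod{\hat w,x_i},y_i)=\frac{g_1-\prox_{V^{(1)}\ell(\cdot,g_2)}(g_1)}{V^{(1)}}+O_{L_k}(\polylog(n)/\sqrt n)$, where $(g_1,g_2)=(\inprod{\hat w_\soi,x_i},\inprod{\beta,x_i})$ and $y_i=\phi(g_2)$. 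This step uses the Lipschitz dependence of the prox on its scale parameter. Combining both ingredients gives $\DFBETA_i=\varphi_D(g^{(i)})+\epsilon_i$ with $\Ea{|\epsilon_i|}\lesssim \polylog(n)/\sqrt n$ uniformly in $i$. Since the test function $f$ is Lipschitz, replacing each $\DFBETA_i$ by $\varphi_D(g^{(i)})$ costs $\polylog(n)/\sqrt n$ in the empirical average.

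It remains to show that $\frac1n\sum_i f(\varphi_D(g^{(i)}))$ concentrates around $\Eb{g\sim\mathcal N(0_2,Q^{(0)})}{f(\varphi_D(g))}$, and we treat the mean and the variance separately.

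\emph{Mean.} Conditionally on $\hat w_\soi$, the vector $g^{(i)}$ is exactly Gaussian with covariance $\begin{bmatrix}\norm{\hat w_\soi}^2&\inprod{\hat w_\soi,\beta}\\ \inprod{\hat w_\soi,\beta}&1\end{bmatrix}$, by independence of $x_i$ and $\hat w_\soi$. This covariance concentrates to $Q^{(0)}$ at rate $\polylog(n)/\sqrt n$, as in Lemma \ref{lem:distrib_r}. We then compare Gaussian expectations under the two covariances. This needs $f\circ\varphi_D$ to be controlled under covariance perturbation. Since $\varphi_D$ has only polynomial growth ($\partial\ell$ bounded) and $f$ is bounded with bounded derivatives, a Gaussian interpolation (Stein) argument yields an error $O(\norm{\Delta Q}^{1/2})$ at worst. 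This is the likely source of the $n^{-1/4}$ rate.

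\emph{Variance.} The main obstacle is bounding $\mathrm{Var}\big[\frac1n\sum_i f(\varphi_D(g^{(i)}))\big]$, because the $g^{(i)}$ are dependent through the shared data in the $\hat w_\soi$. We would bound the covariance $\mathrm{Cov}(F_i,F_j)$ for $i\ne j$, where $F_i=f(\varphi_D(g^{(i)}))$, by passing to the leave-two-out estimator $\hat w_{(ij)}$. Replacing $\hat w_\soi$ by $\hat w_{(ij)}$ in $F_i$, and symmetrically in $F_j$, costs $O_{L_k}(\polylog(n)/\sqrt n)$ in the inner products by \eqref{eq:classical_LOO} applied to the $(i)$ problem, using the Lipschitz property of $f\circ\varphi_D$ on the relevant range. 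Conditionally on $\hat w_{(ij)}$, the two modified quantities are independent functions of $x_i$ and $x_j$. Hence $\mathrm{Cov}(F_i,F_j)\lesssim \polylog(n)/\sqrt n$ plus the conditional-mean fluctuation, which is of order $\mathrm{Var}[\norm{\hat w_{(ij)}}^2]+\mathrm{Var}[\inprod{\hat w_{(ij)},\beta}]=O(\polylog(n)/n)$. Summing over pairs gives a variance of order $\polylog(n)/\sqrt n$. Chebyshev's inequality then yields convergence in probability, with fluctuations of order $\polylog(n)/n^{1/4}$, which matches the rate stated in Proposition \ref{prop:DFBETA}.
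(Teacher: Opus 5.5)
Your proof is correct, but it organizes the argument differently from the paper, chiefly in how concentration is obtained.

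\textbf{The paper's route.} The paper reduces $\DFBETA_i$ to $\partial\ell_i(r_{\so,i})^2V^{(2)}$, a function of the \emph{full} residual, uniformly in $i$. It then invokes an external concentration result (Lemma F.1 of \cite{cui2026asymptotic}) to replace the empirical average $\frac1n\sum_i f(\partial\ell_i(r_{\so,i})^2V^{(2)})$ by its expectation, with $O_{L_2}(\polylog(n)/\sqrt n)$ fluctuations. Only then does it convert that single expectation into the Gaussian pushforward, via the surrogate-residual approximation and Lemma \ref{lem:distrib_rtilde}. The $n^{-1/4}$ rate enters only at this last Gaussian-comparison step.

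\textbf{Your route.} You pass to the leave-one-out variables $g^{(i)}=(\inprod{\hat w_\soi,x_i},\inprod{\beta,x_i})$ for every $i$ \emph{before} averaging. This makes each summand exactly conditionally Gaussian given the other samples. Your mean step is then essentially a re-derivation of Lemmas \ref{lem:distrib_r}--\ref{lem:distrib_rtilde}, and you correctly identify the same square-root-of-covariance coupling as the source of $n^{-1/4}$. You then prove concentration from scratch by leave-two-out decoupling.

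\textbf{Comparison.} Your route is self-contained. It needs only that $\hat w_\soi-\hat w_{(ij)}$ is independent of $x_i$ with norm $O(\polylog(n)/\sqrt n)$, which follows even from strong convexity. The cost is sharpness. Your Cauchy--Schwarz treatment of the cross terms gives pairwise covariances $O(\polylog(n)/\sqrt n)$, hence fluctuations of order $n^{-1/4}$ rather than the paper's $n^{-1/2}$. This still meets the stated rate. A second-order expansion showing the leave-two-out correction is conditionally nearly centered would recover $O(1/n)$ covariances.

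Two small points:
\begin{itemize}
\item Your Gaussian comparisons treat $f\circ\varphi_D$ as Lipschitz (or $C^2$) in $g_2$. This holds under Assumption A.5. The paper's Lemma \ref{lem:distrib_r} is instead built to avoid any regularity in $g_2$: it conditions on $g_2$, whose marginal is exactly standard normal. That matters for discontinuous links such as $\mathrm{sign}$.
\item Bounding the conditional-mean fluctuation by $\mathrm{Var}[\norm{\hat w_{(ij)}}^2]+\mathrm{Var}[\inprod{\hat w_{(ij)},\beta}]$ uses Lipschitz dependence of $Q\mapsto\mathbb{E}_{\mathcal N(0,Q)}[f\circ\varphi_D]$. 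That follows from Stein's identity for your $C^2$ test functions. Alternatively, a merely Hölder-$\tfrac12$ bound would still fit within your $n^{-1/4}$ budget.
\end{itemize}
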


\begin{figure}
    \centering
    \includegraphics[width=0.35\linewidth]{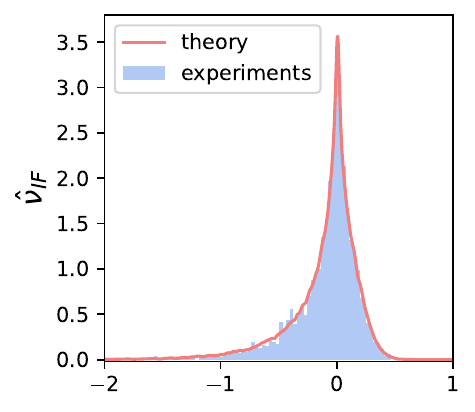}
    \includegraphics[width=0.35\linewidth]{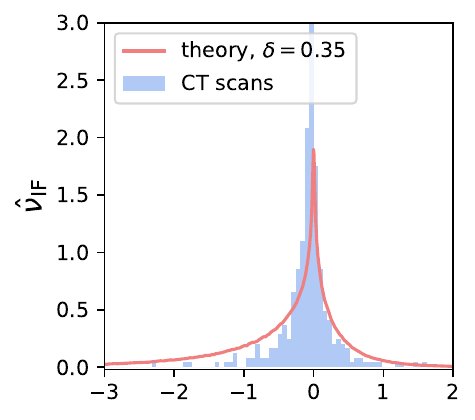}
    \caption{Empirical distribution $\hat{\nu}_{\IF}$ of the leave-one-out test error influences $\IF_i$ \eqref{eq:def_IF} across the training set. (\textbf{left}) Logistic regression $\ell(z,y)=\ln(1+\exp(-yz))$ for binary classification ($y=\mathrm{sign}(\inprod{\beta,x})$), $\alpha=2, \lambda=0.05$. The blue histogram represents numerical experiments on synthetic Gaussian data, in dimension $d=2000$. The red solid line indicates the limiting distribution $\varphi_{\IF}\sharp \mathcal{N}(0_4, Q)$ \eqref{eq:pushforward} characterized in Theorem \ref{thm:main_IF}. (\textbf{right}) Ridge regression $\ell(z,y)=\sfrac{1}{2}(y-z)^2$, with $\lambda=0.1$. The blue histogram collects numerical evaluations of sample influences in the UCI dataset of CT slice scans \citep{CT}. The red line  indicates the limiting distribution $\varphi_{\IF}\sharp \mathcal{N}(0_4, Q)$ \eqref{eq:pushforward} characterized in Conjecture \ref{Conj:extension}, assuming a linear model $y=\inprod{\beta, x}+\mathcal{N}(0, \delta^2) $, and using the corresponding sample complexity $\alpha=1$. The noise standard deviation $\delta \approx 0.35$ was estimated from the dataset using linear regression residuals \citep{hastie2009elements}.  }
    \label{fig:histograms}
\end{figure}

In the likeness of Theorem \ref{thm:main_IF} for the test error influences, Proposition \ref{prop:DFBETA} shows that the distribution for the DFBETA influences \eqref{eq:def_DFBETA} also admits in the $n\asymp d$ regime a limiting distribution, given by the pushforward $\varphi_D\sharp \mathcal{N}(0_2, Q^{(0)})$ of a Gaussian distribution in two dimensions through a non-linear map $\varphi_D(\cdot)$. The latter map and Gaussian distribution are similarly parametrized by the summary statistics $Q^{(k)},V^{(k)}$, characterized in  \eqref{eq:Qk} and \eqref{eq:Vk}. In contrast to Theorem \ref{thm:main_IF} however, Proposition \ref{prop:DFBETA} establishes the \textit{concentration} of the \textit{empirical distribution} of the $\DFBETA_i$ across the training set $\mathcal{D}$ --- a stronger result. We conjecture that a similar result also holds for the empirical distribution $\hat{\nu}_{\IF}$ of test error influences $\IF_i$, beyond the weak convergence of marginals proven in Theorem \ref{thm:main_IF}. Proving this convergence however warrants a much finer concentration study of the complex statistic \eqref{eq:def_IF}, which fall out of the scope of the present work. Fig.\, \ref{fig:histograms} (left) however shows that numerical evaluations of the histogram $\hat{\nu}_{\IF}$ are tightly captured by the density $\varphi_\IF\sharp \mathcal{N}(0_4, Q)$ \eqref{eq:pushforward} , providing numerical evidence in support of this conjecture.\\

Let us comment that while we chose to state Theorem \ref{thm:main_IF} and Proposition \ref{prop:DFBETA} in the simplest noiseless case, we anticipate that the proof can be straightforwardly extended to encompass sources of stochasticity, for instance label noise. Furthermore, the assumption of Gaussian design can be relaxed to cover a larger class of elliptical distributions with heavier tail, as considered for instance in \cite{el2018impact, adomaityte2024high} in the same context of high-dimensional M-estimation. These extensions are discussed in Appendix \ref{app:extension}. \\

From a practical standpoint, the equations \eqref{eq:charac_Omega}, \eqref{eq:charac_chiz} and \eqref{eq:charac_Vz} characterizing the resolvents $\Omega(\cdot), \mathcal{V}(\cdot)$   can in general be solved numerically, and the summary statistics $Q^{(k)},V^{(k)}$, which intervene in both Theorem \ref{thm:main_IF} and Proposition \ref{prop:DFBETA}, thereby evaluated through \eqref{eq:Qk}, \eqref{eq:Vk}. In simple settings, for instance for the square loss $\ell(z,y)=\sfrac{1}{2}(z-y)^2$, the functional equations \eqref{eq:charac_Omega}, \eqref{eq:charac_chiz} and \eqref{eq:charac_Vz} starkly simplify, yielding closed-form expressions for the resolvents $\mathcal{V}(\cdot),\Omega(\cdot)$, and ultimately the limiting influence distribution. The following remark shows that for the square loss, the limiting distribution of $\DFBETA_i$ \eqref{eq:def_DFBETA} and Cook \eqref{eq:def_Cook} influences take particularly compact forms, and converge to rescaled $\chi^2$ distributions.

\begin{remark}\label{remark:Ridge}
    For the square loss $\ell(z,y)=\sfrac{1}{2}(z-y)^2$, assuming a linear model $y_i=\inprod{x_i,\beta}+\epsilon_i$ \eqref{eq:label_gen} where $\epsilon_i\sim\mathcal{N}(0,\delta^2)$ are independent Gaussian additive label noises, Proposition \ref{prop:DFBETA} implies that the empirical distribution of $\DFBETA_i$ influences \eqref{eq:def_DFBETA} across the training set converges in probability in the $n\asymp d$ regime  to a rescaled $\chi^2$ distribution with one degree of freedom:
    \begin{align}
    \hat{\nu}_D\xrightarrow{P} \frac{V^{(2)}}{\left(1+ V^{(1)}\right)^2}\left(Q^{(0)}_{11}+1-2Q^{(0)}_{12}+\delta^2\right)\chi^2_1.
\end{align}
Observe that the term in parenthesis corresponds to the mean-squared test error of the model, intuitively suggesting that models achieving smaller test error are also less sensitive to changes in the training data, in the $\DFBETA$ metric.
By the same token, let $\hat{\nu}_C$ denote the empirical distribution of Cook's $\Cook_i$ influences \eqref{eq:def_Cook} within the train set. We conjecture the concentration
\begin{align}
    \hat{\nu}_C\xrightarrow{P} \left[V^{(1)2}+V^{(1)}-\lambda V^{(2)}
        \right]\chi_1^2.
\end{align}
\end{remark}

It is important to stress that the square loss does not in principle satisfy the assumption of bounded loss \ref{ass:loss} under which Theorem \ref{thm:main_IF} is stated. However, we note that it can be approximated arbitrarily tightly by a sequence of losses that do obey the assumptions. We hence expect that such restrictions are purely technical in nature, and are amenable to being relaxed by standard mollification arguments. Moreover, we shall illustrate in the following subsection that the analytical predictions obtained from evaluating Theorem \ref{thm:main_IF} for the square loss display good agreement with numerical experiments, see Fig.\,\ref{fig:Discussion}. \\

\begin{figure}
    \centering
    \includegraphics[width=0.33\linewidth]{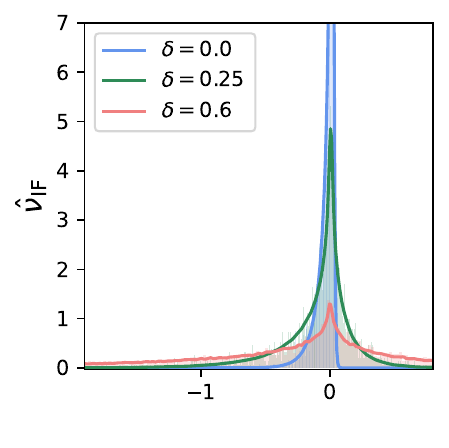}
    \includegraphics[width=0.35\linewidth]{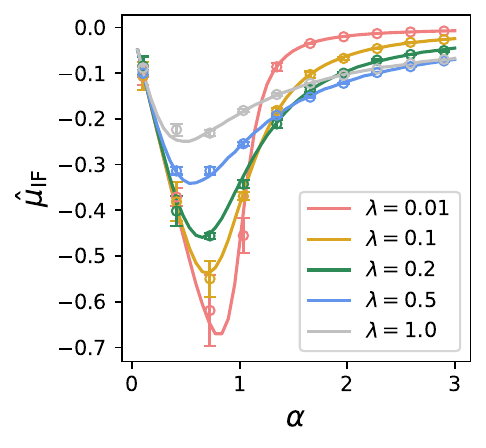}
    \caption{Empirical distribution $\hat{\nu}_{\IF}$ of the leave-one-out test error influences $\IF_i$ \eqref{eq:def_IF} across the training set, for ridge regression ($\ell(z,y)=\sfrac{1}{2}(y-z)^2$), assuming a linear model $y=\inprod{\beta, x}+\mathcal{N}(0, \delta^2) $ \eqref{eq:label_gen}. (\textbf{left}) Histograms of $\hat{\nu}_\IF$ obtained from numerical experiments in dimension $d=1000$, contrasted with the limiting distribution $\varphi_{\IF}\sharp \mathcal{N}(0_4, Q)$ \eqref{eq:pushforward} characterized in the extension of Theorem \ref{thm:main_IF} stated in Conjecture \ref{Conj:extension}, for various noise levels $\delta$. (\textbf{right}) Empirical mean influence $\hat{\mu}_\IF$, at various regularizations $\lambda$, as a function of the sample complexity $\alpha$. The noise level is fixed at $\delta=0.1.$ Dots indicate numerical experiments in dimension $d=1000$, while solid lines represent the mean of the theoretical limiting distribution $\varphi_{\IF}\sharp \mathcal{N}(0_4, Q)$ \eqref{eq:pushforward}. Error bars span one standard deviation. }
    \label{fig:Discussion}
\end{figure}

\subsection{Discussion}

Fig.\,\ref{fig:histograms} contrasts the influence distribution $\varphi_\IF\sharp \mathcal{N}(0_4, Q)$, characterized in Theorem \ref{thm:main_IF}, with numerical experiments in large but finite dimension $d=2000$, in the problem of binary classification with logistic loss, displaying overall good agreement. The histogram reveals a narrow peak around $\IF=0$, suggesting a uniformly sampled training point most likely exhibits small influence on the test error. The distribution also displays a rapidly decaying right tail at positive values of $\IF$ (capturing detrimental samples), and a heavier left tail (capturing beneficial samples). While the technical results were derived under the stylized assumption of isotropic Gaussian design, we note that this qualitative shape can also be observed for real data, as illustrated in Fig.\,\ref{fig:histograms} (right) for the task of predicting the longitude at which a slice of computer tomography (CT) scan has been performed \citep{CT}. With a view to consolidating intuition on the behavior of influence distributions in high dimensions, we close this section by very briefly commenting on the dependence of the latter on a few key parameters of the ERM problem \eqref{eq:full_ERM},  and take the opportunity to connect the results of Theorem \ref{thm:main_IF} with the classical theory of influence functions \citep{hampel1974influence, hampel86robust}.

\paragraph{Label noise ---} On an intuitive plane, introducing label noise diminishes the information content of all training points, and is thus expected to uniformly dilute their influence. This intuition is confirmed by Fig.\,\ref{fig:Discussion} (left), where dialing up the strength of an additive Gaussian label noise in ridge regression can be observed to flatten the empirical distribution $\hat{\nu}_\IF$ of test error influences $\{\IF_i\}_{i\in\enm{n}}$. 

\paragraph{Sample complexity ---} One is now in a position to return to the question alluded to in the introduction : \textit{how does the influence of a training sample depend on the size of the training set} ? Studies set in finite dimensions under the standard $n\to\infty$ asymptotics do not permit to ascertain this dependency at leading order. In the high-dimensional regime $n\asymp d$ on the other hand, Fig.\,\ref{fig:Discussion} (right) uncovers a non-trivial relationship between the empirical mean $\hat{\mu}_\IF$ of the influence distribution $\hat{\nu}_\IF$ and the sample complexity $\alpha=\sfrac{n}{d}$. While on average all training samples are found to be helpful ($\hat{\mu}_\IF\le 0$), there exists a value for the sample complexity $\alpha$ where the average test error influence is \textit{maximal} in absolute value. This behavior is naturally mitigated for larger values of regularization $\lambda$. The existence of this maximum can be intuitively rationalized as follows : at large $\alpha$, the effect of an individual training point on the final model is drowned out by the cumulated effect of all other samples, resulting in a small average influence $\module{\hat{\mu}_\IF}$. By the same token, at small $\alpha$, the model exhibits a large test error, and the inclusion of an additional sample is but of limited help, yielding again a small influence. This suggests that the influence of a sample is instead maximal at intermediate sample complexities, where the model is reasonably close to a good solution. %Let us reiterate that the dependency on $\alpha$ is one facet of the complex behavior of sample influences in high-dimensional regimes, where they retain non-trivial statistical dependencies with \textit{all samples} within the training set, as discussed above.

%\paragraph{Loss function --- } Ascertaining the impact of the choice of loss function $\ell(\cdot,\cdot)$ employed in the ERM \eqref{eq:full_ERM} on the model sensitivity to training data (and, in particular, outliers) has constituted a central question in robust statistics \citep{huber1992robust, Huber2005RobustS, hampel86robust, maronna2019robust}. This dependence is here most clearly illustrated by heuristically taking the population limit $\alpha\to \infty$ in Theorem \ref{thm:main_IF}. 
\paragraph{Population limit --- }In the population limit $\alpha\to\infty$, for the square loss, the limiting influence distribution $\varphi_{\IF} ~\sharp ~\mathcal{N}(0_4, Q)$ characterized in Theorem \ref{thm:main_IF} takes a particularly simple form, which we highlight in the following remark.

\begin{remark} \label{remark:population}
    Consider the population limit $\alpha\to \infty$, and let the $\egen$ be the mean-squared test error. The asymptotic influence distribution $\varphi_{\IF} ~\sharp ~\mathcal{N}(0_4, Q)$ for the square loss $\ell(z,y)=\sfrac{1}{2}(z-y)^2$ can in this limit be heuristically shown to converge to a centered and scaled $\chi^2$ distribution with one degree of freedom
    \begin{align}
        \varphi_{\IF} ~\sharp ~\mathcal{N}(0_4, Q) \overset{\alpha\to\infty}{\longrightharpoonup}-\frac{2 \lambda^2}{(1+\lambda)^3}(\chi_1^2 -1). \label{eq:pop_ridge} 
    \end{align}
%Considering instead the absolute value loss (equivalently the Huber loss \citep{huber1992robust} with vanishing parameter delta) $\ell(z,y)=\module{z-y}$, there exists a constant $C$ such that in the $\alpha\to\infty$ limit
%\begin{align}
%     \varphi_{\IF} ~\sharp ~\mathcal{N}(0_4, Q) \overset{\alpha\to\infty}{\longrightharpoonup} - 2\frac{1+Q^{(0)}_{11}-2Q^{(0)}_{12}}{1+\lambda }  \mid \mathcal{N}(0,1)\mid+C,
%\end{align}
%namely the limiting distribution is given by a folded standard Gaussian distribution $\mid \mathcal{N}(0,1)\mid$ up to a constant shift $C$. Let us remark once more that the numerator  in the above display corresponds to the mean-squared test error achieved by the model, implying narrower influence distributions for better generalizing models. 
\end{remark}

%An immediate takeaway of Remark \ref{remark:population} is that while the influence distribution for ridge regression displays exponential tails inherited from the $\chi^2$ law, the absolute value loss induces faster-decaying Gaussian tails, in accord with conventional statistical wisdom that the Huber loss generically curbs the influence of individual data points. 
Astute readers will have observed that taking $\alpha\to\infty$ limit in fact recovers the classical $n\to \infty, d=O(1)$ limit. Indeed, the population influence distribution \eqref{eq:pop_ridge} is equal to the law of the Gateaux derivative of the mean-squared error at the population distribution $\mathcal{N}(0_d, I_d)$, when evaluated in the direction of a Gaussian vector $x$, which coincides with the pushforward of the data distribution $\mathcal{N}(0_d, I_d)$ through the \textit{influence function}, thus making contact with \cite{hampel1974influence, AvellaMedina2017InfluenceFF}.\looseness=-1

\section{Consequences for active learning} \label{sec:selection}
%Having characterized in Section \ref{sec:asymptotics} the statistics of leave-one-out influences \eqref{eq:def_IF} in high-dimensional regimes, we now find ourselves in a position to explore the consequences of Theorem \ref{thm:main_IF} for \textit{active learning} tasks, where one seeks to identify influential samples from simple agnostic criteria.\\

Leave-one-out influences \eqref{eq:def_IF}, \eqref{eq:def_DFBETA} or \eqref{eq:def_Cook} provide interpretable measures of the importance of a data point for a learning task. This question precisely lies at the heart of the fields of \textit{active learning} and \textit{subsampling} : given a pool of available data points, limitations in compute or annotation budget often make it necessary to select/subsample \textit{only a fraction }$\gamma\in(0,1)$ of samples to use for training. In such cases, it is hence seminal to identify and select the most important samples.
Sample influences can in such contexts serve as natural proxies for the informativeness of a training point. This section explores the consequences of the main technical results exposed in Section \ref{sec:asymptotics} for data selection in high-dimensional regimes. In order to set the discussion in context, we begin by providing an overview of related literature.

\begin{figure}
    \centering
    \includegraphics[width=0.28\linewidth]{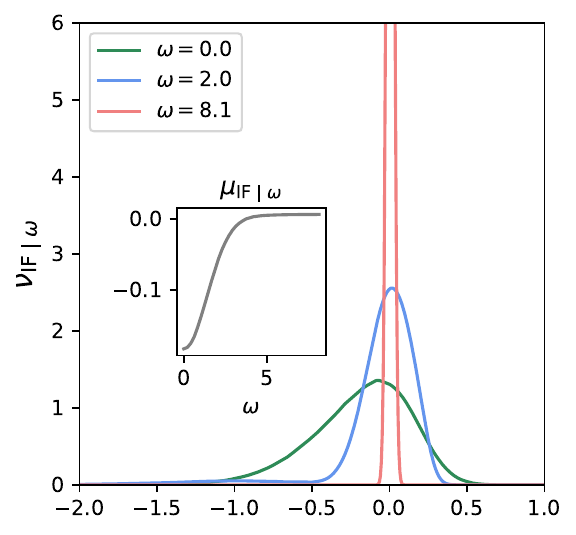}
    \includegraphics[width=0.36\linewidth]{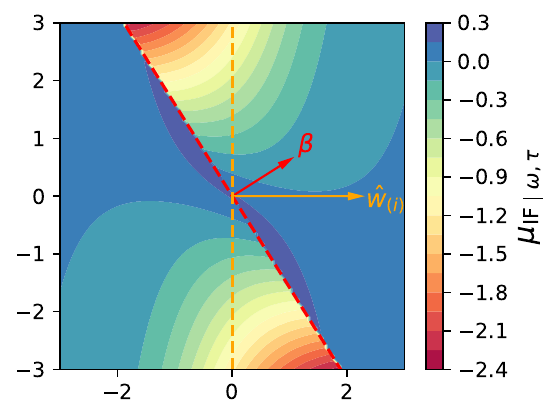}
   \includegraphics[width=0.34\linewidth]{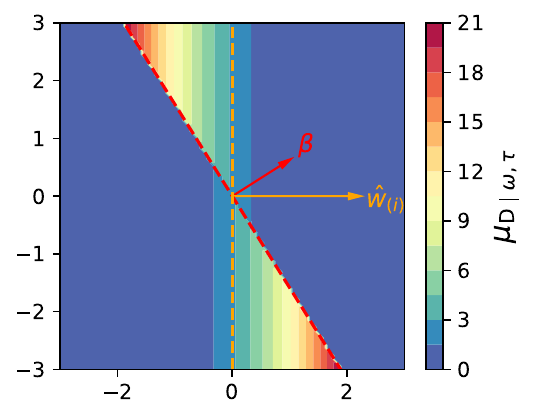}
    \caption{Logistic regression $\ell(y,z)=\ln(1+\exp{-yz}) $ with $\lambda=0.05$, binary classification task. (\textbf{left}) Marginal distribution $\nu_{\IF\mid \omega}$ of the test error influence $\IF_i$ \eqref{eq:def_IF} conditional on the margin $\inprod{x_i,\hat{w}_\soi}=\omega$, for $\alpha=2$. Lines are obtained from conditioning the limiting density $\varphi_\IF~\sharp ~\mathcal{N}(0_4, Q\mid \omega)$ \eqref{eq:pushforward} characterized in Theorem \ref{thm:main_IF}. Inset: mean influence $\mu_{\IF\mid \omega}$ conditional on the margin being $\omega$, as a function of $\omega$. (\textbf{middle}) Mean test error influence $\mu_{\IF\mid \omega, \tau}$ conditioned on the margin conditions $\inprod{x_i,\hat{w}_\soi}=\omega$ and $\inprod{x_i,\beta}=\tau$, plotted in the plane spanned by $\hat{w}_\soi$ (orange), $\beta$ (red). (\textbf{right}) Mean $\DFBETA$ influence \eqref{eq:def_DFBETA} $\mu_{D \mid \omega, \tau}$ conditioned on the margin conditions $\inprod{x_i,\hat{w}_\soi}=\omega$ and $\inprod{x_i,\beta}=\tau$.}
    \label{fig:map_IF_2}
\end{figure}

\subsection{Related works on data selection}

\paragraph{Data selection in the $n\gg d$ limit --- }The first explicit connection between influence functions and data selection can be found, to our awareness, in the work of \cite{ting2018optimal}, in the context of subsampling. The authors show that choosing sampling probabilities to be proportional to the norm of (population) DFBETA influence functions, namely retaining the most influential samples, satisfies a notion of A-optimality \citep{kiefer1959optimum}. Adjoining ideas were reprised in a number of subsequent works. Let us mention for instance \cite{wang2020less}, who propose a subsampling scheme retaining only samples with negative test-error influence $\IF_i<0$.\\

From a formal standpoint, the results of \cite{ting2018optimal} are derived in the standard large-sample, finite-dimensional regime $n\gg d$, in which the asymptotic linearity of the M-estimator \citep{tsiatis2006semiparametric} may be leveraged. The discrepancy between the finite-sample and population estimators can in such regimes be expanded into a sum of  \textit{population} influence functions \citep{hampel1974influence} over the training samples. The latter are then used to design the selection criterion, with most influential data points, in the sense of the population influence function, being selected with higher probability. However, as we have previously discussed in Section \ref{sec:asymptotics}, the population influence functions are deterministic functions of the considered sample only. As a consequence, criteria built upon \textit{population} influences are blind to the particular realization of the training set, and instead measure in isolation the relevance of an individual training point, irrespective of other data points. Whether empirical leave-one-out influences truly decouple in such fashion for real applications remains on the other hand largely unclear \citep{basu2020second}.\\

A similary story can be told from an algorithmic viewpoint. Many classical works on subsampling \citep{ma2022asymptotic, wang2018optimal} (see also Section 4 of \cite{kolossov2023towards}), are set in the regime $n\gg \gamma n\gg d$ where the data acquisition budget $\gamma n$ is large compared to the dimension $d$. Intuitively, it is always possible in such cases to first devote a small fraction of the budget to learn an estimator asymptotically close to the population estimator as $\gamma n\to \infty$, after what all selection policies essentially reduce to population-level criteria. Besides, since the excess error achieved with $\gamma n$ samples is already vanishingly small, choosing a good selection strategy over a bad one only leads to minor improvements in test error.

\paragraph{Challenges in high dimensions ---} This picture is dramatically altered when considering the high-dimensional regime $d\asymp \gamma n \asymp n$. First, the asymptotic linearity of M-estimators \citep{tsiatis2006semiparametric} generically ceases to hold. Simultaneously, the leave-one-out influence of an individual sample develops non-trivial statistical correlations with all training samples, as we discussed in Section \ref{sec:asymptotics}. Finally, good and bad selection strategies can differ by $\Theta(1)$ in achieved test error. This triple challenge overall paints a richer picture of the problem of data selection in high-dimensional regimes. A stream of recent works \citep{cui2021large, kolossov2023towards, sorscher2022beyond, dohmatob2025less, askari2025improving, cui2026asymptotic} initiated the study of active binary classification in the $n\asymp
d$ limit, under Gaussian design, impulsed by the seminal works of \cite{kinzel1990improving, seung1992query}. These works however all consider (with the exception of \cite{cui2021large}) a specific margin-based selection heuristic \citep{tong2001support, roth2006margin, wang2014new}, where samples lying closest to the estimated decision boundary are selected. We provide theoretical evidence in the present work that the margin of a sample is, in fact, correlated with its leave-one-out influence, providing further analytical support to this heuristic. \\

\begin{figure}
    \centering
\includegraphics[width=0.28\linewidth]{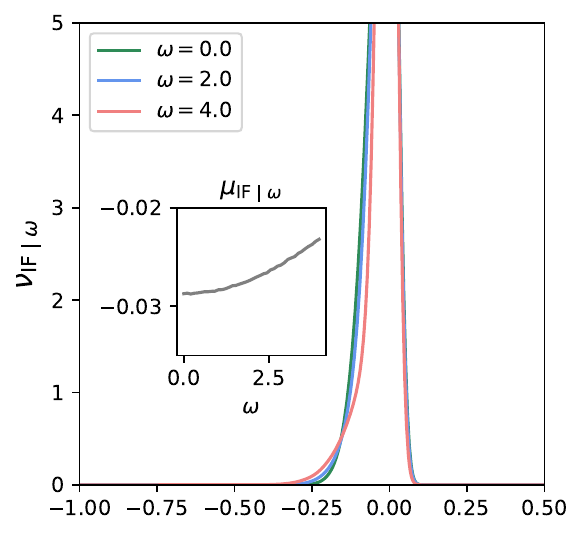}
    \includegraphics[width=0.36\linewidth]{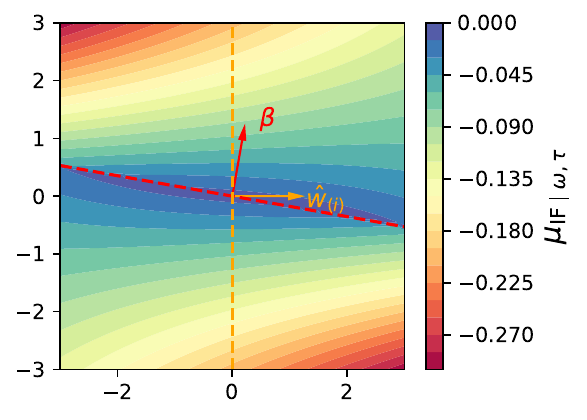}
   \includegraphics[width=0.34\linewidth]{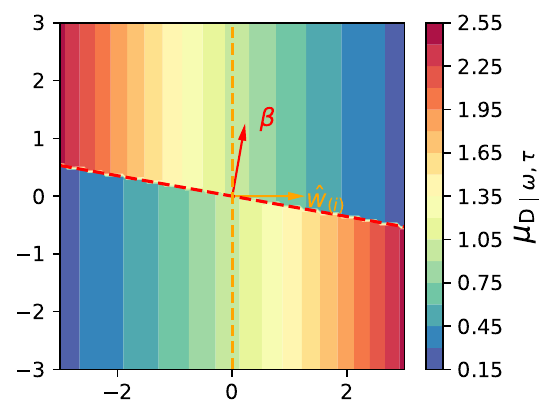}
    \caption{Reproducing the plots of Fig.\,\ref{fig:map_IF_2}, at smaller sample complexity $\alpha=0.05$. All other parameters are otherwise left unchanged. }
    \label{fig:map_IF_005}
\end{figure}

\subsection{Conditional influence}

\paragraph{Margin and influence --- }Consider an iterative active learning procedure, where samples are selected one by one. Suppose the availability of an estimator, learned from hitherto selected samples. How then to use its knowledge to choose the next sample to select ? Ideally, one would like to evaluate for every yet unselected samples $(x_i, y_i)$ the difference in test error upon adding it to the training set (given precisely by the influence $\IF_i$ \eqref{eq:def_IF}), and retain the sample displaying the most negative $\IF$ value. In practice, the implementation of this procedure suffers from several limitations: it requires expensive repeated retrainings, necessitates a held-out validation set, in addition to the knowledge of all labels, often unavailable in active learning tasks. It is therefore instrumental to construct \textit{simple, agnostic} proxies, correlated with the test error leave-one-out influence $\IF$, yet easier to evaluate. This agenda essentially boils down to answering the question : \textit{where in the input space are influential data most likely to be found ? } A natural heuristic in the context of convex M-estimation with linear models consists in examining the \textit{margin} of a sample, namely its distance to the current decision boundary \citep{tong2001support, roth2006margin, wang2014new}, with the intuition that the closest sample are the hardest to classify, and thus contain the most information. \\

In the notations of Section \ref{sec:asymptotics}, evaluating the correlation between the influence and margin criteria is tantamount to studying the law $\nu_{\IF\mid \omega}$ of $\IF_i $ \eqref{eq:def_IF} conditional on the sample $x_i$ sitting at a prescribed margin $\inprod{w_\soi, x_i}=\omega$. Theorem \ref{thm:main_IF} and the discussion below suggest that in the high-dimensional regime $n\asymp d$, this conditional distribution should be described by the pushforward $\varphi_\IF~\sharp ~\mathcal{N}(0_4, Q\mid \omega)$, where the shorthand $\mathcal{N}(0_4, Q\mid \omega)$ denotes the law of $g\sim \mathcal{N}(0_4, Q)$ conditional on $g_1=\omega$. Fig.\,\ref{fig:map_IF_2}  illustrates the conditional limiting influence density $\nu_{\IF\mid \omega}(\cdot)$ and its mean $\mu_{\IF\mid \omega}$ for binary classification with logistic regression, at sample complexity $\alpha=2$. The distribution of samples closer to the decision boundary (small $\omega$) exhibits a heavier left tail, signaling a higher density of influential samples in the region. In contrast, the influence distribution conditioned on large margins (large $\omega$) exhibits a narrow peak around $0$, suggesting samples with large margin have overall low influence. The mean conditional test error influence $\mu_{\IF\mid \omega}$ accordingly tends to $0$ as the distance to the boundary $\omega$ increases (inset).
This picture thus lends support to the well-known active learning heuristic that samples with smallest margins are most influential. This conclusion needs on the other hand to be nuanced at small sample complexities, where the conditional influence distribution $\nu_{\IF\mid \omega}(\cdot)$ only weakly depends on the margin $\omega$ (Fig.\,\ref{fig:map_IF_005}), somewhat echoing the observations of \cite{hacohen2022active, hacohen2023select, sorscher2022beyond} that the small-margin selection policy might cease to be appropriate in data-scarce regimes.

\begin{figure}
    \centering
    \includegraphics[width=0.35\linewidth]{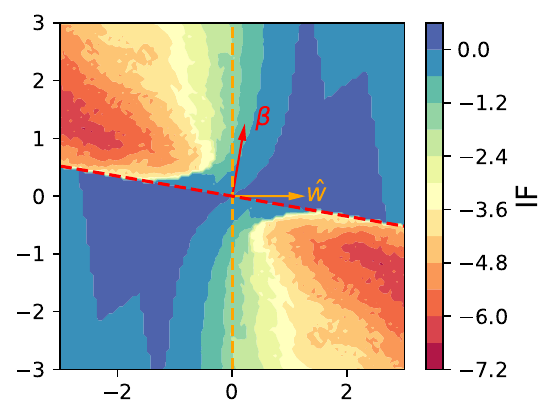}
    \includegraphics[width=0.35\linewidth]{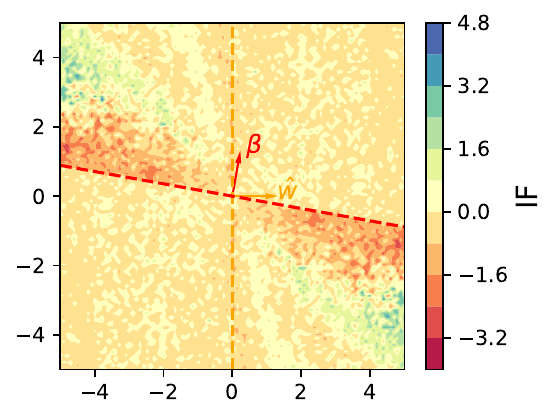}
    \caption{Test-error influence $\IF$ \eqref{eq:def_IF} for the odd versus even classification task on MNIST digits \citep{lecun1998gradient} (\textbf{left}), and for the pneumonia diagnosis task on chest X-rays images \citep{kermany2018identifying} (\textbf{right}). A $3-$layer, ReLU-activated neural network feature map was trained on a first held-out dataset, and the population readout $\beta$ estimated from retraining on another held-out dataset. The weights $\hat{w}$ are estimated from the training set, and the color map represent the difference in test error following the inclusion of a synthesized data point chosen in the $(\beta, \hat{w})$ plane, in the likeness of Fig.\,\ref{fig:map_IF_2}. All experimental details are collected in Appendix \ref{app:numerics}.  }
    \label{fig:Real_data}
\end{figure}

\paragraph{Spatial distribution of influential samples --- } In order to paint a more fine-grained picture of the spatial distribution of influential samples, one may further condition on the ground-truth margin $\inprod{x_i,\beta}$, although we note that for practical purposes the latter is generically unavailable to the statistician. Formally, this implies examining the law $\nu_{\IF\mid \omega, \tau }$ of $\IF_i $ conditional on the sample $x_i$ sitting at a relative margins $\inprod{w_\soi, x_i}=\omega, \inprod{\beta, x_i}=\tau$. Again, Theorem \ref{thm:main_IF} suggests that $\nu_{\IF\mid \omega, \tau }$ should be described by the pushforward $\varphi_\IF~\sharp ~\mathcal{N}(0_4, Q\mid \omega, \tau )$, where $\mathcal{N}(0_4, Q\mid \omega, \tau)$ stands for the law of $g\sim \mathcal{N}(0_4, Q)$ conditional on $g_1=\omega, g_2=\tau$. The corresponding mean influence $\mu_{\IF\mid \omega, \tau}$ is illustrated as a color map in Fig.\,\ref{fig:map_IF_2}, \ref{fig:map_IF_005} (middle), in the $(\omega, \tau)$ plane. In agreement with intuition, the most influential samples are found in the \textit{disagreement region} delineated by the estimated and true decision boundaries, namely the region of space misclassified by the current estimate $\hat{w}_\soi$. Fig.\,\ref{fig:map_IF_2}, \ref{fig:map_IF_005} (right) show that these regions are qualitatively the most influential also in the $\DFBETA$ metric \eqref{eq:def_DFBETA}.\looseness=-1

\paragraph{Real data ---} Having charted out the spatial apportionment of influential samples in the Gaussian case, one may naturally wonder whether this picture still holds at least qualitatively for learning tasks on \textit{real data}. We address this question with the following experiment, described in further detail in Appendix \ref{app:numerics}, the results of which we report in Fig.\,\ref{fig:Real_data}. The MNIST \citep{lecun1998gradient} (left) and chest X-ray scans \citep{kermany2018identifying} (right) datasets were split into four disjoint sets $\mathcal{D}_{\mathrm{NN}}, \mathcal{D}_\beta, \mathcal{D}_{\rm train}, \mathcal{D}_{\rm test}$. A feature map $\mathrm{NN}(\cdot)$ was first obtained from $\mathcal{D}_{\mathrm{NN}}$, by training a 3-layer fully-connected neural network with ReLU activation using the Adam optimizer \citep{kingma2014adam}, respectively on the even/odd digit classification task (for the MNIST dataset) and the pneumonia diagnosis task (for the chest X-rays dataset). The readout weights $\beta$ were then retrained on a large dataset $\mathcal{D}_\beta$, to estimate a good separator $\beta$ of the $\mathrm{NN}(\cdot)$ features, henceforth taken as an approximation of the ground truth vector in \eqref{eq:label_gen}. A smaller training set $\mathcal{D}_{\rm train}$ instead yield a less accurate estimator $\hat{w}$. What is then the most influential sample to add to $\mathcal{D}_{\rm train}$, so that the test error on $\mathcal{D}_{\rm test}$ is most reduced ? For every point $z$ in the plane spanned by $(\beta, \hat{w})$, we synthesize the sample $z$, assume the oracle label $y=\mathrm{sign}(\inprod{\beta,z)}$ can be queried, and add the sample $(z,y)$ to $\mathcal{D}_{\rm train}$. The difference in accuracy induced by the inclusion of this synthesized data point is then evaluated. The values are plotted in Fig.\,\ref{fig:Real_data}, and reveal a qualitatively similar distribution of influential samples as in the Gaussian case, see Fig.\,\ref{fig:map_IF_2}.\looseness=-1

\section{Conclusion}

We study the leave-one-out influence of samples within a dataset in the high-dimensional regime $n\asymp d$ where the input dimension $d$ grows proportionally to the sample size $n$. For convex M-estimation under Gaussian design, we derive tight asymptotic characterizations of the limiting distribution of influences. Building on these results, we probe the spatial distribution of the most influential samples, and provide evidence that influential samples tend to lie close to the decision boundary, making contact with a standard data-selection heuristic in active learning.

\paragraph{Perspectives ---} The results reported in the present work exclusively concern convex M-estimation with linear models, namely single-layer neural networks. However, accruing empirical evidence \citep{basu2020influence, bae2022if} suggests a much more brittle behavior of leave-one-out influences in \textit{non-convex} landscapes, e.g. for neural network, which we believe makes the case for future investigations in such settings. Let us also mention the problem of \textit{subset influence} \citep{broderick2020automatic, hu2024most, fisher2022statistical, konrad2025testing}, which naturally extends the question of leave-\textit{one}-out influences to settings where \textit{groups} of samples, rather than individual points, are removed from the training set. We anticipate the analysis of subset influences in high-dimensional regimes to be made particularly challenging by the intricate statistical correlations within the carved-out subset, notably when the cardinal of the former is comparable to the sample size $n$. We believe this question delineates an interesting avenue for future research.

\newpage
\bibliography{references}
\bibliographystyle{plainnat}
\newpage 
\appendix

\section{Assumptions and reminders}
\label{app:notations}

\subsection{Assumptions}\label{subsec:notations}

We detail the proof of Theorem \ref{thm:main_IF} and Proposition \ref{prop: LOO_approxs} in Appendices \ref{app:concentration}, \ref{app:deterministic} and \ref{app:influence}. Before proceeding, we list in the next section all technical assumptions bearing on the ERM loss $\ell(\cdot, \cdot)$ \eqref{eq:full_ERM} and the test error metric $\mathcal{E}(\cdot, \cdot)$ \eqref{eq:test_error}, of which we only stressed the most salient ones in the main text.

\begin{assumption}[Loss function $\ell$]\label{ass:loss}
Let $\ell:\R^2\to \R_+$ denote the loss function \eqref{eq:full_ERM}. We assume the following properties to hold:
    \begin{enumerate}[label=(A.\arabic*)]
        \item $\ell$ is everywhere three times differentiable in its first argument. We denote by $\partial$ the derivative with respect to the latter.
         \item $\ell$ is strongly convex in its first argument, namely $\partial^2\ell$ is everywhere positive.
        \item The first three derivatives are everywhere well defined and bounded by $O(\polylog(n))$.
        \item The reduced function $\tilde{\ell}: y\to \ell(0,y)$ is bounded by a constant $\sfrac{\lambda M^2}{2}$.
        \item $\partial\partial_2\ell,\partial^2\partial_2\ell$ (where $\partial_2$ denotes the derivative with respect to the second variable) are well-defined, and bounded by $O(\polylog(n))$.
        \item The first derivative $\partial \ell$ satisfies $\Ea{\partial\ell(0, g_2)g_2}\ne 0$, when the average bears over $g_2\sim\mathcal{N}(0,1)$. This assumption implies in particular the existence of $c>0$ such that the sequence $\Ea{\norm{\hat{w}}^2}$ is lower-bounded by $c$, see Lemma \ref{lem:qnonzero}.
        %\item The set of discontinuity points of the first derivative $\partial\ell$ has zero measure. 
    \end{enumerate}
\end{assumption}

\begin{remark}
    In words, Assumption A.6 can be interpreted as requiring that after a gradient step on the population loss, starting from a vanishing $0_d$ initialization, the weight iterates develop a non-zero alignment with the ground-truth $\beta$.
\end{remark}

\begin{remark}
    While Assumption A.5 is not satisfied for a sign link function $\phi=\mathrm{sign}$ in \eqref{eq:label_gen}, the latter can be approximated arbitrarily tightly as $n,d\to \infty$ by differentiable functions that do, following standard mollification arguments. We note that this restriction is technical in nature, stemming from the proof technique employed for Lemma \ref{lem:IaSz_concentrates}, and is not expected to constitute an intrinsic limitation of the problem.
\end{remark}

%\begin{remark}
%    Assumption A.6 allows to rule out pathological cases where $\hat{w}=0$ almost surely. It is satisfied for a large class of loss functions, and notably for the logistic loss $\ell(z,y)=\ln(1+\exp(-\mathrm{sign}(y)z)$. Assumption A.6 also admits a natural interpretation : suppose on solves the ERM leveraging gradient descent. The condition $\Ea{\partial\ell(0, g_2)g_2}\ne 0$ then corresponds to requiring that the first gradient step from a vanishing initialization is, in expectation, aligned with the ground truth vector $\beta$.
%\end{remark}

%\begin{assumption}[noise distribution]
%    The stochastic noises $\{\epsilon_i\}_{i\in\enm{n}}$ are drawn i.i.d from a probability distribution with all moments finite. \hc{We omitted the noise below. Decide if to add it or not. }
%\end{assumption}

\begin{assumption}[Test error metric] \label{ass:metric}
 Let $\mathcal{E}:\R\times \R_+\to \R_+$ be the test error metric \eqref{eq:test_error}. We assume that $\partial_1\mathcal{E},\partial_2\mathcal{E},\partial_1^2\mathcal{E},\partial_1^2\mathcal{E},\partial_1\partial_2\mathcal{E}$ are everywhere well-defined and bounded by $O(\polylog(n))$.
 \end{assumption}

 \begin{remark}
     Note that we in fact only need Assumption \ref{ass:metric} to hold on a subdomain $I\times K \subset \R\times \R_+$ such that the sequences $(Q^{(0)}_{11}, Q^{(0)}_{12})$ are included in $I\times K$. Notably, restricting $K=(c, \infty) $ where $c$  is the lower-bound on $Q^{(0)}_{11}$ (see Lemma \ref{lem:qnonzero}) allows the misclassification error $\mathcal{E}(m,q)=\sfrac{1}{\pi}\arccos{\sfrac{m}{\sqrt{q}}}$ to satisfy Assumption \ref{ass:metric}.   
 \end{remark}

\subsection{Notations}

We employ throughout the proof the following notations.
\begin{itemize}
    \item Given two sequences of random variables $a_n,b_n$ indexed by $n$, and an integer $k\in\mathbb{N}$, we denote $a_n=O_{L_k}(b_n)$ when $\Ea{a_n^k}=O( \Ea{b_n^k})$. If left unspecified, the $O_{L_k}(\cdot)$ is implicitly understood to hold for all $k\in\mathbb{N}$. Similarly, for a sequence of random vectors $u_n$, a deterministic scalar sequence $v_n$, and an integer $k\in\mathbb{N}$, we denote $u_n=O_{L_k}(v_n)$ when $\Ea{\norm{u_n}^k}=O( \module{v_n}^k)$ holds for the Euclidean norm.
    \item Let $X$ be a random variable well-defined on an event $\mathcal{A}$, but not necessarily defined on $\mathcal{A}^c$. We denote the \textit{truncated random variable} $\mathds{1}_{\mathcal{A}}X$ the random variable taking value $(\mathds{1}_{\mathcal{A}}X)(\omega)=X(\omega)$ for all $\omega\in \mathcal{A}$, and $0$ for $\omega \in \mathcal{A}^c$. 
    \item Given a matrix $X\in\R^{n\times d}$ and an index $i\in\enm{n}$, $X_\smi\in\R^{(n-1)\times d}$ denotes the matrix obtained from $X$ after removing its $i-$th row.
\end{itemize}

\subsubsection{Notations : leave-one-out estimators}

We now introduce the notations that will be employed in the remainder of the proof. We study the minimizer $\hat{w}$ of the empirical risk \eqref{eq:full_ERM}
\begin{align}\label{eq:ERM}
    \hat{w}=\underset{w}{\rm argmin} ~\hat{\mathcal{R}}(w), && \hat{\mathcal{R}}(w)=\frac{1}{n}\sum\limits_{j\in\enm{n}}\ell_{j}(\inprod{w,x_j})+\frac{\lambda}{2}\norm{w}_2^2,
\end{align}
using the shorthand $\ell_j(\cdot)=\ell(\cdot, y_j)$. For any sample index $i\in \enm{n}$, we further introduce the \textit{leave-one out surrogate}
 \begin{align}\label{eq:wsmi}
    \hat{w}_{\smi}=\underset{w}{\rm argmin} ~\hat{\mathcal{R}}_{ \smi}(w), && \hat{\mathcal{R}}_{ \smi}(w)=\frac{1}{n}\sum\limits_{j\ne i }\ell_{j}(\inprod{w,x_j})+\frac{\lambda}{2}\norm{w}_2^2,
\end{align}
namely the minimizer of the empirical risk from which the $i-$th sample point was removed. 
We define the full and leave-one-out residuals
\begin{align}
    & r_j=\inprod{\hat{w}, x_j}, && r_{j,\smi}=\inprod{\hat{w}_{\smi}, x_j}.
\end{align}
Importantly, remark that the risk $\hat{\mathcal{R}}_\smi$ differs from \eqref{eq:soi_ERM} by its $\sfrac{1}{n}$, instead of $\sfrac{1}{n-1}$, normalization, and consequently $\hat{w}_\smi\ne \hat{w}_\soi$. A fine analysis of their discrepancy will be given later, in Lemma \ref{eq:diff_renorm} in Appendix \ref{app:influence}.
From the leave-on-out estimator $\hat{w}_\smi$, one can construct the \textit{surrogate estimator}
\begin{align}\label{eq:tildew}
    \tilde{w}_{i}=\hat{w}_{\smi}-\frac{1}{n}\partial\ell_{i}(\tilde{r}_{i,i})H_{\smi}^{-1}x_i,
\end{align}
which will prove a close approximation of the full estimator $\hat{w}$ \citep{karoui2013asymptotic, el2018impact}, a statement that will be made precise in Proposition \ref{prop: LOO_approxs}. In the above display, we introduced the \textit{leave-$i$
 out Hessian} $H_{\smi}$ as
 \begin{align}
    H_{\smi}=\frac{1}{n}\sum\limits_{j\ne i} \partial^2 \ell_{j}(r_{j,\smi})x_jx_j^\top +\lambda I_d
 \end{align}
 and denoted
 \begin{align}
     \tilde{r}_{i,i}=\prox_{\gamma_{i} \ell_{i}(\cdot)}(r_{i,\smi}), &&\gamma_{i}=\frac{1}{n} x_i^\top H_{\smi}^{-1}x_i.
 \end{align}
$H_{\smi}$ naturally yields a leave-$i-$out approximation of the full Hessian 
\begin{align}
    H=\frac{1}{n}\sum\limits_{j\in \enm{n}} \partial^2 \ell_{j}(r_{j})x_jx_j^\top +\lambda I_d.
\end{align}
  Finally, for $j\ne i$, we similarly define the surrogate residual
  \begin{align}
\tilde{r}_{j,i}=\inprod{\tilde{w}_{i}, x_j}.      
  \end{align}
For simplicity, we will also adopt the shorthand
\begin{align}
    \eta_i=-\frac{1}{n}\partial\ell_{i}(\tilde{r}_{i,i})H_{\smi}^{-1}x_i,
\end{align}
so that $\tilde{w}_{i}$ may be more compactly rewritten as $\tilde{w}_{i}=\hat{w}_{\smi}+\eta_i$. It will finally also prove convenient to introduce the surrogate Hessian
\begin{align}
     \tilde{H}_{i}=\frac{1}{n}\sum\limits_{j\in\enm{n}\smi} \partial^2 \ell_{j}(\tilde{r}_{j,i})x_jx_j^\top +\lambda  I_d.
\end{align}

\subsubsection{Notations : resolvent}
The Hessian matrix $H$ plays a central role in describing the local geometry of the empirical risk landscape \eqref{eq:full_ERM} around its minimizer. The influence measures \eqref{eq:def_IF} \eqref{eq:def_DFBETA} and \eqref{eq:def_Cook} will crucially depend on the statistics
\begin{align}
    \label{eq:mathfrakQk}\mathfrak{Q}^{(k)}=\begin{bmatrix}
        \inprod{\hat{w}, H^{-k}\hat{w}} & \inprod{\hat{w}, H^{-k}\beta}\\
        \inprod{\hat{w}, H^{-k}\beta} & \inprod{\beta, H^{-k}\beta}
    \end{bmatrix}=:\Upsilon^\top H^{-k} \Upsilon
\end{align}
where the bookkeeping tall matrix $\Upsilon\in \R^{d\times 2}$ denotes the stacked vectors
\begin{align}
    \Upsilon=\begin{bmatrix}
        \hat{w} ~\vline~\beta
    \end{bmatrix}.
\end{align}
On an intuitive level, as discussed in the main text, the (random) \textit{summary statistics} $\mathfrak{Q}^{(k)}$ enter in the characterization of the susceptibility of the estimator $\hat{w}$ when a data sample is removed from the dataset. As we will subsequently outline, the influence metrics depend in particular on $\mathfrak{Q}^{(k)}$ for $k=1,2$. A naive computation reveals it is generically challenging to close the equations on these two matrices, suggesting the entire family $\{\mathfrak{Q}^{(k)}\}_{k\in \mathbb{N}}$ needs to be simultaneously described.\\

This can be carried out similarly to \cite{collins2024hitting} by introducing \textit{functional statistics}.
Given $z\in \C\setminus \spec{H}$, we define the \textit{resolvent} $G(z)$ of $H$ as
\begin{align}
    G(z)=(H-z I_d)^{-1}.
\end{align}
From the Cauchy integral formula for matrices, one can then deduce the summary statistics $\mathfrak{Q}^{(k)}$ as 
\begin{align}
    \mathfrak{Q}^{(k)}=\frac{-1}{2\pi i}\oint_{\mathcal{C}} \frac{1}{z^k} \mathfrak{W}(z) dz,
\end{align}
from the functional statistic $\mathfrak{W}(z)\in \C^{2\times 2}$, defined as 
\begin{align}
    \mathfrak{W}(z)=\Upsilon^\top G(z) \Upsilon .\label{eq:mathfrakW}
\end{align}
In the above display, $\mathcal{C}$ designates any contour in $\{z\in \C|\Re{z}>0\}\setminus \spec{H}$ enclosing $\spec{H}$. Note that such a contour can always be defined since $H\succeq \lambda I_d$ with $\lambda >0$. Finally, we introduce the\textit{ Stieltjes transform}
\begin{align}
    \mathfrak{S}(z)=\frac{1}{n}\tr[G(z)], \label{eq:mathfrakS}
\end{align}
from which one can extract the moments
\begin{align}
    \mathfrak{V}^{(k)}=\frac{1}{n}\tr[H^{-k}] =\frac{-1}{2\pi i}\oint_{\mathcal{C}} \frac{1}{z^k} \mathfrak{S}(z) dz. \label{eq:mathfracV}
\end{align}
It will also prove convenient to introduce the auxiliary statistic
\begin{align}
    \mathfrak{X}(z)=\frac{1}{n}\tr[H^{-1}G(z)].\label{eq:mathfrakX}
\end{align}
We will also make use of the leave-one-out equivalents of the above transforms. We accordingly define for $i\in\enm{n}$
\begin{align}
    &G_\smi(z)= (H_\smi-z I_d)^{-1}, && \tilde{G}_i(z)=(\tilde{H}_i-z I_d)^{-1},\\
    & \mathfrak{S}_\smi(z)=\frac{1}{n}\tr[G_\smi(z)], && \tilde{\mathfrak{S}}_i(z)=\frac{1}{n}\tr[\tilde{G}_i(z)],\\
    &\mathfrak{W}_\smi(z)=\Upsilon_\smi^\top G_\smi(z)\Upsilon_\smi, && \tilde{\mathfrak{W}}_i(z)=\tilde{\Upsilon}_i^\top \tilde{G}_i(z)\tilde{\Upsilon}_i,
\end{align}
naturally defining the approximations
\begin{align}
    &\Upsilon_\smi=\begin{bmatrix}
        \hat{w}_\smi ~\vline~\beta
    \end{bmatrix}, && \tilde{\Upsilon}_i=\begin{bmatrix}
       \tilde{w}_i~\vline~\beta
    \end{bmatrix}.
\end{align}
Let us stress the following technical subtlety: the leave-one-out resolvents $G_\smi(\cdot), \tilde{G}_i(\cdot)$ are defined on the complex plane relative to the spectra of the Hessians $H_\smi, \tilde{H}_i$ respectively, which generically differ from that of $H$. Thus, they do not exactly share the same domain of definition as $G(\cdot)$. In what follows, it will be established that the summary statistics $\mathfrak{Q}^{(k)}, \mathfrak{V}^{(k)}$ in fact concentrate in $L_2$ to their expectation. We will denote the latter by
\begin{align}
    Q^{(k)}=\Ea{\mathfrak{Q}^{(k)}}, && V^{(k)}=\Ea{\mathfrak{V}^{(k)}}.
\end{align}
%It will also be useful to introduce the expectations $\mathcal{V}(\cdot)$ and $\Omega(\cdot)$ of the resolvent quantities $\mathfrak{S}(\cdot), \mathfrak{W}(\cdot)$. 

Particular care in the manipulation of the above resolvent quantities needs to be devoted to the fact that those are only well-defined away from $\spec{H}$. In particular, the use of the Cauchy integration formula implies a contour intersecting the real axis : the crossing point must then be ensured to avoid the spectrum. Thanks to the concentration of the spectral density of $H$ on a compact interval, this is possible with overwhelming probability.  Let us accordingly introduce the sequence of events  
\begin{align}
    \mathcal{A}=\left\{ \sfrac{\norm{X}^2}{n}<\left(2+\sqrt{\sfrac{1}{\alpha}}\right)^2\right\},
\end{align}
By standard non-asymptotic bounds for Wishart matrices (see e.g. \cite{vershynin2012introduction}), the probability of the complementary event vanishes exponentially
\begin{align}
    \P{\mathcal{A}^c}\le 2e^{-\sfrac{n}{2}}.
\end{align}
It will also prove instrumental to consider for $i\in\enm{n}$ the leave-one-out version of the event $\mathcal{A}_\smi$, similarly defined as 
\begin{align}
    \mathcal{A}_\smi=\left\{
   \sfrac{\norm{X_\smi}^2}{n}<\left(2+\sqrt{\sfrac{1}{\alpha}}\right)^2
    \right\}.
\end{align}
Note the implication $\mathcal{A}\subset \mathcal{A}_\smi $.\\

 We finally  introduce the interval 
\begin{align}\label{def:J_contour}
    J=\left[
    \lambda, \lambda +\norm{\partial^2\ell}_\infty\left(2+\sqrt{\sfrac{1}{\alpha}}\right)^2
    \right],
\end{align}
which encloses $\spec{H}$ with overwhelming probability. With $J$ being chosen, we fix in the remainder of the proof a \textit{deterministic }complex contour $\Gamma\subset \{z\in \C: \Re{z}>0\}$, chosen so as to enclose the interval $J$
at a distance $\dist{\Gamma, J}\ge \sfrac{\lambda}{2}$, while also satisfying $\dist{\Gamma, 0} \ge \sfrac{\lambda}{2}$; we also require that there exists a constant $P$ such that $\sup_{z\in\Gamma}|z|\le P$. \\

Equipped with these definitions, we can finally introduce the functional statistics
\begin{align} \label{eq:functionals_deteq}
    \mathcal{V}(z)=\Ea{\Ia \mathfrak{S}(z)},\\
    \chi(z)=\Ea{\Ia \mathfrak{X}(z)},\\
    \Omega(z)=\Ea{\Ia \mathfrak{W}(z)}
\end{align}
which are well-defined for all $z\in\Gamma$.

\subsection{Leave-one-out approximation results}

The analysis of ERMs of the form \eqref{eq:ERM} has constituted the object of a rich body of works \citep{karoui2013asymptotic, el2013robust, donoho2016high, el2018impact, thrampoulidis2018precise, sur2019modern}. In this subsection, we remind key approximation results of \cite{karoui2013asymptotic, el2018impact} controlling the discrepancy between the ERM estimator $\hat{w}$ and its leave-one-out surrogates $\hat{w}_\smi, \tilde{w}_i$. The subsequent Appendices will build on these bounds to reach a precise asymptotic characterization of the limiting distribution of sample influences within the dataset, as stated in Theorem \ref{thm:main_IF}.

\begin{proposition}[\cite{karoui2013asymptotic, el2018impact}] \label{prop: LOO_approxs}
The discrepancy between the ERM estimator \eqref{eq:full_ERM} and the surrogate $\tilde{w}_i$ \eqref{eq:tildew} is controlled as 
    \begin{align}
        \underset{i\in \enm{n}}{\sup} ~\norm{\hat{w}-\tilde{w}_{i}}=O_{L_k}\left(\frac{\polylog(n)}{n}\right).
    \end{align}
We also have the following bound for the surrogate/leave-$i-$out \eqref{eq:wsmi} estimators:
\begin{align}
    \sup_{i\in\enm{n}}\norm{\tilde{w}_{i}-\hat{w}_{\smi}}= O_{L_k}\left(\frac{\polylog(n)}{\sqrt{n}}\right).
\end{align}
Furthermore, at the level of the residuals,
\begin{align}
  \underset{i\in \enm{n}}{\sup}~\underset{j\ne i}{\sup}  ~|r_{j,\smi}-r_j|=O_{L_k}\left(\frac{\polylog(n)}{\sqrt{n}}\right)
\end{align}
and
\begin{align}
     \underset{i\in \enm{n}}{\sup} |r_i-\tilde{r}_{i,i}|=O_{L_k}\left(\frac{\polylog(n)}{\sqrt{n}}\right),
\end{align}
and 
\begin{align}
    \underset{i\in \enm{n}}{\sup} \sup_{j\ne i}|\tilde{r}_{j,i}-r_{j,\smi}|=O_{L_k}\left(\frac{\polylog(n)}{\sqrt{n}}\right).
\end{align}
Finally, the cross terms can be controlled as
    \begin{align}
       \sup_{i\in\enm{n}} \sum\limits_{j\ne i} (r_j-\tilde{r}_{j,i})^2=O_{L_k}\left(\frac{\polylog(n)}{n}\right).
    \end{align}
\end{proposition}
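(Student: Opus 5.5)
The plan follows the leave-one-out strategy of \cite{karoui2013asymptotic, el2018impact}. I would prove the statements in order: first the deviation between $\tilde w_i$ and $\hat w_\smi$, then the residual bounds, and last the key bound on $\hat w-\tilde w_i$, which feeds back into the residual statements.

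\textbf{Step 1: work on $\mathcal{A}$ and set up moment bounds.} Since $\P{\mathcal{A}^c}\le 2e^{-n/2}$, every $O_{L_k}$ statement only needs to be proved on $\mathcal{A}$, because the complement contributes exponentially small moments. Deterministic a priori bounds cover that contribution: comparing with $w=0$ and using Assumption A.4 gives $\norm{\hat w}\le M$, and likewise for $\hat w_\smi$. On $\mathcal{A}$, strong convexity of $\hat{\mathcal{R}}$ with modulus $\lambda$ gives
\begin{align}
\norm{\hat w-\tilde w_i}\le \lambda^{-1}\norm{\nabla\hat{\mathcal{R}}(\tilde w_i)}.
\end{align}
Since $\sup_i$ is taken over $n$ indices, I would convert per-$i$ moment bounds into bounds on the supremum with a union bound over polynomially many indices, using high moments. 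This only costs $\polylog(n)$ factors, which the Gaussian tails of $\inprod{x_i,\cdot}$ for vectors independent of $x_i$ absorb.

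\textbf{Step 2: the easy bounds.} By definition, $\tilde w_i-\hat w_\smi=\eta_i$. We have $\norm{H_\smi^{-1}x_i}\le\lambda^{-1}\norm{x_i}=O_{L_k}(\sqrt d)$, and $|\partial\ell|$ is $\polylog$-bounded by Assumption A.3 together with Step 1. This yields $\norm{\eta_i}=O_{L_k}(\polylog(n)/\sqrt n)$. For $j\ne i$, $\tilde r_{j,i}-r_{j,\smi}=\inprod{\eta_i,x_j}$. Because $x_j$ enters $H_\smi$, I cannot directly treat $x_j$ as independent of $\eta_i$. I would apply a second leave-out (remove $j$ from $H_\smi$ via Sherman--Morrison) so that the resulting vector is independent of $x_j$. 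The Gaussian concentration of $\inprod{\eta_i, x_j}$ given $\norm{\eta_i}$ then gives $O(\polylog/\sqrt n)$, and the rank-one correction is of the same order.

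\textbf{Step 3: the main obstacle, $\norm{\nabla\hat{\mathcal{R}}(\tilde w_i)}=O(\polylog/n)$.} I would write
\begin{align}
\nabla\hat{\mathcal{R}}(\tilde w_i)=\tfrac1n\sum_{j\ne i}\partial\ell_j(\tilde r_{j,i})x_j+\tfrac1n\partial\ell_i(\inprod{\tilde w_i,x_i})x_i+\lambda\tilde w_i.
\end{align}
The optimality condition of $\hat w_\smi$ is $\tfrac1n\sum_{j\ne i}\partial\ell_j(r_{j,\smi})x_j+\lambda\hat w_\smi=0$. Subtracting it and Taylor expanding $\partial\ell_j$ to second order around $r_{j,\smi}$ turns the first-order term into $(H_\smi-\lambda I)\eta_i+\lambda\eta_i=H_\smi\eta_i=-\tfrac1n\partial\ell_i(\tilde r_{i,i})x_i$.

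For the $i$-th term, I need $\inprod{\tilde w_i,x_i}=\tilde r_{i,i}$ up to small error. We have $\inprod{\tilde w_i,x_i}=r_{i,\smi}-\gamma_i\partial\ell_i(\tilde r_{i,i})$, and the proximal identity says exactly that this equals $\tilde r_{i,i}$. The $i$-th term therefore cancels the first-order term exactly. That is the reason for choosing the prox definition.

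What remains is the Taylor remainder $\tfrac1{2n}\sum_{j\ne i}\partial^3\ell_j(\xi_j)\inprod{\eta_i,x_j}^2x_j$. Its norm is bounded by
\begin{align}
\norm{\partial^3\ell}_\infty\,\norm{X_\smi}\,\Big(\tfrac1{n^2}\sum_j\inprod{\eta_i,x_j}^4\Big)^{1/2}.
\end{align}
By Step 2, each $\inprod{\eta_i,x_j}^4$ is of order $\polylog/n^2$ and $\norm{X_\smi}\lesssim\sqrt n$, so this gives $\sqrt n\cdot(n\cdot n^{-2}\cdot n^{-2})^{1/2}\cdot\polylog=\polylog/n$. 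The delicate point is justifying the uniform size $\inprod{\eta_i,x_j}=O(\polylog/\sqrt n)$ via the double leave-one-out decoupling. I expect this to be the hardest step.

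\textbf{Step 4: residuals and the cross term.} Combining the bounds, $r_j-\tilde r_{j,i}=\inprod{\hat w-\tilde w_i,x_j}$. Then
\begin{align}
\sum_{j\ne i}(r_j-\tilde r_{j,i})^2\le\norm{X}^2\norm{\hat w-\tilde w_i}^2=O(n\cdot\polylog/n^2).
\end{align}
For $r_i-\tilde r_{i,i}$, note that $\inprod{\tilde w_i,x_i}=\tilde r_{i,i}$ exactly, while $|\inprod{\hat w-\tilde w_i,x_i}|\le\norm{x_i}\polylog/n=O(\polylog/\sqrt n)$. Finally, the triangle inequality with Step 2 gives $|r_{j,\smi}-r_j|=O_{L_k}(\polylog/\sqrt n)$.
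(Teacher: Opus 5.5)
Your overall plan is the standard argument of \cite{karoui2013asymptotic, el2018impact}, on which the paper relies; the paper gives no proof of its own beyond that citation. The plan is: bound $\norm{\hat w-\tilde w_i}$ by $\lambda^{-1}\norm{\nabla\hat{\mathcal R}(\tilde w_i)}$, subtract the optimality condition of $\hat w_\smi$, use the proximal identity $\inprod{\tilde w_i,x_i}=\tilde r_{i,i}$ so that $H_\smi\eta_i$ cancels the $i$-th gradient term exactly, then control the second-order Taylor remainder. Your bound on $\norm{\eta_i}$, the remainder estimate of Step 3, and the residual and cross-term bounds of Step 4 are all correct once a uniform bound on $\inprod{\eta_i,x_j}$ is available.

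The gap is in how you propose to obtain that bound. Removing the rank-one term $\frac1n\partial^2\ell_j(r_{j,\smi})x_jx_j^\top$ from $H_\smi$ by Sherman--Morrison does not give a matrix independent of $x_j$. The remaining weights $\partial^2\ell_k(r_{k,\smi})$, $k\neq i,j$, still depend on $x_j$ through $\hat w_\smi$. True independence would require refitting a leave-two-out estimator and controlling $r_{k,\smi}-r_{k,\setminus\{i,j\}}$ uniformly in $k$. That is a residual bound of exactly the kind the proposition is meant to establish, so this route is at best circular.

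The missing idea is to use the randomness of $x_i$ rather than that of $x_j$. Since $|\partial\ell_i(\tilde r_{i,i})|\le\norm{\partial\ell}_\infty$ deterministically,
\[
|\inprod{\eta_i,x_j}|\le \frac{\norm{\partial\ell}_\infty}{n}\,\bigl|\inprod{x_i,H_\smi^{-1}x_j}\bigr| .
\]
Conditionally on $X_\smi$, the vector $H_\smi^{-1}x_j$ is fixed and $x_i\sim\mathcal N(0_d,I_d)$ is independent of it. So $\inprod{x_i,H_\smi^{-1}x_j}$ is centered Gaussian with standard deviation at most $\lambda^{-1}\norm{x_j}$. A union bound over the $n(n-1)$ pairs, together with $\sup_j\norm{x_j}/\sqrt n=O_{L_k}(\polylog(n))$, gives
\[
\sup_i\sup_{j\ne i}|\inprod{\eta_i,x_j}|=O_{L_k}\bigl(\polylog(n)/\sqrt n\bigr).
\]
This is exactly the first part of Lemma~\ref{lemma:quad} with $M_\smi=H_\smi^{-1}$ and $\norm{M_\smi}\le\lambda^{-1}$. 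No second leave-out is needed. With this substitution the rest of your argument goes through as written: the remainder is of size $\polylog(n)/n$, and the residual and cross-term bounds follow from Cauchy--Schwarz and the triangle inequality.
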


Proposition \ref{prop: LOO_approxs} thus establishes that the surrogate $\tilde{w}_i$ \eqref{eq:tildew} provides a close approximation to the full estimator $\hat{w}$ \eqref{eq:full_ERM}, will presenting the advantage of admitting a simpler representation \eqref{eq:tildew}. We will leverage the latter in the remainder of the proof, which we report in Appendices \ref{app:concentration}, \ref{app:deterministic} and \ref{app:influence}.

\newpage

\section{Concentration of summary statistics}
\label{app:concentration}

We have introduced in Appendix \ref{app:notations} the random functional statistics $\mathfrak{S}(\cdot),\mathfrak{V}(\cdot),\mathfrak{W}(\cdot)$, which naturally enter in the Cauchy integral representation of key geometric descriptors of the estimator $\hat{w}$ \eqref{eq:full_ERM}, such as $\mathfrak{Q}^{(k)}$ \eqref{eq:mathfrakQk}. We establish in this subsection the pointwise concentration of the functionals $\mathfrak{S}(\cdot),\mathfrak{X}(\cdot),\mathfrak{W}(\cdot)$ \eqref{eq:mathfrakS}\eqref{eq:mathfrakX}\eqref{eq:mathfrakW}, before transferring the concentration to Cauchy integrals of these functionals \eqref{eq:mathfrakQk}, \eqref{eq:mathfracV}. In the subsequent Appendix \ref{app:deterministic}, we will then compute the corresponding deterministic equivalents, thereby completing the characterization of the summary statistics $Q^{(k)}, V^{(k)}$ \eqref{eq:Qk},\eqref{eq:Vk}.

\subsection{Pointwise concentration of functional statistics}

\begin{lemma}[$\mathds{1}_{\mathcal{A}}\mathfrak{S}(z)$ concentrates]\label{lem:IaSz_concentrates}
    For any $t>0$, for any $z\in \C\setminus J$ such that $\dist{z,J}>\sfrac{\lambda}{2}$, 
    \begin{align}
        \P{\left|\mathfrak{S}(z)-\Ea{ \mathfrak{S}(z)|\mathcal{A}}\right|\ge t \Big|\mathcal{A}}  \le 2e^{-\frac{n t^2}{2C^2}},\label{eq:tail_bound_Stieltjes}
    \end{align}
where $C=O(\polylog(n))$ only depends on $\alpha, \lambda$, but not on $z$. Its precise expression is given in equation \eqref{eq:expression_C} of the proof.
As a consequence, 
\begin{align}
   \Ia \mathfrak{S}(z)=\Ea{\Ia \mathfrak{S}(z)} +O_{L_k}\left(\frac{C}{\sqrt{n}}\right).
\end{align}
\end{lemma}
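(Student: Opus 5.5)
The plan is to obtain \eqref{eq:tail_bound_Stieltjes} from Gaussian concentration of Lipschitz functions, applied to the map $X\mapsto \mathfrak{S}(z)$ viewed as a function of the data matrix $X\in\R^{n\times d}$ with the Frobenius norm. The labels $y_j=\phi(\inprod{x_j,\beta})$ are themselves functions of $X$, and the minimizer $\hat{w}$ depends smoothly on $X$ by the implicit function theorem, since $\hat{\mathcal{R}}$ is strongly convex by A.2 and $\lambda>0$. So $\mathfrak{S}(z)=\frac1n\tr[(H(X)-zI_d)^{-1}]$ is a differentiable function of $X$.

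The difficulty is that this map is not globally Lipschitz, because the operator norm of $X$ enters its gradient. This is why we condition on $\mathcal{A}$. The key observation is that $\mathcal{A}=\{\norm{X}_{op}<(2+\sqrt{1/\alpha})\sqrt n\}$ is a convex subset of $\R^{n\times d}$. Hence the law of $X$ conditioned on $\mathcal{A}$ has density proportional to $e^{-\norm{X}_F^2/2}\mathds{1}_{\mathcal{A}}$, which is $1$-strongly log-concave. By the Bakry--Émery criterion, or equivalently Caffarelli's contraction theorem, this law satisfies the same Gaussian concentration inequality as the standard Gaussian: if $f$ is $L$-Lipschitz on $\mathcal{A}$, then $\P{|f-\Ea{f|\mathcal{A}}|\ge t\,|\,\mathcal{A}}\le 2e^{-t^2/2L^2}$. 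For the complex-valued $\mathfrak{S}$ I would treat the real and imaginary parts separately, absorbing the constants into $C$. It therefore suffices to show that on $\mathcal{A}$ the gradient of $X\mapsto\mathfrak{S}(z)$ has Frobenius norm at most $C/\sqrt n$, uniformly in $z$ with $\dist{z,J}>\lambda/2$.

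The gradient bound proceeds as follows. On $\mathcal{A}$ we have $\spec{H}\subset J$, hence $\norm{G(z)}_{op}\le 2/\lambda$ and $\norm{H^{-1}}_{op}\le 1/\lambda$. By A.4, comparing $\hat{\mathcal{R}}(\hat w)\le\hat{\mathcal{R}}(0)$ gives $\norm{\hat w}\le M$. Perturbing $X\to X+dX$ yields $d\mathfrak{S}=-\frac1n\tr[G\,dH\,G]$, where
\[
dH=\frac1n\sum_j\Big(\partial^3\ell_j\,dr_j+\partial^2\partial_2\ell_j\,dy_j\Big)x_jx_j^\top+\frac1n\sum_j\partial^2\ell_j\big(dx_jx_j^\top+x_jdx_j^\top\big).
\]
Here $dy_j=\phi'(\inprod{x_j,\beta})\inprod{dx_j,\beta}$, and $dr_j=\inprod{d\hat w,x_j}+\inprod{\hat w,dx_j}$. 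The stationarity condition gives
\[
d\hat w=-H^{-1}\frac1n\sum_j\Big(\partial^2\ell_j\,x_j\inprod{\hat w,dx_j}+\partial\ell_j\,dx_j+\partial\partial_2\ell_j\,dy_j\,x_j\Big),
\]
whence, using A.3, A.5 and $\norm{X}_{op}\lesssim\sqrt n$, we get $\norm{d\hat w}\le C\norm{dX}_F/\sqrt n$. It follows that the vector $dr$ satisfies $\norm{dr}_2\le\norm{X}_{op}\norm{d\hat w}+M\norm{dX}_F\le C\norm{dX}_F$, and similarly $\norm{dy}_2\le C\norm{dX}_F$. The terms in $dH$ linear in $dx_j$ contribute at most
\[
\frac{1}{n^2}\big|\tr[G^2 X^\top D\,dX]\big|\le\frac{1}{n^2}\sqrt d\,\norm{G^2X^\top D}_{op}\norm{dX}_F\lesssim\frac{\sqrt{d n}}{n^2}\norm{dX}_F,
\]
where $D$ is the diagonal matrix of the weights $\partial^2\ell_j$. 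The terms involving $dr_j$ and $dy_j$ are of the form $\frac{1}{n^2}\tr[XG^2X^\top D']$ with $D'=\mathrm{diag}(\partial^3\ell_j dr_j+\partial^2\partial_2\ell_j dy_j)$. These are handled by Cauchy--Schwarz in Frobenius norm:
\[
\frac{1}{n^2}\norm{D'}_F\norm{XG^2X^\top}_F\le\frac{1}{n^2}\,C\norm{dX}_F\,\sqrt d\,\norm{X}_{op}^2\frac{4}{\lambda^2}\lesssim\frac{C}{\sqrt n}\norm{dX}_F.
\]
This is the dominant term and fixes the $C/\sqrt n$ Lipschitz constant. Collecting these estimates gives an explicit $C=O(\polylog(n))$ depending only on $\alpha,\lambda,M$ and the derivative bounds, independent of $z$.

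For the $L_k$ statement, I would write
\[
\Ia\mathfrak{S}-\Ea{\Ia\mathfrak{S}}=\Ia\big(\mathfrak{S}-\Ea{\mathfrak{S}|\mathcal{A}}\big)+\big(\Ia-\P{\mathcal{A}}\big)\Ea{\mathfrak{S}|\mathcal{A}}.
\]
The first term has all moments $O((C/\sqrt n)^k)$ by integrating the tail bound. For the second, note that $|\mathfrak{S}|\le 2d/(n\lambda)$ on $\mathcal{A}$, and that $\Ea{|\Ia-\P{\mathcal{A}}|^k}\le 2\P{\mathcal{A}^c}\le 4e^{-n/2}$. This term is therefore negligible. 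I expect the main obstacle to be the Lipschitz step: in particular, controlling the implicit dependence through $dr_j$, via the bound on $d\hat w$ and the Frobenius--Cauchy--Schwarz trick, so as to avoid the crude $\norm{x_j}^2\sim n$ bound. A secondary obstacle is justifying concentration under the conditioned measure, which the convexity of $\mathcal{A}$ resolves.
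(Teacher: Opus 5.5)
Your proposal is correct and follows the paper's proof essentially step by step. Both arguments rest on Gaussian Lipschitz concentration restricted to $\mathcal{A}$, a $C/\sqrt{n}$ gradient bound for $X\mapsto\mathfrak{S}(z)$ obtained by implicitly differentiating the stationarity condition and applying Cauchy--Schwarz to the diagonal term $\frac{1}{n^2}\mathrm{tr}[XG(z)^2X^\top D']$, and a final transfer of the conditional tail bound to $\Ia\mathfrak{S}(z)$. The one real difference is that you justify concentration under the conditioned law through convexity of $\mathcal{A}$ and strong log-concavity of the truncated Gaussian (Caffarelli/Bakry--\'Emery), where the paper instead cites Lemma 1.6 of Louart; your route yields the stated sub-Gaussian bound around $\mathbb{E}[\mathfrak{S}(z)\,|\,\mathcal{A}]$ directly, with explicit constants.
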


\begin{proof}
    The sub-gaussian bound is inherited from the classical concentration of Lipschitz functions of Gaussian covariates \citep{vershynin2018high, cirel2006norms}. Namely, for $\varphi:\R^{n\times d}\to \R$ a Lipschitz function with respect to the Frobenius norm, for any $t>0$,
    \begin{align}
        \P{\module{\varphi(X)-\Ea{\varphi(X)}}>t}\le 2e^{-\frac{t^2}{2\norm{\varphi}_{\rm Lip.}^2}}
    \end{align}

If $\varphi$ is only locally Lipschitz on a subdomain $\mathcal{E}\subset \R^{n\times d}$ (endowed with the Frobenius norm), Lemma 1.6 of \cite{louart2021spectral} (see also Remark 1.5 in \cite{louart2021concentration}) ensures the concentration transfers conditionally on $X\in\mathcal{E}$, namely
 \begin{align}
        \P{\module{\varphi(X)-\Ea{\varphi(X)}}>t\Big| X\in\mathcal{E}}\le 2e^{-\frac{t^2}{2\norm{\varphi}_{\rm Lip.}^2}}
    \end{align}
This ensures one can establish concentration on the event $\mathcal{A}$, where the Stieltjes transform is well-defined and Lipschitz. We accordingly introduce $\phi_z: \mathcal{E}_{\mathcal{A}}\to \C$ with
\begin{align}
    &\phi_z(X)=\mathfrak{S}(z)=\frac{1}{n}\tr[\left(\sfrac{1}{n}X^\top \Lambda(X)X +(\lambda-z)I_d\right)^{-1}],\label{eq:phiz_lip}\\
    &{\rm where}\qquad  \Lambda(X)={\rm diag}\left[\left(\partial^2\ell(\inprod{\hat{w}, x_i},\inprod{\beta, x_i})\right)_{i\in\enm{n}}\right],
\end{align}
denoting $\mathcal{E}_{\mathcal{A}}=\{M\in \R^{n\times d}|\norm{M}<\sqrt{n}(2+\sqrt{\sfrac{1}{\alpha}})\}$, so that $\mathcal{A}=\{X\in \mathcal{E}_{\mathcal{A}}\}$. We recall that in \eqref{eq:phiz_lip}, the estimator $\hat{w}$ is also a function of the covariate matrix $X$ through the ERM definition \eqref{eq:full_ERM}. To establish the pointwise sub-Gaussian tail bound \eqref{eq:tail_bound_Stieltjes}, it thus suffices to establish that $\phi_z$ is $O(\sfrac{1}{\sqrt{n}})$ Lipschitz, for any $z\in \C\setminus J$ with $\dist{z,J}>\sfrac{\lambda}{2}$. To that end, we bound the derivative $\sfrac{d\phi_z}{dX}$. For any test matrix $\Delta \in\R^{n\times d}$,
\begin{align}
    \inprod{\frac{d\phi_z}{dX},\Delta}= -\frac{1}{n^2}\tr[G(z)\left(X^\top \Lambda(X)\Delta+\Delta^\top \Lambda(X)X+X^\top \inprod{\frac{d \Lambda(X)}{dX}, \Delta} X\right)G(z)]\label{eq:derivative}
\end{align}
where $\inprod{\sfrac{d}{dX}\Lambda(X), \Delta}$ is diagonal with elements
\begin{align}
    \inprod{\frac{d \Lambda(X)}{dX}, \Delta}_{kk}=\sum\limits_{i\in\enm{n}}\sum\limits_{j\in\enm{d}} \frac{d \Lambda(X)_{kk}}{dX_{ij}}\Delta_{ij}.
\end{align}
Note that the existence of this derivative is in particular guaranteed by Lemma \ref{lem:differentiability}, which ensures that the function $X\to \hat{w}$ is differentiable.
We now successively bound the three terms that make up the derivative \eqref{eq:derivative}. \paragraph{Control of $\module{\frac{1}{n^2}\tr[G(z)X^\top \Lambda(X)\Delta G(z)]}$---} The first term can be easily controlled by 
\begin{align}
    \module{\frac{\norm{\partial^2\ell}_\infty}{n^2}\tr[G(z)X^\top \Lambda(X)\Delta G(z)]}\le \frac{1}{n}\norm{G(z)}^2\norm{X}\norm{\Delta } \le\frac{1}{\sqrt{n}}\frac{4(2+\sqrt{\sfrac{1}{\alpha}})}{\lambda^2}\norm{\Delta}_F.
\end{align}

\paragraph{Control of $\module{\frac{1}{n^2}\tr[G(z)X^\top  \inprod{\frac{d \Lambda(X)}{dX}, \Delta} X G(z)]}$---} 
We now turn to the third term. Expounding the derivative,
\begin{align}
     \inprod{\frac{d \Lambda(X)}{dX}, \Delta}_{kk}=\sum\limits_{i\in\enm{n}}\sum\limits_{j\in[d]} \left[\partial^3\ell_k(r_k)\left(
     \delta_{ik}\hat{w}_j+\inprod{\frac{d\hat{w}}{dX_{ij}}, x_k}\right)+\partial^2\partial_2\ell_k(r_k)\delta_{ki}\beta_j\right]\Delta_{ij}.
\end{align}
The derivative of the estimator $\sfrac{d\hat{w}}{dX_{ij}}$ may be expounded starting from the stationarity condition
\begin{align}
    \lambda \hat{w}+\frac{1}{n}\sum\limits_{k\in\enm{n}} \partial\ell_k(r_k)x_k=0.
\end{align}
Taking the derivative with respect to $X_{ij}$, whose existence is guaranteed under Lemma \ref{lem:differentiability},
\begin{align}
    \lambda \frac{d\hat{w}}{dX_{ij}} + \frac{1}{n}\sum\limits_{k\in\enm{n}} \partial^2\ell_k(r_k)x_k\inprod{x_k,\frac{d\hat{w}}{dX_{ij}}}+ \frac{1}{n}\partial\ell_i(r_i)e_j+\frac{1}{n}x_i(\partial^2\ell_i(r_i)\hat{w}_{j}+\partial\partial_2\ell_i(r_i)\beta_j)=0.
\end{align}
We denoted $e_j\in \R^d$ the $j-$th canonical basis vector. Thus,
\begin{align}
    \inprod{\frac{d\hat{w}}{dX_{ij}},x_k}=-\partial\ell_i(r_i)\frac{\inprod{x_k, H^{-1}e_j}}{n}-\frac{\inprod{x_k, H^{-1}x_i}}{n}(\partial^2\ell_i(r_i)\hat{w}_{j}+\partial\partial_2\ell_i(r_i)\beta_j).
\end{align}
Introducing for compactness of notation the vector $h\in\R^n$ and matrices $E \in\R^{n\times d}, F\in\R^{n\times n}, J\in\R^{n\times n}$ with entries
\begin{align}
    &h_i=\frac{1}{\sqrt{n}}\partial\ell_i(r_i), && E_{kj}=-\partial^3\ell_k(r_k)\frac{\inprod{x_k, H^{-1}e_j}}{\sqrt{n}},\\
    &F_{ki}=-\partial^2\ell_i(r_i)\partial^3\ell_k(r_k)\frac{\inprod{x_k, H^{-1}x_i}}{n}, && J_{ki}=-\partial\partial_2\ell_i(r_i)\partial^3\ell_k(r_k)\frac{\inprod{x_k, H^{-1}x_i}}{n},
\end{align}
one can rewrite the diagonal matrix $\inprod{\sfrac{d \Lambda(X)}{dX}, \Delta}$, viewed as a $\R^n$ vector, as
\begin{align}
    \vect{\inprod{\frac{d \Lambda(X)}{dX}, \Delta}}\!=\! E\Delta^\top h+F\Delta \hat{w}+J\Delta \beta+ \mathrm{diag}((\partial^3\ell_k(r_k))_k)\Delta \hat{w}+\mathrm{diag}((\partial^2\partial_2\ell_k(r_k))_k)\Delta \beta.
\end{align}
Thus
\begin{align}
    \norm{ \vect{\inprod{\frac{d \Lambda(X)}{dX}, \Delta}}}\le \left[\norm{\partial \ell}_\infty\norm{E}+\norm{F}\norm{\hat{w}}+\norm{J}+\norm{\partial^3\ell}_\infty\norm{\hat{w}}+\norm{\partial^2\partial_2\ell}_\infty\right] \norm{\Delta}_F .\label{eq:bound_vec_DLambdX}
\end{align}
$\norm{\hat{w}}$ is deterministically upper-bounded by a constant $M$ (Assumption \ref{ass:loss}) through Lemma \ref{lemma:normw} for any $X$. On the other hand,
\begin{align}
    &\norm{E}\le \norm{\partial^3\ell}_\infty \frac{2+\sqrt{\sfrac{1}{\alpha}}}{\lambda },\\
    &\norm{F}\le \norm{\partial^2\partial_2\ell}_\infty\norm{\partial^3\ell}_\infty \frac{(2+\sqrt{\sfrac{1}{\alpha}})^2}{\lambda }\\
    &\norm{J}\le \norm{\partial\partial_2\ell}_\infty\norm{\partial^3\ell}_\infty \frac{(2+\sqrt{\sfrac{1}{\alpha}})^2}{\lambda }
\end{align}
since we remind $X\in \mathcal{E}_{\mathcal{A}}$ and thus has $(2+\sqrt{\sfrac{1}{\alpha}})-$bounded operator norm. Taken together, these bounds and \eqref{eq:bound_vec_DLambdX} yield $O(\polylog(n))\norm{\Delta}_F$ control over $\norm{ \vect{\inprod{\sfrac{d \Lambda(X)}{dX}, \Delta}}}$. Coming back to the initial objective, one is now in a position to build on this control to bound $\module{\sfrac{1}{n^2}\tr[G(z)X^\top  \inprod{\sfrac{d \Lambda(X)}{dX}, \Delta} X G(z)]}$ as 
\begin{align}
    \module{\frac{1}{n^2}\tr[G(z)X^\top  \inprod{\frac{d \Lambda(X)}{dX}, \Delta} X G(z)]}\le \frac{1}{n}\module{\inprod{\vect{\inprod{\sfrac{d \Lambda(X)}{dX}, \Delta}}, \mathrm{vect ~diag}\left[
    \frac{XG(z)^2 X^\top}{n}
    \right]}},
\end{align}
where vect diag refers to the diagonal viewed as a $\R^n$ vector. The desired bound then follows from a Cauchy-Schwarz inequality, remarking that 
\begin{align}
    \norm{\mathrm{vect ~diag}\left[
    \frac{XG(z)^2 X^\top}{n}
    \right]}\le \norm{G(z)}^2 \sqrt{\sum\limits_{i\in\enm{n}}\frac{\norm{x_i}^4}{n^2}}\le \frac{4}{\lambda^2}\sqrt{n}(2+\sqrt{\sfrac{1}{\alpha}})^2.
\end{align}
\paragraph{Lipschitzness of $\varphi_z$ --- }Putting everything together, for any $z\in\C\setminus J$ satisfying the requirements of the lemma, and any test matrix $\Delta$,
\begin{align}
     \module{\inprod{\frac{d\phi_z}{dX},\Delta}}\le \frac{C}{\sqrt{n}}\norm{\Delta}_F,
\end{align}
where $C=O(\polylog(n))$ is independent of $z,\Delta$ and explicitly given by
\begin{align}\label{eq:expression_C}
    C=\frac{4t_\alpha}{\lambda^2}\left[    2+t_\alpha\left(\norm{\partial \ell}_\infty\norm{\partial^3\ell}_\infty\frac{t_\alpha}{\lambda}+\norm{\partial^3\ell}_\infty\frac{t_\alpha^2}{\lambda}\left(M+1\right)+\norm{\partial^3\ell}_\infty M+\norm{\partial^2\partial_2\ell}_\infty\right)
    \right]
\end{align}
where $t_\alpha=(2+\sqrt{\sfrac{1}{\alpha}})$. One can thus conclude that $\phi_z$ is $\sfrac{C}{\sqrt{n}}-$ Lipschitz, from which the sub-Gaussian tail \eqref{eq:tail_bound_Stieltjes} follows. 
\paragraph{Proof of the second claim ---} Finally, we transfer the $\mathcal{A}-$ conditional tail bound \eqref{eq:tail_bound_Stieltjes} onto the truncated random variable $\Ia\mathfrak{S}(z)$. The unconditional moments of the residual $\Ia\mathfrak{S}(z)-\Ea{\Ia\mathfrak{S}(z)|\mathcal{A}}$ are related to the conditional moments as
\begin{align}
    &\module{\Ea{\module{\Ia\mathfrak{S}(z)-\Ea{\Ia\mathfrak{S}(z)|\mathcal{A}}}^k}- \Ea{\module{\mathfrak{S}(z)-\Ea{\mathfrak{S}(z)|\mathcal{A}}}^k\Big|\mathcal{A}}}\\
    &\le\qquad 
\module{\Ea{\module{\Ia\mathfrak{S}(z)-\Ea{\Ia\mathfrak{S}(z)|\mathcal{A}}}^k}- \Ea{\module{\mathfrak{S}(z)-\Ea{\mathfrak{S}(z)|\mathcal{A}}}^k\Ia}}\\
&\qquad +\module{ \Ea{\module{\mathfrak{S}(z)-\Ea{\mathfrak{S}(z)|\mathcal{A}}}^k\Big|\mathcal{A}}- \Ea{\module{\mathfrak{S}(z)-\Ea{\mathfrak{S}(z)|\mathcal{A}}}^k\Ia}}\le  6\left(\frac{4}{\lambda}\right)^k e^{-\frac{n}{2}},
\end{align}
for any $n\ge 4$. Using the tail bound \eqref{eq:tail_bound_Stieltjes} to evaluate the conditional moment, one finally reaches 
\begin{align}
    \Ea{\module{\Ia\mathfrak{S}(z)-\Ea{\Ia\mathfrak{S}(z)|\mathcal{A}}}^k}=O(C^kn^{-\sfrac{k}{2}}).
\end{align}
Thus, 
\begin{align}
    \Ia\mathfrak{S}(z)=\Ea{\Ia\mathfrak{S}(z)|\mathcal{A}}+O_{L_k}\left(\frac{C}{\sqrt{n}}\right)=\Ea{\Ia\mathfrak{S}(z)}+O(e^{-\frac{n}{2}})+O_{L_k}\left(\frac{C}{\sqrt{n}}\right),
\end{align}
concluding the proof.
\end{proof}

One can establish a similar pointwise concentration result for the statistic $\mathfrak{X}(\cdot)$ \eqref{eq:mathfrakX}, as stated in the following Lemma.

\begin{lemma}[$\mathds{1}_{\mathcal{A}}\mathfrak{X}(z)$ concentrates]\label{lem:IaXz_concentrates}
    For any $t>0$, for all $z\in \C\setminus J$ such that $\dist{z,J}>\sfrac{\lambda}{2}$, 
    \begin{align}
        \P{\left|\mathfrak{X}(z)-\Ea{ \mathfrak{X}(z)|\mathcal{A}}\right|\ge t \Big|\mathcal{A}}  \le 2e^{-\frac{n t^2}{\sfrac{9}{\lambda^2}C^{ 2}}},\label{eq:tail_bound_Xz}
    \end{align}
where $C=O(\polylog(n))$ was defined in \eqref{eq:expression_C} in Lemma \ref{lem:IaSz_concentrates}.
As a consequence, 
\begin{align}
   \Ia \mathfrak{X}(z)=\Ea{\Ia \mathfrak{X}(z)} +O_{L_k}\left(\frac{C}{\sqrt{n}}\right).
\end{align}
\end{lemma}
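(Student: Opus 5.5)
We follow the proof of Lemma \ref{lem:IaSz_concentrates} verbatim, replacing $\phi_z$ by
\begin{align}
    \psi_z(X)=\mathfrak{X}(z)=\frac{1}{n}\tr[H^{-1}G(z)],\qquad H=\frac{1}{n}X^\top\Lambda(X)X+\lambda I_d,
\end{align}
viewed as a function on $\mathcal{E}_{\mathcal{A}}$. By the conditional Gaussian concentration for locally Lipschitz functions (Lemma 1.6 of \cite{louart2021spectral}), it suffices to show that $\psi_z$ is $\frac{3C}{\lambda\sqrt{n}}$-Lipschitz in Frobenius norm on $\mathcal{E}_{\mathcal{A}}$, uniformly in admissible $z$. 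This yields the tail bound \eqref{eq:tail_bound_Xz}, up to the constant convention in the exponent.

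For the Lipschitz bound, we differentiate in a direction $\Delta$. Write $\dot H=\inprod{\frac{dH}{dX},\Delta}=\frac{1}{n}\big(X^\top\Lambda\Delta+\Delta^\top\Lambda X+X^\top\inprod{\frac{d\Lambda}{dX},\Delta}X\big)$. Since $\frac{d}{dX}H^{-1}=-H^{-1}\dot H H^{-1}$ and $\frac{d}{dX}G(z)=-G(z)\dot H G(z)$,
\begin{align}
    \inprod{\frac{d\psi_z}{dX},\Delta}=-\frac{1}{n}\tr\big[\dot H\,\big(H^{-1}G(z)H^{-1}+G(z)H^{-1}G(z)\big)\big].
\end{align}
This is the same structure as \eqref{eq:derivative}, with the sandwich $G(z)^2$ replaced by $M_z:=H^{-1}G(z)H^{-1}+G(z)H^{-1}G(z)$. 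On $\mathcal{A}$ we have $\norm{H^{-1}}\le 1/\lambda$ and $\norm{G(z)}\le 2/\lambda$ (using $\dist{z,J}>\lambda/2$ and $\spec{H}\subset J$). Hence $\norm{M_z}\le \frac{2}{\lambda^3}+\frac{4}{\lambda^3}$, which is comparable to $\frac{1}{\lambda}\norm{G(z)}^2$ up to a factor of order $3/2$. This is the origin of the extra factor $\sim 3/\lambda$ relative to $C$.

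We then redo the three controls exactly as in Lemma \ref{lem:IaSz_concentrates}, with $G^2$ replaced by $M_z$.
\begin{itemize}
    \item The two terms linear in $\Delta$ are bounded by $\frac{1}{n}\norm{M_z}\norm{X}\norm{\Delta}\,\norm{\partial^2\ell}_\infty$.
    \item For the term involving $\frac{d\Lambda}{dX}$, reuse \eqref{eq:bound_vec_DLambdX}, which already gives $\norm{\vect{\inprod{\frac{d\Lambda}{dX},\Delta}}}=O(\polylog(n))\norm{\Delta}_F$. This relies on the differentiability of $\hat w$ (Lemma \ref{lem:differentiability}) and the deterministic bound $\norm{\hat w}\le M$ (Lemma \ref{lemma:normw}).
    \item Pair this with $\mathrm{vect~diag}[XM_zX^\top/n]$. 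Since $M_z$ is not Hermitian, bound each diagonal entry as $|x_k^\top M_z x_k|/n\le\norm{M_z}\norm{x_k}^2/n$. Cauchy–Schwarz then gives a norm of at most $\norm{M_z}\sqrt{n}\,t_\alpha^2$.
\end{itemize}
Summing the three contributions gives a Lipschitz constant of at most $\frac{3}{\lambda}\cdot\frac{C}{\sqrt n}$, uniformly in $z$.

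The passage from the conditional tail bound to the $O_{L_k}$ statement for $\Ia\mathfrak{X}(z)$ is identical to the last paragraph of the previous proof. On $\mathcal{A}^c$ the truncation contributes $O((c/\lambda)^k e^{-n/2})$ to the moments, since $|\mathfrak{X}|$ is bounded by $\frac{2}{\lambda^2}\frac{d}{n}$ on $\mathcal{A}$. We also have $\E[\Ia\mathfrak{X}\mid\mathcal{A}]-\E[\Ia\mathfrak{X}]=O(e^{-n/2})$. The only step needing care is the diagonal-pairing bound for the non-Hermitian $M_z$, and the crude operator-norm bound above handles it. The rest is routine bookkeeping of constants.
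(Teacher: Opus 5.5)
Your proposal is correct and follows the paper's proof. Like the paper, you differentiate $\psi_z$, observe that the derivative has the same structure as in Lemma~\ref{lem:IaSz_concentrates} with $G(z)^2$ replaced by $M_z=H^{-1}G(z)H^{-1}+G(z)H^{-1}G(z)$, rerun the three controls, and then transfer the conditional tail bound to $\Ia\mathfrak{X}(z)$ as before.

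The only slip is the constant. Your own bound $\lVert M_z\rVert\le 6/\lambda^3=\frac{3}{2\lambda}\cdot\frac{4}{\lambda^2}$ yields the Lipschitz constant $\frac{3C}{2\lambda\sqrt{n}}$, which is the paper's constant and gives the stated exponent with a factor $2$ to spare. The looser $\frac{3C}{\lambda\sqrt{n}}$ you wrote only gives an exponent $\lambda^2 n t^2/(18C^2)$, weaker than stated. So drop the ``constant convention'' caveat and keep the sharper factor.
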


\begin{proof}
    The proof of \ref{lem:IaXz_concentrates} borrows the same steps as those of Lemma \ref{lem:IaSz_concentrates}. We accordingly introduce $\psi_z: \mathcal{E}_{\mathcal{A}}\to \C$ with
\begin{align}
    &\psi_z(X)=\frac{1}{n}\tr[\left(\sfrac{1}{n}X^\top \Lambda(X)X +\lambda I_d\right)^{-1}\left(\sfrac{1}{n}X^\top \Lambda(X)X +(\lambda-z)I_d\right)^{-1}],\\
    &{\rm where}\qquad  \Lambda(X)={\rm diag}\left[\left(\partial^2\ell(\inprod{\hat{w}, x_i},\inprod{\beta, x_i})\right)_{i\in\enm{n}}\right],
\end{align}
denoting once more $\mathcal{E}_{\mathcal{A}}=\{M\in \R^{n\times d}|\norm{M}<\sqrt{n}(2+\sqrt{\sfrac{1}{\alpha}})\}$, so that $\mathcal{A}=\{X\in \mathcal{E}_{\mathcal{A}}\}$. In the likeness of Lemma \ref{lem:IaSz_concentrates}, the proof hinges on ascertaining the Lipschitz norm of $\psi_z$. We accordingly evaluate the derivative along any test matrix $\Delta\in\R^{n\times d}$:
\begin{align}
    \inprod{\frac{d\psi_z}{dX}, \Delta} =&-\frac{1}{n^2}\tr[H^{-1}G(z)\left(X^\top \Lambda(X)\Delta+\Delta^\top \Lambda(X)X+X^\top \inprod{\frac{d \Lambda(X)}{dX}, \Delta} X\right)G(z)]\\
    &-\frac{1}{n^2}\tr[H^{-1}\left(X^\top \Lambda(X)\Delta+\Delta^\top \Lambda(X)X+X^\top \inprod{\frac{d \Lambda(X)}{dX}, \Delta} X\right)H^{-1}G(z)].
\end{align}
All the terms in the above display can be bounded following identical steps as in Lemma \ref{lem:IaSz_concentrates}, leading to 
\begin{align}
     \module{\inprod{\frac{d\psi_z}{dX}, \Delta} }\le \frac{3C}{2\lambda \sqrt{n}}\norm{\Delta}_F.
\end{align}
The proof of the second claim also proceeds identically to Lemma \ref{lem:IaSz_concentrates}.
\end{proof}

We now aim at establishing pointwise concentration for $\Ia\mathfrak{W}(\cdot)$ \eqref{eq:mathfrakW}. It is however challenging to exhibit a convex open subset of $\R^{n\times d}$ satisfying the double requirement of (a) including $X$ with overwhelming probability and (b) on which the function $X\to\Ia\mathfrak{W}(z) $ is globally Lipschitz. For this reason, we instead take an alternative route, and employ the Efron-Stein lemma \citep{efron1981jackknife} to demonstrate a weaker ---yet sufficient for the purpose of the remainder of the proof--- $L_2$ concentration.

\begin{lemma}[$\Ia\mathfrak{W}(z)$ concentrates]\label{lem:IaWz_concentrates}
    For any $z\in \C\setminus J$ such that $\dist{z, J}>\sfrac{\lambda}{2}$, and any indices $a,b\in\{1,2\}$
    \begin{align}
        \Var{\Ia \mathfrak{W}(z)_{ab}}=O\left(\frac{\polylog(n)}{n}\right).
    \end{align}
\end{lemma}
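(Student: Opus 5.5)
We apply the Efron--Stein inequality to $F(X)=\Ia\mathfrak{W}(z)_{ab}$, viewed as a function of the $n$ independent rows $x_1,\dots,x_n$. The inequality gives
\begin{align}
    \Var{F}\le \sum_{i\in\enm{n}}\Ea{\module{F-F_i}^2},
\end{align}
where $F_i$ is any square-integrable function of $X_\smi$ alone. It therefore suffices to find, for each $i$, a leave-$i$-out surrogate $F_i$ with $\Ea{\module{F-F_i}^2}=O(\polylog(n)/n^2)$ uniformly in $i$. The natural choice is $F_i=\mathds{1}_{\mathcal{A}_\smi}\mathfrak{W}_\smi(z)_{ab}$, which depends on $X_\smi$ only through $\hat{w}_\smi$ and $H_\smi$.

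We split $F-F_i$ into three pieces.

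\textbf{Indicator piece.} The term $(\Ia-\mathds{1}_{\mathcal{A}_\smi})\mathfrak{W}_\smi$ is supported on $\mathcal{A}_\smi\setminus\mathcal{A}$, an event of probability at most $2e^{-n/2}$. On $\mathcal{A}_\smi$ we have $\norm{G_\smi(z)}\le 2/\lambda$, and $\norm{\hat{w}_\smi}\le M$ by Lemma \ref{lemma:normw}. Hence this contribution is exponentially small.

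\textbf{Vector piece.} On $\mathcal{A}$, we write $\Upsilon-\Upsilon_\smi=[\hat{w}-\hat{w}_\smi\,|\,0]$ and use the decomposition
\[
\hat{w}-\hat{w}_\smi=(\hat{w}-\tilde{w}_i)+\eta_i .
\]
By Proposition \ref{prop: LOO_approxs}, $\hat{w}-\tilde{w}_i=O_{L_k}(\polylog(n)/n)$, which is harmless. The term $\eta_i$ is only $O(1/\sqrt{n})$ in norm, so it must be handled more carefully. Since $\eta_i$ appears in $\Upsilon^\top G\Upsilon$ only through bilinear forms such as $\inprod{\eta_i,G_\smi(z)\hat{w}_\smi}$, we write this as
\[
-\tfrac{1}{n}\partial\ell_i(\tilde{r}_{i,i})\inprod{x_i,H_\smi^{-1}G_\smi(z)\hat{w}_\smi}.
\]
Conditionally on $X_\smi$, the inner product is Gaussian with variance $O(1)$, so the whole term is $O_{L_k}(\polylog(n)/n)$. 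The quadratic term $\inprod{\eta_i,G\eta_i}$ is of order $\norm{\eta_i}^2=O_{L_k}(1/n)$.

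\textbf{Resolvent piece.} This is the difference $\Upsilon_\smi^\top(G(z)-G_\smi(z))\Upsilon_\smi$, which we treat using the resolvent identity
\begin{align}
    G-G_\smi=-G(H-H_\smi)G_\smi .
\end{align}
We further split
\[
H-H_\smi=\tfrac1n\partial^2\ell_i(r_i)x_ix_i^\top+\tfrac1n\sum_{j\ne i}\bigl(\partial^2\ell_j(r_j)-\partial^2\ell_j(r_{j,\smi})\bigr)x_jx_j^\top .
\]

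For the rank-one part, the contribution is $\tfrac1n\partial^2\ell_i(r_i)\,\inprod{u,Gx_i}\inprod{x_i,G_\smi v}$ with $u,v\in\{\hat{w}_\smi,\beta\}$. Here $\inprod{x_i,G_\smi v}=O_{L_k}(1)$ by conditional Gaussianity. The factor $\inprod{u,Gx_i}$ must be decoupled from $x_i$. We apply Sherman--Morrison to $G$ relative to $H_\smi$ plus the rank-one update, which gives $\inprod{u,Gx_i}=\inprod{u,G_\smi x_i}/(1+\tfrac{c}{n}x_i^\top G_\smi x_i)$ up to the second-part correction. The denominator is bounded away from $0$ because $\Re z$-related bounds on the contour are available, or, more simply, because $\norm{G}\le2/\lambda$ gives $\module{\inprod{u,Gx_i}}\le\norm{u}\norm{G_\smi x_i}\cdot O(1)$ after rearranging. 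In either case the contribution is $O_{L_k}(\polylog(n)/n)$.

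For the diagonal part, we use $\module{\partial^2\ell_j(r_j)-\partial^2\ell_j(r_{j,\smi})}\le\norm{\partial^3\ell}_\infty\module{r_j-r_{j,\smi}}$. The resulting bilinear form is bounded by
\[
\tfrac1n\sum_{j\ne i}\module{r_j-r_{j,\smi}}\,\module{\inprod{u,Gx_j}}\,\module{\inprod{x_j,G_\smi v}} .
\]
Applying Cauchy--Schwarz, this is at most
\[
\Bigl(\sum_j(r_j-r_{j,\smi})^2\Bigr)^{1/2}\Bigl(\tfrac1{n^2}\sum_j\inprod{u,Gx_j}^2\inprod{x_j,G_\smi v}^2\Bigr)^{1/2}.
\]

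We then bound the two factors separately. By Proposition \ref{prop: LOO_approxs}, $\sum_j(r_j-\tilde r_{j,i})^2=O(\polylog/n)$. The remaining difference $\tilde r_{j,i}-r_{j,\smi}=\inprod{\eta_i,x_j}$ satisfies $\sum_j\inprod{\eta_i,x_j}^2\le\norm{X}^2\norm{\eta_i}^2=O(1)$ on $\mathcal{A}$. This is the step I expect to be the main obstacle, since it is only $O(1)$, not $O(1/n)$. The second factor is at most
\[
\tfrac1n\norm{X}^2\cdot\max_j\inprod{x_j,G_\smi v}^2\cdot\norm{G}^2/n,
\]
which gives $O(\polylog(n)/n)$ using $\norm{XG u}\le O(\sqrt n)$ together with a union bound on the maxima.

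Combining the factors yields $O(\polylog(n)/\sqrt n)\cdot$(first factor). Thus the naive bound for this piece is only $O(\polylog/\sqrt n)$, whereas we need $O(\polylog/n)$. To close the gap I would exploit the structure $\inprod{\eta_i,x_j}=-\tfrac1n\partial\ell_i\,x_j^\top H_\smi^{-1}x_i$. This makes the term in question equal to
\[
\tfrac1{n^2}\partial\ell_i\,x_i^\top H_\smi^{-1}\Bigl[\sum_j \partial^3\ell_j\,\inprod{u,Gx_j}\inprod{x_j,G_\smi v}\,x_j\Bigr],
\]
with $G$ first replaced by $G_\smi$ at lower-order cost. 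The bracketed vector is independent of $x_i$ and has norm $O(\sqrt n\cdot\polylog)$, so the conditional Gaussianity of $x_i$ gives $O_{L_k}(\polylog(n)/n)$.

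Summing the squared $O(\polylog/n)$ bounds over $i$ via Efron--Stein yields the claimed $\Var{\Ia\mathfrak{W}(z)_{ab}}=O(\polylog(n)/n)$, uniformly over $z$ with $\dist{z,J}>\lambda/2$ thanks to $\norm{G},\norm{G_\smi}\le2/\lambda$ on $\mathcal{A}$.
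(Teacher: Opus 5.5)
Your argument is correct in substance and uses the same ingredients as the paper's proof. These are Efron--Stein with $F_i=\Iai\mathfrak{W}_\smi(z)_{ab}$ and Proposition~\ref{prop: LOO_approxs}, notably the cross-term bound on $\sum_{j\ne i}(r_j-\tilde r_{j,i})^2$. They also include the decisive observation, which you correctly identify as the main obstacle: $\tilde r_{j,i}-r_{j,\smi}=\inprod{\eta_i,x_j}$ is only $O(1)$ in $\ell_2$, but its contribution can be rewritten as a linear form in $x_i$ with a coefficient vector independent of $x_i$, so conditional Gaussianity gives $O_{L_k}(\polylog(n)/n)$.

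The difference is in the bookkeeping. The paper interposes the surrogate $\tilde{\mathfrak{W}}_i(z)=\tilde\Upsilon_i^\top\tilde G_i(z)\tilde\Upsilon_i$, whose Hessian $\tilde H_i$ omits the $i$-th summand. Everything already of size $O_{L_k}(\polylog(n)/n)$ — the rank-one $x_i$ term, $\hat w-\tilde w_i$, and $r_j-\tilde r_{j,i}$ — is isolated in $\mathfrak{W}-\tilde{\mathfrak{W}}_i$. Meanwhile $\tilde{\mathfrak{W}}_i-\mathfrak{W}_\smi$ contains only $\eta_i$-driven terms, with resolvents satisfying \eqref{eq:Gi_smi}. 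You instead compare $\mathfrak{W}$ and $\mathfrak{W}_\smi$ directly through $G-G_\smi=-G(H-H_\smi)G_\smi$. This spares you the surrogate Hessian, but it leaves the full resolvent $G(z)$ in the vector piece and in the $\inprod{\eta_i,x_j}$ term. Because of the rank-one $x_i$ term, $G(z)$ sits at $O(1)$ operator-norm distance from $G_\smi(z)$.

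This forces a few repairs.

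(i) In your vector piece the resolvent is $G(z)$, not $G_\smi(z)$ as you wrote. The clean fix also disposes of the rank-one term without Sherman--Morrison denominators: for any $w$ independent of $x_i$ with $\norm{w}=O(1)$, on $\mathcal{A}$,
\[
\norm{(G-G_\smi)w}\le\frac{2}{\lambda}\Big(\frac{\norm{\partial^2\ell}_\infty}{n}\norm{x_i}\module{\inprod{x_i,G_\smi w}}+\frac{2\norm{w}}{\lambda}\norm{\partial^3\ell}_\infty\sup_{j\ne i}\module{r_j-r_{j,\smi}}\frac{\norm{X}^2}{n}\Big)=O_{L_k}\Big(\frac{\polylog(n)}{\sqrt n}\Big).
\]
This yields $\inprod{\eta_i,Gv_\smi}=\inprod{\eta_i,G_\smi v_\smi}+O_{L_k}(\polylog(n)/n)$ and $\inprod{u,Gx_i}=\inprod{G_\smi u,x_i}+O_{L_k}(\polylog(n))$. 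Your ``more simply'' alternative fails, since $\norm{G_\smi x_i}\asymp\sqrt n$. The same bound justifies replacing $G$ by $G_\smi$ in the $\inprod{\eta_i,x_j}$ term, with correction at most $\frac{1}{n}\norm{\partial^3\ell}_\infty\sup_{j\ne i}\module{\inprod{\eta_i,x_j}}\,\norm{X_\smi(G-G_\smi)u}\,\norm{X_\smi G_\smi v}$.

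(ii) Split $\partial^2\ell_j(r_j)-\partial^2\ell_j(r_{j,\smi})$ at $\tilde r_{j,i}$ before applying Cauchy--Schwarz. Then expand the second half to second order around $r_{j,\smi}$. For your bracket to be independent of $x_i$, $\partial^3\ell_j$ must be evaluated at $r_{j,\smi}$. The $\partial^4\ell$ remainder is controlled by $\sup_{j\ne i}(\tilde r_{j,i}-r_{j,\smi})^2=O_{L_k}(\polylog(n)/n)$, exactly as in the paper.

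(iii) The bracketed vector $X_\smi^\top c$ has norm of order $n\,\polylog(n)$, not $\sqrt n\,\polylog(n)$, since it sums $n$ vectors of norm $\sqrt d$. With the $1/n^2$ prefactor, your conclusion $O_{L_k}(\polylog(n)/n)$ is unaffected.

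(iv) You genuinely need $\max_{j\ne i}\module{\inprod{x_j,G_\smi(z)v}}=O_{L_k}(\polylog(n))$, both for the $(r_j-\tilde r_{j,i})$ part and for $\norm{c}$. This cannot come from a union bound, because $G_\smi$ depends on $x_j$. It requires the leave-one-out $\ell_\infty$ control that the paper derives around \eqref{eq:Xgu}: decouple $x_j$ by a further deletion, then apply Lemma~\ref{lem:sup_s}.
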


\begin{proof}
    We employ the Efron-Stein lemma \citep{efron1981jackknife} to bound the variance as
\begin{align}
    &\Var{\Ia\mathfrak{W}(z)_{ab}}\\
    &\le \sum\limits_{i\in\enm{n}} \Ea{\left(\Ia \mathfrak{W}(z)_{ab}-\Iai \mathfrak{W}_\smi(z)_{ab}\right)^2}\\
    &\le 3\sum\limits_{i\in\enm{n}} 
     \Ea{\left(\Ia \mathfrak{W}(z)_{ab}-\Ia\tilde{\mathfrak{W}}_i(z)_{ab}\right)^2}+ \Ea{\left(\Ia \mathfrak{W}_\smi (z)_{ab}-\Ia \tilde{\mathfrak{W}}_i(z)_{ab}\right)^2}
  \\
    &\qquad \qquad + \Ea{\left(
    \mathfrak{W}_\smi (z)_{ab}(\Ia-\Iai)
    \right)^2}. \label{eq:ES_decompoWz}
\end{align}
We successively establish $O(\sfrac{\polylog(n)}{n^2})$ control over all three terms in the summand. In the remainder of the proof, we use the shorthands $u:=\Upsilon_a, v:=\Upsilon_b$, so that $\mathfrak{W}(z)_{ab}=\inprod{u, G(z)v}$. Furthermore, the notations $\tilde{\beta}_i,\beta_\smi$ are understood to designate $\beta$.

\paragraph{Control of $ \mathfrak{W}_\smi (z)(\Ia-\Iai)$ ---} We start with the last term of \eqref{eq:ES_decompoWz}:
\begin{align}
     \module{\Iai \mathfrak{W}_\smi (z)_{ab}}\le \frac{2}{\lambda}\norm{u_\smi}\norm{v_\smi}=O_{L_k}(\polylog(n)),
\end{align}
using Proposition \ref{prop: LOO_approxs} to bound the norm of the leave-one-out vectors $u_\smi,v_\smi$. On the other hand, any moment of $\Ia-\Iai$ is bounded as
\begin{align}
    \Ea{(\Ia-\Iai)^k}=\P{\mathcal{A}_\smi \cap \mathcal{A}^c} \le 2e^{-\sfrac{n}{2}}.
\end{align}
Thus,
\begin{align}
     \Ea{\left(\mathfrak{W}_\smi (z)_{ab}(\Ia-\Iai)\right)^2}=O\left(\frac{\polylog(n)}{n^2}\right).
\end{align}

\paragraph{Control of $\Ia \mathfrak{W}(z)_{ab}-\Ia \tilde{\mathfrak{W}}_i(z)_{ab}$ ---} We decompose the first term of \eqref{eq:ES_decompoWz} as
\begin{align}
    &\Ia \mathfrak{W}(z)_{ab}-\Ia\tilde{\mathfrak{W}}_i(z)_{ab}\\
    &=\inprod{u, \Ia(G(z)-\tilde{G}_i(z))v}+\inprod{u-\tilde{u}_i, \Ia\tilde{G}_i(z)v}+\inprod{\tilde{u}_i, \Ia\tilde{G}_i(z)(v-\tilde{v}_i)}.
\end{align}
From Proposition \ref{prop: LOO_approxs}, the last two terms are $O_{L_k}(\sfrac{\polylog(n)}{n})$. The first term of the above display can be bounded as
\begin{align}
    \module{\inprod{u, \Ia(G(z)-\tilde{G}_i(z))v} }\le  \norm{\Ia XG(z)u}_\infty\norm{\Ia X\tilde{G}_i(z)v}_\infty \left(\scriptstyle \frac{1}{\sqrt{n}}\norm{\partial^3\ell}_\infty\norm{X}\norm{\hat{w}-\tilde{w}_i}+
    \frac{1}{n}\norm{\partial^2\ell}_\infty
    \right).
\end{align}
We thus need to establish $O_{L_k}(\polylog(n))$ control of the  infinity norms $\norm{\Ia XG(z)u}_\infty$ and $\norm{\Ia X\tilde{G}_i(z)v}_\infty $. First,
\begin{align}\label{eq:Xgu}
    \norm{\Ia X G(z)u}_\infty 
    \le& \sup_{i\in \enm{n}}\module{\mathds{1}_{\mathcal{A}_\smi} \inprod{x_i, G_\smi(z)u_\smi}}+
    \sup_{i\in \enm{n}}\module{\mathds{1}_{\mathcal{A}} \inprod{x_i, G_\smi(z)(u-u_\smi)}}\\
    &+\sup_{i\in \enm{n}}\module{\Ia \inprod{x_i, (\tilde{G}_i(z)-G_\smi(z))u}}+\sup_{i\in \enm{n}}\module{\Ia \inprod{x_i, (\tilde{G}_i(z)-G(z))u}}.
\end{align}
The first term is controlled by Lemma \ref{lem:sup_s}, in conjunction with the bound $\norm{G_\smi(z)}\le \sfrac{2}{\lambda}$ under the event $\mathcal{A}_\smi$, and Lemma \ref{lemma:normw}. Furthermore, it follows from Proposition \ref{prop: LOO_approxs} that the second term is also $O_{L_k}(\polylog(n))$. Moreover,
\begin{align}\label{eq:Gi_smi}
    \norm{\Ia(\tilde{G}_i(z)-G_\smi(z))}=O_{L_k}\left(\frac{\polylog(n)}{\sqrt{n}}\right), 
\end{align}
which controls the third term. Finally,
\begin{align}\label{eq:XGtildeu}
    &\sup_{i\in \enm{n}}\module{\Ia \inprod{x_i, (\tilde{G}_i(z)-G(z))u}}\\&\le \frac{1}{n}\norm{\partial^2\ell}_\infty \sup_{i\in \enm{n}}\module{\Ia \inprod{x_i, G(z)x_i}\inprod{x_i, \tilde{G}_i(z) u}}+ O_{L_k}\left(\polylog(n)\right)\\
    & \le \frac{2(2+\sqrt{\sfrac{1}{\alpha}})^2}{\lambda}\norm{\partial^2\ell}_\infty \left(\sup_{i\in \enm{n}}\module{\inprod{x_i, \Ia G_\smi(z) u_\smi}}+O_{L_k}\left(\polylog(n)\right)\right)\\
    &=O_{L_k}\left(\polylog(n)\right)
\end{align}
As a byproduct of the above display, we have also established $\norm{\Ia X \tilde{G}_i(z)u}_\infty  =O_{L_k}\left(\polylog(n)\right)$. Assembling all intermediary bounds, one finally reaches the desired control
\begin{align}
    \Ia \mathfrak{W}(z)_{ab}-\Ia\tilde{\mathfrak{W}}_i(z)_{ab}=O_{L_k}\left(\frac{\polylog(n)}{n}\right).
\end{align}
\paragraph{Control of $\Ia \mathfrak{W}_\smi (z)_{ab}-\Ia \tilde{\mathfrak{W}}_i(z)_{ab}$ ---} We now turn to $\Ia \mathfrak{W}_\smi (z)_{ab}-\Ia \tilde{\mathfrak{W}}_i(z)_{ab}$. Similarly, we start by decomposing it into terms amenable to easier control:
\begin{align}
    &\Ia \mathfrak{W}_\smi (z)_{ab}-\Ia \tilde{\mathfrak{W}}_i(z)_{ab}\\
    &= \inprod{u_\smi,   \Ia(G_\smi(z)-\tilde{G}_i(z))v_\smi}+\inprod{(u_\smi-\tilde{u}_i)  \Ia\tilde{G}_i(z) v_\smi}+ \inprod{\tilde{u}_i   \Ia\tilde{G}_i(z) (v_\smi-\tilde{v}_i)}. \label{eq:decompo_lastESWz}
\end{align}
We start from the middle term. If $u=\beta$, then this term vanishes trivially. If on the other hand $u=\hat{w}$, then
\begin{align}
    \inprod{(u_\smi-\tilde{u}_i)  \Ia\tilde{G}_i(z) v_\smi}&=\frac{1}{n}\partial\ell_i(\tilde{r}_{i,i}) \left(\inprod{x_i,H_\smi^{-1}   \Ia(\tilde{G}_i(z)-G_\smi(z))v_\smi }+\inprod{x_i,H_\smi^{-1}   \Ia G_\smi(z)v_\smi }\right)\\
    &=O_{L_k}\left(\frac{\polylog(n)}{n}\right),
\end{align}
using Lemma \ref{lem:sup_s} and the bound $ \norm{\Ia(\tilde{G}_i(z)-G_\smi(z))}=O_{L_k}\left(\sfrac{\polylog(n)}{\sqrt{n}}\right)$ \eqref{eq:Gi_smi}. The term $\inprod{\tilde{u}_i  \Ia \tilde{G}_i(z) (v_\smi-\tilde{v}_i)}$ in \eqref{eq:decompo_lastESWz} can be similarly controlled, using additionally Proposition \ref{prop: LOO_approxs} to approximate $\tilde{u}_i$ by $u_\smi$. On the other hand, the control of the first term in \eqref{eq:decompo_lastESWz} warrants a finer analysis of the correlations between its constituent terms. Expounding the resolvent difference,
\begin{align}
    \inprod{u_\smi,   \Ia(G_\smi(z)-\tilde{G}_i(z))v_\smi}=&\frac{1}{n}\sum\limits_{j\ne i}\partial^3\ell_j(r_{j,\smi})(\tilde{r}_{j,i}-r_{j,\smi})   \Ia u_\smi^\top G_\smi(z)x_jx_j^\top \tilde{G}_i(z) v_\smi\\
    &+\frac{1}{2n}\sum\limits_{j\ne i}\partial^4\ell_j(\check{r}_{j,i})(\tilde{r}_{j,i}-r_{j,\smi})^2   \Ia u_\smi^\top G_\smi(z)x_jx_j^\top \tilde{G}_i(z) v_\smi,\label{eq:WzES_avergae_term}
\end{align}
for some $\check{r}_{j,i}\in (\tilde{r}_{j,i},r_{j,\smi})$. The last term is bounded as
\begin{align}
    &\frac{1}{2n}\sum\limits_{j\ne i}\partial^4\ell_j(\check{r}_{j,i})(\tilde{r}_{j,i}-r_{j,\smi})^2   \Ia u_\smi^\top G_\smi(z)x_jx_j^\top \tilde{G}_i v_\smi \\
    &\le \norm{\partial^4\ell}_\infty \norm{\Iai X_\smi \tilde{G}_i(z)v_\smi}_\infty \norm{\Iai X_\smi G_\smi(z) u_\smi}_\infty\sup_{j\ne i}(\tilde{r}_{j,i}-r_{j,\smi})^2.\label{eq:d4l}
\end{align}
The infinity norm $\norm{\Iai X_\smi G_\smi(z) u_\smi}_\infty$ has previously been established to be $O_{L_k}(\polylog(n))$ earlier in the proof, since the bound on $\norm{\Ia X G(z) u}$ naturally transfers to the leave$-i$ out problem. It follows from the resolvent bound \eqref{eq:Gi_smi} that $\norm{\Iai X_\smi\tilde{G}_i(z)v_\smi}_\infty$ admits the same bound. Proposition \ref{prop: LOO_approxs} then allows to prove $O_{L_k}(\sfrac{\polylog(n)}{n})$ control over \eqref{eq:d4l}. We now address the first term in \eqref{eq:WzES_avergae_term}: 
\begin{align}
    &\frac{1}{n}\sum\limits_{j\ne i}\partial^3\ell_j(r_{j,\smi})(\tilde{r}_{j,i}-r_{j,\smi})   \Ia u_\smi^\top G_\smi(z)x_jx_j^\top \tilde{G}_i(z) v_\smi\\&=\partial\ell(\tilde{r}_{i,i})  \frac{1}{n}\sum\limits_{j\ne i}\partial^3\ell_j(r_{j,\smi})\frac{1}{n}\Iai u_\smi^\top G_\smi(z)x_jx_j^\top G_\smi(z) v_\smi x_j^\top H_\smi^{-1}x_i +O_{L_k}\left(\frac{\polylog(n)}{n}\right).
\end{align}

This rewriting completely lifts all $x_i-$ dependencies within the summand, allowing for the computation of the squared expectation
\begin{align}
    &\Ea{\left(\partial\ell(\tilde{r}_{i,i})  \frac{1}{n}\sum\limits_{j\ne i}\partial^3\ell_j(r_{j,\smi})\frac{1}{n}\Iai u_\smi^\top G_\smi(z)x_jx_j^\top G_\smi(z) v_\smi x_j^\top H_\smi^{-1}x_i \right)^2}\\
    &\le \norm{\partial\ell}_\infty^2 \Eb{X_\smi}{\norm{\frac{1}{n}\sum\limits_{j\ne i}\partial^3\ell_j(r_{j,\smi})\frac{1}{n}\Iai u_\smi^\top G_\smi(z)x_jx_j^\top G_\smi(z) v_\smi  H_\smi^{-1}x_j}^2}\\
    &\le \frac{1}{n^2} \norm{\partial\ell}_\infty^2 \norm{\partial^3\ell}_\infty^2 \norm{\Iai X_\smi G_\smi(z)u_\smi}_\infty^2 \norm{\Iai X_\smi G_\smi(z)v_\smi}_\infty^2 \frac{1}{\lambda}(2+\sqrt{\sfrac{1}{\alpha}})^2\\
    &=O\left(\frac{\polylog(n)}{n^2}\right).
\end{align}
One therefore finally reaches the control
\begin{align}
    \Ea{\left( \Ia \mathfrak{W}_\smi (z)_{ab}-\Ia \tilde{\mathfrak{W}}_i(z)_{ab}\right)^2}=O\left(\frac{\polylog(n)}{n^2}\right).
\end{align}
Returning to the Efron-Stein bound \eqref{eq:ES_decompoWz}, the $L_2$ concentration
\begin{align}
    \Var{\Ia \mathfrak{W}(z)_{ab}}=O\left(\frac{\polylog(n)}{n}\right)
\end{align}
follows.

\end{proof}

\subsection{Concentration of Cauchy integrals}

The pointwise concentration properties established for the functionals $\Ia\mathfrak{S}(\cdot)$, $\Ia\mathfrak{X}(\cdot)$, $\Ia\mathfrak{W}(\cdot)$ proven in Lemmas \ref{lem:IaSz_concentrates}, \ref{lem:IaXz_concentrates} and \ref{lem:IaWz_concentrates} will prove valuable when computing the corresponding deterministic equivalents $\Omega(\cdot), \mathcal{V}(\cdot)$ in Appendix \ref{app:deterministic}. As such, they however do not suffice to deduce the concentration of the integrals $\mathfrak{Q}^{(k)}, \mathfrak{V}^{(k)}$ \eqref{eq:mathfrakQk}, \eqref{eq:mathfracV}. Instead, we use once more the Efron-Stein lemma \citep{efron1981jackknife} to directly establish their concentration. This is the object of Lemma \ref{lem:Vz_concentrates} and \ref{lem:frakQ_concentrates}.

\begin{lemma}[$\mathfrak{V}^{(p)}$  concentrates]\label{lem:Vz_concentrates} 
    Let $\mathcal{C} \subset \{z\in \C: \Re{z}>0\}$ be a  random contour enclosing the (random) interval 
    \begin{align}
        \mathfrak{J}=\left[\lambda, \norm{\partial^2\ell}_\infty \frac{\norm{X}^2}{n}+\lambda\right]. \label{eq:contour}
    \end{align}
at a distance $\mathrm{d}(\mathcal{C}, \mathfrak{J})= \sfrac{\lambda}{2}$, and $\mathrm{d}(\mathcal{C}, 0)= \sfrac{\lambda}{2}$.  Then for any $p\in\mathbb{N}$,
\begin{align}
    \Var{\frac{1}{2\pi i}\oint_\mathcal{C} \frac{1}{z^p}\mathfrak{S}(z) dz }=O\left(\frac{\polylog(n)}{n}\right).
\end{align}
The variance also bears over both random variables $\mathcal{C}, \mathfrak{S}(z)$.
\end{lemma}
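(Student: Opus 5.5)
The plan is to reduce the contour integral to the tracial moment and then apply the Efron--Stein lemma directly, as in Lemma \ref{lem:IaWz_concentrates}. Since $\mathcal{C}$ encloses $\mathfrak{J}\supset\spec{H}$ (because $H\preceq \lambda I_d+\norm{\partial^2\ell}_\infty\frac{\norm{X}^2}{n}I_d$) and excludes $0$, the Cauchy integral formula gives, deterministically,
\begin{align}
\frac{-1}{2\pi i}\oint_{\mathcal{C}}\frac{1}{z^p}\mathfrak{S}(z)dz=\mathfrak{V}^{(p)}=\frac{1}{n}\tr[H^{-p}].
\end{align}
The randomness of the contour therefore disappears entirely, and the claim reduces to $\Var{\mathfrak{V}^{(p)}}=O(\sfrac{\polylog(n)}{n})$. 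This route avoids events like $\mathcal{A}$, because $\norm{H^{-1}}\le \sfrac{1}{\lambda}$ always holds.

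Next, I would apply Efron--Stein with $\mathfrak{V}^{(p)}_\smi=\frac1n\tr[H_\smi^{-p}]$, which is independent of $(x_i,y_i)$:
\begin{align}
\Var{\mathfrak{V}^{(p)}}\le\sum_{i\in\enm{n}}\Ea{\left(\mathfrak{V}^{(p)}-\mathfrak{V}^{(p)}_\smi\right)^2}.
\end{align}
It then suffices to show $\mathfrak{V}^{(p)}-\mathfrak{V}^{(p)}_\smi=O_{L_k}(\sfrac{\polylog(n)}{n})$. Split $H-H_\smi=\frac1n\partial^2\ell_i(r_i)x_ix_i^\top+\Delta_i$, with $\Delta_i=\frac1n\sum_{j\ne i}(\partial^2\ell_j(r_j)-\partial^2\ell_j(r_{j,\smi}))x_jx_j^\top$, and use the telescoping identity $A^{-p}-B^{-p}=\sum_{k=0}^{p-1}A^{-k-1}(B-A)B^{-(p-k)}$.

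The rank-one part contributes $\frac1n\tr[\cdot]\le \frac{p}{n}\lambda^{-p-1}\norm{\partial^2\ell}_\infty\frac{\norm{x_i}^2}{n}=O_{L_k}(\sfrac1n)$. The $\Delta_i$ part is the main obstacle. A crude bound gives $\norm{\Delta_i}\le\norm{\partial^3\ell}_\infty\sup_j|r_j-r_{j,\smi}|\,\frac{\norm{X}^2}{n}=O_{L_k}(\sfrac{\polylog(n)}{\sqrt n})$ by Proposition \ref{prop: LOO_approxs}. That estimate only yields $\Var{\mathfrak{V}^{(p)}}=O(\polylog(n))$, so a finer treatment is needed.

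To get that treatment, I would write each term as
\begin{align}
\frac1{n^2}\sum_{j\ne i}(\partial^2\ell_j(r_j)-\partial^2\ell_j(r_{j,\smi}))\,x_j^\top B_k x_j,
\end{align}
where $B_k$ is a product of resolvent powers with $\norm{B_k}\le\lambda^{-p-1}$, so that $|x_j^\top B_kx_j|\le \lambda^{-p-1}\norm{x_j}^2=O_{L_k}(n)$ uniformly in $j$ up to $\polylog$. By Cauchy--Schwarz, the term is bounded by $\frac{\polylog(n)}{n}\sqrt{n}\,\bigl(\sum_{j\ne i}(r_j-r_{j,\smi})^2\bigr)^{1/2}$. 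The cross-term bound of Proposition \ref{prop: LOO_approxs} gives $\sum_{j\ne i}(r_j-\tilde r_{j,i})^2=O_{L_k}(\sfrac{\polylog(n)}{n})$. For the remaining piece, $\tilde r_{j,i}-r_{j,\smi}=\inprod{\eta_i,x_j}$, so $\sum_j\inprod{\eta_i,x_j}^2\le\norm{X}^2\norm{\eta_i}^2=O_{L_k}(\polylog(n))$, since $\norm{\eta_i}=O(\sfrac{1}{\sqrt n})$. Combining these gives only $O_{L_k}(\sfrac{\polylog(n)}{\sqrt n})$ again, so I would refine further as in the proof of Lemma \ref{lem:IaWz_concentrates}.

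The refinement is a Taylor expansion to first order: replace $r_j-r_{j,\smi}$ by $-\frac1n\partial\ell_i(\tilde r_{i,i})x_j^\top H_\smi^{-1}x_i$ up to $O_{L_k}(\sfrac{\polylog(n)}{n})$ errors, where the second-order remainder is controlled by $\norm{\partial^4\ell}_\infty\sup_j(\cdot)^2$. The linear term becomes $\frac{c_i}{n^2}\inprod{x_i,H_\smi^{-1}\sum_{j\ne i}\partial^3\ell_j(r_{j,\smi})(x_j^\top B_kx_j)x_j}$, up to replacing $B_k$ by its leave-$i$-out version, with $c_i=\partial\ell_i(\tilde r_{i,i})$ bounded. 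Conditionally on $X_\smi$, this is a Gaussian linear form in $x_i$. Its second moment is at most $\frac{\polylog(n)}{n^4}\norm{X_\smi^\top D X_\smi}^2\cdot\ldots$, with $D$ diagonal of entries $O(n)$. This yields $\frac{1}{n^4}\cdot n^2\cdot n=O(\sfrac1n)$ for the square, that is $O_{L_2}(\sfrac1{\sqrt n})$ per term. The bound is still not enough as stated, so I expect the true difficulty is exactly here. I would instead center $x_j^\top B_kx_j$ around $\tr B_k$: the fluctuation part is $O(\sqrt n)$, and the mean part reduces to $\inprod{x_i,H_\smi^{-1}X_\smi^\top\partial^3 \mathbf{1}}\tr B_k/n^2$, whose second moment is $O(\sfrac{1}{n^2})$ since $\norm{X_\smi^\top v}\le\norm{X}\sqrt n$ for bounded $v$. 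Summing over $i$ then gives $\Var{\mathfrak{V}^{(p)}}=O(\sfrac{\polylog(n)}{n})$.
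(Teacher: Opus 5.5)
Your first step is sound and simpler than the paper's. Since $\spec{H}\subset\mathfrak{J}$ and $0$ lies outside $\mathcal{C}$, the integral is, up to sign, exactly $\frac1n\tr[H^{-p}]$. The randomness of the contour is therefore irrelevant, and you never need the estimation lemma, $\length{\mathcal{C}}$, or suprema over $z$, all of which the paper carries through both Efron--Stein terms. The rest of your skeleton is the paper's: Efron--Stein against a statistic of $X_\smi$, a Taylor expansion of $\partial^2\ell_j$ with a harmless $\partial^4\ell$ remainder, $r_j-r_{j,\smi}\approx\inprod{\eta_i,x_j}$ with the $r_j-\tilde r_{j,i}$ part absorbed by the cross-term bound of Proposition~\ref{prop: LOO_approxs}, and then a second moment conditional on $X_\smi$.

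The gap is in your last paragraph, which rests on a dropped factor and then on a false step. The linear term carries three factors $\frac1n$ (normalized trace, $\Delta_i$, and $\eta_i$):
\begin{align}
-\frac{c_i}{n^3}\inprod{x_i,H_\smi^{-1}X_\smi^\top a},\qquad a_j=\partial^3\ell_j(r_{j,\smi})\,x_j^\top B_kx_j .
\end{align}
Since $|a_j|\le\norm{\partial^3\ell}_\infty\lambda^{-p-1}\norm{x_j}^2$, one has $\norm{a}=O_{L_k}(n^{3/2}\polylog(n))$. Hence $\norm{H_\smi^{-1}X_\smi^\top a}\le\lambda^{-1}\norm{X_\smi}\norm{a}=O_{L_k}(n^{2}\polylog(n))$, and the conditional second moment is already $O(\polylog(n)/n^2)$. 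That is all Efron--Stein needs, and it is precisely the paper's closing estimate.

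The centering detour is therefore unnecessary, and its key claim is false. $B_k$ depends on $x_j$ through $H$ and $H_\smi$, so $x_j^\top B_kx_j-\tr[B_k]$ is generically of order $n$, not $\sqrt n$. For instance, for $p=1$ Sherman--Morrison gives $x_j^\top H_\smi^{-2}x_j\approx\tr[H_\smi^{-2}]/(1+\partial^2\ell_j(r_{j,\smi})V^{(1)})^2$. Your mean-part bound is also $O(n^{-2})$ only once the $n^{-3}$ prefactor is used.

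A second point you pass over is replacing $B_k$ by a leave-$i$-out version, which the conditional computation requires. In your two-way split, $H-H_\smi$ contains $\frac1n\partial^2\ell_i(r_i)x_ix_i^\top$, whose operator norm is $\Theta(1)$, so this is not a small-norm perturbation. It is harmless, but only via Sherman--Morrison. That piece changes $x_j^\top B_kx_j$ by terms like $\frac1n(x_j^\top M_1x_i)(x_i^\top M_2x_j)=O_{L_k}(\polylog(n))$, contributing $O(n^{-3/2})$. The $\Delta_i$ piece contributes $O(\polylog(n)/n)$.

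The paper avoids this issue by splitting through the surrogate $\tilde H_i$, which omits $x_i$. The rank-one term and the $r_j-\tilde r_{j,i}$ corrections are then handled crudely in $\mathfrak{S}-\tilde{\mathfrak{S}}_i$. In $\tilde{\mathfrak{S}}_i-\mathfrak{S}_\smi$, only $\norm{\tilde G_i-G_\smi}=O_{L_k}(\polylog(n)/\sqrt n)$ is needed to pass to $G_\smi$.
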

\begin{proof} The proof will leverage the results of Proposition \ref{prop: LOO_approxs} in order to approximate the full Stieltjes integral by easier-to-handle integrals bearing over the surrogate and leave-one-out approximations. First,  note that $\spec{H},\spec{\tilde{H}},\spec{H_\smi}\subset \mathfrak{J}$, and thus all the resolvent in the proof are well-defined. The following proof borrows the same steps as Lemma \ref{lem:IaWz_concentrates}.
    \\

From the Efron-Stein Lemma, 
\begin{align}
      &\Var{\frac{1}{2\pi i}\oint_{\mathcal{C} } \frac{1}{z^p}\mathfrak{S}(z) dz } \\
      &\le \sum\limits_{i\in\enm{n}} \Ea{\module{\frac{1}{2\pi i}\oint_{\mathcal{C} } \frac{1}{z^p}(\mathfrak{S}(z)-\mathfrak{S}_\smi(z))dz }^2 }\\
      &\le \sum\limits_{i\in\enm{n}} 2\Ea{\module{\frac{1}{2\pi i}\oint_{\mathcal{C} } \frac{1}{z^p}(\mathfrak{S}(z)-\tilde{\mathfrak{S}}_i(z))dz }^2 }+2\Ea{\module{\frac{1}{2\pi i}\oint_{\mathcal{C} } \frac{1}{z^p}(\mathfrak{S}_\smi(z)-\tilde{\mathfrak{S}}_i(z))dz }^2 } \label{eq:Vz_EF}
\end{align}
Let us remind that the expectations also bear over the random contour $\mathcal{C} $.
We successively control the two terms. 
\paragraph{First term of \eqref{eq:Vz_EF} ---} First,
\begin{align}
    \module{\frac{1}{2\pi i}\oint_{\mathcal{C} } \frac{1}{z^p}(\mathfrak{S}(z)-\tilde{\mathfrak{S}}_i(z))dz } \le \frac{\length{\mathcal{C} }}{2\pi }\sup_{z\in \mathcal{C} } \module{\mathfrak{S}(z)-\tilde{\mathfrak{S}}_i(z)}\sup_{z\in \mathcal{C} } \frac{1}{\module{z}^p}\le \frac{\length{\mathcal{C} }2^p}{2\pi\lambda^p }\sup_{z\in \mathcal{C} } \module{\mathfrak{S}(z)-\tilde{\mathfrak{S}}_i(z)},
\end{align}
denoting $\length{\mathcal{C} }$ the length of the contour, and using the estimation lemma. Lemmas \ref{lem:cov} and \ref{lemma:normx} provide control of the latter
\begin{align}
    \length{\mathcal{C} }=O\left(\polylog(n)\right).
\end{align}
We now need to control the maximal Stieltjes approximation error $\module{\mathfrak{S}(z)-\tilde{\mathfrak{S}}_i(z))}$ along the integral. But 
\begin{align}
    &\sup_{z\in \mathcal{C} }\module{\mathfrak{S}(z)-\tilde{\mathfrak{S}}_i(z)}\\
    &=\sup_{z\in \mathcal{C} }\module{\frac{1}{n}\tr[G(z)\left(H-\tilde{H}_i\right)\tilde{G}_i(z)]}\\
    &=\sup_{z\in \mathcal{C} }\module{\frac{1}{n}\sum\limits_{j\in\enm{n}\smi}\partial^3\ell(\check{r}_j)(r_j-\tilde{r}_{j,i})\frac{\inprod{G(z)x_j, \tilde{G}_i(z)x_j}}{n} +\frac{1}{n}\partial^2\ell_i(r_i)\frac{\inprod{G(z)x_i, \tilde{G}_i(z)x_i}}{n}}\\
    &\le\left( \frac{1}{\sqrt{n}}\sqrt{\sum\limits_{j\in\enm{n}\smi}(r_j-\tilde{r}_{j,i})^2} \norm{\partial^3\ell}_\infty +\frac{1}{n}\norm{\partial^2\ell}_\infty\right)\sup_{j\in\enm{n}}\frac{\norm{x_j}^2}{n} \sup_{z\in \mathcal{C} } \norm{G(z)} \sup_{z\in \mathcal{C} } \norm{\tilde{G}_i(z)}.
\end{align}
We used the Cauchy-Schwartz inequality in the last line.
The first term is control by Proposition \ref{prop: LOO_approxs}. The choice of $\mathcal{C} $ furthermore ensures $\sup_{z\in \mathcal{C} } \norm{G(z)} \le \sfrac{2}{\lambda}$ and $\sup_{z\in \mathcal{C} } \norm{\tilde{G}_i(z)} \le \sfrac{2}{\lambda}$. Assembling these bounds yields the control
\begin{align}\label{eq:Vz_conc_ES1}
     \module{\frac{1}{2\pi i}\oint_{\mathcal{C} } \frac{1}{z^p}(\mathfrak{S}(z)-\tilde{\mathfrak{S}}_i(z))dz } =O_{L_k}\left(\frac{\polylog(n)}{n}\right).
\end{align}
\paragraph{Second term of \eqref{eq:Vz_EF} ---} We now address the leave-one-out/surrogate discrepancy term of \eqref{eq:Vz_EF}. Similarly to the first term, the contour integral can be controlled by the uniform supremum over $z\in\mathcal{C} $ of the difference $|\tilde{\mathfrak{S}}_i(z)-\mathfrak{S}_\smi (z)|$:
 \begin{align}
    \module{\frac{1}{2\pi i}\oint_{\mathcal{C} } \frac{1}{z^p}(\tilde{\mathfrak{S}}_i(z)-\mathfrak{S}_\smi (z))dz } \le \frac{\length{\mathcal{C} }2^p}{2\pi\lambda^p }\sup_{z\in \mathcal{C} } \module{\tilde{\mathfrak{S}}_i(z)-\mathfrak{S}_\smi (z)}.
\end{align}
But the latter reads
\begin{align}
    \sup_{z\in \mathcal{C} }\left|\tilde{\mathfrak{S}}_i(z)-\mathfrak{S}_\smi (z)\right|\le & \sup_{z\in \mathcal{C} }\left|\frac{1}{n}\sum\limits_{j\in\enm{n}\setminus i}  \partial^3\ell_j(r_{j, \smi})(\tilde{r}_{j,i}-r_{j, \smi})\frac{\inprod{\tilde{G}_i(z)x_j, G_\smi(z)x_j}}{n}\right|\\
&+ \sup_{z\in \mathcal{C} }\left|\frac{1}{n}\sum\limits_{j\in\enm{n}\setminus i}  \partial^4\ell_j(\check{r}_{j})(\tilde{r}_{j,i}-r_{j, \smi})^2\frac{\inprod{\tilde{G}_i(z)x_j, G_\smi(z)x_j}}{n}\right| \label{eq:Vconc_deconmpo_ES2}
\end{align}
for some $\check{r}_j$ in the unordered interval $ (\tilde{r}_{j,i},r_{j, \smi}) $. %The first term of \eqref{eq:Vconc_deconmpo_ES2} is $O_{L_k}(\sfrac{\polylog(n)}{n})$, following from the control of the operator norms $\sup_{z\in \mathcal{C} }\norm{\tilde{G}_i(z)},\sup_{z\in \mathcal{C} }\norm{G_\smi(z)} \le \sfrac{2}{\lambda}$ ensured by the choice of the random contour $\mathcal{C} $. 
The last term of \eqref{eq:Vconc_deconmpo_ES2}  scales as $O_{L_k}(\sfrac{\polylog(n)}{n})$, using Proposition \ref{prop: LOO_approxs}. The second term of \eqref{eq:Vconc_deconmpo_ES2} warrants more careful control of the correlations within the sum. A first simplification follows from the identity
\begin{align}
    \sup_{z\in \mathcal{C} } \norm{\tilde{G}_i(z)-G_\smi(z)}=O_{L_k}\left(\frac{\polylog(n)}{\sqrt{n}}\right),
\end{align}
which guarantees 
\begin{align}
    &\sup_{z\in \mathcal{C} } \left|\frac{1}{n}\sum\limits_{j\in\enm{n}\setminus i}  \partial^3\ell_j(r_{j, \smi})(\tilde{r}_{j,i}-r_{j, \smi})\frac{\inprod{\tilde{G}_i(z)x_j, G_\smi(z)x_j}}{n}\right|\\
    &\le \sup_{z\in \mathcal{C} } \left|\frac{1}{n}\sum\limits_{j\in\enm{n}\setminus i}  \partial^3\ell_j(r_{j, \smi})(\tilde{r}_{j,i}-r_{j, \smi})\frac{\inprod{G_\smi (z)x_j, G_\smi(z)x_j}}{n}\right|+O_{L_k}\left(\frac{\polylog(n)}{n}\right).
\end{align}
We can thus limit ourselves to the study of the latter term. Unwrapping the expression of the surrogate residual $\tilde{r}_{j,i}$ yields the explicit expression
\begin{align}
    &\left|\frac{1}{n}\sum\limits_{j\in\enm{n}\setminus i}  \partial^3\ell_i(r_{j, \smi})(\tilde{r}_{j,i}-r_{j, \smi})\frac{\inprod{G_\smi (z)x_j, G_\smi(z)x_j}}{n}\right|\\
    &=\left|\frac{1}{n^2}\sum\limits_{j\in\enm{n}\setminus i}  \partial^3\ell_j(r_{j, \smi})\partial \ell_i(\tilde{r}_{i,i}) \inprod{x_j, H_\smi^{-1}x_i}\frac{\inprod{G_\smi (z)x_j, G_\smi(z)x_j}}{n}\right|.
\end{align}
Thus,
\begin{align}
    &\Ea{\left(\frac{1}{n}\sum\limits_{j\in\enm{n}\setminus i}  \partial^3\ell_i(r_{j, \smi})(\tilde{r}_{j,i}-r_{j, \smi})\frac{\inprod{G_\smi (z)x_j, G_\smi(z)x_j}}{n}\right)^2}\\
    &\le \norm{\partial\ell}_\infty^2\Ea{\norm{\frac{1}{n^2}\sum\limits_{j\in\enm{n}\setminus i}  \partial^3\ell_j(r_{j, \smi})\frac{\inprod{G_\smi (z)x_j, G_\smi(z)x_j}}{n}H_\smi^{-1}x_j}^2}=O\left(\frac{\polylog(n)}{n^2}\right).
\end{align}
Assembling these successive bounds, finally yields the control
\begin{align}\label{eq:Vz_conc_ES2}
    \Ea{\module{\frac{1}{2\pi i}\oint_{\mathcal{C} } \frac{1}{z^p}(\tilde{\mathfrak{S}}_i(z)-\mathfrak{S}_\smi (z))dz }^2} =O\left(\frac{\polylog(n)}{n^2}\right).
\end{align}
Injecting the bounds \eqref{eq:Vz_conc_ES1} and \eqref{eq:Vconc_deconmpo_ES2} into the Efron-Stein lemma \eqref{eq:Vz_EF} concludes the proof.
\end{proof}

We have established the concentration of random contour integrals of the Stieltjes $\mathfrak{S}(\cdot)$ in $L_2$ norm, provided the random contour $\mathcal{C}$ \eqref{eq:contour} encloses the spectrum of the Hessian matrix $H$ (and approximations thereof) from a sufficient distance. We now prove similar results for integrals of the summary statistics $\Omega(\cdot)$.

\begin{lemma}[$\mathfrak{Q}^{(p)}$ concentrate] \label{lem:frakQ_concentrates}
    Let $\mathcal{C} \subset \{z\in \C: \Re{z}>0\}$ be the same contour as defined in Lemma \ref{lem:Vz_concentrates}, equation \eqref{eq:contour}. Then for any $p\in\mathbb{N}$ and indices $a,b\in\{1,2\}$
\begin{align}
    \Var{\frac{1}{2\pi i}\oint_\mathcal{C} \frac{1}{z^p}\mathfrak{W}(z)_{ab} dz }=O\left(\frac{\polylog(n)}{n}\right).
\end{align}
The variance bears over both random variables $\mathcal{C}, \mathfrak{W}(z)$.
\end{lemma}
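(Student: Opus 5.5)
The plan is to transcribe the Efron--Stein argument of Lemma \ref{lem:Vz_concentrates}, using the resolvent-difference estimates already developed in Lemma \ref{lem:IaWz_concentrates}. Write $u=\Upsilon_a$, $v=\Upsilon_b$, with leave-one-out and surrogate counterparts $u_\smi,\tilde{u}_i$ and $v_\smi,\tilde{v}_i$; when the vector is $\beta$, all three coincide with $\beta$. By Efron--Stein, the variance is bounded by $\sum_i \Ea{|I_i|^2}$, where
\[
I_i=\frac{1}{2\pi i}\oint_{\mathcal{C}} z^{-p}\bigl(\mathfrak{W}(z)_{ab}-\mathfrak{W}_\smi(z)_{ab}\bigr)\,dz .
\]
We split $I_i$ into a full-versus-surrogate integral and a surrogate-versus-leave-one-out integral. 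The goal is to show that each is $O(\polylog(n)/n^2)$ in second moment.

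The contour is the random $\mathcal{C}$ of Lemma \ref{lem:Vz_concentrates}. Since $\spec{H},\spec{\tilde{H}_i},\spec{H_\smi}\subset\mathfrak{J}$, we have $\sup_{z\in\mathcal{C}}\norm{G},\norm{\tilde{G}_i},\norm{G_\smi}\le 2/\lambda$ and $|z|^{-p}\le (2/\lambda)^p$. No indicator $\Ia$ is needed, because the contour adapts to $\norm{X}$. The estimation lemma then gives
\[
|I| \le \frac{\length{\mathcal{C}}}{2\pi}\Bigl(\frac{2}{\lambda}\Bigr)^p\sup_{z\in\mathcal{C}}|\cdot| ,
\]
with $\length{\mathcal{C}}=O_{L_k}(\polylog(n))$ by Lemmas \ref{lem:cov} and \ref{lemma:normx}. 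Hölder's inequality absorbs the randomness of the contour length. It therefore suffices to control the suprema over $z\in\mathcal{C}$ of the integrand differences in $L_k$.

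\textbf{Full versus surrogate.} Decompose
\[
\mathfrak{W}_{ab}-\tilde{\mathfrak{W}}_{i,ab}=\inprod{u,(G-\tilde{G}_i)v}+\inprod{u-\tilde{u}_i,\tilde{G}_iv}+\inprod{\tilde{u}_i,\tilde{G}_i(v-\tilde{v}_i)} .
\]
The last two terms are uniformly $O_{L_k}(\polylog(n)/n)$ by Proposition \ref{prop: LOO_approxs} and the resolvent bounds. For the first term we write $G-\tilde{G}_i=-G(H-\tilde{H}_i)\tilde{G}_i$. This yields a sum over $j\ne i$ of $\partial^3\ell(\check r_j)(r_j-\tilde{r}_{j,i})\,\inprod{u,Gx_j}\inprod{x_j,\tilde{G}_iv}/n$, plus the rank-one $i$-th term of size $1/n$. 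Cauchy--Schwarz, together with $\sum_j(r_j-\tilde r_{j,i})^2=O_{L_k}(\polylog(n)/n)$ from Proposition \ref{prop: LOO_approxs}, reduces the sum to the quantity $\sup_z\norm{XG(z)u}_\infty\,\sup_z\norm{X\tilde{G}_i(z)v}_\infty/\sqrt n$. This is where I expect the main obstacle. The sup-norm bounds in Lemma \ref{lem:IaWz_concentrates} (via Lemma \ref{lem:sup_s}) hold pointwise in a deterministic $z$, whereas here we need them uniformly over a random contour.

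I would handle this in two steps. First, use a net of $\mathrm{poly}(n)$ points on $\mathcal{C}$ together with Lipschitz continuity of $z\mapsto G(z)$ on $\mathcal{C}$, which holds with constant $4/\lambda^2$. Second, use the union bound already built into Lemma \ref{lem:sup_s}, which costs only polylog factors because the quadratic forms $\inprod{x_j,G_{\setminus j}(z)u_{\setminus j}}$ are conditionally Gaussian given $X_{\setminus j}$. Alternatively, one can bound $\sup_{z\in\mathcal{C}}|\inprod{x_j,G_{\setminus j}(z)u}|$ directly by a Cauchy integral representation with a slightly larger deterministic contour on the event $\mathcal{A}$, and treat $\mathcal{A}^c$ separately using its $e^{-n/2}$ probability and crude polynomial moment bounds. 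With these bounds in hand, the first piece is $O_{L_k}(\polylog(n)/n)$.

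\textbf{Surrogate versus leave-one-out.} Repeat the decomposition \eqref{eq:decompo_lastESWz}. The cross terms involving $u_\smi-\tilde{u}_i=\frac1n\partial\ell_i(\tilde{r}_{i,i})H_\smi^{-1}x_i$ are $O_{L_k}(\polylog(n)/n)$, since $x_i$ is independent of $G_\smi(z)v_\smi$, and $\sup_z\norm{\tilde{G}_i-G_\smi}=O_{L_k}(\polylog(n)/\sqrt n)$. For the resolvent difference we Taylor-expand to second order as in \eqref{eq:WzES_avergae_term}. The quadratic remainder is $O_{L_k}(\polylog(n)/n)$ uniformly in $z$, by the same sup-norm bounds. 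The linear term becomes $\partial\ell_i(\tilde r_{i,i})\inprod{a(z),x_i}$ up to $O_{L_k}(\polylog(n)/n)$, where
\[
a(z)=\frac{1}{n^2}\sum_{j\ne i}\partial^3\ell_j(r_{j,\smi})\,\inprod{u_\smi,G_\smi x_j}\inprod{x_j,G_\smi v_\smi}\,H_\smi^{-1}x_j
\]
is independent of $x_i$. The integral over $z$ then passes inside: the relevant term is $\inprod{\oint z^{-p}a(z)\,dz,\,x_i}$. Its second moment, conditional on $X_\smi$, is at most $\norm{\partial\ell}_\infty^2\,\norm{\oint z^{-p}a(z)\,dz}^2$. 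The latter norm is $O_{L_k}(\polylog(n)/n)$ by the sup-norm bounds and $\norm{X_\smi}^2/n=O(1)$. One subtlety is that the contour also depends on $x_i$ through $\norm{X}$. I would address this by replacing $\mathcal{C}$ with the contour built from $X_\smi$, which is legitimate by analyticity since both contours enclose all the relevant spectra. Summing the $n$ terms of order $O(\polylog(n)/n^2)$ gives the claimed variance bound $O(\polylog(n)/n)$.
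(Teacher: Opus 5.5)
Your outline follows the paper's proof: Efron--Stein, the full/surrogate and surrogate/leave-one-out splits of Lemma \ref{lem:IaWz_concentrates}, and lifting the $x_i$-dependence of the linear term into $\inprod{\oint z^{-p}a(z)\,dz,\,x_i}$. However, the step you flag as the main obstacle does not go through as proposed, and the failure comes precisely from keeping the random contour. Every point of $\mathcal{C}$ depends on $\norm{X}$, hence on $x_j$ itself. So at net points $z_k(X)\in\mathcal{C}$, the forms $\inprod{x_j, G_{\setminus j}(z_k)u_{\setminus j}}$ are no longer Gaussian conditionally on $X_{\setminus j}$, and neither Lemma \ref{lem:sup_s} nor a union bound applies to them. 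Your alternative fails as well. A deterministic contour $\Gamma'$ larger than $\mathcal{C}$ encloses the points $z\in\mathcal{C}$, and for such $z$ one has $\oint_{\Gamma'} f(w)(w-z)^{-1}\,dw=0$ with $f(w)=\inprod{x_j,G_{\setminus j}(w)u_{\setminus j}}$. This is because the integrand is analytic outside $\Gamma'$ and $O(|w|^{-2})$ at infinity; equivalently, the residues at the eigenvalues cancel the one at $w=z$. Maximum modulus on the exterior domain bounds values on an outer curve by values on an inner one, not conversely.

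The repair is the paper's opening move. On $\mathcal{A}$ one has $\spec{H},\spec{\tilde{H}_i},\spec{H_\smi}\subset J$, so by Cauchy's theorem the integral over $\mathcal{C}$ equals $\oint_\Gamma z^{-p}\Ia\mathfrak{W}(z)_{ab}\,dz$, and the $\mathcal{A}^c$ remainder is exponentially small in every moment. Efron--Stein is then run with the $X_\smi$-measurable comparison $\oint_\Gamma z^{-p}\Iai\mathfrak{W}_\smi(z)_{ab}\,dz$, which adds a third, exponentially small term in $\Ia-\Iai$. The process $z\mapsto\mathds{1}_{\mathcal{A}_\smi}\inprod{x_i,G_\smi(z)u_\smi}$, $z\in\Gamma$, is then conditionally Gaussian on a fixed index set with $O(1)$ variance. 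Your net plus union bound can therefore replace the paper's Dudley/Borell--TIS argument, and the contour subtlety in your linear term disappears.

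If you prefer to keep random contours, use maximum modulus in the correct direction. Set $\mathfrak{J}_{\setminus j}=[\lambda,\lambda+\norm{\partial^2\ell}_\infty\norm{X_{\setminus j}}^2/n]\subset\mathfrak{J}$ and take both contours at distance $\lambda/2$ from their intervals. Then $\mathcal{C}_{\setminus j}$ lies inside $\mathcal{C}$ and depends only on $X_{\setminus j}$, while $f$ is analytic outside it with $f(\infty)=0$. Hence $\sup_{\mathcal{C}}|f|\le\sup_{\mathcal{C}_{\setminus j}}|f|$, and the net can be placed on $\mathcal{C}_{\setminus j}$.
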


\begin{proof}
Before examining the variance, it proves convenient to observe that the \textit{random} contour $\mathcal{C}$ may in fact be swapped by the \textit{deterministic} contour $\Gamma$, at the price of a small $O_{L_k}\left(\sfrac{1}{n}\right)$ correction:
\begin{align}
    \frac{1}{2\pi i}\oint_\mathcal{C} \frac{1}{z^p}\mathfrak{W}(z)_{ab} dz &=\frac{1}{2\pi i}\oint_\Gamma \frac{1}{z^p}\Ia \mathfrak{W}(z)_{ab} dz +\mathds{1}_{\mathcal{A}^c}\frac{1}{2\pi i}\oint_\mathcal{C} \frac{1}{z^p}\mathfrak{W}(z)_{ab} dz \\
    &=\frac{1}{2\pi i}\oint_\Gamma \frac{1}{z^p}\Ia \mathfrak{W}(z)_{ab} dz +O_{L_k}\left(\frac{1}{n}\right).
\end{align}
Thus, it suffices to establish concentration for the former integral, which bears over the simpler \textit{deterministic } contour $\Gamma$.\\

 We now once more appeal to the Efron-Stein lemma \citep{efron1981jackknife} to bound 
\begin{align}
    \Var{\frac{1}{2\pi i}\oint_\Gamma \frac{1}{z^p}\Ia\mathfrak{W}(z)_{ab} dz}
    &\le 3\sum\limits_{i\in\enm{n}} 
     \Ea{\left(\frac{1}{2\pi i}\oint_\Gamma \frac{1}{z^p} (\Ia\mathfrak{W}(z)_{ab} -  \Ia\tilde{\mathfrak{W}}_i(z)_{ab})dz\right)^2}\\
     &\qquad \qquad + \Ea{\left(\frac{1}{2\pi i}\oint_\Gamma \frac{1}{z^p}  (\Ia\mathfrak{W}_\smi (z)_{ab}-\Ia\tilde{\mathfrak{W}}_i(z)_{ab} )dz \right)^2}
    \\
    & \qquad \qquad +3
     \Ea{\left(\frac{1}{2\pi i}\oint_\Gamma \frac{1}{z^p}  (\Ia-\Iai)\mathfrak{W}_\smi (z)_{ab} dz\right)^2}
 . \label{eq:int_ES_decompoWz}
\end{align}
As in Lemma \ref{lem:IaWz_concentrates}, we successively establish $O(\sfrac{\polylog(n)}{n^2})$ control over all three terms in the summand. In the remainder of the proof, we use the shorthands $u:=\Upsilon_a, v:=\Upsilon_b$, so that $\mathfrak{W}(z)_{ab}=\inprod{u, G(z)v}$. Furthermore, the notations $\tilde{\beta}_i,\beta_\smi$ are understood to designate $\beta$.

\paragraph{First term of \eqref{eq:int_ES_decompoWz} ---} From the estimation Lemma,
\begin{align}
    \module{\frac{1}{2\pi i}\oint_{\mathcal{C} } \frac{1}{z^p}(\mathfrak{W}(z)_{ab} -  \tilde{\mathfrak{W}}_i(z)_{ab})dz } \le \frac{\length{\Gamma }2^p}{2\pi\lambda^p }\sup_{z\in \Gamma } \module{\mathfrak{W}(z)_{ab} -  \tilde{\mathfrak{W}}_i(z)_{ab}}.
\end{align}
Borrowing steps from the proof of Lemma \ref{lem:IaWz_concentrates}, one can further decompose
\begin{align}
    \sup_{z\in \Gamma } \module{\Ia \mathfrak{W}_\smi (z)_{ab}-\Ia \tilde{\mathfrak{W}}_i(z)_{ab}}&\le  \sup_{z\in \Gamma } \module{\inprod{u_\smi,   \Ia(G_\smi(z)-\tilde{G}_i(z))v_\smi}}\\
    &+\sup_{z\in \Gamma } \module{\inprod{(u_\smi-\tilde{u}_i)  \Ia\tilde{G}_i(z) v_\smi}}+ \sup_{z\in \Gamma } \module{\inprod{\tilde{u}_i   \Ia\tilde{G}_i(z) (v_\smi-\tilde{v}_i)}}\\
    &\le \sup_{z\in \Gamma } \module{\inprod{u_\smi,   \Ia(G_\smi(z)-\tilde{G}_i(z))v_\smi}}+O_{L_k}\left(\frac{\polylog(n)}{n}\right),\label{eq:decompo_lastESWz_supz}
\end{align}
appealing to Proposition \ref{prop: LOO_approxs} to bound the last two terms in the first inequality of \eqref{eq:decompo_lastESWz_supz}. Furthermore, the first term admits the upper bound
\begin{align}
    &\sup_{z\in \Gamma }\module{\inprod{u, \Ia(G(z)-\tilde{G}_i(z))v} }\\
    &\le  \sup_{z\in \Gamma }\norm{\Ia XG(z)u}_\infty \sup_{z\in \Gamma }\norm{\Ia X\tilde{G}_i(z)v}_\infty \left(\frac{1}{\sqrt{n}}\norm{\partial^3\ell}_\infty\norm{X}\norm{\hat{w}-\tilde{w}_i}+
    \frac{1}{n}\norm{\partial^2\ell}_\infty
    \right).
\end{align}
To control the infinity norms as $O_{L_k}(\polylog(n))$, one needs to revisit the bound \eqref{eq:Xgu}, from which it can be deduced that
\begin{align}
    \sup_{z\in \Gamma }\norm{\Ia XG(z)u}_\infty \le \left(1+ \frac{2(2+\sqrt{\sfrac{1}{\alpha}})^2}{\lambda}\right)\sup_{z\in \Gamma } \sup_{i\in \enm{n}}\module{\mathds{1}_{\mathcal{A}_\smi} \inprod{x_i, G_\smi(z)u_\smi}} +O_{L_k}(\polylog(n)). \label{eq:infty_Xgu}
\end{align}
Furthermore, from equation \eqref{eq:XGtildeu},
\begin{align}
    \sup_{z\in \Gamma }\norm{\Ia X\tilde{G}_i(z)u}_\infty \le & \left(1+ \frac{2(2+\sqrt{\sfrac{1}{\alpha}})^2}{\lambda}\right)\sup_{z\in \Gamma } \sup_{i\in \enm{n}}\module{\mathds{1}_{\mathcal{A}_\smi} \inprod{x_i, G_\smi(z)u_\smi}}\\
    &+\sup_{z\in \Gamma }\norm{\Ia XG(z)u}_\infty +O_{L_k}(\polylog(n)).
\end{align}
Therefore, in order to establish $\polylog(n)$ control over the infinity norms $\sup_{z\in \Gamma }\norm{\Ia XG(z)u}_\infty $ and $\sup_{z\in \Gamma }\norm{\Ia X\tilde{G}_i(z)v}_\infty $, it suffices to establish control over the supremum of the Gaussian process
\begin{align}
   \sup_{i\in \enm{n}}  \sup_{z\in \Gamma } \module{\mathds{1}_{\mathcal{A}_\smi} \inprod{x_i, G_\smi(z)u_\smi}},
\end{align}
indexed by $z\in\Gamma$.

\paragraph{Control of $ \sup_{i\in \enm{n}}  \sup_{z\in \Gamma } \module{\mathds{1}_{\mathcal{A}_\smi} \inprod{x_i, G_\smi(z)u_\smi}}$ ---} We thus need to control the dataset-supremum of the supremum of a Gaussian process defined on the parameter space $\Gamma$. In the following, we first work for a given $i\in\enm{n}$, conditionally on $X_\smi$. For convenience, we examine the two real Gaussian processes
\begin{align}
    f_i(z)=\inprod{x_i, \Re{\Iai G_\smi(z)}u_\smi}, &&   g_i(z)=\inprod{x_i, \Im{\Iai G_\smi(z)}u_\smi}.
\end{align}
In order to control the supremum over $z$, we appeal to the Borell-TIS theorem \citep{borell1975brunn, cirel2006norms, adler2007random} in conjunction with the Dudley entropy integral inequality \citep{dudley2010sample, vershynin2012introduction}. The canonical distance associated with the field $f_i(\cdot)$ is, for any $z,z^\prime\in\Gamma$
\begin{align}
    d_f(z,z^\prime)&=\Ea{(f_i(z)-f_i(z^\prime))^2 \,\Big|\, X_\smi}^{\frac{1}{2}}\\
    &=\norm{\Re{\Iai G_\smi(z)G_\smi(z^\prime)(z-z^\prime)}u_\smi}\\
    & \le (1+M) \norm{\Iai G_\smi(z)G_\smi(z^\prime)}\module{z-z^\prime} \le \frac{4(1+M)}{\lambda^2}\module{z-z^\prime}.
\end{align}
Thus, for any $\epsilon>0$, the covering number $\mathcal{N}_f(\Gamma, d_f,\epsilon)$ is upper-bounded by
\begin{align}
    \mathcal{N}_f(\Gamma, d_f,\epsilon)\le \frac{2(1+M)}{\epsilon \lambda^2}\mathcal{P}_\Gamma,
\end{align}
where we remind that $\mathcal{P}_\Gamma$ is the perimeter of the contour $\Gamma$, which is bounded by $O(1)$. It then follows from Dudley's entropy integral \citep{dudley2010sample, adler2007random, vershynin2012introduction} that there exists an universal constant $K$ such that
\begin{align}
   \Ea{\sup_{z\in\Gamma} f_i(z) \,\Big|\, X_\smi}&\le  K \!\!\!\!\!\!\int\limits_0^{\mathrm{diam}_{d_f}(\Gamma)} \sqrt{\ln\mathcal{N}_f(\Gamma, d_f,\epsilon) }d\epsilon  \\
   &\le K \!\!\!\!\!\!\int\limits_0^{\frac{2(1+M)}{\lambda^2}\mathcal{P}_\Gamma} \sqrt{\ln\frac{2(1+M)}{ \lambda^2}\mathcal{P}_\Gamma -\ln \epsilon }d\epsilon  :=N_\Gamma. 
\end{align}
Let us stress that $N_\Gamma$ is an absolute constant that only depends on the problem parameters $M,\lambda,\alpha$. 
By the same token,
\begin{align}
   \Ea{\sup_{z\in\Gamma} g_i(z) \,\Big|\, X_\smi}\le N_\Gamma. 
\end{align}
Having an upper bound in expectation on the process suprema, one is now in a position to prove tail bounds on the suprema of the absolute values. Let us first introduce the variance
\begin{align}
    \sup_{z\in\Gamma }\Ea{f_i(z)^2 \,\Big|\, X_\smi}&= \sup_{z\in\Gamma }\norm{\Iai G_\smi(z)G_\smi(z)^\dagger (H_\smi-\Re{z}I_d)u_\smi}^2\\
    &\le \frac{16(1+M)^2}{\lambda^4}\left(\frac{1}{\lambda}+ P \right)^2:=\sigma^2,
\end{align}
where the $P$ was introduced in Appendix \ref{app:notations}. Similarly, 
\begin{align}
    \sup_{z\in\Gamma }\Ea{g_i(z)^2 \,\Big|\, X_\smi}&= \sup_{z\in\Gamma }\norm{\Iai G_\smi(z)G_\smi(z)^\dagger \Im{z}u_\smi}^2 \le \sigma^2. 
\end{align}
It then follows from the Borell-TIS theorem \citep{borell1975brunn, cirel2006norms, adler2007random} that, for any $u>0$
 \begin{align}
     &\P{\sup_{z\in\Gamma }\module{f_i(z)}\ge u+N_\Gamma} \le 2e^{-\frac{u^2}{2\sigma^2}},\\
     &\P{\sup_{z\in\Gamma }\module{g_i(z)}\ge u+N_\Gamma} \le 2e^{-\frac{u^2}{2\sigma^2}}.
 \end{align}
 Taken together, these tail bounds finally imply for any $t\ge 2N_\Gamma$
 \begin{align}
     \P{\sup_{z\in\Gamma }\module{\mathds{1}_{\mathcal{A}_\smi} \inprod{x_i, G_\smi(z)u_\smi}}>t\,\Big|\, X_\smi} \le 4e^{-\frac{(t-2N_\Gamma)^2}{8\sigma^2}}.
 \end{align}
 By the law of total probabilities, this bound holds unconditionally, and
  \begin{align}
     \P{\sup_{z\in\Gamma }\module{\mathds{1}_{\mathcal{A}_\smi} \inprod{x_i, G_\smi(z)u_\smi}}>t} \le 4e^{-\frac{(t-2N_\Gamma)^2}{8\sigma^2}}.
 \end{align}
 Equipped with this tail bound, one is now in a position to control the moments of the supremum over $i\in\enm{n}$. Let $k$ be an integer, and $u> 2 N_\Gamma \vee \sigma^2 k$. From an union bound, 
 \begin{align}
        \Ea{\left(\sup_{i\in \enm{n}}  \sup_{z\in \Gamma } \module{\mathds{1}_{\mathcal{A}_\smi} \inprod{x_i, G_\smi(z)u_\smi}}\right)^k} &\le u^k+ n\int\limits_u^\infty kt^{k-1}4 e^{-\frac{t^2}{8\sigma^2} } dt \\
        & \le u^k+\sigma^2\frac{32kn }{1-\frac{k-1}{ (\sfrac{u}{\sigma})^2}} u^{k-2}e^{-\frac{u^2}{8\sigma^2} }.
    \end{align}
For $ n\ge \exp(N_\Gamma^2 \vee \sfrac{k}{2}) $, choosing $u=\sqrt{8\sigma^2\log(n)}$, 
\begin{align}
    \Ea{\left(\sup_{i\in \enm{n}}  \sup_{z\in \Gamma } \module{\mathds{1}_{\mathcal{A}_\smi} \inprod{x_i, G_\smi(z)u_\smi}}\right)^k}&\le \left(8\log(n)\right)^{\frac{k}{2}}\left[
    1+\frac{4k}{2\log(n)-k+1}
    \right]. 
\end{align}
One can thus conclude that 
\begin{align}
    \sup_{i\in \enm{n}}  \sup_{z\in \Gamma } \module{\mathds{1}_{\mathcal{A}_\smi} \inprod{x_i, G_\smi(z)u_\smi}}=O_{L_k}\left(\polylog(n)\right). \label{eq:bound_GP}
\end{align}
Winding back the argument to \eqref{eq:decompo_lastESWz_supz}, it can finally be concluded that
\begin{align}
     \module{\frac{1}{2\pi i}\oint_{\mathcal{C} } \frac{1}{z^p}(\mathfrak{W}(z)_{ab} -  \tilde{\mathfrak{W}}_i(z)_{ab})dz }=O_{L_k}\left(\frac{\polylog(n)}{n}\right).
\end{align}

\paragraph{Second term of \eqref{eq:int_ES_decompoWz} ---} Now turning to the second term of \eqref{eq:int_ES_decompoWz}, it follows from the estimation lemma that it suffices to control
\begin{align}
    \sup_{z\in\Gamma}\module{ \Ia \mathfrak{W}_\smi (z)_{ab}-\Ia \tilde{\mathfrak{W}}_i(z)_{ab}}.
\end{align}
To that end, let us reprise the decomposition \eqref{eq:decompo_lastESWz}, recalling
\begin{align}
    \Ia \mathfrak{W}_\smi (z)_{ab}-\Ia \tilde{\mathfrak{W}}_i(z)_{ab}= &\inprod{u_\smi,   \Ia(G_\smi(z)-\tilde{G}_i(z))v_\smi}+\inprod{(u_\smi-\tilde{u}_i)  \Ia\tilde{G}_i(z) v_\smi}\\
    &+ \inprod{\tilde{u}_i   \Ia\tilde{G}_i(z) (v_\smi-\tilde{v}_i)}.
\end{align}
We start with the middle term. Once more, if $u=\beta$, then this term vanishes trivially. If on the other hand $u=\hat{w}$, then
\begin{align}
    \sup_{z\in\Gamma}\module{\inprod{(u_\smi-\tilde{u}_i)  \Ia\tilde{G}_i(z) v_\smi}}&\le \frac{1}{n}\partial\ell_i(\tilde{r}_{i,i}) \left(O_{L_k}(\polylog(n))+\sup_{z\in\Gamma}\module{\inprod{x_i,H_\smi^{-1}   \Iai G_\smi(z)v_\smi }}\right)
\end{align}
The Gaussian process supremum can once more be controlled, following identical steps as for \eqref{eq:bound_GP}, as
\begin{align}
    \sup_{z\in\Gamma}\module{\inprod{x_i,H_\smi^{-1}   \Iai G_\smi(z)v_\smi }}=O_{L_k}(\polylog(n)),
\end{align}
providing $O_{L_k}(\sfrac{\polylog(n)}{n})$ control over $\sup_{z\in\Gamma}\module{\inprod{(u_\smi-\tilde{u}_i)  \Ia\tilde{G}_i(z) v_\smi}}$. 
The third term in the decomposition, $\sup_{z\in\Gamma}\module{ \inprod{\tilde{u}_i   \Ia\tilde{G}_i(z) (v_\smi-\tilde{v}_i)}}$, may similarly be controlled, additionally using Proposition \ref{prop: LOO_approxs} to approximate $\tilde{u}_i$. The estimation Lemma then ensures that the contour integral of the corresponding terms is also $O_{L_k}\left(\sfrac{\polylog(n)}{n}\right)$. The first term of \eqref{eq:decompo_lastESWz} warrants a more careful handling. We recall the decomposition
\begin{align}
    \inprod{u_\smi,   \Ia(G_\smi(z)-\tilde{G}_i(z))v_\smi}=&\frac{1}{n}\sum\limits_{j\ne i}\partial^3\ell_j(r_{j,\smi})(\tilde{r}_{j,i}-r_{j,\smi})   \Ia u_\smi^\top G_\smi(z)x_jx_j^\top \tilde{G}_i(z) v_\smi\\
    &+\frac{1}{2n}\sum\limits_{j\ne i}\partial^4\ell_j(\check{r}_{j,i})(\tilde{r}_{j,i}-r_{j,\smi})^2   \Ia u_\smi^\top G_\smi(z)x_jx_j^\top \tilde{G}_i(z) v_\smi,
\end{align}
for some $\check{r}_{j,i}\in (\tilde{r}_{j,i},r_{j,\smi})$. The last term is bounded uniformly over $z\in\Gamma$ as
\begin{align}
    &\sup_{z\in\Gamma}\module{\frac{1}{2n}\sum\limits_{j\ne i}\partial^4\ell_j(\check{r}_{j,i})(\tilde{r}_{j,i}-r_{j,\smi})^2   \Ia u_\smi^\top G_\smi(z)x_jx_j^\top \tilde{G}_i v_\smi} \\
    &\le \norm{\partial^4\ell}_\infty \sup_{z\in\Gamma}\norm{\Iai X_\smi \tilde{G}_i(z)v_\smi}_\infty \sup_{z\in\Gamma}\norm{\Iai X_\smi G_\smi(z) u_\smi}_\infty\sup_{j\ne i}(\tilde{r}_{j,i}-r_{j,\smi})^2. \label{eq:dl4_int}
\end{align}
Observe that the infinity norm $\sup_{z\in\Gamma}\norm{\Iai X_\smi G_\smi(z) u_\smi}_\infty$ admits $\polylog(n)$ control, from \eqref{eq:infty_Xgu} and the Gaussian process bound \eqref{eq:bound_GP} applied to the leave$-i-$out problem. Similarly, $\sup_{z\in\Gamma}\norm{\Iai X_\smi \tilde{G}_i(z)v_\smi}_\infty =\sup_{z\in\Gamma}\norm{\Iai X_\smi G_\smi(z)v_\smi}_\infty+O_{L_k}(\polylog(n))$ (Proposition \ref{prop: LOO_approxs})
 admits the same control. Thus,
 \begin{align}
     \sup_{z\in\Gamma}\module{\frac{1}{2n}\sum\limits_{j\ne i}\partial^4\ell_j(\check{r}_{j,i})(\tilde{r}_{j,i}-r_{j,\smi})^2   \Ia u_\smi^\top G_\smi(z)x_jx_j^\top \tilde{G}_i v_\smi}=O_{L_k}\left(\frac{\polylog(n)}{n}\right).
 \end{align}
In order to unravel the $x_i$ dependencies within the first term of \eqref{eq:dl4_int} and reach control thereover, the surrogate resolvent $\tilde{G}_i(z)$ first needs to be approximated by the leave-$i$-out version $G_\smi(z)$. The corresponding correction term is 
 \begin{align}
    &\sup_{z\in\Gamma}\module{\frac{1}{n}\sum\limits_{j\ne i}\partial^3\ell_j(r_{j,\smi})(\tilde{r}_{j,i}-r_{j,\smi})   \Ia u_\smi^\top G_\smi(z)x_jx_j^\top (\tilde{G}_i(z)-G_\smi(z)) v_\smi}\\
    &\le \norm{\partial^3\ell}_\infty \sup_{j\ne i}\module{\tilde{r}_{j,i}-r_{j,\smi}} (1+M)^2 \sup_{z\in\Gamma}\norm{\Ia(\tilde{G}_i(z)-G_\smi(z))}\frac{\norm{X}^2}{n}\\
    &=O_{L_k}\left(\frac{\polylog(n)}{n}\right).
\end{align}
Thus, one has (up to $O\left(\sfrac{\polylog(n)}{n^2}\right)$ terms)
\begin{align}
    &\Ea{\left(\frac{1}{2\pi i}\oint_\Gamma \frac{1}{z^p}  (\Ia\mathfrak{W}_\smi (z)_{ab}-\Ia\tilde{\mathfrak{W}}_i(z)_{ab} )dz \right)^2}\\
&\le2 \norm{\partial\ell}_\infty^2\Ea{\module{\frac{1}{2\pi i}\oint_\Gamma \frac{1}{z^p} \frac{1}{n^2}\sum\limits_{j\ne i}\partial^3\ell_j(r_{j,\smi}) \Iai u_\smi^\top G_\smi(z)x_jx_j^\top G_\smi(z) v_\smi \inprod{x_j,H_\smi^{-1}x_i}}^2} \\
&=2 \norm{\partial\ell}_\infty^2\Ea{\norm{\frac{1}{2\pi i}\oint_\Gamma \frac{1}{z^p} \frac{1}{n^2}\sum\limits_{j\ne i}\partial^3\ell_j(r_{j,\smi}) \Iai u_\smi^\top G_\smi(z)x_jx_j^\top G_\smi(z) v_\smi H_\smi^{-1}x_j }^2} \\
&\le 2 \norm{\partial\ell}_\infty^2\Ea{\left(\frac{\mathcal{P}_\Gamma}{2\pi }\sup_{z\in\Gamma}\norm{  \frac{1}{z^p}  \frac{1}{n^2}\sum\limits_{j\ne i}\partial^3\ell_j(r_{j,\smi}) \Iai u_\smi^\top G_\smi(z)x_jx_j^\top G_\smi(z) v_\smi H_\smi^{-1}x_j} \right)^2} 
\end{align}
It remains to bound
\begin{align}
    &\sup_{z\in\Gamma}\norm{\frac{1}{n^2}\sum\limits_{j\ne i}\partial^3\ell_j(r_{j,\smi}) \Iai u_\smi^\top G_\smi(z)x_jx_j^\top G_\smi(z) v_\smi H_\smi^{-1}x_j}\\
    &\le \frac{1}{n} \norm{\partial^3\ell}_\infty \sup_{z\in\Gamma}\norm{\Iai X_\smi G_\smi(z)u_\smi}_\infty\sup_{z\in\Gamma}\norm{\Iai X_\smi G_\smi(z)v_\smi}_\infty \frac{1}{\lambda} \frac{\norm{X_\smi}}{\sqrt{n}}=O_{L_k}\left(\frac{\polylog(n)}{n}\right).
\end{align}
The estimation Lemma thus allows to conclude 

\begin{align}
    \Ea{\left(\frac{1}{2\pi i}\oint_\Gamma \frac{1}{z^p}  (\Ia\mathfrak{W}_\smi (z)_{ab}-\Ia\tilde{\mathfrak{W}}_i(z)_{ab} )dz \right)^2}=O\left(\frac{\polylog(n)}{n^2}\right).
\end{align}

\paragraph{Third term of \eqref{eq:int_ES_decompoWz} ---} It finally remains to bound the last term of \eqref{eq:int_ES_decompoWz}.
\begin{align}
     \Ea{\left(\frac{1}{2\pi i}\oint_\Gamma \frac{1}{z^p}  (\Ia-\Iai)\mathfrak{W}_\smi (z)_{ab} dz\right)^2}&\le \Ea{\left(\Ia-\Iai\right)\left(\frac{2^p(1+M)^2\mathcal{P}_\Gamma}{\lambda^{p+1} \pi }\right)^2}\\
     &\le \left(\frac{2^p(1+M)^2\mathcal{P}_\Gamma}{\lambda^{p+1} \pi }\right)^2 \P{\mathcal{A}^c}=O(e^{-\frac{n}{2}}).
\end{align}
Injecting this control into the Efron-Stein bound completes the proof.

\end{proof}

\newpage

\section{Deterministic equivalents}
\label{app:deterministic}
We have established in Appendix \ref{app:concentration} the concentration of the different random statistics $\mathfrak{Q}^{(k)}, \mathfrak{V}^{(k)}$ \eqref{eq:mathfrakQk} that characterize the geometric properties of the ERM estimator $\hat{w}$ \eqref{eq:full_ERM}. It is hence sufficient to ascertain their expectations. To that end, it remains to first determine the asymptotic distribution of the residuals $r_{i,\smi}, \tilde{r}_{i,i}$ : this is the object of the present section. More precisely, we establish the convergence of the expectations of test functions of $r_{i,\smi}, \tilde{r}_{i,i}$ to expectations over Gaussian variables. This property will then be leveraged in the subsequent subsection \ref{subsec:det_equivalent} to compute the deterministic equivalents of the random statistics $\mathfrak{Q}^{(k)}, \mathfrak{V}^{(k)}$ \eqref{eq:mathfrakQk},\eqref{eq:mathfracV}, and the random functionals $\Ia \mathfrak{S}(\cdot),\Ia \mathfrak{X}(\cdot),\Ia \mathfrak{W}(\cdot)$ \eqref{eq:mathfrakW}-\eqref{eq:mathfracV}.

\subsection{Asymptotic residual distribution}

We first define the admissible set of functions for which the aforementioned weak convergence holds.

\begin{definition}[Class of admissible pseudo-Lipschitz test functions ]\label{def:F} Let us define $\mathcal{F}$ the class of sequences of functions from $\R^8\to\R$ that are pseudo-Lipschitz in all their arguments bar the second. Namely, $\phi\in\mathcal{F}$ if and only if there exists a constant $L_\phi=O(\polylog(n))$, such that for any $R,R^\prime \in \R^6$ coinciding on their second component ($R_2=R^\prime_2$), for $n$ sufficiently large,
\begin{align}
    \module{\phi(R)-\phi(R^\prime)} \le L_\phi\left(1+\norm{R_{(2)}}_1+\norm{R^\prime_{(2)}}_1\right)^2 \norm{R-R^\prime},
\end{align}
where $R_{(2)}, R^\prime_{(2)}\in \R^7$ denote the vectors obtained by removing the second entry of $R, R^\prime$.
\end{definition}

The class $\mathcal{F}$ introduced in Definition \ref{def:F} captures all the functions for which one needs to compute the average over the surrogate residuals, in order to access the deterministic equivalents of the different summary statistics $\mathfrak{Q}^{(k)}, \mathfrak{V}^{(k)}$, as will be made clear in subsection \ref{subsec:det_equivalent}. Note that we do not require pseudo-Lipschitzness for the second argument of the function.\\

One is now in a position to characterize the asymptotic distribution of the leave-one-out residuals $r_{i,\smi}$.

\begin{lemma}[Distribution of leave-one-out residuals] \label{lem:distrib_r}
    Let $z\in\Gamma$. Introduce the shorthand
    \begin{align}\label{def_Ri}
        R_i= \begin{bmatrix}
        \Upsilon_\smi^\top x_i\\
        \hline
         \Upsilon_\smi^\top H_\smi^{-1} x_i\\
         \hline
         \Re{\Upsilon_\smi^\top \Iai G_\smi(z)x_i}\\
          \hline
         \Im{\Upsilon_\smi^\top \Iai G_\smi(z)x_i}
    \end{bmatrix}.
    \end{align}
For any sequence of test functions $\phi \in\mathcal{F}$, 
\begin{align}
    \module{\Ea{\phi(R_i)}-\mathbb{E}_{X_\smi}\Eb{g}{\phi(g)}}=O\left(\frac{\polylog(n)}{n^{\frac{1}{4}}}\right). 
\end{align}
In the above display, $g\sim\mathcal{N}(0_8, Q_i(z))$ conditionally on $X_\smi$, where
\begin{align}
    Q_i=\begin{bmatrix}
        Q^{(0)}&\vline& Q^{(1)} &\vline &\Re{\Omega(z)} &\vline& \Im{\Omega(z)}\\\hline 
    Q^{(1)} &\vline & Q^{(2)} & \vline &  &\mathfrak{f}_i(z)^\top & \\ \hline
    \Re{\Omega(z)} &\vline & &\vline &&&\\
      &\vline & \mathfrak{f}_i(z) &\vline &&\mathfrak{F}_i(z)&\\
      \cline{1-2} \Im{\Omega(z)}&\vline& &\vline & &&
    \end{bmatrix},\label{eq:Qiz}
\end{align}
introducing the shorthands
\begin{align}
    &\mathfrak{f}_i=\begin{bmatrix}
        \Re{\Upsilon_\smi^\top \Iai G_\smi(z)H_\smi^{-1}\Upsilon_\smi}\\
         \Im{\Upsilon_\smi^\top \Iai G_\smi(z)H_\smi^{-1}\Upsilon_\smi}
    \end{bmatrix}, \\
    &\mathfrak{F}_i=\begin{bmatrix}
        \Re{\Upsilon_\smi^\top \Iai G_\smi(z)}\\
          \hline
         \Im{\Upsilon_\smi^\top \Iai G_\smi(z)}
    \end{bmatrix}\begin{bmatrix}
         \Re{\Upsilon_\smi^\top \Iai G_\smi(z)} &\vline&   \Im{\Upsilon_\smi^\top \Iai G_\smi(z)}
    \end{bmatrix}.
\end{align}
\end{lemma}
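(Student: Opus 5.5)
The starting point is that $x_i$ is independent of $X_\smi$, and hence of every matrix and vector entering $R_i$: $\hat{w}_\smi$, $H_\smi$, $G_\smi(z)$ and $\mathcal{A}_\smi$ are all functions of $X_\smi$ only. Conditionally on $X_\smi$, the vector $R_i$ is therefore a deterministic linear map applied to $x_i\sim\mathcal{N}(0_d,I_d)$. Write
\begin{align}
R_i = M_i^\top x_i, \qquad M_i=\big[\Upsilon_\smi ~\vline~ H_\smi^{-1}\Upsilon_\smi ~\vline~ \Re{\Iai G_\smi(z)}\Upsilon_\smi ~\vline~ \Im{\Iai G_\smi(z)}\Upsilon_\smi\big]\in\R^{d\times 8}.
\end{align}
Then $R_i\,|\,X_\smi$ is exactly Gaussian, $\mathcal{N}(0_8, M_i^\top M_i)$. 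The entries of $M_i^\top M_i$ are the random statistics $\Upsilon_\smi^\top H_\smi^{-k}\Upsilon_\smi$ ($k=0,1,2$), the blocks $\Upsilon_\smi^\top \Iai G_\smi(z)\Upsilon_\smi$ together with their real and imaginary parts, the cross terms $\mathfrak{f}_i$, and $\mathfrak{F}_i$. Here one uses that $G_\smi(z)$ and $H_\smi^{-1}$ commute and are complex symmetric, so the real and imaginary parts combine as indicated. The first step is therefore just to write $\Ea{\phi(R_i)}=\mathbb{E}_{X_\smi}\Eb{g\sim\mathcal{N}(0,M_i^\top M_i)}{\phi(g)}$. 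This identity is exact.

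The second step replaces the random covariance $M_i^\top M_i$ by $Q_i(z)$. Only the blocks that $Q_i$ declares deterministic need replacing, namely $Q^{(0)},Q^{(1)},Q^{(2)}$ and $\Omega(z)$. The blocks $\mathfrak{f}_i$ and $\mathfrak{F}_i$ are kept random, since $Q_i$ contains them as such. For these replacements I would use the concentration results. Lemma \ref{lem:frakQ_concentrates} gives $L_2$ concentration of $\mathfrak{Q}^{(k)}$, Lemma \ref{lem:IaWz_concentrates} gives concentration of $\Ia\mathfrak{W}(z)$, and both transfer to their leave-$i$-out versions. This transfer goes through Proposition \ref{prop: LOO_approxs} and the Efron–Stein comparisons already carried out, since $\hat{w}_\smi-\hat{w}=O_{L_k}(\polylog(n)/\sqrt{n})$ and $\norm{G_\smi-G}$ is small on $\mathcal{A}$. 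The outcome is that each replaced block differs from its deterministic value by $O_{L_2}(\polylog(n)/\sqrt{n})$. One also needs the $O(\polylog(n)e^{-n/2})$ correction for swapping the random contour for $\Gamma$, and for the difference between the $1/n$ and $1/(n-1)$ normalisations; both are negligible.

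The third step, which I expect to be the main obstacle, is a stability estimate for Gaussian expectations of functions in $\mathcal{F}$ under a perturbation of the covariance, when the second argument is not Lipschitz. I would couple the two Gaussians as $g=\Sigma^{1/2}\xi$ and $g'=\Sigma'^{1/2}\xi$ with the same $\xi$, and use
\begin{align}
\norm{\Sigma^{1/2}-\Sigma'^{1/2}}\le \norm{\Sigma-\Sigma'}^{1/2}.
\end{align}
Since matrix square roots are only $1/2$-Hölder, this step loses a square root: an $n^{-1/2}$ error in covariance becomes $n^{-1/4}$, which matches the claimed rate. The non-Lipschitz second coordinate $\inprod{\beta,x_i}$ is handled by observing that $\beta$ is deterministic, so $\Upsilon_\smi$ has the fixed column $\beta$. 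The coupling can therefore be arranged to keep the second component identical: first factor out the exactly $\mathcal{N}(0,1)$ coordinate $g_2$, then couple the remaining seven coordinates conditionally on it, using a Schur complement. Definition \ref{def:F} then applies with $R_2=R_2'$. The pseudo-Lipschitz weight $(1+\norm{R_{(2)}}_1+\norm{R'_{(2)}}_1)^2$ is controlled by Cauchy–Schwarz, using $L_k$ bounds on $\norm{M_i}$. These follow from $\norm{\hat{w}_\smi}\le M$ (Lemma \ref{lemma:normw}) and the bounds $\norm{H_\smi^{-1}}\le 1/\lambda$ and $\norm{\Iai G_\smi(z)}\le 2/\lambda$ for $z\in\Gamma$.

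The last concern is the degenerate case, where the conditioning on $g_2$ needs $\beta$ to contribute a nonzero variance; this holds because $\norm{\beta}=1$. The Schur complements are positive semidefinite, and the square-root Hölder bound holds for PSD matrices without any invertibility assumption. Averaging over $X_\smi$ and applying Jensen to $\mathbb{E}\norm{\Sigma-\Sigma'}^{1/2}$ then gives the $O(\polylog(n)n^{-1/4})$ bound.
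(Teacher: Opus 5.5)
This is essentially the paper's proof, step for step. The shared steps are:
\begin{itemize}
\item exact Gaussianity of $R_i$ conditionally on $X_\smi$;
\item replacement of the concentrating blocks by $Q^{(k)}$ and $\Omega(z)$, via Lemmas \ref{lem:IaWz_concentrates} and \ref{lem:frakQ_concentrates} applied to the leave-$i$-out problem;
\item conditioning on the exactly standard second coordinate (since $\norm{\beta}=1$), so that Definition \ref{def:F} applies with identical second entries;
\item Cauchy--Schwarz on the pseudo-Lipschitz weight;
\item a square-root perturbation bound that turns the $n^{-1/2}$ covariance error into the $n^{-1/4}$ rate.
\end{itemize}
The paper uses the Powers--St\o rmer trace inequality where you use the operator-norm H\"older bound; these are equivalent in fixed dimension.

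One aside needs correcting. $\norm{G_\smi(z)-G(z)}$ is not small in operator norm, because $H-H_\smi$ contains the order-one rank-one term $\frac{1}{n}\partial^2\ell_i(r_i)x_ix_i^\top$. The transfer to the leave-$i$-out blocks must therefore rest, as you also indicate, on the Efron--Stein quadratic-form comparisons, which give $\Ia\mathfrak{W}(z)-\Iai\mathfrak{W}_\smi(z)=O_{L_2}(\polylog(n)/n)$, and not on an operator-norm bound.
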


\begin{proof}
    By construction, $\Upsilon_\smi, H_\smi, \Iai , G_\smi(z)$ are independent from $x_i$. Thus, conditionally on $X_\smi$, the vector $R_i$ is distributed as $h\sim\mathcal{N}(0, \mathfrak{Q}_i(z))$, with 
\begin{align}
    \mathfrak{Q}_i(z)=\begin{bmatrix}
        \mathfrak{Q}^{(0)}_\smi&\vline& \mathfrak{Q}^{(1)}_\smi &\vline &\Re{\Iai \mathfrak{W}_\smi(z)} &\vline& \Im{\Iai\mathfrak{W}_\smi(z)}\\\hline 
    \mathfrak{Q}_\smi^{(1)} &\vline & \mathfrak{Q}_\smi^{(2)} & \vline &  &\mathfrak{f}_i(z)^\top & \\ \hline
    \Re{\Iai\mathfrak{W}_\smi(z)} &\vline & &\vline &&&\\
      &\vline & \mathfrak{f}_i(z) &\vline &&\mathfrak{F}_i(z)&\\
      \cline{1-2} \Im{\Iai\mathfrak{W}_\smi(z)}&\vline& &\vline & &&
    \end{bmatrix}.\label{eq:frakQiz}
\end{align}
The proof then proceeds by exploiting the Lipschitzness of the test function $\phi$ to control the difference of expectations by the difference between the covariances $Q_i(z), \mathfrak{Q}_i(z)$. Particular care however needs to be devoted to handle the second entry, in which the test function $\phi$ is not assumed Lipschitz. 
To that end, first observe that $\mathfrak{Q}^{(0)}_{22}=Q^{(0)}_{22}=1$, and $g_2,h_2$ thus share an identical marginal.  Conditioning on  $g_2,h_2$, and using the shorthand $\tilde{\phi}(R_2, R_{(2)})=\phi(R)$,
\begin{align}
     \Ea{\phi(R_i)}-\mathbb{E}_{X_\smi}\Eb{g}{\phi(g)}
     &=\mathbb{E}_{X_\smi} \mathbb{E}_{g_2}\Bigg[\Eb{f}{\tilde{\phi}\left(g_2, \mathfrak{q}_i(z) g_2+\left(\overline{\mathfrak{Q}}_i(z)-\mathfrak{q}_i(z)\mathfrak{q}_i(z)^\top \right)^{\frac{1}{2}}f\right)}\\
     &\qquad\qquad\qquad-\Eb{f}{\tilde{\phi}\left(g_2, q_i(z) g_2+\left(\overline{Q}_i(z)-q_i(z)q_i(z)^\top \right)^{\frac{1}{2}}f\right)}\Bigg],
\end{align}
with $f\sim\mathcal{N}(0, I_7)$, and $\mathfrak{q}_i(z),q_i(z) \in\R^7$ correspond to the second column of $\mathfrak{Q}_i(z)$ and $Q_i(z)$ respectively after removal of the second entry,
\begin{align}
    \mathfrak{q}_i(z)=(\mathfrak{Q}_i(z)_{[2:]})_{(2)}, && q_i(z)=(Q_i(z)_{[2:]})_{(2)}, 
\end{align}
and $\overline{\mathfrak{Q}}_i(z)$ (resp. $\overline{Q}_i(z)$) corresponds to minors of $\mathfrak{Q}_i(z)$ (resp. $Q_i(z)$) when its second line and column are removed. Leveraging the pseudo-Lipschitzness of $\phi$,
\begin{align}
    &\module{\Ea{\phi(R_i)}-\mathbb{E}_{X_\smi}\Eb{g}{\phi(g)}}\\&\le L_\phi
\Eb{X_\smi, g_2, f}{\scriptstyle \left(
1+\norm{\mathfrak{q}_i(z) g_2+\left(\overline{\mathfrak{Q}}_i(z)-\mathfrak{q}_i(z)\mathfrak{q}_i(z)^\top \right)^{\frac{1}{2}}f}_1+\norm{q_i(z) g_2+\left(\overline{Q}_i(z)-q_i(z)q_i(z)^\top \right)^{\frac{1}{2}}f}_1
\right)^4}^{\frac{1}{2}}\\
&~~ \times \Eb{X_\smi, g_2, f}{
\norm{\mathfrak{q}_i(z) g_2+\left(\overline{\mathfrak{Q}}_i(z)-\mathfrak{q}_i(z)\mathfrak{q}_i(z)^\top \right)^{\frac{1}{2}}f-q_i(z) g_2-\left(\overline{Q}_i(z)-q_i(z)q_i(z)^\top \right)^{\frac{1}{2}}f}^2}^{\frac{1}{2}},\label{eq:lipconc_decompo}
\end{align}
using a Cauchy-Schwartz inequality. We now sequentially examine the two expectations in the bound \eqref{eq:lipconc_decompo}.
\paragraph{Control of the second term of \eqref{eq:lipconc_decompo}---}We start by controlling the last term. 
\begin{align}
  &  \Eb{X_\smi, g_2, f}{
\norm{\mathfrak{q}_i(z) g_2+\left(\overline{\mathfrak{Q}}_i(z)-\mathfrak{q}_i(z)\mathfrak{q}_i(z)^\top \right)^{\frac{1}{2}}f-q_i(z) g_2-\left(\overline{Q}_i(z)-q_i(z)q_i(z)^\top \right)^{\frac{1}{2}}f}^2}\\
&\le 2 \Ea{\norm{\mathfrak{q}_i(z) -q_i(z)}^2}+2\Ea{\Tr[\left(\left(\overline{\mathfrak{Q}}_i(z)-\mathfrak{q}_i(z)\mathfrak{q}_i(z)^\top \right)^{\frac{1}{2}}-\left(\overline{Q}_i(z)-q_i(z)q_i(z)^\top \right)^{\frac{1}{2}}\right)^2]}
\end{align}
It follows from an application of the concentration results of Lemma \ref{lem:frakQ_concentrates} and \ref{lem:IaWz_concentrates} to the leave-$i$-out problem that
\begin{align}
  \Ea{\norm{\mathfrak{q}_i(z) -q_i(z)}^2}=O\left(\frac{\polylog(n)}{n}\right).   
\end{align}
Furthermore, from the Powers-St\o rmer inequality \citep{powers1970free},
\begin{align}
    &\Ea{\Tr[\left(\left(\overline{\mathfrak{Q}}_i(z)-\mathfrak{q}_i(z)\mathfrak{q}_i(z)^\top \right)^{\frac{1}{2}}-\left(\overline{Q}_i(z)-q_i(z)q_i(z)^\top \right)^{\frac{1}{2}}\right)^2]}\\
    &\le \Ea{\norm{\overline{\mathfrak{Q}}_i(z)-\mathfrak{q}_i(z)\mathfrak{q}_i(z)^\top -\overline{Q}_i(z)+q_i(z)q_i(z)^\top }_\ast}\\
    &\le \sqrt{7}\Ea{\norm{\overline{\mathfrak{Q}}_i(z)-\mathfrak{q}_i(z)\mathfrak{q}_i(z)^\top -\overline{Q}_i(z)+q_i(z)q_i(z)^\top }_F}\\
    &\le \sqrt{7}\left(\Ea{\norm{\overline{\mathfrak{Q}}_i(z)-\overline{Q}_i(z)}_F}+\Ea{\norm{\mathfrak{q}_i(z)\mathfrak{q}_i(z)^\top-q_i(z)q_i(z)^\top}_F}\right).
\end{align}
By construction, $\mathfrak{Q}_i(z)$ \eqref{eq:frakQiz} and $Q_i(z)$ \eqref{eq:Qiz} coincide at the locations of $\mathfrak{f}_i(z), \mathfrak{F}_i(z)$, while the remaining entries only differ by $O_{L_2}(\sfrac{\polylog(n)}{\sqrt{n}})$ from the concentration established in Lemma \ref{lem:IaWz_concentrates} and \ref{lem:frakQ_concentrates}, applied to the leave-$i-$out problem. Therefore, 
\begin{align}
    \Ea{\norm{\overline{\mathfrak{Q}}_i(z)-\overline{Q}_i(z)}_F}=O\left(\frac{\polylog(n)}{\sqrt{n}}\right).
\end{align}
Furthermore,
\begin{align}
    \Ea{\norm{\mathfrak{q}_i(z)\mathfrak{q}_i(z)^\top-q_i(z)q_i(z)^\top}_F} &\le \Ea{\norm{(\mathfrak{q}_i(z)-q_i(z))\mathfrak{q}_i(z)}_F}+ \Ea{\norm{q_i(z)(\mathfrak{q}_i(z)-q_i(z))}_F}\\
    &\le \left(\Ea{\norm{\mathfrak{q}_i(z)}^2}^{\frac{1}{2}}+\Ea{\norm{q_i(z)}^2}^{\frac{1}{2}}\right)\Ea{\norm{\mathfrak{q}_i(z)-q_i(z)}^2}^{\frac{1}{2}}.
\end{align}
From Lemma \ref{lemma:normw}, $\norm{\mathfrak{q}_i(z)},\norm{q_i(z)}=O(\polylog(n))$, while Lemma \ref{lem:IaWz_concentrates} and \ref{lem:frakQ_concentrates} ensure $\norm{\mathfrak{q}_i(z)-q_i(z)}=O_{L_2}(\sfrac{\polylog(n)}{\sqrt{n}})$. Thus, 
\begin{align}
      \Ea{\norm{\mathfrak{q}_i(z)\mathfrak{q}_i(z)^\top-q_i(z)q_i(z)^\top}_F}=O\left(\frac{\polylog(n)}{\sqrt{n}}\right). 
\end{align}
Therefore, 
\begin{align}
    &\Eb{X_\smi, g_2, f}{
\norm{\mathfrak{q}_i(z) g_2+\left(\overline{\mathfrak{Q}}_i(z)-\mathfrak{q}_i(z)\mathfrak{q}_i(z)^\top \right)^{\frac{1}{2}}f-q_i(z) g_2-\left(\overline{Q}_i(z)-q_i(z)q_i(z)^\top \right)^{\frac{1}{2}}f}^2}\\
&=O\left(\frac{\polylog(n)}{\sqrt{n}}\right)
\end{align}

\paragraph{Control of the first term of \eqref{eq:lipconc_decompo}---} We now turn to the first term in \eqref{eq:lipconc_decompo}. From H\"older's inequality,
\begin{align}
    &\Ea{\left(
1+\norm{\mathfrak{q}_i(z) g_2+\left(\overline{\mathfrak{Q}}_i(z)-\mathfrak{q}_i(z)\mathfrak{q}_i(z)^\top \right)^{\frac{1}{2}}f}_1+\norm{q_i(z) g_2+\left(\overline{Q}_i(z)-q_i(z)q_i(z)^\top \right)^{\frac{1}{2}}f}_1
\right)^4}\\
&\le 27\mathbb{E}\Bigg[1+392\Bigg(\norm{\mathfrak{q}_i(z)}^4g_2^4+\norm{\left(\overline{\mathfrak{Q}}_i(z)-\mathfrak{q}_i(z)\mathfrak{q}_i(z)^\top \right)}_F^2\norm{f}^4+\norm{q_i(z)}^4g_2^4\\
&\qquad\qquad +\norm{\left(\overline{Q}_i(z)-q_i(z)q_i(z)^\top \right)}^2_F\norm{f}^4\Bigg)\Bigg]\\
&\le 27\mathbb{E}\Bigg[1+392\Bigg(3\norm{\mathfrak{q}_i(z)}^4+63\norm{\left(\overline{\mathfrak{Q}}_i(z)-\mathfrak{q}_i(z)\mathfrak{q}_i(z)^\top \right)}_F^2+3\norm{q_i(z)}^4\\
&\qquad\qquad +63\norm{\left(\overline{Q}_i(z)-q_i(z)q_i(z)^\top \right)}^2_F\Bigg)\Bigg]. 
\end{align}
It follows from the bounds $\norm{\Iai G_\smi(z)},\norm{H_\smi^{-1}}\le \frac{1}{\lambda}$ for $z\in\Gamma$ and $\norm{\hat{w}_\smi}\le M$ (Lemma \ref{lemma:normw}) that all Frobenius norms in the above display are $O_{L_k}(\polylog(n))$, yielding $O(\polylog(n))$ control over the first term of \eqref{eq:lipconc_decompo}.

\paragraph{End of proof ---} Assembling these upper bounds into \eqref{eq:lipconc_decompo} finally yields the control 
\begin{align}
    \module{\Ea{\phi(R_i)}-\mathbb{E}_{X_\smi}\Eb{g}{\phi(g)}}=O\left(\frac{\polylog(n)}{n^{\frac{1}{4}}}\right),
\end{align}
completing the proof.
\end{proof}

\begin{remark}
    It is worth pausing at this point to unravel this convergence result. Lemma \ref{lem:distrib_r} establishes that averages over the residuals $R_i$ may be replaced by Gaussian averages with covariance given by the semi-deterministic summary statistic $Q_i(z)$ \eqref{eq:Qiz}. While the latter is in one-fourth populated by deterministic entries, it still retains random entries collected in $\mathfrak{f}_i(z),\mathfrak{F}_i(z)$, which seemingly restricts the scope of the results. This is in fact inconsequential, as all subsequent applications of Lemma \ref{lem:distrib_r} in the computation of averages over $R_i$ (subsection \ref{app:deterministic}) in fact never involve the corresponding random entries, which either do not appear, or are vanishing, when taking the average.
\end{remark}

We now state a follow-up results that similarly characterizes expectations over the \textit{surrogate} residual $\tilde{r}_{i,i}$.

\begin{lemma}[Distribution of surrogate residuals] \label{lem:distrib_rtilde}
    Let $z\in\Gamma$. Introduce the shorthand
    \begin{align}
        \tilde{R}_i= \begin{bmatrix}
        \tilde{r}_{i,i}\\
        \inprod{\beta,x_i}\\
        \hline
         \Upsilon_\smi^\top H_\smi^{-1} x_i\\
         \hline
         \Re{\Upsilon_\smi^\top \Iai G_\smi(z)x_i}\\
          \hline
         \Im{\Upsilon_\smi^\top \Iai G_\smi(z)x_i}
    \end{bmatrix},
    \end{align}
which reprises the vector $R_i$ \eqref{def_Ri} of Lemma \ref{lem:distrib_r} with the replacement of the leave-one-out residual $r_{i\smi}$ by the surrogate residual $\tilde{r}_{i,i}$.
For any sequence of test functions $\phi \in\mathcal{F}$ (see Definition \ref{def:F}), 
\begin{align}
    \module{\Ea{\phi(\tilde{R}_i)}-\mathbb{E}_{X_\smi}\Eb{g}{\phi\circ \pi (g)}}=O\left(\frac{\polylog(n)}{n^{\frac{1}{4}}}\right). 
\end{align}
In the above display, $g\sim\mathcal{N}(0_8, Q_i(z))$ conditionally on $X_\smi$, and $\pi :\R^8\to \R^8$ is the map
\begin{align}
    \pi(g)=\begin{bmatrix}
        \prox_{V^{(1)}\ell(\cdot, g_2)}(g_1)\\
        \hline g_{(1)}
    \end{bmatrix},
\end{align}
where we remind the notation $g_{(1)}\in\R^7$ for the vector obtained from $g$ after carving out the first entry.
\end{lemma}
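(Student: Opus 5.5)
The plan is to reduce Lemma \ref{lem:distrib_rtilde} to Lemma \ref{lem:distrib_r}. We introduce the auxiliary test function $\psi = \phi\circ\pi$ and write $\tilde{r}_{i,i}$ as a deterministic function of the entries of $R_i$ plus a small error. By definition, $\tilde{r}_{i,i}=\prox_{\gamma_i\ell(\cdot,y_i)}(r_{i,\smi})$ with $y_i=\phi(\inprod{\beta,x_i})$, which is a function of the second entry $R_{i,2}=\inprod{\beta,x_i}$ of $R_i$. We first replace the random step size $\gamma_i=\frac1n x_i^\top H_\smi^{-1}x_i$ by its deterministic equivalent $V^{(1)}$. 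Conditionally on $X_\smi$, the Hanson--Wright inequality gives $\gamma_i=\frac1n\tr[H_\smi^{-1}]+O_{L_k}(\polylog(n)/\sqrt n)$. Lemma \ref{lem:Vz_concentrates}, applied to the leave-$i$-out problem, then gives $\frac1n\tr[H_\smi^{-1}]=V^{(1)}+O_{L_2}(\polylog(n)/\sqrt n)$; the small discrepancy between the $H_\smi$ and $H$ normalisations is controlled by Proposition \ref{prop: LOO_approxs}, exactly as in the Efron--Stein arguments. Hence $\gamma_i-V^{(1)}=O_{L_2}(\polylog(n)/\sqrt n)$.

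Next we control the effect of this replacement on the prox. Write $p_\gamma(r)=\prox_{\gamma\ell(\cdot,y)}(r)$. Since $\ell$ is convex in its first argument, $p_\gamma$ is $1$-Lipschitz in $r$. Differentiating the fixed-point relation $p+\gamma\partial\ell(p,y)=r$ in $\gamma$ gives
\[
\partial_\gamma p_\gamma=-\frac{\partial\ell(p,y)}{1+\gamma\partial^2\ell(p,y)},
\]
which is bounded by $\norm{\partial\ell}_\infty=O(\polylog(n))$. Therefore
\[
\module{\tilde r_{i,i}-\prox_{V^{(1)}\ell(\cdot,y_i)}(r_{i,\smi})}\le \norm{\partial\ell}_\infty\module{\gamma_i-V^{(1)}}=O_{L_2}\!\left(\frac{\polylog(n)}{\sqrt n}\right).
\]
Pseudo-Lipschitzness of $\phi$ in all arguments except the second, combined with Cauchy--Schwarz and moment bounds on $R_i$ (these follow from Lemma \ref{lemma:normw} and $\norm{H_\smi^{-1}},\norm{\Iai G_\smi(z)}\le 2/\lambda$), then yields
\[
\Ea{\phi(\tilde R_i)}=\Ea{\phi\circ\pi(R_i)}+O\!\left(\frac{\polylog(n)}{\sqrt n}\right).
\]

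It remains to apply Lemma \ref{lem:distrib_r} to $\psi=\phi\circ\pi$, for which we need $\psi\in\mathcal F$. The first component of $\pi$ is $1$-Lipschitz in $g_1$. Its dependence on $g_2$ is harmless, because class $\mathcal F$ imposes no regularity in the second argument, and $g_2$ enters $\phi$ only through that coordinate and through the label inside the prox. Since $\module{\prox_{V^{(1)}\ell(\cdot,g_2)}(g_1)}\le \module{g_1}+V^{(1)}\norm{\partial\ell}_\infty$, the polynomial growth factor $(1+\norm{R_{(2)}}_1+\norm{R'_{(2)}}_1)^2$ is preserved, with the constant inflated by $O(\polylog(n))$. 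Hence $\psi\in\mathcal F$ with $L_\psi=O(\polylog(n))$, and Lemma \ref{lem:distrib_r} gives the claim with error $O(\polylog(n)/n^{1/4})$, which dominates the $n^{-1/2}$ term above.

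The main obstacle is the first step: making the concentration of $\gamma_i$ around $V^{(1)}$ rigorous. $V^{(1)}$ is defined through $H$ on the event $\mathcal A$, while $\gamma_i$ involves $H_\smi$ without truncation. We plan to work on $\mathcal A_\smi$, where $\norm{H_\smi^{-1}}\le1/\lambda$ holds deterministically, and to absorb $\mathcal A_\smi^c$ using its exponentially small probability together with the crude bound $\gamma_i\le\norm{x_i}^2/(n\lambda)$.
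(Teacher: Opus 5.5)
Your proposal is correct and follows the paper's proof essentially step for step. The paper likewise writes $\phi(\tilde R_i)=\phi\circ\hat\pi(R_i)$ with the random step $\gamma_i$, and bounds $|\tilde r_{i,i}-\prox_{V^{(1)}\ell_i(\cdot)}(r_{i,\smi})|$ by $\|\partial\ell\|_\infty|\gamma_i-V^{(1)}|$ through a mean-value step on the fixed-point equation (equivalent to your $\partial_\gamma$ computation), controls $\gamma_i-V^{(1)}$ via Lemma~\ref{lemma:quad} and Lemma~\ref{lem:Vz_concentrates}, and finally checks $\phi\circ\pi\in\mathcal F$ by contractivity of the prox before invoking Lemma~\ref{lem:distrib_r}. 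The obstacle you flag is milder than you expect: $H_\smi\succeq\lambda I_d$ holds without any truncation, and $V^{(1)}=\Ea{\mathfrak V^{(1)}}$ is itself defined without truncation.
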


\begin{proof}
    The proofs builds on Lemma \ref{lem:distrib_r}, applied to the function $\phi\circ \pi$. First observe that $\phi(\tilde{R}_i)=\phi\circ \hat{\pi} (R_i)$, where $\hat{\pi} :\R^8\to \R^8$ is the map
\begin{align}
    \hat{\pi}(g)=\begin{bmatrix}
        \prox_{\frac{\inprod{x_i, H_\smi^{-1}x_i}}{n}\ell(\cdot, g_2)}(g_1)\\
        \hline g_{(1)}
    \end{bmatrix},
\end{align}
and we remind $R_i$ was introduced in \eqref{def_Ri}.
Decomposing the objective as
\begin{align}
    \module{\Ea{\phi(\tilde{R}_i)}-\Ea{\phi\circ \pi (g)}} \le \module{\Ea{\phi\circ \hat{\pi}(R_i)}-\Ea{\phi\circ \pi(R_i)}}+ \module{\Ea{\phi\circ \pi(R_i)}-\Ea{\phi\circ \pi (g)}},
\end{align}
we successively control the two terms.
\paragraph{Approximation of $\pi,\hat{\pi}$ ---} Since $\phi\in\mathcal{F}$,
\begin{align}
    \module{\phi\circ \hat{\pi}(R_i)-\phi\circ \pi (R_i)} \le  \left(1+ \norm{\pi (R_i)}_1+\norm{\hat{\pi} (R_i)}_1\right)^2 \module{  \tilde{r}_{i,i}- \prox_{V^{(1)}\ell(\cdot, \inprod{\beta, x_i})}(r_{i,\smi})}\label{eq:pipihat}
\end{align}
But
\begin{align}
   &\tilde{r}_{i,i}-\prox_{V^{(1)} \ell_i(\cdot)}(r_{i,\smi})\\
   &=V^{(1)}\partial\ell_i(\prox_{V^{(1)} \ell_i(\cdot)}(r_{i,\smi}))-\frac{\inprod{x_i, H_\smi^{-1}x_i}}{n}\partial\ell_i(\tilde{r}_{i,i})\\
   & = \left(V^{(1)}-\frac{\inprod{x_i, H_\smi^{-1}x_i}}{n}\right)\partial\ell_i(\tilde{r}_{i,i})+V^{(1)}\partial^2\ell_i(\check{r}_i)(  -\tilde{r}_{i,i}+\prox_{V^{(1)} \ell_i(\cdot)}(r_{i,\smi})),
\end{align}
for some $\check{r}_i\in (\tilde{r}_{i,i},\prox_{V^{(1)} \ell_i(\cdot)}(r_{i,\smi}) )$. Therefore,
\begin{align}
    \module{\tilde{r}_{i,i}-\prox_{V^{(1)} \ell_i(\cdot)}(r_{i,\smi})}\le \frac{\norm{\partial\ell}_\infty \module{V^{(1)}-\frac{\inprod{x_i, H_\smi^{-1}x_i}}{n}}}{1+V^{(1)}\partial^2\ell_i(\check{r}_i)}=O_{L_2}\left(\frac{\polylog(n)}{\sqrt{n}}\right),
\end{align}
using Lemma \ref{lemma:quad} and Lemma \ref{lem:Vz_concentrates}. Turning to the first term in \eqref{eq:pipihat}, and using the contractivity of the proximal map in conjunction with the bound $\prox_{\sfrac{\inprod{x_i, H_\smi^{-1}x_i}}{n}\ell(\cdot, 0)}(0), \prox_{V^{(1)}\ell(\cdot, 0)}(0)\le \norm{\partial\ell}_\infty O_{L_k}(\polylog(n))$, one has
\begin{align}
    \Ea{\left(1+ \norm{\pi (R_i)}_1+\norm{\hat{\pi} (R_i)}_1\right)^4}
    &\le \Ea{8\left(O_{L_k}(\polylog(n))+784\norm{R_i}^4\right)}
    \\
    &\le \Ea{8\left(O_{L_k}(\polylog(n))+49392\norm{\mathfrak{Q}_i(z)}^2_F\right)}\\
    &=O(\polylog(n)).
\end{align}
Returning to \eqref{eq:pipihat},
\begin{align}
    \module{\phi\circ \hat{\pi}(R_i)-\phi\circ \pi (R_i)}=O_{L_1}\left(\frac{\polylog(n)}{\sqrt{n}}\right).
\end{align}

\paragraph{$\phi\circ \pi $ belongs to $\mathcal{F}$ ---} The desired claim follows from an application of Lemma \ref{lem:distrib_r}, provided one can first establish the compound map $\phi\circ \pi$ belongs to the pesudo-Lipschitz class $\mathcal{F}$ (Definition \ref{def:F}). For any $R, R^\prime \in\R^8$,
\begin{align}
    \module{\phi\circ \pi (R)-\phi\circ \pi(R^\prime)}&\le \left(1+\norm{\pi(R)}_1+\norm{\pi(R^\prime)}_1\right)^2\norm{R-R^\prime}\\
    &\le \left(1+2\norm{\pi(0)}_1+\norm{R}_1+\norm{R^\prime}_1\right)^2\norm{R-R^\prime}\\
    &\le \left(1+\frac{2}{\lambda}\norm{\partial\ell}_\infty+\norm{R}_1+\norm{R^\prime}_1\right)^2\norm{R-R^\prime},
\end{align}
using the contractivity of the proximal map. It thus follows that $\phi\circ \pi \in\mathcal{F}$. Hence, from Lemma \ref{lem:distrib_r}, 
\begin{align}
    \module{\Ea{\phi\circ \pi(R_i)}-\Ea{\phi\circ \pi (g)}}=O\left(\frac{\polylog(n)}{n^{\frac{1}{4}}}\right),
\end{align}
which completes the proof.
\end{proof}

\subsection{Computation of the deterministic equivalents}
\label{subsec:det_equivalent}

We are now in a position to ascertain the deterministic equivalents $Q^{(k)}=\Ea{\mathfrak{Q}^{(k)}}$,$V^{(k)}=\Ea{\mathfrak{V}^{(k)}}$ \eqref{eq:Qk}, \eqref{eq:Vk} of the random summary statistics $\mathfrak{Q}^{(k)}, \mathfrak{V}^{(k)}$ \eqref{eq:mathfrakQk}, \eqref{eq:mathfracV}, leveraging the concentration properties established in Appendix 
\ref{app:concentration}. We note that the characterization of the particular parameters $Q^{(0)}, V^{(1)}$ constitutes a now classical result featured in a large body of works, e.g. \citep{karoui2013asymptotic, thrampoulidis2018precise, donoho2016high}. The following extends those characterization to a broader class of statistics, characterized by any integer $k\in\mathbb{N}$.\\

In the following, we remind $\Gamma\subset \{z\in \C: \Re{z}>0\}$ is a \textit{deterministic} contour enclosing the interval 
\begin{align}
    J=\left[
    \lambda, \lambda +\norm{\partial^2\ell}_\infty\left(2+\sqrt{\sfrac{1}{\alpha}}\right)^2
    \right]
\end{align}
at a distance $\dist{\Gamma, J}\ge \sfrac{\lambda}{2}$, while also satisfying $\dist{\Gamma, 0} \ge \sfrac{\lambda}{2}$.

\begin{lemma}[Characterization of $V^{(k)}$]\label{lem:Charac_Vk}
    The summary statistics $V^{(k)}$ are characterized by 
\begin{align}
    V^{(k)}=\frac{-1}{2\pi i}\oint_\Gamma \frac{1}{z^k}\mathcal{V}(z) dz+O\left(\polylog(n)e^{-\sfrac{n}{2}}\right),
\end{align}
where we recall $\mathcal{V}(z)=\Ea{\Ia\mathfrak{S}(z)}$ \eqref{eq:functionals_deteq}.

\end{lemma}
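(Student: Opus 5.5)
The plan is to split the expectation $V^{(k)}=\Ea{\mathfrak{V}^{(k)}}$ according to the event $\mathcal{A}$ and exchange expectation with the contour integral on $\mathcal{A}$. Write
\begin{align}
V^{(k)}=\Ea{\Ia \mathfrak{V}^{(k)}}+\Ea{\mathds{1}_{\mathcal{A}^c}\mathfrak{V}^{(k)}}.
\end{align}

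On $\mathcal{A}$, every eigenvalue of $H$ lies in $J$. Indeed, $H\succeq \lambda I_d$ and $\norm{H}\le \lambda+\norm{\partial^2\ell}_\infty \norm{X}^2/n$. The deterministic contour $\Gamma$ encloses $J$, lies in $\{\Re{z}>0\}$ and stays at distance at least $\sfrac{\lambda}{2}$ from both $J$ and $0$. So on $\mathcal{A}$ it is an admissible contour $\mathcal{C}$ in the Cauchy representation \eqref{eq:mathfracV}. The function $z^{-k}$ is analytic inside $\Gamma$, since $0$ lies outside it. Hence, pointwise on $\mathcal{A}$,
\begin{align}
\Ia\mathfrak{V}^{(k)}=\frac{-1}{2\pi i}\oint_\Gamma \frac{1}{z^k}\Ia\mathfrak{S}(z)dz .
\end{align}

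Next, take expectations and swap them with the integral by Fubini. This is justified because on $\mathcal{A}$ we have $\sup_{z\in\Gamma}|\mathfrak{S}(z)|\le \frac{d}{n}\cdot\frac{2}{\lambda}$, $|z|^{-k}\le (2/\lambda)^k$ on $\Gamma$, and $\length{\Gamma}=O(1)$; the integrand is therefore uniformly bounded. Since $\mathcal{V}(z)=\Ea{\Ia\mathfrak{S}(z)}$ by \eqref{eq:functionals_deteq}, this yields exactly $\frac{-1}{2\pi i}\oint_\Gamma z^{-k}\mathcal{V}(z)dz$.

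It remains to bound the complement term. Deterministically, $0\le \mathfrak{V}^{(k)}=\frac{1}{n}\tr[H^{-k}]\le \frac{d}{n}\lambda^{-k}=\alpha^{-1}\lambda^{-k}$, because $H\succeq\lambda I_d$ always holds. Together with $\P{\mathcal{A}^c}\le 2e^{-\sfrac{n}{2}}$, this gives
\begin{align}
\Ea{\mathds{1}_{\mathcal{A}^c}\mathfrak{V}^{(k)}}\le 2\alpha^{-1}\lambda^{-k}e^{-\sfrac{n}{2}},
\end{align}
which is well within $O(\polylog(n)e^{-\sfrac{n}{2}})$.

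The only delicate point is checking that $\Gamma$ is a valid Cauchy contour on all of $\mathcal{A}$, namely that $\spec{H}\subset J$ there. This follows because $\Lambda(X)$ has entries bounded by $\norm{\partial^2\ell}_\infty$ and $\mathcal{A}$ controls $\norm{X}^2/n$ by $(2+\sqrt{\sfrac{1}{\alpha}})^2$. Beyond that, the argument is routine, and the concentration lemmas are not needed here.
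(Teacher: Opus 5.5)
Your proof is correct and follows the same route as the paper. You split $\Ea{\mathfrak{V}^{(k)}}$ along $\mathcal{A}$, use Fubini on $\mathcal{A}$, where the deterministic contour $\Gamma$ is admissible, to get $\frac{-1}{2\pi i}\oint_\Gamma z^{-k}\mathcal{V}(z)dz$, and bound the $\mathcal{A}^c$ part using $\P{\mathcal{A}^c}\le 2e^{-\sfrac{n}{2}}$. Your deterministic bound $0\le\frac{1}{n}\tr[H^{-k}]\le\alpha^{-1}\lambda^{-k}$ on the complement term is slightly cleaner than the paper's, which bounds it through the length of the random contour $\mathcal{C}$ taken outside the expectation, and you also keep the correct prefactor $\frac{-1}{2\pi i}$, which the paper's proof drops.
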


\begin{proof}
We showed in Lemma \ref{lem:Vz_concentrates} the concentration of the random statistic \begin{align}
    \mathfrak{V}^{(k)}=\frac{1}{2\pi i}\oint_{\mathcal{C} } \frac{1}{z^p}\mathfrak{S}(z) dz,
\end{align}
which involves an integration over the \textit{random} contour $\mathcal{C}$, defined in \eqref{eq:contour}. With a view to ascertaining its expectation, a first step consists in replacing the random contour $\mathcal{C}$ by a \textit{deterministic} contour $\Gamma$, in the likeness of the first step of the proof of Lemma \ref{lem:frakQ_concentrates}. The main idea is that $\mathcal{C}$ needs to enclose the spectrum of the Hessian $H$. In the proportional asymptotic limit, the support of $H$ is upper bounded with high probability, allowing one to interchange $\mathcal{C}$ and $\Gamma$. 
Note that the event $\mathcal{A}$ implies $\spec{H}\subset J $. Partitioning the expectation,
\begin{align}
    \Ea{\mathfrak{V}^{(k)}}=\Ea{\mathfrak{V}^{(k)} \mathds{1}_{\mathcal{A}}}+\Ea{\mathfrak{V}^{(k)} \mathds{1}_{\mathcal{A}^c}}
\end{align}
The second term can be bounded as
\begin{align}
    \module{\Ea{\mathfrak{V}^{(k)} \mathds{1}_{\mathcal{A}^c}}}\le \frac{\length{\mathcal{C}}2^{k+1}}{2\pi\lambda^{k+1}} \P{\mathcal{A}^c} \le O\left(\polylog(n)e^{-\sfrac{n}{2}}\right).
\end{align}
 From the Fubini-Tonelli theorem, the expectation on the event $\mathcal{A}$ can be expressed as
\begin{align}
    \Ea{\mathfrak{V}^{(k)} \mathds{1}_{\mathcal{A}}}=\frac{1}{2\pi i}\oint_\Gamma \frac{1}{z^k}\Ea{\mathds{1}_{\mathcal{A}}\mathfrak{S}(z)} dz=\frac{1}{2\pi i}\oint_\Gamma \frac{1}{z^k}\mathcal{V}(z)dz
\end{align}
\end{proof}

\begin{lemma}[Characterization of $Q^{(k)}$]\label{lem:charac_Qk}
    The summary statistics $Q^{(k)}$ are characterized by 
\begin{align}
    Q^{(k)}=\frac{-1}{2\pi i}\oint_\Gamma \frac{1}{z^k}\Omega(z) dz+O\left(\polylog(n)e^{-\sfrac{n}{2}}\right),
\end{align}
where $\Omega(z)=\Ea{\Ia\mathfrak{W}(z)}$.

\end{lemma}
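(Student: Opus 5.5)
The plan mirrors the proof of Lemma \ref{lem:Charac_Vk}, with the Stieltjes transform $\mathfrak{S}(\cdot)$ replaced by the matrix-valued functional $\mathfrak{W}(\cdot)$. We first write the random statistic $\mathfrak{Q}^{(k)}=\Upsilon^\top H^{-k}\Upsilon$ as a Cauchy integral of $\mathfrak{W}(z)$ over the random contour $\mathcal{C}$ of \eqref{eq:contour}. This is legitimate since $\spec{H}\subset\mathfrak{J}$ and $\mathcal{C}$ stays at distance $\sfrac{\lambda}{2}$ from both $\mathfrak{J}$ and the origin. We then split the expectation as
\begin{align}
Q^{(k)}=\Ea{\mathfrak{Q}^{(k)}}=\Ea{\Ia\mathfrak{Q}^{(k)}}+\Ea{\mathds{1}_{\mathcal{A}^c}\mathfrak{Q}^{(k)}}.
\end{align}

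On $\mathcal{A}$ we have $\spec{H}\subset J$. The deterministic contour $\Gamma$ also encloses $\spec{H}$ and avoids the origin, and $z\mapsto z^{-k}G(z)$ is holomorphic between $\mathcal{C}$ and $\Gamma$. Cauchy's theorem therefore lets us replace $\mathcal{C}$ by $\Gamma$, exactly as in the first step of the proof of Lemma \ref{lem:frakQ_concentrates}:
\begin{align}
\Ia\mathfrak{Q}^{(k)}=\frac{-1}{2\pi i}\oint_\Gamma \frac{1}{z^k}\Ia\mathfrak{W}(z)\,dz.
\end{align}
To exchange expectation and integral we use Fubini--Tonelli. The justification is that on $\Gamma$ we have $\norm{\Ia G(z)}\le\sfrac{2}{\lambda}$, $|z|^{-k}\le(\sfrac{2}{\lambda})^k$, and, by Lemma \ref{lemma:normw}, $\norm{\hat w}\le M$ while $\norm{\beta}=1$. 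The integrand is thus bounded uniformly by a deterministic constant, and $\Gamma$ has finite length $\mathcal{P}_\Gamma$. This gives $\Ea{\Ia\mathfrak{Q}^{(k)}}=\frac{-1}{2\pi i}\oint_\Gamma z^{-k}\Omega(z)\,dz$ with $\Omega(z)=\Ea{\Ia\mathfrak{W}(z)}$ as in \eqref{eq:functionals_deteq}.

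The remaining term is the complement-event contribution $\Ea{\mathds{1}_{\mathcal{A}^c}\mathfrak{Q}^{(k)}}$, and the only care needed is here. Since $H\succeq\lambda I_d$, we have the deterministic bound $\norm{H^{-k}}\le\lambda^{-k}$, so $|\mathfrak{Q}^{(k)}_{ab}|\le\lambda^{-k}(1+M)^2$ regardless of the event. It is cleanest to use this direct operator-norm bound rather than the contour representation, because on $\mathcal{A}^c$ the contour length $\length{\mathcal{C}}$ can be large. Combined with $\P{\mathcal{A}^c}\le 2e^{-\sfrac{n}{2}}$, this yields an $O(e^{-\sfrac{n}{2}})$ error, which is absorbed into the stated $O(\polylog(n)e^{-\sfrac{n}{2}})$. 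If one prefers to keep the contour route, one can instead bound $\length{\mathcal{C}}$ in $L_2$ via Lemmas \ref{lem:cov} and \ref{lemma:normx} and apply Cauchy--Schwarz. I expect no genuine obstacle: the whole argument is bookkeeping of the sign convention for $\oint$ and of the uniform bounds that justify Fubini.
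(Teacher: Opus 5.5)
Your proof is correct and follows the paper's route: you split $\Ea{\mathfrak{Q}^{(k)}}$ on $\mathcal{A}$ and $\mathcal{A}^c$, write $\Ia\mathfrak{Q}^{(k)}$ as a Cauchy integral over the deterministic contour $\Gamma$, and exchange expectation and integral by Fubini. Your deterministic bound $\norm{H^{-k}}\le\lambda^{-k}$ on the complement term is cleaner than the paper's estimate, which multiplies the random length $\length{\mathcal{C}}$ by $\P{\mathcal{A}^c}$ as if it were deterministic. Drop only your fallback remark: bounding $\length{\mathcal{C}}$ in $L_2$ and applying Cauchy--Schwarz yields a factor $\P{\mathcal{A}^c}^{1/2}$, hence an $O(\polylog(n)e^{-n/4})$ error rather than the stated $e^{-n/2}$ rate.
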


\begin{proof}
    The proof is identical to that of Lemma \ref{lem:Charac_Vk}.
\end{proof}

Lemmas \ref{lem:Charac_Vk} and \ref{lem:charac_Qk} thus express the (deterministic) summary statistics $V^{(k)}, Q^{(k)}$ in terms of the functionals $\mathcal{V}(\cdot), \Omega(\cdot)$ \eqref{eq:functionals_deteq}. The following builds characterizations for the latter in terms of the summary statistics, thereby closing the equations.

\begin{lemma}[Characterization of $\mathcal{V}(\cdot)$]\label{lem:characVz}
   For any $z\in\Gamma$ with $\Im{z}\ne 0$, the functional  $\mathcal{V}(\cdot)$ satisfies asymptotically the equation
\begin{align}
   \frac{1}{\alpha}-(\lambda-z)\mathcal{V}(z)-\Ea{\frac{\partial^2\ell\left(r,g_2\right)\mathcal{V}(z)}{1+\partial^2\ell\left(r,g_2\right)\mathcal{V}(z)}}=O\left(\frac{\polylog(n)}{n^{\frac{1}{4}}}\right).
\end{align}
In the above display, the expectations bear over the joint Gaussian variables
\begin{align}
    g_1, g_2\sim\mathcal{N}\left(0_2, Q^{(0)}\right),
\end{align}
and we employed the shorthand $r=\prox_{V^{(1)}\ell(\cdot, g_2)}(g_1)$.
\end{lemma}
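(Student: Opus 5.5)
The plan is the classical Silverstein-type derivation of the self-consistent equation for the Stieltjes transform, run on the Hessian $H$, with leave-one-out decoupling supplying the nonlinear weights. Start from the resolvent identity $\Ia G(z)(H-zI_d)=\Ia I_d$. Take the normalized trace $\frac{1}{n}\tr[\cdot]$ and then the expectation. Since $\frac{1}{n}\tr[I_d]=\frac{1}{\alpha}$, this gives
\begin{align}
\frac{1}{\alpha}\P{\mathcal{A}}=(\lambda-z)\mathcal{V}(z)+\frac{1}{n}\sum_{j\in\enm{n}}\Ea{\Ia\,\partial^2\ell_j(r_j)\frac{x_j^\top G(z)x_j}{n}}.
\end{align}
The factor $\P{\mathcal{A}}$ differs from $1$ only by $O(e^{-n/2})$. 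By exchangeability, the sum reduces to a single index $i$.

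The core step is to decouple $x_i$ from $G(z)$. I would first replace $\partial^2\ell_i(r_i)$ by $\partial^2\ell_i(\tilde r_{i,i})$, and replace $G(z)$ by $(H_\smi+\frac1n\partial^2\ell_i(\tilde r_{i,i})x_ix_i^\top-z)^{-1}$. The error from these replacements is controlled as follows:
\begin{itemize}
\item Proposition \ref{prop: LOO_approxs} bounds $|r_i-\tilde r_{i,i}|$ and $\sum_{j\neq i}(r_j-r_{j,\smi})^2$.
\item The argument of Lemma \ref{lem:Vz_concentrates} then bounds $H-H_\smi-\frac1n\partial^2\ell_i x_ix_i^\top$ in the weak sense that is needed, so that $\frac1n x_i^\top(G-\cdot)x_i=O_{L_k}(\polylog(n)/\sqrt n)$.
\end{itemize}
Sherman–Morrison then gives
\begin{align}
\partial^2\ell_i\,\tfrac{1}{n}x_i^\top G x_i=\frac{\partial^2\ell_i\, s_i}{1+\partial^2\ell_i\, s_i},\qquad s_i=\tfrac{1}{n}x_i^\top G_\smi(z)x_i.
\end{align}

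Next, because $x_i$ is independent of $G_\smi$, the Hanson–Wright bound (Lemma \ref{lemma:quad}) gives $s_i=\mathfrak S_\smi(z)+O_{L_k}(\polylog(n)/\sqrt n)$. The resolvent difference argument of Lemma \ref{lem:Vz_concentrates} gives $\mathfrak S_\smi=\mathfrak S+O(\polylog(n)/n)$, and Lemma \ref{lem:IaSz_concentrates} gives $\mathfrak S=\mathcal V(z)+O_{L_k}(\polylog(n)/\sqrt n)$. The map $s\mapsto \partial^2\ell\, s/(1+\partial^2\ell\, s)$ is Lipschitz with an $O(1)$ constant, provided $|1+\partial^2\ell\, s|$ stays bounded below. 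This is where $\Im z\neq 0$ is used: $\Im s$ and $\Im\mathcal V(z)$ then share the sign of $\Im z$ and are bounded away from zero on $\Gamma$, and the real part can be handled for $\Re z$ outside $J$. With this in hand, $s_i$ may be replaced by $\mathcal V(z)$.

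This leaves $\Ea{\partial^2\ell(\tilde r_{i,i},\inprod{\beta,x_i})\mathcal V/(1+\partial^2\ell\,\mathcal V)}$. Lemma \ref{lem:distrib_rtilde} replaces $(\tilde r_{i,i},\inprod{\beta,x_i})$ by $(\prox_{V^{(1)}\ell(\cdot,g_2)}(g_1),g_2)$ with $(g_1,g_2)\sim\mathcal N(0,Q^{(0)})$. The test function depends only on the first two entries. It is Lipschitz in the first entry because $\partial^3\ell$ is bounded and the denominator is bounded below. Its dependence on $g_2$ is exactly the non-Lipschitz second slot that the class $\mathcal F$ allows. This step carries the $n^{-1/4}$ rate.

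I expect the main obstacle to be the uniform lower bound on $|1+\partial^2\ell\,\mathcal V(z)|$ and $|1+\partial^2\ell\, s_i|$ along $\Gamma$, in particular near the real axis where $\Im z$ is small. Restricting to $\Im z\neq 0$ is presumably meant to handle this. I would first try the standard estimate $|1+c\,s|\geq c|\Im s|\geq c\,|\Im z|\,\|G\|^{-2}/\ldots$. If that is insufficient for small $\Im z$, I would fall back on the fact that for $\Re z<\lambda$, $\Re s>0$, so the real part itself gives the bound. The constants in the remaining steps are routine truncations on $\mathcal A$, absorbed into the stated $\polylog(n)/n^{1/4}$ error.
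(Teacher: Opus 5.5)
Your proposal is correct and follows essentially the same route as the paper: trace of the resolvent identity, a rank-one (Sherman--Morrison/Woodbury) reduction, replacement of $r_i$ by $\tilde r_{i,i}$ and of the quadratic form by $\mathcal V(z)$ via leave-one-out, quadratic-form concentration and Lemma \ref{lem:IaSz_concentrates}, an imaginary-part/real-part case split on $\partial^2\ell$ to lower-bound $\module{1+\partial^2\ell\,\mathcal V(z)}$ (which is exactly where $\Im{z}\neq 0$ enters), and finally exchangeability plus Lemma \ref{lem:distrib_rtilde}. The differences are only in bookkeeping. The paper applies Woodbury exactly to $\hat H_i=H-\frac1n\partial^2\ell_i(r_i)x_ix_i^\top$, only afterwards replaces $\hat G_i$ by $G_\smi$, and bounds that denominator by a norm-ratio argument uniform in $\Im{z}$. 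You instead first pass to $H_\smi+\frac1n\partial^2\ell_i(\tilde r_{i,i})x_ix_i^\top$; for that step the operator-norm bound from $\sup_{j\ne i}\module{r_j-r_{j,\smi}}$ in Proposition \ref{prop: LOO_approxs} already suffices. Note that the quantity $\sum_{j\ne i}(r_j-r_{j,\smi})^2$ you cite is in fact of order one; the proposition controls $\sum_{j\neq i}(r_j-\tilde r_{j,i})^2$, but you do not need either sum here.
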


\begin{proof}
    For any $z\in\Gamma$, we start from the identity
    \begin{align}
\Ia I_d=(\lambda-z)\Ia G(z)+\frac{1}{n}\sum\limits_{i\in\enm{n}}\partial^2\ell_i(r_i)x_ix_i^\top \Ia G(z)
    \end{align}
One has, by taking the trace and using the Woodbury inversion lemma,
\begin{align}
   \Ia\left( \frac{1}{\alpha}-(\lambda-z)\mathfrak{S}(z)\right)&=\frac{1}{n}\sum\limits_{i\in\enm{n}}\partial^2\ell_i(r_i) \frac{\inprod{x_i, \Ia G(z)x_i}}{n}\\
    &=\frac{1}{n}\sum\limits_{i\in\enm{n}}\frac{\partial^2\ell_i(r_i)\frac{ \inprod{x_i, \Ia\hat{G}_i(z)x_i}}{n}}{1+\partial^2\ell_i(r_i)\frac{\inprod{x_i, \Ia  \hat{G}_i(z)x_i}}{n}},\label{eq:after_Wood}
\end{align}
Where $\hat{G}_i(z)=(\hat{H}_i-zI_d)^{-1}$ is the resolvent of the Hessian variant
\begin{align}
    \hat{H}_i=\frac{1}{n}\sum\limits_{j\ne i}\partial^2\ell_j(r_j)x_jx_j^\top+\lambda I_d=H-\frac{1}{n}\partial^2\ell_i(r_i)x_ix_i^\top.\label{eq:hatHi}
\end{align}
We note that $\hat{G}_i(z)$ is indeed well-defined for all $z\in\Gamma$ under the event $\mathcal{A}$, since $H\succeq\hat{H}_i$, and thus $\spec{\hat{H}_i}\subset J$ under the event $\mathcal{A}$. Notably, $\dist{z,\spec{\hat{H}_i}}\ge \sfrac{\lambda}{2}$.

\paragraph{Approximating $r_i$ --- }The following step is to approximate the residual $r_i$ by the surrogate $\tilde{r}_{i,i}$ in \eqref{eq:after_Wood}. Bounding the corresponding correction term yields
\begin{align}
    &\sup_{i\in\enm{n}}\module{\frac{\partial^2\ell_i(r_i)\frac{ \inprod{x_i, \Ia\hat{G}_i(z)x_i}}{n}}{1+\partial^2\ell_i(r_i)\frac{\inprod{x_i, \Ia  \hat{G}_i(z)x_i}}{n}}-\frac{\partial^2\ell_i(\tilde{r}_{i,i})\frac{ \inprod{x_i, \Ia\hat{G}_i(z)x_i}}{n}}{1+\partial^2\ell_i(\tilde{r}_{i,i})\frac{\inprod{x_i, \Ia  \hat{G}_i(z)x_i}}{n}}} \\
    &\le \sup_{i\in\enm{n}}\frac{1}{\module{1+\partial^2\ell_i(r_i)\frac{\inprod{x_i, \Ia  \hat{G}_i(z)x_i}}{n}}}\sup_{i\in\enm{n}}\frac{1}{\module{1+\partial^2\ell_i(\tilde{r}_{i,i})\frac{\inprod{x_i, \Ia  \hat{G}_i(z)x_i}}{n}}} \\
    &\qquad \times \norm{\partial^3\ell}_\infty\sup_{i\in \enm{n}}\module{r_i-\tilde{r}_{i,i}}\sup_{i\in \enm{n}} \module{\frac{\inprod{x_i, \Ia  \hat{G}_i(z)x_i}}{n}}.
\end{align}
Note that the term $\sup_{i\in \enm{n}}\module{r_i-\tilde{r}_{i,i}}$ is controlled by Proposition \ref{prop: LOO_approxs}, while for the last term
\begin{align}
    \module{\frac{\inprod{x_i, \Ia  \hat{G}_i(z)x_i}}{n}} \le \frac{2}{\lambda } (2+\sqrt{\sfrac{1}{\alpha}})^2.
\end{align}

 We now exhibit a lower-bound on $1+\partial^2\ell_i(r_i)\sfrac{\inprod{x_i, \Ia  \hat{G}_i(z)x_i}}{n}$. From the Woodbury formula,
\begin{align}
    \Ia G(z)x_i=\frac{1}{1+\partial^2\ell_i(r_i)\Ia\frac{\inprod{x_i,  \hat{G}_i(z)x_i}}{n}} \Ia\hat{G}_i(z)x_i.
\end{align}
Thus, one can lower-bound
\begin{align}
    \module{1+\partial^2\ell_i(r_i)\Ia\frac{\inprod{x_i,  \hat{G}_i(z)x_i}}{n}}\ge \Ia \frac{\norm{\hat{G}_i(z)x_i}}{\norm{G(z)x_i}} +(1-\Ia)\ge \Ia \frac{\norm{\hat{G}_ix_i}}{\sfrac{2}{\lambda}\norm{x_i}}+(1-\Ia).
\end{align}
 Since furtheremore
\begin{align}
    \norm{x_i}\le \norm{\hat{H}_i-zI_d} \norm{\hat{G}_i x_i}\le \left(\lambda +\norm{\partial^2\ell}_\infty (2+\sqrt{\sfrac{1}{\alpha}})^2+\module{z}\right)\norm{\hat{G}_i x_i},
\end{align}
one finally has
\begin{align}
    \module{1+\partial^2\ell_i(r_i)\Ia\frac{\inprod{x_i,  \hat{G}_i(z)x_i}}{n}}\ge  \frac{1}{\sfrac{2}{\lambda}\left(\lambda +\norm{\partial^2\ell}_\infty (2+\sqrt{\sfrac{1}{\alpha}})^2+\module{z}\right)}\wedge 1  \label{eq:denom_xGx}
\end{align}
The same reasoning holds to reach a lower bound on $1+\partial^2\ell_i(\tilde{r}_{i,i})\sfrac{\inprod{x_i, \Ia  \hat{G}_i(z)x_i}}{n}$, using this time the identity
\begin{align}
    \Ia \check{G}(z)x_i=\frac{1}{1+\partial^2\ell_i(\tilde{r}_{i,i})\Ia\frac{\inprod{x_i,  \hat{G}_i(z)x_i}}{n}} \Ia\hat{G}_i(z)x_i.
\end{align}
introducing for that purpose the modified resolvent
\begin{align}
    \check{G}(z)=\left[\sum\limits_{j\ne i}\partial^2\ell(r_j)x_jx_j^\top +\partial^2\ell(\tilde{r}_{i,i})x_ix_i^\top+(\lambda -z)I_d \right]^{-1},
\end{align}
which also satisfies the operator norm bound $\norm{\check{G}(z)}\le \sfrac{2}{\lambda}$ on the event $\mathcal{A}$. Repeating the above argument,
\begin{align}
    \module{1+\partial^2\ell_i(\tilde{r}_{i,i})\Ia\frac{\inprod{x_i,  \hat{G}_i(z)x_i}}{n}}\ge  \frac{1}{\sfrac{2}{\lambda}\left(\lambda +\norm{\partial^2\ell}_\infty (2+\sqrt{\sfrac{1}{\alpha}})^2+\module{z}\right)}\wedge 1  \label{eq:bound_denom_quad}
\end{align}
and thus
\begin{align}
    \sup_{i\in\enm{n}}\module{\frac{\partial^2\ell_i(r_i)\frac{ \inprod{x_i, \Ia\hat{G}_i(z)x_i}}{n}}{1+\partial^2\ell_i(r_i)\frac{\inprod{x_i, \Ia  \hat{G}_i(z)x_i}}{n}}-\frac{\partial^2\ell_i(\tilde{r}_{i,i})\frac{ \inprod{x_i, \Ia\hat{G}_i(z)x_i}}{n}}{1+\partial^2\ell_i(\tilde{r}_{i,i})\frac{\inprod{x_i, \Ia  \hat{G}_i(z)x_i}}{n}}} =O_{L_k}\left(\frac{\polylog(n)}{\sqrt{n}}\right)
\end{align}
The next simplification consists in approximating the quadratic form $\sfrac{\inprod{x_i, \Ia\hat{G}_i(z)x_i}}{n}$ by $\mathcal{V}(z)$ up to corrections with vanishing moments.

\paragraph{Approximation of $\sfrac{\inprod{x_i, \Ia\hat{G}_i(z)x_i}}{n}$ ---}  This step proceeds through successive approximations by $\sfrac{\inprod{x_i, \Ia G_\smi(z)x_i}}{n}$, $\Ia\sfrac{1}{n}\tr{G_\smi(z)}$ and $\Ia \mathfrak{S}(z)$. First, note that
\begin{align}
    \sup_{i\in \enm{n}}\module{\frac{\inprod{x_i, \Ia(\hat{G}_i(z)-G_\smi (z))x_i}}{n}}&\le \frac{4}{\lambda^2}(2+\sqrt{\sfrac{1}{\alpha}})^4\norm{\partial^3\ell}_\infty \sup_{i\in \enm{n}}\sup_{j\ne i }\module{r_{j,\smi}-r_j}\\
    &=O_{L_k}\left(\frac{\polylog(n)}{\sqrt{n}}\right),
\end{align}
using Proposition \ref{prop: LOO_approxs} in the second equality. 

We now approximate the quadratic form by a tracial quantity. Decomposing the resolvent $G_\smi(z)$ into real and imaginary parts yields
\begin{align}
   & \sup_{i\in\enm{n}}\module{\frac{\inprod{x_i, G_\smi(z)x_i}}{n}-\frac{1}{n}\tr[G_\smi(z)]}\\
   &\le   \sup_{i\in\enm{n}}\module{\Im{z}}\module{\frac{\inprod{x_i, G_\smi(z)G_\smi(z)^\dagger x_i}}{n}-\frac{1}{n}\tr[G_\smi(z)G_\smi(z)^\dagger]}\\
    &+ \sup_{i\in\enm{n}}\module{\Re{z}}\module{\frac{\inprod{x_i, G_\smi(z)G_\smi(z)^\dagger x_i}}{n}-\frac{1}{n}\tr[G_\smi(z)G_\smi(z)^\dagger]}\\
    &+\sup_{i\in\enm{n}}\module{\frac{\inprod{x_i, G_\smi(z)G_\smi(z)^\dagger H_\smi x_i}}{n}-\frac{1}{n}\tr[G_\smi(z)G_\smi(z)^\dagger H_\smi]}.
\end{align}
Since $G_\smi(z)G_\smi(z)^\dagger H_\smi\succeq 0$ and $G_\smi(z)G_\smi(z)^\dagger \succeq 0$, and 
\begin{align}
\norm{G_\smi(z)G_\smi(z)^\dagger}\le \frac{1}{\Im{z}^2}, && \norm{G_\smi(z)G_\smi(z)^\dagger H_\smi} \le     \frac{1}{\Im{z}^2} \left(\lambda +\norm{\partial^2\ell}_\infty\frac{\norm{X}^2}{n} \right),
\end{align}
one can apply Lemma G.3 in \cite{el2018impact}, yielding the concentration
\begin{align}
    \sup_{i\in\enm{n}}\module{\Ia \frac{\inprod{x_i, G_\smi(z)x_i}}{n}-\frac{1}{n}\tr[\Ia G_\smi(z)]} =O_{L_{k}}\left(
   \frac{\polylog(n)}{\sqrt{n}}
    \right).
\end{align}
Finally,
\begin{align}
    \sup_{i\in \enm{n}}\module{\Ia \frac{1}{n}\tr[G_\smi(z)-G(z)]}=O_{L_k}\left(\frac{\polylog(n)}{\sqrt{n}}\right),
\end{align}
again from Proposition \ref{prop: LOO_approxs}. Finally, remember from Lemma \ref{lem:IaSz_concentrates} the concentration 
\begin{align}
\module{\Ia \mathfrak{S}(z)-\mathcal{V}(z)} =O_{L_k}\left(\frac{\polylog(n)}{\sqrt{n}}\right) . 
\end{align}
Chaining all these successive approximations together, we have hence established the pointwise concentration
\begin{align}
    \sup_{i\in\enm{n}}\module{\frac{\Ia\inprod{x_i, \hat{G}_i(z)x_i}}{n}-\mathcal{V}(z)}=O_{L_k}\left(\frac{\polylog(n)}{\sqrt{n}}\right).  \label{eq:xhatGx}
\end{align}
One would like to transfer this control to the summand in \eqref{eq:after_Wood}. Expounding the corresponding correction term, 
\begin{align}
    &\sup_{i\in\enm{n}}\module{\frac{\partial^2\ell_i(\tilde{r}_{i,i})\frac{ \inprod{x_i, \Ia\hat{G}_i(z)x_i}}{n}}{1+\partial^2\ell_i(\tilde{r}_{i,i})\frac{\inprod{x_i, \Ia  \hat{G}_i(z)x_i}}{n}}-\frac{\partial^2\ell_i(\tilde{r}_{i,i})\mathcal{V}(z)}{1+\partial^2\ell_i(\tilde{r}_{i,i})\mathcal{V}(z)}}\\
    &\le \sup_{i\in\enm{n}} \frac{1}{\module{1+\partial^2\ell_i(\tilde{r}_{i,i})\frac{\inprod{x_i, \Ia  \hat{G}_i(z)x_i}}{n}}}\sup_{i\in\enm{n}} \frac{\partial^2\ell_i(\tilde{r}_{i,i})}{\module{1+\partial^2\ell_i(\tilde{r}_{i,i})\mathcal{V}(z)}}\sup_{i\in\enm{n}}\module{\frac{\Ia\inprod{x_i, \hat{G}_i(z)x_i}}{n}-\mathcal{V}(z)}. \label{eq:bounding_denom}
\end{align}
Therefore, it remains to lower-bound the two terms in the denominator. The first term was bounded in \eqref{eq:bound_denom_quad}.

\paragraph{Upper bound on $\frac{\partial^2\ell_i(\tilde{r}_{i,i})}{\module{1+\partial^2\ell_i(\tilde{r}_{i,i})\mathcal{V}(z)}}$---} We now turn to the second term in \eqref{eq:bounding_denom}. On the one hand, the modulus of $1+\partial^2\ell_i(\tilde{r}_{i,i})\mathcal{V}(z)$ is lower-bounded by the absolute value of its imaginary part, namely
\begin{align}
    \module{1+\partial^2\ell_i(\tilde{r}_{i,i})\mathcal{V}(z)}&\ge \partial^2\ell_i(\tilde{r}_{i,i})\module{\Im{z}}\Ea{\Ia \sfrac{1}{n}\tr[G(z)G(z)^\dagger]}\\
    &\ge \P{\mathcal{A}}\partial^2\ell_i(\tilde{r}_{i,i})\module{\Im{z}}  \frac{1}{\left(\module{z}+(\lambda+\norm{\partial^2\ell}_\infty (2+\sqrt{\sfrac{1}{\alpha}})^2)\right)^2}\\
    &\ge \frac{1}{2} \frac{\module{\Im{z}}}{\left(\module{z}+(\lambda+\norm{\partial^2\ell}_\infty (2+\sqrt{\sfrac{1}{\alpha}})^2)\right)^2} \partial^2\ell_i(\tilde{r}_{i,i})
\end{align}
Therefore, if 
\begin{align}
    \partial^2\ell_i(\tilde{r}_{i,i})\ge \frac{\lambda^2}{8}\frac{1}{\module{z}+\lambda+\norm{\partial^2\ell}_\infty (2+\sqrt{\sfrac{1}{\alpha}})^2}
\end{align}
then
\begin{align}
    \frac{1}{\module{1+\partial^2\ell_i(\tilde{r}_{i,i})\mathcal{V}(z)}}\le \frac{\sfrac{16}{\lambda^2}\left(\module{z}+\lambda+\norm{\partial^2\ell}_\infty (2+\sqrt{\sfrac{1}{\alpha}})^2\right)^3}{\module{\Im{z}}}.
\end{align}
On the other hand, if 
\begin{align}
    \partial^2\ell_i(\tilde{r}_{i,i})< \frac{\lambda^2}{8}\frac{1}{\module{z}+\lambda+\norm{\partial^2\ell}_\infty (2+\sqrt{\sfrac{1}{\alpha}})^2},
\end{align}
we bound  the modulus of $1+\partial^2\ell_i(\tilde{r}_{i,i})\mathcal{V}(z)$ by the absolute value of its real part:
\begin{align}
    \module{1+\partial^2\ell_i(\tilde{r}_{i,i})\mathcal{V}(z)}\ge \module{1+\partial^2\ell_i(\tilde{r}_{i,i})\Ea{\Ia \sfrac{1}{n}\tr[G(z)G(z)^\dagger(H-\Re{z}I_d)]}}.
\end{align}
Since
\begin{align}
    \module{\Ea{\Ia \sfrac{1}{n}\tr[G(z)G(z)^\dagger(H-\Re{z}I_d)]}}\le \frac{4}{\lambda^2}\left[\lambda+\norm{\partial^2\ell}_\infty (2+\sqrt{\sfrac{1}{\alpha}})^2+\module{z}\right],
\end{align}
one has in this case
\begin{align}
    \module{1+\partial^2\ell_i(\tilde{r}_{i,i})\mathcal{V}(z)}\ge \frac{1}{2}.
\end{align}
Thus, irrespective of the value of $\partial^2\ell(\tilde{r}_{i,i})$, one has the bound
\begin{align}
    \frac{1}{\module{1+\partial^2\ell_i(\tilde{r}_{i,i})\mathcal{V}(z)}}\le  \frac{\sfrac{16}{\lambda^2}\left(\module{z}+\lambda+\norm{\partial^2\ell}_\infty (2+\sqrt{\sfrac{1}{\alpha}})^2\right)^3}{\module{\Im{z}}}\vee 2.\label{eq:bound_deno_2}
\end{align}

\paragraph{End of the proof of Lemma \ref{lem:characVz} ---} Returning to \eqref{eq:bounding_denom}, one thus has pointwise in $z$ that
\begin{align}
    \sup_{i\in\enm{n}}\module{\frac{\partial^2\ell_i(\tilde{r}_{i,i})\frac{ \inprod{x_i, \Ia\hat{G}_i(z)x_i}}{n}}{1+\partial^2\ell_i(\tilde{r}_{i,i})\frac{\inprod{x_i, \Ia  \hat{G}_i(z)x_i}}{n}}-\frac{\partial^2\ell_i(\tilde{r}_{i,i})\mathcal{V}(z)}{1+\partial^2\ell_i(\tilde{r}_{i,i})\mathcal{V}(z)}}=O_{L_k}\left(\frac{\polylog(n)}{\sqrt{n}}\right).
\end{align}
Returning to the identity \eqref{eq:after_Wood} and taking expectation,
\begin{align}
    \frac{1}{\alpha}-(\lambda-z)\mathcal{V}(z)=&\frac{1}{n}\sum\limits_{i\in\enm{n}}\Ea{\frac{\partial^2\ell_i(\tilde{r}_{i,i})\mathcal{V}(z)}{1+\partial^2\ell_i(\tilde{r}_{i,i})\mathcal{V}(z)}}+O\left(\frac{\polylog(n)}{\sqrt{n}}\right)\\
    =&\Ea{\frac{\partial^2\ell_1(\tilde{r}_{1,1})\mathcal{V}(z)}{1+\partial^2\ell_1(\tilde{r}_{1,1})\mathcal{V}(z)}}+O\left(\frac{\polylog(n)}{\sqrt{n}}\right),
\end{align}
where in going to the second line we leveraged the observation that by permutation symmetry of the problem, all expectations are equal.
The expectation over the surrogate variable $\tilde{r}_{1,1}$ can further be converted into a Gaussian integral by using the distributional approximation of Lemma \ref{lem:distrib_rtilde}, since the function in bracket belongs to the admissible class $\mathcal{F}$, as defined in Definition \ref{def:F}. More precisely, equipped with the shorthand $r=\prox_{V^{(1)}\ell(\cdot)}\left(
  g_1
  \right)$,
\begin{align}
    \Ea{\frac{\partial^2\ell_1(\tilde{r}_{1,1})\mathcal{V}(z)}{1+\partial^2\ell_1(\tilde{r}_{1,1})\mathcal{V}(z)}}
    &=\Ea{\frac{\partial^2\ell\left(r,g_2\right)\mathcal{V}(z)}{1+\partial^2\ell\left(r,g_2\right)\mathcal{V}(z)}}+O\left(\frac{\polylog(n)}{n^{\frac{1}{4}}}\right)
\end{align}
Hence, finally 
\begin{align}
    \frac{1}{\alpha}-(\lambda-z)\mathcal{V}(z)=\Ea{\frac{\partial^2\ell\left(r,g_2\right)\mathcal{V}(z)}{1+\partial^2\ell\left(r,g_2\right)\mathcal{V}(z)}}+O\left(\frac{\polylog(n)}{n^{\frac{1}{4}}}\right),
\end{align}
completing the proof.
\end{proof}

\begin{lemma}[Characterization of $\chi(\cdot)$]\label{lem:charac_Xz}
   For any $z\in\Gamma$ with $\Im{z}\ne 0$, the functional $\chi(\cdot)$ satisfies pointwise the convergence
\begin{align}
   \chi(z)=\frac{V^{(1)}}{(\lambda-z)+\Ea{\frac{\partial^2\ell(r,g_2)}{\left(1+V^{(1)}\partial^2\ell(r,g_2)\right)\left(1+\mathcal{V}(z)\partial^2\ell(r,g_2)\right)}
}}+ O\left(\frac{\polylog(n)}{n^{\frac{1}{4}}}\right)
\end{align}
The expectations bear over the joint Gaussian variables
\begin{align}
    g_1, g_2\sim\mathcal{N}\left(0_2, Q^{(0)}\right),
\end{align}
and we employed the shorthand $r=\prox_{V^{(1)}\ell(\cdot, g_2)}(g_1)$.
\end{lemma}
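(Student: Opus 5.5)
We would mirror the proof of Lemma \ref{lem:characVz}, replacing the identity for $G(z)$ by one for the product $H^{-1}G(z)$. On $\mathcal{A}$, multiply the resolvent identity $I_d=(\lambda-z)G(z)+\frac{1}{n}\sum_i\partial^2\ell_i(r_i)x_ix_i^\top G(z)$ on the left by $H^{-1}$, take $\frac{1}{n}\tr$, and take expectations. This gives
\begin{align}
\Ea{\Ia\mathfrak{V}^{(1)}}=(\lambda-z)\chi(z)+\frac{1}{n}\sum_{i\in\enm{n}}\Ea{\Ia\partial^2\ell_i(r_i)\frac{\inprod{x_i, G(z)H^{-1}x_i}}{n}}.
\end{align}
The left side equals $V^{(1)}+O(\polylog(n)e^{-n/2})$.

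For the quadratic form, apply Woodbury twice, once to $H^{-1}$ and once to $G(z)$, removing the rank-one term $\frac1n\partial^2\ell_i(r_i)x_ix_i^\top$. This yields
\begin{align}
\frac{\inprod{x_i,G(z)H^{-1}x_i}}{n}=\frac{\frac1n\inprod{x_i,\hat{G}_i(z)\hat{H}_i^{-1}x_i}}{\left(1+\partial^2\ell_i(r_i)\frac{\inprod{x_i,\hat{H}_i^{-1}x_i}}{n}\right)\left(1+\partial^2\ell_i(r_i)\frac{\inprod{x_i,\hat G_i(z)x_i}}{n}\right)},
\end{align}
with $\hat H_i$ as in \eqref{eq:hatHi}.

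Next, replace each piece by its deterministic equivalent, following the chain used for \eqref{eq:xhatGx}:
\begin{itemize}
\item Replace $r_i$ by $\tilde r_{i,i}$ using Proposition \ref{prop: LOO_approxs}.
\item Replace $\hat H_i,\hat G_i$ by $H_\smi,G_\smi$.
\item Apply the quadratic-form concentration (Lemma G.3 of \cite{el2018impact}, used as in Lemma \ref{lem:characVz}), splitting $G_\smi H_\smi^{-1}$ into real and imaginary parts built from positive semidefinite pieces.
\item Conclude with the concentration Lemmas \ref{lem:IaSz_concentrates}, \ref{lem:IaXz_concentrates} and \ref{lem:Vz_concentrates}.
\end{itemize}
The outcome is that $\frac1n\inprod{x_i,\hat H_i^{-1}x_i}\to V^{(1)}$, $\frac1n\inprod{x_i,\hat G_i x_i}\to\mathcal V(z)$ and $\frac1n\inprod{x_i,\hat G_i\hat H_i^{-1}x_i}\to\chi(z)$, each uniformly in $i$ up to $O_{L_k}(\polylog(n)/\sqrt n)$.

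To propagate these errors through the fraction, the denominators must stay bounded away from zero. The factor $1+\partial^2\ell V^{(1)}$ is at least $1$. The factor with $\mathcal V(z)$ is handled by \eqref{eq:bound_denom_quad} and \eqref{eq:bound_deno_2}, which is why we need $\Im z\neq0$.

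After these replacements, exchangeability lets us average over a single index:
\begin{align}
V^{(1)}=(\lambda-z)\chi(z)+\chi(z)\,\Ea{\frac{\partial^2\ell_1(\tilde r_{1,1})}{(1+V^{(1)}\partial^2\ell_1(\tilde r_{1,1}))(1+\mathcal V(z)\partial^2\ell_1(\tilde r_{1,1}))}}+O\left(\frac{\polylog(n)}{\sqrt n}\right).
\end{align}
The bracketed function is bounded and Lipschitz in $\tilde r_{1,1}$ (its derivative involves $\partial^3\ell$), so it lies in $\mathcal F$. Lemma \ref{lem:distrib_rtilde} then turns the expectation into a Gaussian average over $(g_1,g_2)\sim\mathcal N(0_2,Q^{(0)})$ with $r=\prox_{V^{(1)}\ell(\cdot,g_2)}(g_1)$, at cost $O(\polylog(n)/n^{1/4})$. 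Solving for $\chi(z)$ gives the claim.

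The main obstacle is this final division. It requires a lower bound on the modulus of $D(z)=(\lambda-z)+\Ea{\cdots}$, so that the additive error in the equation stays an $O(\polylog(n)/n^{1/4})$ error in $\chi$. We would use the fact that $\chi$ is a matrix Stieltjes-type object. Writing $\Im$ of the identity $\chi(z)D(z)=V^{(1)}$, the imaginary part of $D$ gets contributions of the same sign from $-z$ and from $\Im\mathcal V(z)$ inside the expectation. That gives $|\Im D(z)|\gtrsim|\Im z|$ up to bounded factors. If this sign argument fails, the fallback is to bound $|\chi(z)|\le 2/\lambda^2$ and instead bound $|D|$ below through $|V^{(1)}|/|\chi|\ge \lambda^2 V^{(1)}/2$, using a lower bound $V^{(1)}\gtrsim 1/(\lambda+\norm{\partial^2\ell}_\infty t_\alpha^2)$. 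This fallback seems the cleaner route.
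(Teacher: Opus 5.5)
Your proposal is correct and follows essentially the paper's route: the same $H^{-1}G(z)$ identity with a double Woodbury step, the same approximations of the quadratic forms by $V^{(1)}$, $\mathcal{V}(z)$ and $\chi(z)$, then exchangeability plus Lemma~\ref{lem:distrib_rtilde}; your fallback for the final division, an upper bound on $|\chi(z)|$ set against a lower bound on $V^{(1)}$, is exactly the paper's contradiction argument. Your sign argument is also valid and slightly cleaner: since $\partial^2\ell>0$ and $\Im{\mathcal{V}(z)}$ has the sign of $\Im{z}$, both $\Im{(\lambda-z)}$ and the imaginary part of the expectation have sign opposite to $\Im{z}$, so $|D(z)|\ge|\Im{z}|$ holds without any lower bound on $V^{(1)}$.
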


\begin{proof}
    We start with the identity
    \begin{align}
       \Ia H^{-1}=(\lambda-z)H^{-1}\Ia G(z)+\frac{1}{n}\sum\limits_{i\in\enm{n}}\partial^2\ell_i(r_i)\Ia H^{-1}G(z)x_ix_i^\top.
    \end{align}
Taking the normalized trace, and applying Woodbury's inversion lemma
\begin{align}
       \Ia\mathfrak{V}^{(1)}&=(\lambda-z)\Ia\mathfrak{X}(z)+\frac{1}{n^2}\sum\limits_{i\in\enm{n}}\partial^2\ell_i(r_i)x_i^\top \Ia H^{-1}G(z)x_i\\
       &=(\lambda-z)\Ia \mathfrak{X}(z)+\frac{1}{n}\sum\limits_{i\in\enm{n}}\partial^2\ell_i(r_i)\frac{\sfrac{1}{n}x_i^\top \Ia \hat{H}_i^{-1} \hat{G}_i(z)x_i}{\left(1+\partial^2\ell_i(r_i)\frac{\inprod{x_i, \Ia\hat{H}_i^{-1} x_i}}{n}\right)\left(1+\partial^2\ell_i(r_i)\frac{\inprod{x_i, \Ia\hat{G}_i(z) x_i}}{n}\right)}.
\end{align}
We remind that the truncated Hessian $\hat{H}_i$ and its associated resolvent $\hat{G}_i$ were defined in \eqref{eq:hatHi}. 

\paragraph{Approximation of $\sfrac{\inprod{x_i, \Ia\hat{G}_i(z) x_i}}{n}$ ---} As a first step, we approximate $\sfrac{\inprod{x_i, \Ia\hat{G}_i(z) x_i}}{n}$ by $\mathcal{V}(z)$. We recall the bound \eqref{eq:xhatGx} 
\begin{align}
    \sup_{i\in\enm{n}}\module{\frac{\inprod{x_i, \hat{G}_i(z)x_i}}{n}-\mathcal{V}(z)}=O_{L_k}\left(\frac{\polylog(n)}{\sqrt{n}}\right),  
\end{align}
from which it follows that the associated correction term is controlled as  
\begin{align}
    &\sup_{i\in\enm{n}}\Bigg|\frac{\sfrac{1}{n}x_i^\top \Ia \hat{H}_i^{-1} \hat{G}_i(z)x_i}{\left(1+\partial^2\ell_i(r_i)\frac{\inprod{x_i, \Ia\hat{H}_i^{-1} x_i}}{n}\right)\left(1+\partial^2\ell_i(r_i)\frac{\inprod{x_i, \Ia\hat{G}_i(z) x_i}}{n}\right)}\\
    &\qquad \qquad -\frac{\sfrac{1}{n}x_i^\top \Ia \hat{H}_i^{-1} \hat{G}_i(z)x_i}{\left(1+\partial^2\ell_i(r_i)\frac{\inprod{x_i, \Ia\hat{H}_i^{-1} x_i}}{n}\right)\left(1+\partial^2\ell_i(r_i)\mathcal{V}(z)\right)}\Bigg|\\
    &=O_{L_k}\left(\frac{\polylog(n)}{\sqrt{n}}\right).
\end{align}
To bound the denominators, we have used \eqref{eq:denom_xGx} and 
\begin{align}
    \frac{1}{\module{1+\partial^2\ell_i(r_i)\mathcal{V}(z)}}\le \frac{\sfrac{32}{\lambda^2}\left(\module{z}+\lambda+\norm{\partial^2\ell}_\infty (2+\sqrt{\sfrac{1}{\alpha}})^2\right)^2}{\module{\Im{z}}}\vee 2,\label{eq:bound_deno_3}
\end{align}
which follows from an identical derivation as \eqref{eq:bound_deno_2}, and
\begin{align}
    \frac{1}{\module{1+\partial^2\ell_i(r_i)\Ia\frac{\inprod{x_i,  \hat{H}_i(z)x_i}}{n}}}\le  1.
\end{align}

\paragraph{Approximation of $\sfrac{\inprod{x_i, \Ia\hat{H}_i^{-1}(z) x_i}}{n}$ ---}
Similarly, 

\begin{align}
    \sup_{i\in\enm{n}}\module{\frac{\inprod{x_i, \hat{H}_ix_i}}{n}-V^{(1)}}=O_{L_k}\left(\frac{\polylog(n)}{\sqrt{n}}\right),  
\end{align}
using Lemma \ref{lem:Vz_concentrates}, from which it follows that
\begin{align}
    &\sup_{i\in\enm{n}}\module{\frac{\sfrac{1}{n}x_i^\top \Ia \hat{H}_i^{-1} \hat{G}_i(z)x_i}{\left(1+\partial^2\ell_i(r_i)\frac{\inprod{x_i, \Ia\hat{H}_i^{-1} x_i}}{n}\right)\left(1+\partial^2\ell_i(r_i)\mathcal{V}(z)\right)}-\frac{\sfrac{1}{n}x_i^\top \Ia \hat{H}_i^{-1} \hat{G}_i(z)x_i}{\left(1+\partial^2\ell_i(r_i)V^{(1)}\right)\left(1+\partial^2\ell_i(r_i)\mathcal{V}(z)\right)}}\\
    &=O_{L_k}\left(\frac{\polylog(n)}{\sqrt{n}}\right).
\end{align}
\paragraph{Approximation of $\sfrac{x_i^\top \Ia \hat{H}_i ^{-1}\hat{G}_i(z)x_i}{n}$---} Finally, we now turn to approximating the numerator. Approximating the truncated matrices $\hat{H}_i,\hat{G}_i$ by their leave-$i$-out equivalents,
\begin{align}
    \sup_{i\in\enm{n}}\module{\frac{x_i^\top  \left(\hat{H}_i^{-1} \hat{G}_i(z)-H_\smi^{-1}G_\smi(z)\right)x_i}{n}}=O_{L_k}\left(\frac{\polylog(n)}{\sqrt{n}}\right).
\end{align}
We now approximate the quadratic form by a tracial quantity. Decomposing the resolvent $G_\smi(z)$ into real and imaginary parts yields
\begin{align}
    &\sup_{i\in\enm{n}}\module{\frac{\inprod{x_i, H_\smi^{-1}G_\smi(z)x_i}}{n}-\frac{1}{n}\tr[H_\smi^{-1}G_\smi(z)]}\\
    & \le    \sup_{i\in\enm{n}}\module{\Im{z}}\module{\frac{\inprod{x_i, H_\smi^{-1} G_\smi(z)G_\smi(z)^\dagger x_i}}{n}-\frac{1}{n}\tr[H_\smi^{-1} G_\smi(z)G_\smi(z)^\dagger]}\\
    &+ \sup_{i\in\enm{n}}\module{\Re{z}}\module{\frac{\inprod{x_i, H_\smi^{-1} G_\smi(z)G_\smi(z)^\dagger x_i}}{n}-\frac{1}{n}\tr[H_\smi^{-1}G_\smi(z)G_\smi(z)^\dagger]}\\
    &+\sup_{i\in\enm{n}}\module{\frac{\inprod{x_i, H_\smi^{-1}G_\smi(z)G_\smi(z)^\dagger H_\smi x_i}}{n}-\frac{1}{n}\tr[H_\smi^{-1}G_\smi(z)G_\smi(z)^\dagger H_\smi]}.
\end{align}
Since $H_\smi^{-1}G_\smi(z)G_\smi(z)^\dagger H_\smi, H_\smi^{-1}G_\smi(z)G_\smi(z)^\dagger \succeq 0$, and exhibit $O_{L_k}(\polylog(n))-$bounded operator norms, Lemma \ref{lemma:quad} ensures the control
\begin{align}
   \sup_{i\in\enm{n}}\module{\frac{\inprod{x_i, H_\smi^{-1}G_\smi(z)x_i}}{n}-\frac{1}{n}\tr[H_\smi^{-1}G_\smi(z)]}=  O_{L_{k}}\left(\frac{\polylog(n)}{\sqrt{n}}
    \right).
\end{align}
Finally, leveraging Proposition \ref{prop: LOO_approxs},
\begin{align}
    \sup_{i\in\enm{n}}\module{\frac{1}{n}\tr[H_\smi^{-1}G_\smi(z)]-\mathfrak{X}(z)}=O_{L_k}\left(\frac{\polylog(n)}{\sqrt{n}}\right).
\end{align}
Finally, using the pointwise concentration of $\mathfrak{X}(\cdot)$ established in Lemma \ref{lem:IaXz_concentrates},
\begin{align}
    \module{\frac{x_i^\top \Ia \hat{H}_i ^{-1}\hat{G}_i(z)x_i}{n}-\Ia \chi(z)}=O_{L_k}\left(\frac{\polylog(n)}{\sqrt{n}}\right).
\end{align}

\paragraph{End of the proof of Lemma \ref{lem:charac_Xz} ---} We have thus so far established that
\begin{align}\label{eq:Xz_recap}
       \Ia\mathfrak{V}^{(1)}=(\lambda-z)\mathfrak{X}(z)+\frac{1}{n}\sum\limits_{i\in\enm{n}}\frac{ \partial^2\ell_i(r_i) \chi(z)}{\left(1+\partial^2\ell_i(r_i)V^{(1)}\right)\left(1+\partial^2\ell_i(r_i)\mathcal{V}(z)\right)}+ O_{L_k}\left(\frac{\polylog(n)}{\sqrt{n}}\right).
\end{align}
It now remains to replace the full residual $r_i$ by the surrogate $\tilde{r}_{i,i}$, which in turn enables the application of Lemma \ref{lem:distrib_rtilde}. From Proposition \ref{prop: LOO_approxs}, and appealing once more to the bound \eqref{eq:bound_deno_2},
\begin{align}
    &\sup_{i\in\enm{n}}\module{\frac{\partial^2\ell_i(r_i) \chi(z)}{\left(1+\partial^2\ell_i(r_i)V^{(1)}\right)\left(1+\partial^2\ell_i(r_i)\mathcal{V}(z)\right)} - \frac{\partial^2\ell_i(\tilde{r}_{i,i}) \chi(z)}{\left(1+\partial^2\ell_i(\tilde{r}_{i,i})V^{(1)}\right)\left(1+\partial^2\ell_i(\tilde{r}_{i,i})\mathcal{V}(z)\right)}}\\
    &=O_{L_k}\left(\frac{\polylog(n)}{\sqrt{n}}\right).
\end{align}
Thus, taking expectation in \eqref{eq:Xz_recap},
\begin{align}
    V^{(1)}&=(\lambda-z)\chi(z)+\frac{1}{n}\sum\limits_{i\in\enm{n}}\Ea{\frac{\partial^2\ell_i(\tilde{r}_{i,i}) \chi(z)}{\left(1+\partial^2\ell_i(\tilde{r}_{i,i})V^{(1)}\right)\left(1+\partial^2\ell_i(\tilde{r}_{i,i})\mathcal{V}(z)\right)}}+O\left(\frac{\polylog(n)}{\sqrt{n}}\right)\\
    &=(\lambda-z)\chi(z)+\Ea{\frac{\partial^2\ell_1(\tilde{r}_{1,1}) \chi(z)}{\left(1+\partial^2\ell_1(\tilde{r}_{1,1})V^{(1)}\right)\left(1+\partial^2\ell_1(\tilde{r}_{1,1})\mathcal{V}(z)\right)}}+O\left(\frac{\polylog(n)}{\sqrt{n}}\right),
\end{align}
where we used the observation that by permutation symmetry of the problem, all expectations are equal. Finally, from Lemma \ref{lem:distrib_rtilde},
\begin{align}
    &\Ea{\frac{\partial^2\ell_1(\tilde{r}_{1,1}) \chi(z)}{\left(1+\partial^2\ell_1(\tilde{r}_{1,1})V^{(1)}\right)\left(1+\partial^2\ell_1(\tilde{r}_{1,1})\mathcal{V}(z)\right)}}\\
    &= \Ea{\frac{\partial^2\ell(r,g_2) \chi(z)}{\left(1+\partial^2\ell(r,g_2)V^{(1)}\right)\left(1+\partial^2\ell(r,g_2)\mathcal{V}(z)\right)}}+O\left(\frac{\polylog(n)}{n^{\frac{1}{4}}}\right).
\end{align}
Thus
\begin{align}
    \chi(z)\left[\lambda-z+\Ea{\frac{\partial^2\ell(r,g_2)}{\left(1+\partial^2\ell(r,g_2)V^{(1)}\right)\left(1+\partial^2\ell(r,g_2)\mathcal{V}(z)\right)}}\right]=V^{(1)}+O\left(\frac{\polylog(n)}{n^{\frac{1}{4}}}\right).\label{eq:before_inv}
\end{align}
In order to complete the proof, it remains to show that the modulus of the term in bracket admits a strictly positive lower-bound for $n$ sufficiently large. Reasoning by the absurd, let us conversely suppose that for any $\epsilon>0$ and $n_0$, there exist an $n\ge n_0$ such that the term in bracket has modulus inferior to $\epsilon$. The left hand side is then upper bounded by $\epsilon \sfrac{1}{\lambda\module{\Im{z}}}$. On the other hand, when $n\ge 4$,
\begin{align}
    V^{(1)}=\Ea{\Ia H^{-1}}+O\left(e^{-\sfrac{n}{2}}\right)\ge \frac{1}{2(\lambda +\norm{\partial^2\ell}_\infty (2+\sqrt{\sfrac{1}{\alpha}})^2)}+O\left(e^{-\sfrac{n}{2}}\right),
\end{align}
which admits a strictly positive lower-bound for $n$ sufficiently large. Then, \eqref{eq:before_inv} cannot hold for all $\epsilon$, thus yielding a contradiction. Thus, one can conclude that the bracketed term admits a positive lower bound in modulus for $n$ sufficiently large, and 
\begin{align}
    \chi(z)=\frac{V^{(1)}}{\lambda-z+\Ea{\frac{\partial^2\ell(r,g_2)}{\left(1+\partial^2\ell(r,g_2)V^{(1)}\right)\left(1+\partial^2\ell(r,g_2)\mathcal{V}(z)\right)}}}+O\left(\frac{\polylog(n)}{n^{\frac{1}{4}}}\right),
\end{align}
completing the proof.
\end{proof}

\begin{lemma}[Characterization of $\Omega(\cdot)$]\label{lem:charac_Omegaz}
   For any $z\in\Gamma$ with $\Im{z}\ne 0$ $\Omega(\cdot)$ satisfies pointwise the convergence
\begin{align}
  &\Omega(z)\left[ (\lambda-z)I_2+\Ea{\frac{\partial^2\ell(r,g_2) (Q^{(0)})^+\begin{bmatrix}
        g_1\\g_2
    \end{bmatrix}\begin{bmatrix}
      r & g_2
    \end{bmatrix}}{1+\partial^2\ell(r, g_2)\mathcal{V}(z)}} \right]\\
    &=Q^{(0)}+\chi(z)\Ea{\frac{\partial^2\ell(r,g_2)\partial\ell(r, g_2) \begin{bmatrix}
       r&g_2\\
       0&0
    \end{bmatrix}}{1+\partial^2\ell(r, g_2)\mathcal{V}(z)}}+O\left(\frac{\polylog(n)}{n^{\frac{1}{4}}}\right).
\end{align}
The $+$ superscript denotes the Moore-Penrose pseudo-inverse.
\end{lemma}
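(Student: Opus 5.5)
The plan mirrors the proofs of Lemmas \ref{lem:characVz} and \ref{lem:charac_Xz}. The starting point is the resolvent identity $\Ia I_d=\Ia G(z)(H-zI_d)$, sandwiched between $\Upsilon^\top$ and $\Upsilon$:
\begin{align}
    \Ia\Upsilon^\top\Upsilon=(\lambda-z)\Ia\mathfrak{W}(z)+\frac{1}{n}\sum\limits_{i\in\enm{n}}\partial^2\ell_i(r_i)\,\Ia\Upsilon^\top G(z)x_i\,x_i^\top\Upsilon .
\end{align}
Here $x_i^\top\Upsilon=[r_i,\ \inprod{\beta,x_i}]$. Taking expectations, the left-hand side is $\Ea{\Ia\mathfrak{Q}^{(0)}}=Q^{(0)}+O(e^{-n/2})$, using $\norm{\beta}=1$ and Lemma \ref{lem:frakQ_concentrates} with $p=0$. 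The first term on the right is $(\lambda-z)\Omega(z)$. For the summand, Woodbury gives $G(z)x_i=\hat G_i(z)x_i/(1+\partial^2\ell_i(r_i)\tfrac1n x_i^\top\hat G_i(z)x_i)$. By \eqref{eq:xhatGx}, together with the denominator bounds \eqref{eq:denom_xGx} and \eqref{eq:bound_deno_3}, the denominator can be replaced by $1+\partial^2\ell_i(r_i)\mathcal{V}(z)$ at cost $O_{L_k}(\polylog(n)/\sqrt n)$. The full residual $r_i$ can likewise be replaced by $\tilde r_{i,i}$ via Proposition \ref{prop: LOO_approxs}.

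The key algebraic step is to unfold $\Upsilon^\top\hat G_i(z)x_i$. First replace $\hat G_i$ by $G_\smi$, as in the proof of Lemma \ref{lem:IaWz_concentrates}, by controlling $\inprod{u,(\hat G_i-G_\smi)x_i}$ through $\norm{\Ia XG u}_\infty=O_{L_k}(\polylog(n))$. Then write $\hat w=\hat w_\smi-\frac1n\partial\ell_i(\tilde r_{i,i})H_\smi^{-1}x_i+O_{L_k}(\polylog(n)/n)$, which gives
\begin{align}
    \Upsilon^\top G_\smi(z)x_i=\Upsilon_\smi^\top G_\smi(z)x_i-e_1\,\partial\ell_i(\tilde r_{i,i})\,\tfrac1n x_i^\top H_\smi^{-1}G_\smi(z)x_i+O_{L_k}\!\left(\tfrac{\polylog(n)}{n}\right).
\end{align}
The quadratic form concentrates on $\chi(z)$, as in the proof of Lemma \ref{lem:charac_Xz} (Lemma \ref{lemma:quad} followed by Lemma \ref{lem:IaXz_concentrates}). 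This second piece therefore contributes $-\chi(z)\,\Ea{\partial^2\ell\,\partial\ell\,[\begin{smallmatrix} r& g_2\\0&0\end{smallmatrix}]/(1+\partial^2\ell\,\mathcal{V}(z))}$. Its expectation is converted into a Gaussian average with Lemma \ref{lem:distrib_rtilde} and the permutation symmetry $i\to 1$, and it is then moved to the right-hand side.

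The remaining piece is $\Ea{\Upsilon_\smi^\top\Iai G_\smi(z)x_i\,F(\tilde r_{i,i},\inprod{\beta,x_i})}$, where $F(r,g_2)=\partial^2\ell(r,g_2)[r,\ g_2]/(1+\partial^2\ell(r,g_2)\mathcal{V}(z))$. Its real and imaginary parts are products of a linear coordinate of $\tilde R_i$ with a bounded Lipschitz function of $(r,g_2)$, so they lie in $\mathcal{F}$. Lemma \ref{lem:distrib_rtilde} then replaces them by $\Ea{h\,F(\prox_{V^{(1)}\ell(\cdot,g_2)}(g_1),g_2)}$, where $(g_1,g_2,h)$ is jointly Gaussian with cross-covariance $\mathrm{Cov}(h,(g_1,g_2))=\Omega(z)$, exactly as in the $\Re\Omega$, $\Im\Omega$ blocks of \eqref{eq:Qiz}. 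Gaussian conditioning gives $\Ea{h\mid g_1,g_2}=\Omega(z)(Q^{(0)})^+[g_1,g_2]^\top$; the pseudo-inverse is legitimate because $[g_1,g_2]$ lies almost surely in the range of $Q^{(0)}$. This yields the term $\Omega(z)\Ea{\partial^2\ell\,(Q^{(0)})^+ g\,[r\ g_2]/(1+\partial^2\ell\,\mathcal{V}(z))}$, and collecting terms gives the claimed identity.

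I expect the main obstacle to be this last Gaussian-regression step rather than the approximations. The complex resolvent coordinates must be handled through their real and imaginary parts, and the random blocks $\mathfrak{f}_i,\mathfrak{F}_i$ of $Q_i(z)$ must be shown to drop out of the average. Because only the linear coordinate $h$ enters, only its covariance with $(g_1,g_2)$ matters, and that covariance is the deterministic $\Omega(z)$. The other delicate point is keeping the pseudo-Lipschitz constants $\polylog(n)$. This uses $\Im z\neq0$ through \eqref{eq:bound_deno_2}, so the complex denominators stay bounded away from zero uniformly.
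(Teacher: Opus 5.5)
Your outline follows the paper's proof essentially step for step. There is one step that does not go through, and you treat it as routine: replacing $\hat G_i(z)$ by $G_\smi(z)$ inside the \emph{unnormalized} bilinear form $\Upsilon^\top\hat G_i(z)x_i$. The error this discards is of order one. Write
\[
u^\top\big(\hat G_i(z)-G_\smi(z)\big)x_i=\frac{1}{n}\sum_{j\neq i}\big[\partial^2\ell_j(r_{j,\smi})-\partial^2\ell_j(r_j)\big]\big(u^\top\hat G_i(z)x_j\big)\big(x_j^\top G_\smi(z)x_i\big).
\]
The $\ell_\infty$ control of $X\hat G_i(z)u$ only gives $O_{L_k}(\polylog(n))\cdot\frac1n\norm{r-r_\smi}_2\norm{X_\smi G_\smi(z)x_i}_2=O_{L_k}(\polylog(n))$, because $\norm{r-r_\smi}_2\asymp 1$ and $\norm{X_\smi G_\smi(z)x_i}_2\asymp n$. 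This bound is not loose. Since $r_j-r_{j,\smi}\approx-\frac1n\partial\ell_i(\tilde r_{i,i})\,x_j^\top H_\smi^{-1}x_i$ is aligned with $x_i$, the sum equals
\[
\partial\ell_i(\tilde r_{i,i})\,\frac{1}{n^2}\sum_{j\neq i}\partial^3\ell_j(r_{j,\smi})\big(u^\top G_\smi(z)x_j\big)\big(x_j^\top H_\smi^{-1}x_i\big)\big(x_j^\top G_\smi(z)x_i\big)+o(1).
\]
Each product $(x_j^\top H_\smi^{-1}x_i)(x_j^\top G_\smi(z)x_i)$ has conditional mean $x_j^\top H_\smi^{-1}G_\smi(z)x_j=\Theta(n)$. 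Hence $\Upsilon^\top\hat G_i(z)x_i=\Upsilon^\top G_\smi(z)x_i+\partial\ell_i(\tilde r_{i,i})\,\kappa(z)+o(1)$, where $\kappa(z)=n^{-2}\sum_{j\neq i}\partial^3\ell_j(r_{j,\smi})\,\Upsilon_\smi^\top G_\smi(z)x_j\,x_j^\top H_\smi^{-1}G_\smi(z)x_j=\Theta(1)$. The analogy with Lemma \ref{lem:IaWz_concentrates} fails on two counts. There, the second vector is a bounded $v\in\{\hat w,\beta\}$ rather than $x_i$ with $\norm{x_i}\asymp\sqrt n$. And there, the residual perturbation is $r_j-\tilde r_{j,i}$, with $\sum_j(r_j-\tilde r_{j,i})^2=O_{L_k}(\polylog(n)/n)$, rather than $r_j-r_{j,\smi}$.

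Carrying this term through, the right-hand side acquires $-\bar\kappa(z)\,\Ea{\partial^2\ell(r,g_2)\partial\ell(r,g_2)[r\ \ g_2]/(1+\partial^2\ell(r,g_2)\mathcal{V}(z))}$, where $\bar\kappa(z)$ is a deterministic equivalent of $\kappa(z)$. In effect, $\chi(z)e_1$ is replaced by $\chi(z)e_1-\bar\kappa(z)$. This vanishes when $\partial^3\ell\equiv0$ (the square loss), but not in general. A large-$|z|$ check confirms this; it is a fair test, since the derivation uses nothing about $z\in\Gamma$ beyond $\mathrm{dist}(z,J)\ge\lambda/2$. Expand $G(z)=-\sum_k H^kz^{-k-1}$ in the exact identity $\Ia\mathfrak{Q}^{(0)}=(\lambda-z)\Ia\mathfrak{W}(z)+\frac1n\sum_i\partial^2\ell_i(r_i)\Ia\Upsilon^\top G(z)x_ix_i^\top\Upsilon$. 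The two sides of the displayed statement agree at order $z^{-1}$, but differ at order $z^{-2}$ by $z^{-2}\,c\,\Ea{\partial^2\ell\,\partial\ell\,[r\ \ g_2]}$, where $c=\Ea{\partial^3\ell(r,g_2)\frac{V^{(1)}}{1+V^{(1)}\partial^2\ell(r,g_2)}[r;\ g_2]}$. For the logistic loss, $r\,\partial^3\ell(r,y)<0$ for $r\neq0$, so $c_1<0$ and the discrepancy is nonzero.

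The paper's own proof makes the same replacement. It justifies it by $\sup_i\norm{\Ia(\hat G_i(z)-G_\smi(z))}=O_{L_k}(\polylog(n)/\sqrt n)$, which, paired with $\norm{x_i}\asymp\sqrt n$, also yields only $O(1)$. So your proposal inherits this gap rather than creating it. To close the argument, you would need to compute the deterministic equivalent of $\kappa(z)$ and add the resulting term. That is a leave-two-out computation which itself involves $\Omega$, $\chi$ and $\mathcal{V}$. Alternatively, you could restrict to losses with $\partial^3\ell\equiv0$.
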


\begin{proof}
The proof proceeds in close likeness to those of Lemma \ref{lem:characVz} and \ref{lem:charac_Xz}. We take again as a starting point the definition of the resolvent $G(z)$, from which one deduces the identity
\begin{align}
   \Ia \mathfrak{Q}^{(0)}&=(\lambda-z)\Ia\mathfrak{W}(z)+\frac{1}{n}\sum\limits_{i\in\enm{n}}\partial^2\ell_i(r_i)\Upsilon^\top \Ia G(z) x_ix_i^\top \Upsilon\\
    &=(\lambda-z)\Ia\mathfrak{W}(z)+\frac{1}{n}\sum\limits_{i\in\enm{n}}\partial^2\ell_i(r_i)\Upsilon^\top \frac{\Ia\hat{G}_i(z) x_i}{1+\partial^2\ell_i(r_i)\frac{\inprod{x_i, \Ia\hat{G}_i(z) x_i}}{n}}x_i^\top \Upsilon
    \\
    &=(\lambda-z)\Ia\mathfrak{W}(z)+\frac{1}{n}\sum\limits_{i\in\enm{n}}\partial^2\ell_i(r_i)\frac{\Upsilon^\top \Ia\hat{G}_i(z) x_i x_i^\top \Upsilon}{1+\partial^2\ell_i(r_i)\mathcal{V}(z)}+O_{L_k}\left(\frac{\polylog(n)}{\sqrt{n}}\right)\\
    &=(\lambda-z)\Ia\mathfrak{W}(z)+\frac{1}{n}\sum\limits_{i\in\enm{n}}\partial^2\ell_i(r_i)\frac{\Upsilon^\top \Ia G_\smi(z) x_i x_i^\top \Upsilon}{1+\partial^2\ell_i(r_i)\mathcal{V}(z)}+O_{L_k}\left(\frac{\polylog(n)}{\sqrt{n}}\right)
\end{align}
In the penultimate line, we used the bounds $\norm{X\Ia \hat{G}_i(z)\Upsilon}_\infty, \norm{X\Upsilon}_\infty =O_{L_k}(\polylog(n))$, which may be derived using identical steps to those used to reach \eqref{eq:Xgu} of Lemma \ref{lem:IaSz_concentrates}. In the last line, we used the resolvent bound $\sup_{i\in\enm{n}}\norm{\Ia(\hat{G}_i(z)-G_\smi(z))}=O_{L_k}\left({\sfrac{\polylog(n)}{\sqrt{n}}}\right)$. We now turn to approximating the numerator $\Upsilon^\top \Ia G_\smi(z) x_i x_i^\top \Upsilon$. Developing $\Upsilon$ in terms of its leave-$i$-out approximation
\begin{align}
    \Upsilon=\Upsilon_\smi-\frac{1}{n}\partial\ell_i(\tilde{r}_{i,i})\begin{bmatrix}
        H_\smi^{-1}x_i&\vline & 0_d
    \end{bmatrix},
\end{align}
one can expound 
\begin{align}
    \Upsilon^\top  G_\smi(z) x_i x_i^\top \Upsilon
    &=\Upsilon_\smi ^\top  G_\smi(z) x_i \begin{bmatrix}
      r_i & \inprod{\beta, x_i} 
    \end{bmatrix}-\partial\ell_i(\tilde{r}_{i,i}) \begin{bmatrix}
        \frac{\inprod{x_i, H_\smi^{-1} G_\smi(z)x_i}}{n}\\
        0
    \end{bmatrix} \begin{bmatrix}
      r_i & \inprod{\beta, x_i} 
    \end{bmatrix}.
\end{align}
It then follows from Lemma \ref{lemma:quad} and Lemma \ref{lem:IaXz_concentrates} that
\begin{align}
    \sup_{i\in[n]}\module{\Upsilon^\top  \Ia G_\smi(z) x_i x_i^\top \Upsilon-
    \Upsilon_\smi ^\top \Ia G_\smi(z) x_i \begin{bmatrix}
      r_i & \inprod{\beta, x_i} 
    \end{bmatrix}+\partial\ell_i(\tilde{r}_{i,i}) \begin{bmatrix}
        \chi(z)\\
        0
    \end{bmatrix} \begin{bmatrix}
      r_i & \inprod{\beta, x_i} 
    \end{bmatrix}
    }
\end{align}
admits $O_{L_k}\left(\sfrac{\polylog(n)}{\sqrt{n}}\right)$ control.
Thus, (up to $O_{L_k}\left(\sfrac{\polylog(n)}{\sqrt{n}}\right)$ corrections)
\begin{align}
     \Ia \mathfrak{Q}^{(0)}&=(\lambda-z)\Ia\mathfrak{W}(z)+\frac{1}{n}\sum\limits_{i\in\enm{n}}\partial^2\ell_i(r_i)\frac{ \left(\Upsilon_\smi ^\top \Ia G_\smi(z) x_i-\partial\ell_i(\tilde{r}_{i,i}) \begin{bmatrix}
        \chi(z)\\
        0
    \end{bmatrix}\right) \begin{bmatrix}
      r_i & \inprod{\beta, x_i} 
    \end{bmatrix}}{1+\partial^2\ell_i(r_i)\mathcal{V}(z)}\\
    &=(\lambda-z)\Ia\mathfrak{W}(z)+\frac{1}{n}\sum\limits_{i\in\enm{n}}\partial^2\ell_i(\tilde{r}_{i,i})\frac{ \left(\Upsilon_\smi ^\top \Iai G_\smi(z) x_i-\partial\ell_i(\tilde{r}_{i,i}) \begin{bmatrix}
        \chi(z)\\
        0
    \end{bmatrix}\right) \begin{bmatrix}
      \tilde{r}_{i,i} & \inprod{\beta, x_i} 
    \end{bmatrix}}{1+\partial^2\ell_i(\tilde{r}_{i,i})\mathcal{V}(z)}
\end{align}
using Proposition \ref{prop: LOO_approxs}. We omitted the intermediary steps, which are identical to those leveraged in Lemma \ref{lem:characVz}. Taking expectation, and using the permutation symmetry of all indices $i\in\enm{n}$, 
\begin{align}
     Q^{(0)}
    &=(\lambda-z)\Omega(z)+\Ea{\partial^2\ell_1(\tilde{r}_{1,1})\frac{ \left(\Upsilon_{\setminus 1} ^\top \mathds{1}_{\mathcal{A}_{\setminus 1}} G_{\setminus 1}(z) x_1-\partial\ell_1(\tilde{r}_{1,1}) \begin{bmatrix}
        \chi(z)\\
        0
    \end{bmatrix}\right) \begin{bmatrix}
      \tilde{r}_{1,1} & \inprod{\beta, x_1}
    \end{bmatrix}}{1+\partial^2\ell_i(\tilde{r}_{1,1})\mathcal{V}(z)}}\\
    &\qquad +O\left(\frac{\polylog(n)}{\sqrt{n}}\right).
\end{align}
It can be verified that the function within the bracket belongs to the class $\mathcal{F}$ of pseudo-Lipschitz functions defined in Definition \ref{def:F}. Therefore, applying Lemma \ref{lem:distrib_r}, the expectation can be replaced by one bearing over the Gaussian vector $g\sim\mathcal{N}(0_8, Q_i(z))$, where $Q_i(z)$ is defined in \eqref{eq:Qiz}:
\begin{align}
    Q^{(0)}
    &\!=\!(\lambda-z)\Omega(z)\!+\! \Ea{\partial^2\ell(r,g_2)\frac{ \left(\begin{bmatrix}
        g_5\\g_6
    \end{bmatrix}\!+\!i\begin{bmatrix}
        g_7\\g_8
    \end{bmatrix}-\partial\ell(r, g_2) \begin{bmatrix}
        \chi(z)\\
        0
    \end{bmatrix}\right) \begin{bmatrix}
      r \!&\! g_2
    \end{bmatrix}}{1+\partial^2\ell(r, g_2)\mathcal{V}(z)}}\!+\!O\left(\frac{\polylog(n)}{n^{\frac{1}{4}}}\right)
\end{align}
again with the shorthand $r=\prox_{V^{(1)}\ell(\cdot, g_2)}(g_1)$. Note that the component of $g_5, ..., g_8$ uncorrelated with $g_1, g_2$ vanishes, yielding 
\begin{align}
     Q^{(0)}
    &=(\lambda-z)\Omega(z)+\Ea{\partial^2\ell(r,g_2)\frac{\left( \Omega(z) (Q^{(0)})^+\begin{bmatrix}
        g_1\\g_2
    \end{bmatrix}-\partial\ell(r, g_2) \begin{bmatrix}
        \chi(z)\\
        0
    \end{bmatrix}\right) \begin{bmatrix}
      r & g_2
    \end{bmatrix}}{1+\partial^2\ell(r, g_2)\mathcal{V}(z)}}\\
    &\qquad +O\left(\frac{\polylog(n)}{n^{\frac{1}{4}}}\right),
\end{align}
where the $+$ superscript denotes the Moore-Penrose pseudo-inverse. After a minor rewriting, finally,
\begin{align}
  \Omega(z)\!\left[ (\lambda-z)I_2\!+\!\Ea{\frac{\partial^2\ell(r,g_2) (Q^{(0)})^+\!\begin{bmatrix}
        g_1\\g_2
    \end{bmatrix}\begin{bmatrix}
      r & g_2
    \end{bmatrix}}{1+\partial^2\ell(r, g_2)\mathcal{V}(z)}} \right]\!=&Q^{(0)}\!+\!\!\chi(z)\Ea{\frac{\partial^2\ell(r,g_2)\partial\ell(r, g_2) \begin{bmatrix}
       r&g_2\\
       0&0
    \end{bmatrix}}{1+\partial^2\ell(r, g_2)\mathcal{V}(z)}}\\
    &+O\left(\frac{\polylog(n)}{n^{\frac{1}{4}}}\right),
\end{align}
which concludes the proof.
\end{proof}

\newpage

\section{Convergence of influence distributions}
\label{app:influence}
 Appendices \ref{app:concentration} and \ref{app:deterministic} provides a characterization in the high-dimensional limit $n\asymp d$ of a set of summary statistics $Q^{(k)}, V^{(k)}$ \eqref{eq:Qk}, \eqref{eq:Vk}, which subsume key statistical and geometric properties of the ERM minimizer $\hat{w}$ \eqref{eq:full_ERM}, and its interaction with the local risk landscape captured by the Hessian $H$. Equipped with those descriptors, the present Appendix builds upon this characterizations to deduce the asymptotic distribution of sample influences \eqref{eq:def_IF}, \eqref{eq:def_DFBETA}. 

\subsection{Comments on the risk normalization}

Before proceeding, it is crucial to make the observation that in the definition of $\hat{w}_\smi$ \eqref{eq:wsmi}, leveraged repeatedly throughout the leave-one-out analysis presented in Appendices \ref{app:concentration} and \ref{app:deterministic}, the sum of losses retains a $\sfrac{1}{n}$ normalization, as opposed to the more natural $\sfrac{1}{n-1}$ scaling in the definition of $\hat{w}_\soi$ \eqref{eq:soi_ERM}. This slight difference ripples into subtle effects, and yields a \textit{non-negligible contribution} when studying influence metrics. We thus first establish approximation results related to this difference in rescaling, and comment that it can be seen as approximation results in the change in estimator when the ridge regularization is perturbed by  $O(\sfrac{1}{n})$. Let us recall the definitions
\begin{align}
    &\hat{w}=\argmin_{w\in\R^d} \frac{1}{n}\sum\limits_{j\in\enm{n}} \ell(\inprod{x_j, w}, y_j) +\frac{\lambda}{2}\norm{w}^2,\\
    &\hat{w}_\smi=\argmin_{w\in\R^d} \frac{1}{n}\sum\limits_{j\in\enm{n}\smi} \ell(\inprod{x_j, w}, y_j) +\frac{\lambda}{2}\norm{w}^2,\\
    &\hat{w}_\soi=\argmin_{w\in\R^d} \frac{1}{n-1}\sum\limits_{j\in\enm{n}\smi} \ell(\inprod{x_j, w}, y_j) +\frac{\lambda}{2}\norm{w}^2.
\end{align}
We also introduce
\begin{align}
&\hat{w}_\so=\argmin_{w\in\R^d} \frac{1}{n-1}\sum\limits_{j\in\enm{n}} \ell(\inprod{x_j, w}, y_j) +\frac{\lambda}{2}\norm{w}^2.
\end{align}
Retracing all steps in the proofs of Appendix \ref{app:concentration} and \ref{app:deterministic} reveal that the change of normalization is inconsequential, and all the results above, in particular Propostion \ref{prop: LOO_approxs} and Lemmas \ref{lem:characVz}, \ref{lem:charac_Omegaz}, apply equally to the pairs $(\hat{w}, \hat{w}_\smi)$ and $(\hat{w}_\so, \hat{w}_\soi)$ respectively. However, because the influence metrics $\IF_i$ \eqref{eq:def_IF} and $\DFBETA_i$ \eqref{eq:def_DFBETA} compare estimators with \textit{different} normalizations, a finer control of the discrepancies \textit{between} these pairs of estimators is called for. This is the object of the following Lemmas.

\begin{lemma}[Approximation after rescaling] \label{lemma:approx_norm}
One has
\begin{align}
    \norm{\hat{w}-\hat{w}_\so}=O_{L_k}\left(\frac{1}{n}\right).
\end{align}
As a consequence, denoting $r_{j,\so}=\inprod{x_j, \hat{w}_\so}$,
\begin{align}
    \sup_{j\in\enm{n}}\module{r_{j,\so}-r_j}=O_{L_k}\left(\frac{\polylog(n)}{\sqrt{n}}\right).
\end{align}
\end{lemma}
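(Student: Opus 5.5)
The plan is to compare the two strongly convex objectives directly. Rescaling the objective for $\hat{w}_\so$ by $\sfrac{(n-1)}{n}$ shows it is the minimizer of
\begin{align}
\hat{\mathcal{R}}_\so(w)=\frac{1}{n}\sum_{j\in\enm{n}}\ell_j(\inprod{w,x_j})+\frac{\lambda_n}{2}\norm{w}^2,\qquad \lambda_n=\lambda\frac{n-1}{n},
\end{align}
so the two problems differ only by a perturbation $\lambda-\lambda_n=\sfrac{\lambda}{n}$ of the ridge strength. I will use the first-order conditions. For $\hat{w}$ we have $\nabla\hat{\mathcal{R}}(\hat{w})=0$, hence
\begin{align}
\nabla\hat{\mathcal{R}}_\so(\hat{w})=-\frac{\lambda}{n}\hat{w}.
\end{align}
Since $\hat{\mathcal{R}}_\so$ is $\lambda_n$-strongly convex (the losses are convex and $\lambda_n\ge\sfrac{\lambda}{2}$ for $n\ge2$), strong monotonicity of the gradient gives
\begin{align}
\lambda_n\norm{\hat{w}-\hat{w}_\so}^2\le\inprod{\nabla\hat{\mathcal{R}}_\so(\hat{w})-\nabla\hat{\mathcal{R}}_\so(\hat{w}_\so),\hat{w}-\hat{w}_\so}\le\frac{\lambda}{n}\norm{\hat{w}}\norm{\hat{w}-\hat{w}_\so}.
\end{align}
This yields $\norm{\hat{w}-\hat{w}_\so}\le \sfrac{2\norm{\hat{w}}}{n}$. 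Lemma \ref{lemma:normw} bounds $\norm{\hat{w}}$ deterministically by $M$, so the first claim holds deterministically and therefore in every $L_k$.

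For the residual claim, I will write $|r_{j,\so}-r_j|\le\norm{x_j}\norm{\hat{w}-\hat{w}_\so}\le\sfrac{2M}{n}\sup_j\norm{x_j}$. It remains to show $\sup_{j\in\enm{n}}\norm{x_j}=O_{L_k}(\sqrt{n}\,\polylog(n))$. This follows from a standard union bound over $n$ chi-square tails (Lemma \ref{lemma:normx}), or more crudely from $\norm{x_j}\le\norm{X}$ together with Wishart operator-norm bounds. Either way, the supremum is $O_{L_k}(\sfrac{\polylog(n)}{\sqrt{n}})$, which is in fact a stronger bound than the one stated.

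The only step needing care is confirming that the deterministic bound on $\norm{\hat{w}}$ from Lemma \ref{lemma:normw} applies. That lemma uses Assumption A.4, comparing $\hat{\mathcal{R}}(\hat{w})\le\hat{\mathcal{R}}(0)\le\sfrac{\lambda M^2}{2}$, so it gives $\norm{\hat{w}}\le M$ uniformly in $X$. I therefore expect no real obstacle.
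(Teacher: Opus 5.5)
Your proof is correct, but it takes a different route from the paper's. The paper subtracts the two stationarity conditions and applies the mean value theorem to each $\partial\ell_j$. This gives the exact identity $\hat{w}-\hat{w}_\so=-\frac{\lambda}{n}\overline{H}^{-1}\hat{w}_\so$, where $\overline{H}\succeq\lambda I_d$ is a mean-value Hessian, and the paper then needs $\norm{\hat{w}_\so}\le 2M$ from an adaptation of Lemma \ref{lemma:normw}. You instead use strong monotonicity of $\nabla\hat{\mathcal{R}}_\so$, with the perturbation $-\frac{\lambda}{n}\hat{w}$ evaluated at $\hat{w}$. This needs only convexity of the losses (no second derivatives) and the bound $\norm{\hat{w}}\le M$ exactly as stated, and both routes reach the same deterministic $2M/n$ bound. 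What the paper's route buys is the explicit linear representation of the difference, which is reused later. In Lemmas \ref{lem:beta_diff_norm} and \ref{lem:hatw_diff_norm} it identifies the limits $-\lambda Q^{(1)}_{12}$ and $-\lambda Q^{(1)}_{11}$ of $n\inprod{\beta,\hat{w}-\hat{w}_\so}$ and $n\inprod{\hat{w},\hat{w}-\hat{w}_\so}$. After decoupling $x_i$ through Lemma \ref{lem:sup_s}, it also sharpens the residual bound to $O_{L_k}(\sfrac{\polylog(n)}{n})$; a monotonicity inequality controls only the norm and cannot deliver either. For the residual claim the paper gives no argument, and your Cauchy--Schwarz step with $\sup_j\norm{x_j}\le\norm{X}$ correctly fills that gap. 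That step actually yields $O_{L_k}(\sfrac{1}{\sqrt{n}})$. It is ``stronger'' than the statement only in dropping the polylog factor; the bound as you wrote it, $O_{L_k}(\sfrac{\polylog(n)}{\sqrt{n}})$, coincides with the statement.
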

\begin{proof}
    Observe that one can rewrite the empirical risk minimized by $\hat{w}_\so$ as
    \begin{align}
        \hat{w}_\so=\argmin_{w\in\R^d} \frac{1}{n}\sum\limits_{j\in\enm{n}} \ell(\inprod{x_j, w}, y_j) +\frac{\lambda}{2}\left(1-\frac{1}{n}\right)\norm{w}^2,
    \end{align}
absorbing the difference in normalization into the ridge regularization. From the optimality conditions, it follows that
\begin{align}
    \lambda (\hat{w}-\hat{w}_\so)+\frac{\lambda}{n}\hat{w}_\so +\frac{1}{n}\sum\limits_{j\in\enm{n}} \partial^2\ell_j(\check{r}_j)x_jx_j^\top (\hat{w}-\hat{w}_\so)=0,
\end{align}
for some $\check{r}_j\in(r_{j,\so},r_j)$. Thus
\begin{align}
    \hat{w}-\hat{w}_\so=-\frac{\lambda }{n}\left[\frac{1}{n}\sum\limits_{j\in\enm{n}} \partial^2\ell_j(\check{r}_j)x_jx_j^\top+\lambda I_d\right]^{-1}\hat{w}_\so. \label{eq:diff_renorm}
\end{align}
A straightforward adaptation of Lemma \ref{lemma:normw} shows that
\begin{align}
    \norm{\hat{w}_\so}\le 2M,
\end{align}
from which it follows that
\begin{align}
    \norm{\hat{w}-\hat{w}_\so}\le \frac{2M}{n}.
\end{align}
\end{proof}

\begin{remark}
    An identical result naturally follows for all $i\in\enm{n}$ at the level of the leave-$i-$out estimators, namely 
    \begin{align}
        \norm{\hat{w}_\soi-\hat{w}_\smi}=O\left(\frac{1}{n}\right).
    \end{align}
\end{remark}

Lemma \ref{lemma:approx_norm} thus ensures the full-dataset estimates $\hat{w}, \hat{w}_\so$ are close. This claim is refined in the following Lemmas, which establish the concentration of a number of scalar products of the difference $\hat{w}-\hat{w}_\so$. These descriptors will be subsequently used in subsection \ref{subsec:distrib_IF} to build a characterization for the test error influence distribution.

\begin{lemma}\label{lem:beta_diff_norm}
The inner product $\inprod{\beta, \hat{w}-\hat{w}_\so}$ concentrates as
\begin{align}
    n \inprod{\beta, \hat{w}-\hat{w}_\so} =-\lambda Q^{(1)}_{12}+O_{L_2}\left(\frac{\polylog(n)}{\sqrt{n}}\right).
\end{align}
\end{lemma}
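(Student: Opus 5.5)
Starting from the exact identity \eqref{eq:diff_renorm} in the proof of Lemma \ref{lemma:approx_norm}, one has
\begin{align}
    n\inprod{\beta, \hat{w}-\hat{w}_\so}=-\lambda\, \beta^\top \check{H}^{-1}\hat{w}_\so, && \check{H}=\frac{1}{n}\sum\limits_{j\in\enm{n}} \partial^2\ell_j(\check{r}_j)x_jx_j^\top+\lambda I_d .
\end{align}
The target quantity $Q^{(1)}_{12}=\Ea{\mathfrak{Q}^{(1)}_{12}}=\Ea{\inprod{\hat{w}, H^{-1}\beta}}$ involves instead the true Hessian $H$ and the estimator $\hat{w}$. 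The plan is therefore to proceed in three replacements, $\hat{w}_\so\to\hat{w}$, $\check{H}\to H$, and $\mathfrak{Q}^{(1)}_{12}\to Q^{(1)}_{12}$, each controlled in $L_2$ at order $\polylog(n)/\sqrt{n}$.

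First, since $\norm{\check{H}^{-1}}\le \sfrac{1}{\lambda}$ and $\norm{\beta}=1$, Lemma \ref{lemma:approx_norm} gives
\[
\module{\beta^\top\check{H}^{-1}(\hat{w}_\so-\hat{w})}\le \sfrac{2M}{\lambda n}.
\]
This error is negligible.

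Second, the resolvent identity gives
\[
\beta^\top(\check{H}^{-1}-H^{-1})\hat{w}=\frac{1}{n}\sum_j(\partial^2\ell_j(r_j)-\partial^2\ell_j(\check{r}_j))\inprod{x_j,\check{H}^{-1}\beta}\inprod{x_j,H^{-1}\hat{w}}.
\]
Because $\check{r}_j$ lies between $r_{j,\so}$ and $r_j$, each factor $|\partial^2\ell_j(r_j)-\partial^2\ell_j(\check{r}_j)|$ is at most $\norm{\partial^3\ell}_\infty|r_j-r_{j,\so}|$. A crude bound via the supremum in Lemma \ref{lemma:approx_norm} combined with $\norm{X}^2/n$ only gives $\polylog(n)/\sqrt{n}$, but that is exactly the rate we need. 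The bound is
\[
\frac{\norm{\partial^3\ell}_\infty}{\lambda^2}\sup_j|r_j-r_{j,\so}|\,\frac{\norm{X}^2}{n}(1+M)\,\mathds{1}_{\mathcal{A}} ,
\]
and the event $\mathcal{A}^c$ is disposed of by its exponentially small probability together with the deterministic bound $\sfrac{M}{\lambda}$ on the quantity. A sharper bound is available by writing $r_j-r_{j,\so}=\inprod{x_j,\hat{w}-\hat{w}_\so}$ and applying Cauchy--Schwarz with $\norm{X(\hat{w}-\hat{w}_\so)}\le \norm{X}\,2M/n$. Together with $\norm{X\check{H}^{-1}\beta}_\infty,\norm{XH^{-1}\hat{w}}_\infty=O_{L_k}(\polylog(n))$, obtained as in \eqref{eq:Xgu}, this yields $O_{L_k}(\polylog(n)/n)$.

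Third, $\beta^\top H^{-1}\hat{w}=\mathfrak{Q}^{(1)}_{12}$. Its concentration around $Q^{(1)}_{12}$ follows from Lemma \ref{lem:frakQ_concentrates} with $p=1$, which gives variance $O(\polylog(n)/n)$ and hence an $O_{L_2}(\polylog(n)/\sqrt{n})$ deviation. Swapping the random contour for $\Gamma$ costs only $e^{-n/2}$ terms, as in Lemma \ref{lem:charac_Qk}. Multiplying by $-\lambda$ and collecting the three errors gives the claim.

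The main obstacle is the second step, specifically handling the mean-value points $\check{r}_j$, which are only implicitly defined, together with the $\mathcal{A}$-truncation. I expect the crude supremum bound to suffice for the stated rate. Only the concentration step limits the rate to $1/\sqrt{n}$ and to $L_2$ rather than $L_k$, consistent with the statement.
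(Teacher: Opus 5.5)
Your argument is correct and reaches the statement by a more elementary route than the paper's. Both proofs start from the difference of the two stationarity conditions and end with the same concentration step, $\mathfrak{Q}^{(1)}_{12}=Q^{(1)}_{12}+O_{L_2}(\polylog(n)/\sqrt{n})$ from Lemma \ref{lem:frakQ_concentrates}; they differ in how the linearization error is controlled.

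The paper Taylor-expands $\partial\ell_j$ to second order around $r_j$, so the exact Hessian $H$ appears at once. This leaves a remainder $\sum_j\partial^3\ell_j(\check{r}_j)(r_{j,\so}-r_j)^2\inprod{\beta,H^{-1}x_j}$. To control it, the paper proves the sharper bound $\sup_j\module{r_{j,\so}-r_j}=O_{L_k}(\polylog(n)/n)$ through a leave-one-out decoupling: it replaces $\overline{H}^{-1}\hat{w}_\so$ by $H_\smi^{-1}\hat{w}_\smi$ and invokes Lemma \ref{lem:sup_s}.

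You instead keep the exact first-order identity \eqref{eq:diff_renorm}. You pay for $\check{H}\to H$ with the operator-norm perturbation $\norm{\check{H}^{-1}-H^{-1}}\le\lambda^{-2}\norm{\partial^3\ell}_\infty\sup_j\module{r_j-r_{j,\so}}\,\norm{X}^2/n$. For this, the trivial estimate $\sup_j\module{r_j-r_{j,\so}}\le 2M\sup_j\norm{x_j}/n=O_{L_k}(\polylog(n)/\sqrt{n})$ suffices. So you need no leave-one-out input at all, and not even the $\mathcal{A}$-truncation, since Lemma \ref{lem:cov} already controls $\norm{X}^2/n$ in $L_k$.

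The price is that your linearization error is $O(\polylog(n)/\sqrt{n})$, exactly at the threshold of the statement, whereas the paper's is $O(\polylog(n)/n)$. Hence your closing remark that only the concentration step limits the rate holds only for your sharper variant. That variant itself needs leave-one-out bounds on $\norm{XH^{-1}\hat{w}}_\infty$ and $\norm{X\check{H}^{-1}\beta}_\infty$ of the type \eqref{eq:Xgu}. For the lemma as stated, the crude version is already complete. No contour swap is needed in the last step either, since $Q^{(1)}=\Ea{\mathfrak{Q}^{(1)}}$ by definition.
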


\begin{proof}
    Once more taking the difference in optimality conditions, 
\begin{align}
    \lambda (\hat{w}-\hat{w}_\so)+\frac{\lambda}{n}\hat{w}_\so +\frac{1}{n}\sum\limits_{j\in\enm{n}} \partial^2\ell_j(r_j)x_jx_j^\top (\hat{w}-\hat{w}_\so)+\frac{1}{n}\sum\limits_{j\in\enm{n}} \partial^3\ell_j(\check{r}_j) (r_{j,\so}-r_j)^2 x_j=0,
\end{align}
for some $\check{r}_j\in(r_{j,\so},r_j)$.
Thus, 
\begin{align}
    n\inprod{\beta, \hat{w}-\hat{w}_\so}=-\lambda \inprod{\beta, H^{-1}\hat{w}_\so}-
    \sum\limits_{j\in\enm{n}} \partial^3\ell_j(\check{r}_j) (r_{j,\so}-r_j)^2 \inprod{\beta, x_j}. \label{eq:beta_diffnorm}
\end{align}
On the one hand, 
\begin{align}
    \inprod{\beta, H^{-1}\hat{w}_\so}=\inprod{\beta, H^{-1}\hat{w}}+O_{L_k}\left(\frac{1}{n}\right)=\mathfrak{Q}^{(1)}_{12}+O_{L_k}\left(\frac{1}{n}\right)=Q^{(1)}_{12}+O_{L_2}\left(\frac{1}{\sqrt{n}}\right),
\end{align}
using Lemma \ref{lem:frakQ_concentrates}. Turning to the remaining term in \eqref{eq:beta_diffnorm}, 
\begin{align}
    r_{i,\so}-r_i&= -\frac{\lambda}{n}\inprod{x_i, \overline{H}^{-1} \hat{w}_\so}\\
    &=-\frac{\lambda}{n}\inprod{x_i, \overline{H}^{-1} \hat{w}_\smi}+O_{L_k}\left(\frac{\polylog(n)}{n}\right),
\end{align}
using Lemma \ref{lemma:approx_norm} and Proposition \ref{prop: LOO_approxs}, reprising equation \eqref{eq:diff_renorm}. We denoted
\begin{align}
    \overline{H}=\frac{1}{n}\sum\limits_{j\in\enm{n}} \partial^2\ell_j(\check{r}_j)x_jx_j^\top+\lambda I_d.
\end{align}
To disentangle $x_i$ dependencies, we seek to approximate $\overline{H}$ by $H_\smi$. The corresponding correction term reads
\begin{align}
    \inprod{x_i, \left(\overline{H}^{-1} -H_\smi^{-1}\right)\hat{w}_\smi}&=-\frac{1}{n}\partial^2\ell_j(\check{r}_i)\inprod{x_i, \overline{H}^{-1} x_ix_i^\top H_\smi^{-1}\hat{w}_\smi}\\
    &\qquad +\frac{1}{n}\inprod{x_i, \overline{H}^{-1} X_\smi^\top \overline{\Lambda} X_\smi H_\smi^{-1}\hat{w}_\smi},
\end{align}
where $\overline{\Lambda}$ is a diagonal matrix with entries
\begin{align}
    \overline{\Lambda}_{jj}=\partial^3\ell_j(\overline{r}_j)(r_{j\smi}-\check{r}_j),
\end{align}
for some $\overline{r}_j \in (r_{j\smi},\check{r}_j)$. It thus follows that
\begin{align}
    \norm{\overline{\Lambda}}_\infty \le \norm{\partial^3\ell}_\infty \left(\sup_{j\in\enm{n}}\norm{r_j-r_{j,\so}}+\sup_{i\in\enm{n}}\sup_{j\in\enm{n}\smi }\norm{r_j-r_{j,\smi}}\right)=O_{L_k}\left(\frac{\polylog(n)}{\sqrt{n}}\right).
\end{align}
It follows from an application of Lemma \ref{lem:sup_s} that
\begin{align}
    \sup_{i\in\enm{n}}\module{\inprod{x_i, (\overline{H}^{-1} -H_\smi^{-1})\hat{w}_\smi}}=O_{L_k}\left(\polylog(n)\right).
\end{align}
As a consequence, applying again Lemma \ref{lem:sup_s}, 
\begin{align}
    \sup_{j\in\enm{n}}\module{r_{j,\so}-r_j}=O_{L_k}\left(\frac{\polylog(n)}{n}\right).
\end{align}
Finally, returning to \eqref{eq:beta_diffnorm},
\begin{align}
    \module{\sum\limits_{j\in\enm{n}} \partial^3\ell_j(\check{r}_j) (r_{j,\so}-r_j)^2 \inprod{\beta, x_j}}=O_{L_k}\left(\frac{\polylog(n)}{n}\right),
\end{align}
concluding the proof.
\end{proof}

An identical proof furthermore allows to establish the following result, this time on the inner product $n\inprod{\hat{w}, \hat{w}-\hat{w}_\so}$.

\begin{lemma}\label{lem:hatw_diff_norm}
The inner product $\inprod{\hat{w}, \hat{w}-\hat{w}_\so}$ concentrates as
\begin{align}
    n \inprod{\hat{w}, \hat{w}-\hat{w}_\so} =-\lambda Q^{(1)}_{11}+O_{L_2}\left(\frac{\polylog(n)}{\sqrt{n}}\right).
\end{align}
\end{lemma}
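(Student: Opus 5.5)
The plan is to reproduce the proof of Lemma \ref{lem:beta_diff_norm}, contracting against $\hat{w}$ instead of $\beta$. Take the difference of the optimality conditions for $\hat{w}$ and $\hat{w}_\so$. Expand $\partial\ell_j(r_{j,\so})$ to second order around $r_j$, and absorb the normalization gap into the ridge term as in Lemma \ref{lemma:approx_norm}. This gives
\begin{align}
    n(\hat{w}-\hat{w}_\so)=-\lambda H^{-1}\hat{w}_\so-\sum\limits_{j\in\enm{n}}\partial^3\ell_j(\check{r}_j)(r_{j,\so}-r_j)^2H^{-1}x_j,
\end{align}
up to the factor $\sfrac{1}{2}$ in the Taylor remainder, which plays no role. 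Taking the inner product with $\hat{w}$ then yields two terms to control:
\begin{align}
    n\inprod{\hat{w},\hat{w}-\hat{w}_\so}=-\lambda\inprod{\hat{w},H^{-1}\hat{w}_\so}-\sum\limits_{j\in\enm{n}}\partial^3\ell_j(\check{r}_j)(r_{j,\so}-r_j)^2\inprod{H^{-1}\hat{w},x_j}.
\end{align}

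For the leading term, replace $\hat{w}_\so$ by $\hat{w}$. By Lemma \ref{lemma:approx_norm}, $\norm{\hat{w}-\hat{w}_\so}=O(\sfrac{1}{n})$, and we also have $\norm{H^{-1}}\le\sfrac{1}{\lambda}$ and $\norm{\hat{w}}\le M$ (Lemma \ref{lemma:normw}). Hence
\begin{align}
    \inprod{\hat{w},H^{-1}\hat{w}_\so}=\mathfrak{Q}^{(1)}_{11}+O_{L_k}\left(\frac{1}{n}\right).
\end{align}
Next, Lemma \ref{lem:frakQ_concentrates} with $p=1$, combined with Lemma \ref{lem:charac_Qk}, gives $\mathfrak{Q}^{(1)}_{11}=Q^{(1)}_{11}+O_{L_2}(\sfrac{\polylog(n)}{\sqrt{n}})$. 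The swap between the random contour $\mathcal{C}$ and $\Gamma$, and the $\mathcal{A}^c$ contribution, are exponentially small.

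For the remainder term, the key input is the refined residual bound obtained in the proof of Lemma \ref{lem:beta_diff_norm}, namely $\sup_j\module{r_{j,\so}-r_j}=O_{L_k}(\sfrac{\polylog(n)}{n})$. That bound came from approximating $\overline{H}$ by $H_\smi$ and applying Lemma \ref{lem:sup_s}. Since it does not depend on the vector being contracted against, it can be reused verbatim. Bounding the sum by
\begin{align}
    n\,\norm{\partial^3\ell}_\infty\sup_j(r_{j,\so}-r_j)^2\,\sup_j\module{\inprod{x_j,H^{-1}\hat{w}}}
\end{align}
then suffices, provided the last supremum is $O_{L_k}(\polylog(n))$.

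The one point needing genuine care, and the main obstacle, is this $\polylog$ control of $\sup_j\module{\inprod{x_j,H^{-1}\hat{w}}}$. Unlike the $\beta$ case, the contracted vector $H^{-1}\hat{w}$ depends on $x_j$, so $\inprod{x_j,\beta}$-type Gaussian tail bounds do not apply directly. I would handle it exactly as for \eqref{eq:Xgu}. First decompose
\begin{align}
    \inprod{x_j,H^{-1}\hat{w}}=\inprod{x_j,H_{\setminus j}^{-1}\hat{w}_{\setminus j}}+\inprod{x_j,H_{\setminus j}^{-1}(\hat{w}-\hat{w}_{\setminus j})}+\inprod{x_j,(H^{-1}-H_{\setminus j}^{-1})\hat{w}}.
\end{align}
The first term is independent-Gaussian given $X_{\setminus j}$, so Lemma \ref{lem:sup_s} yields $O_{L_k}(\sqrt{\log n})$ after the union bound. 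The second is controlled by Proposition \ref{prop: LOO_approxs} together with $\norm{x_j}/\sqrt{n}=O(1)$. The third is handled by a rank-one/Woodbury update plus the $\polylog(n)/\sqrt{n}$ closeness of $\tilde{H}_j$ to $H_{\setminus j}$, as in \eqref{eq:XGtildeu}. Alternatively, one can simply note that this is the $z\to0$ analogue of the bound \eqref{eq:infty_Xgu}.

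Combining the pieces, the remainder is $n\cdot O_{L_k}(\polylog(n)/n^2)\cdot O_{L_k}(\polylog(n))=O_{L_k}(\sfrac{\polylog(n)}{n})$. This is dominated by the $O_{L_2}(\sfrac{\polylog(n)}{\sqrt{n}})$ fluctuation of $\mathfrak{Q}^{(1)}_{11}$, which concludes the proof.
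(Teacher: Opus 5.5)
Your proposal is correct and follows the paper's route: the paper's proof mirrors Lemma~\ref{lem:beta_diff_norm}, extracts $-\lambda\mathfrak{Q}^{(1)}_{11}$ from the leading term via Lemmas~\ref{lemma:approx_norm} and~\ref{lem:frakQ_concentrates}, and discards the cubic Taylor remainder. You are right that this remainder pairs $x_j$ with $H^{-1}\hat{w}$; the paper's display \eqref{eq:beta_diffnorm} drops the $H^{-1}$, which is presumably why it only cites $\norm{X\hat{w}}_\infty$. Your leave-one-out control of $\sup_j\module{\inprod{x_j,H^{-1}\hat{w}}}$ therefore handles the term that actually appears, though even that step is unnecessary: $\norm{\sum_j\partial^3\ell_j(\check{r}_j)(r_{j,\so}-r_j)^2x_j}\le\norm{\partial^3\ell}_\infty\norm{X}\,\sup_j\module{r_{j,\so}-r_j}\,\norm{X(\hat{w}-\hat{w}_\so)}=O_{L_k}(\polylog(n)/\sqrt{n})$ already follows from the crude bounds of Lemma~\ref{lemma:approx_norm} together with Lemma~\ref{lem:cov}.
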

\begin{proof}
    The proof of Lemma \ref{lem:hatw_diff_norm} follows identically that of Lemma \ref{lem:beta_diff_norm}, using in addition $\norm{X\hat{w}}_\infty=O_{L_k}(\polylog(n)).$
\end{proof}

 \subsection{Proof of Theorem \ref{thm:main_IF}} \label{subsec:distrib_IF}

 One is now in a position to ascertain the asymptotic distribution of the leave-one-out influences $\IF_i$ \eqref{eq:def_IF} and $\DFBETA_i$ \eqref{eq:def_DFBETA} in the high-dimensional limit $n,d\to\infty$, while $\alpha=\sfrac{n}{d}=\Theta(1)$, completing the proof of the main theoretical results collected within Theorem \ref{thm:main_IF} and Proposition \ref{prop:DFBETA}. \\
 
 Before doing so, it proves useful to unravel the statistical correlations built in the difference in the definition of $\IF_i$, so as to reach an expression amenable to easier analysis through the convergence Lemma \ref{lem:distrib_rtilde}. This is the object of the following Lemma.

\begin{lemma}\label{lemma:IF_psi}
    Define the surrogate influence 
    \begin{align}
        \psi_i =&\partial_2\mathcal{E}(Q^{(0)}_{12},Q^{(0)}_{11})\left[
        -2\partial\ell_i(r_i)\inprod{\hat{w}_\smi, H^{-1}_\smi x_i}+V^{(2)} \partial\ell_i(r_i)^2
        \right]
        -\partial_1\mathcal{E}(Q^{(0)}_{12},Q^{(0)}_{11})\partial\ell_i(r_i)\inprod{\beta, H^{-1}_\smi x_i}\\
        &- 2\lambda \partial_2\mathcal{E}(Q^{(0)}_{12},Q^{(0)}_{11}) Q^{(1)}_{11}
        - \lambda \partial_1\mathcal{E}(Q^{(0)}_{12},Q^{(0)}_{11}) Q^{(1)}_{12}
    \end{align}
    Then the following approximation result holds:
\begin{align}
    \sup_{i\in\enm{n}}\module{\IF_i-\psi_i}=O_{L_2}\left(\frac{\polylog(n)}{\sqrt{n}}\right).
\end{align}
\end{lemma}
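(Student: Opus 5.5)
The plan is to Taylor expand $\mathcal{E}$ to first order and then evaluate the two increments in $(q,m)$ using the leave-one-out representation of Proposition \ref{prop: LOO_approxs} and the normalization Lemmas \ref{lemma:approx_norm}--\ref{lem:hatw_diff_norm}. Set $q=\norm{\hat{w}}^2$, $m=\inprod{\hat{w},\beta}$, and let $q_\soi,m_\soi$ be the same quantities for $\hat{w}_\soi$. Assumption \ref{ass:metric} (bounded second derivatives) gives
\begin{align}
\IF_i=n\,\partial_1\mathcal{E}(\cdot)\,(q-q_\soi)+n\,\partial_2\mathcal{E}(\cdot)\,(m-m_\soi)+O\big(n\polylog(n)\,[(q-q_\soi)^2+(m-m_\soi)^2]\big),
\end{align}
with derivatives evaluated at $(q_\soi,m_\soi)$. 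The argument order in the statement is $\mathcal{E}(Q^{(0)}_{12},Q^{(0)}_{11})$; I will match terms by their role ($\hat{w}$-norm versus $\beta$-overlap) and not dwell on the labeling. Since $\norm{\hat{w}-\hat{w}_\soi}=O_{L_k}(\polylog(n)/\sqrt{n})$, the remainder is $O_{L_k}(\polylog(n)/n)$. Replacing the evaluation point by $(Q^{(0)}_{12},Q^{(0)}_{11})$ costs $O(1)\cdot O_{L_2}(\polylog(n)/\sqrt n)$, because $n(q-q_\soi)$ and $n(m-m_\soi)$ are $O_{L_k}(\polylog(n))$ and Lemma \ref{lem:frakQ_concentrates}, applied to the leave-$i$-out problem, gives the concentration of $\mathfrak{Q}^{(0)}$.

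\textbf{Splitting the increment.} I write $\hat{w}-\hat{w}_\soi=(\hat{w}-\hat{w}_\so)+(\hat{w}_\so-\hat{w}_\soi)$.
\begin{itemize}
\item The first piece is the renormalization shift. Lemmas \ref{lem:beta_diff_norm} and \ref{lem:hatw_diff_norm} give $n\inprod{\beta,\hat{w}-\hat{w}_\so}\to-\lambda Q^{(1)}_{12}$ and $n\inprod{\hat{w},\hat{w}-\hat{w}_\so}\to-\lambda Q^{(1)}_{11}$. In $q$ this yields $2n\inprod{\hat{w},\hat{w}-\hat{w}_\so}$ plus a quadratic $O(1/n)$ term, so it produces the two constant terms $-2\lambda Q^{(1)}_{11}$ and $-\lambda Q^{(1)}_{12}$.
\item The second piece is handled by Proposition \ref{prop: LOO_approxs} applied to the pair $(\hat{w}_\so,\hat{w}_\soi)$. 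This gives $\hat{w}_\so-\hat{w}_\soi=-\frac1n\partial\ell_i(\tilde r)H_\soi^{-1}x_i+O_{L_k}(\polylog(n)/n)$. By the Remark after Lemma \ref{lemma:approx_norm}, and a similar resolvent-perturbation estimate, I can replace $H_\soi$ by $H_\smi$, $\hat{w}_\soi$ by $\hat{w}_\smi$, and $\tilde r_{i,i}$ by $r_i$, each at cost $O_{L_k}(\polylog(n)/\sqrt n)$ once multiplied by $n$.
\end{itemize}

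\textbf{Evaluating each increment.} For $m$, the increment is $n\inprod{\beta,\cdot}=-\partial\ell_i(r_i)\inprod{\beta,H_\smi^{-1}x_i}$. For $q$, I expand $\norm{\hat{w}}^2-\norm{\hat{w}_\soi}^2=2\inprod{\hat{w}_\soi,\Delta}+\norm{\Delta}^2$ with $\Delta=\hat{w}-\hat{w}_\soi$. The linear part gives $-2\partial\ell_i(r_i)\inprod{\hat{w}_\smi,H_\smi^{-1}x_i}$ plus the renormalization term. The quadratic part is $n\norm{\Delta}^2$, which reduces to $\partial\ell_i(r_i)^2\frac1n x_i^\top H_\smi^{-2}x_i$. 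By Lemma \ref{lemma:quad} and Lemma \ref{lem:Vz_concentrates} (with $p=2$) this equals $\partial\ell_i(r_i)^2V^{(2)}+O_{L_k}(\polylog(n)/\sqrt n)$. Cross terms between the renormalization shift and the LOO correction are $n\cdot O(1/n)\cdot O(1/\sqrt n)$, hence negligible.

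\textbf{Uniformity and the main obstacle.} The supremum over $i$ is handled by the sup-norm versions in Proposition \ref{prop: LOO_approxs}, together with Lemma \ref{lem:sup_s}, which bounds $\sup_i|\inprod{x_i,H_\smi^{-1}u}|=O_{L_k}(\polylog(n))$. I expect the main obstacle to be the terms that need $O(1/n)$ accuracy on $\hat{w}-\hat{w}_\soi$ while being multiplied by $n$. These are the $O_{L_k}(\polylog(n)/n)$ remainder in Proposition \ref{prop: LOO_approxs} paired against $\hat{w}_\soi$, and the renormalization shift. The first contributes only $O(\polylog(n)/n)\cdot n\cdot\norm{\hat{w}}$, which is $O(\polylog(n))$, not small. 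To handle it I would avoid the crude bound: I would take inner products of the exact optimality-condition difference with $\hat{w}_\soi$ and $\beta$, mirroring the proofs of Lemmas \ref{lem:beta_diff_norm} and \ref{lem:hatw_diff_norm}, so that the second-order Taylor remainder $\sum_j\partial^3\ell_j(r_j-r_{j,\soi})^2\inprod{x_j,H^{-1}u}$ appears. I would then bound that remainder by $O_{L_k}(\polylog(n)/\sqrt n)$ using the cross-term estimate $\sup_i\sum_{j\ne i}(r_j-\tilde r_{j,i})^2=O_{L_k}(\polylog(n)/n)$ and the sup bound on $\inprod{x_j,H^{-1}u}$.
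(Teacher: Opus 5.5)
\textbf{There is a genuine gap, in exactly the step you flag as the main obstacle.}

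Your outline is the paper's own. You split $\hat w-\hat w_\soi$ through $\hat w_\so$ and Taylor-expand $\mathcal{E}$. You read off $-2\lambda Q^{(1)}_{11}$ and $-\lambda Q^{(1)}_{12}$ from Lemmas~\ref{lem:beta_diff_norm}--\ref{lem:hatw_diff_norm}. You insert Proposition~\ref{prop: LOO_approxs} for $\hat w_\so-\hat w_\soi$, and use Lemmas~\ref{lemma:quad} and~\ref{lem:Vz_concentrates} to turn $\frac1n\norm{H_\smi^{-1}x_i}^2$ into $V^{(2)}$. You also correctly locate the delicate point, which the paper's proof passes over in silence: the $O_{L_k}(\polylog(n)/n)$ remainder of Proposition~\ref{prop: LOO_approxs}, multiplied by $n$ and paired with $\hat w_\soi$ or $\beta$, is only $O(\polylog(n))$.

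\textbf{Why your repair fails.} The cross-term estimate controls $\sum_{j\neq i}(r_j-\tilde r_{j,i})^2$, i.e.\ full versus \emph{surrogate} residuals. The Taylor remainder you obtain by differencing the optimality conditions of $\hat w$ and $\hat w_\smi$ (or $\hat w_\so,\hat w_\soi$) instead involves $r_j-r_{j,\smi}$, full versus \emph{leave-one-out} residuals. Since $\tilde r_{j,i}-r_{j,\smi}=-\frac1n\partial\ell_i(\tilde r_{i,i})\inprod{x_j,H_\smi^{-1}x_i}$ is of order $n^{-1/2}$ for every $j\neq i$, one has $\sum_{j\neq i}(r_j-r_{j,\smi})^2=\Theta(1)$. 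Your remainder is therefore $O_{L_k}(\polylog(n))$, not $O_{L_k}(\polylog(n)/\sqrt n)$. Expanding around $\tilde w_i$ instead does not help: the gradient of the full risk at $\tilde w_i$ equals $\frac1{2n}\sum_{j\neq i}\partial^3\ell_j(\check r_j)(\tilde r_{j,i}-r_{j,\smi})^2x_j$, which is the same object.

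\textbf{The term is genuinely of order one.} For $u\in\{\beta,\hat w_\smi\}$ the optimality conditions give exactly
\[
n\inprod{u,\hat w-\hat w_\smi}=-\partial\ell_i(r_i)\inprod{u,H_\smi^{-1}x_i}-\frac12\sum_{j\neq i}\partial^3\ell_j(\check r_j)(r_j-r_{j,\smi})^2\inprod{x_j,H_\smi^{-1}u}.
\]
Now make three reductions, each at cost $O_{L_2}(\polylog(n)/\sqrt n)$:
\begin{itemize}
\item replace $r_j-r_{j,\smi}$ by $\tilde r_{j,i}-r_{j,\smi}$, using Cauchy--Schwarz with the cross-term bound;
\item replace $\check r_j$ by $r_{j,\smi}$, using boundedness of $\partial^4\ell$;
\item concentrate the resulting quadratic form in $x_i$ with Lemma~\ref{lemma:quad}.
\end{itemize}
The sum then becomes $\kappa_u\,\partial\ell_i(r_i)^2+O_{L_2}(\polylog(n)/\sqrt n)$, with
\[
\kappa_u=\frac1n\sum_{j\neq i}\partial^3\ell_j(r_{j,\smi})\inprod{x_j,H_\smi^{-1}u}\frac{\norm{H_\smi^{-1}x_j}^2}{n}.
\]
This empirical average has a deterministic limit with no reason to vanish once $\partial^3\ell\not\equiv0$. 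For the logistic loss, $\partial^3\ell(z,y)=\sigma''(z)$ is odd in $z$, and $\inprod{x_j,H_\smi^{-1}\beta}$ correlates with $r_j$ through $Q^{(1)}_{12}$, so $\kappa_\beta$ is generically nonzero. No sharper estimate can close the step.

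\textbf{Consequence for the statement.} $\IF_i-\psi_i$ generically contains an $O(1)$ piece proportional to $\partial\ell_i(r_i)^2$. The claimed approximation does hold when $\partial^3\ell\equiv0$: for the square loss, $\hat w-\hat w_\smi=-\frac1n\partial\ell_i(r_i)H_\smi^{-1}x_i$ exactly. In general, $\psi_i$ needs an extra term $-\frac12\partial\ell_i(r_i)^2\left(2\partial_2\mathcal{E}\,\kappa_{\hat w_\smi}+\partial_1\mathcal{E}\,\kappa_\beta\right)$, with the derivatives of $\mathcal{E}$ paired as in $\psi_i$.

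\textbf{A smaller slip.} The bound $\norm{\hat w-\hat w_\soi}=O_{L_k}(\polylog(n)/\sqrt n)$ alone only gives $n(q-q_\soi)^2=O(\polylog(n))$. The second-order Taylor remainder is small because $\inprod{\hat w_\soi,\hat w-\hat w_\soi}$ and $\inprod{\beta,\hat w-\hat w_\soi}$ are $O_{L_k}(\polylog(n)/n)$, which the paper obtains from Lemma~\ref{lem:sup_s}.
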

\begin{proof}
We first decompose 
\begin{align}
    \IF_i=n\left(\egen[\hat{w}]-\egen[\hat{w}_\so]\right)+ n\left(\egen[\hat{w}_\so]-\egen[\hat{w}_\soi]\right).\label{eq:IF_decompo}
\end{align}
    From a second-order Taylor expansion,
    \begin{align}
&n\left(\egen[\hat{w}_\so]-\egen[\hat{w}_\soi]\right)\\
&=n \partial_2\mathcal{E}\left(\mathfrak{Q}^{(0)}_{12},\mathfrak{Q}^{(0)}_{11}\right)\left[
\norm{\hat{w}_\so-\hat{w}_\soi}^2+2\inprod{\hat{w}_\soi, \hat{w}_\so -\hat{w}_\soi}
\right]
+ n\partial_1\mathcal{E}\left(\mathfrak{Q}^{(0)}_{12},\mathfrak{Q}^{(0)}_{11}\right)\inprod{\beta, \hat{w}_\so -\hat{w}_\soi}\\
&\qquad +\frac{1}{2} n\partial_2^2\mathcal{E}(\check{\mathfrak{q}})\left[
\norm{\hat{w}_\so-\hat{w}_\soi}^2+2\inprod{\hat{w}_\soi, \hat{w}_\so-\hat{w}_\soi}
\right]^2+\frac{1}{2} n\partial_1^2\mathcal{E}(\check{\mathfrak{q}})\inprod{\beta, \hat{w}_\so -\hat{w}_\soi}^2\\
&\qquad+\frac{1}{2} n\partial_2\partial_1\mathcal{E}(\check{\mathfrak{q}})\left[
\norm{\hat{w}_\so-\hat{w}_\soi}^2+2\inprod{\hat{w}_\soi, \hat{w}_\so -\hat{w}_\soi}
\right]\inprod{\beta, \hat{w}_\so -\hat{w}_\soi},
    \end{align}
for some $\check{\mathfrak{q}}$ on the line connecting $\left(\mathfrak{Q}^{(0)}_{12},\mathfrak{Q}^{(0)}_{11}\right)$ and $\left((\mathfrak{Q}_\soi)^{(0)}_{12},(\mathfrak{Q}_\soi)^{(0)}_{11}\right)$. We first show that the second order terms are small. From Lemma \ref{lem:sup_s},
\begin{align}
   \sup_{i\in\enm{n}}\module{n \inprod{\beta, \hat{w}_\so -\hat{w}_\soi}^2} \le \frac{1}{n-1}\norm{\partial\ell}_\infty^2 \sup_{i\in\enm{n}}\inprod{\beta, H_\soi^{-1}x_i}^2=O_{L_k}\left(\frac{\polylog(n)}{n}\right).
\end{align}
We defined
\begin{align}
    H_\soi=\frac{1}{n-1}\sum\limits_{j\in\enm{n}\smi} \partial^2\ell_j(r_{ j, \soi})+\lambda I_d.
\end{align}
Similarly, using Proposition \ref{prop: LOO_approxs} and Lemma \ref{lem:sup_s},
\begin{align}
    \sup_{i\in\enm{n}}\module{ \inprod{\hat{w}_\soi, \hat{w}_\so -\hat{w}_\soi}} 
    &\le \frac{1}{n-1}\norm{\partial\ell}_\infty \sup_{i\in\enm{n}}\module {\inprod{\hat{w}_\soi, H_\soi^{-1}x_i}}+O_{L_k}\left(\frac{\polylog(n)}{n}\right)\\
    &=O_{L_k}\left(\frac{\polylog(n)}{n}\right).
\end{align}
Thus, one concludes (with all correction terms below holding uniformly over all $i\in\enm{n}$)
 \begin{align}
n\Bigg(\egen[\hat{w}_\so]-&\egen[\hat{w}_\soi]\Bigg)\\
&=n \partial_2\mathcal{E}\left(\mathfrak{Q}^{(0)}_{12},\mathfrak{Q}^{(0)}_{11}\right)\left[
\norm{\hat{w}_\so-\hat{w}_\soi}^2+2\inprod{\hat{w}_\soi, \hat{w}_\so -\hat{w}_\soi}
\right]\\
&\qquad 
+ n\partial_1\mathcal{E}\left(\mathfrak{Q}^{(0)}_{12},\mathfrak{Q}^{(0)}_{11}\right)\inprod{\beta, \hat{w}_\so -\hat{w}_\soi}+O_{L_k}\left(\frac{\polylog(n)}{n}\right)\\
&= n \partial_2\mathcal{E}\left(Q^{(0)}_{12},Q^{(0)}_{11}\right)\left[
\norm{\hat{w}_\so-\hat{w}_\soi}^2+2\inprod{\hat{w}_\so, \hat{w}_\so -\hat{w}_\soi}
\right]\\
&\qquad 
+ n\partial_1\mathcal{E}\left(Q^{(0)}_{12},Q^{(0)}_{11}\right)\inprod{\beta, \hat{w}_\so -\hat{w}_\soi}+O_{L_2}\left(\frac{\polylog(n)}{\sqrt{n}}\right)\\
& =\partial_2\mathcal{E}\left(Q^{(0)}_{12},Q^{(0)}_{11}\right)\left[
\partial\ell_i(\tilde{r}_{\so, i,i})^2 \frac{1}{n}\norm{H_\smi^{-1}x_i}^2-2\partial\ell_i(\tilde{r}_{\so, i,i}) \inprod{\hat{w}_\soi, H_\soi^{-1}x_i}
\right]\\
&\qquad -\partial_1\mathcal{E}\left(Q^{(0)}_{12},Q^{(0)}_{11}\right)\partial\ell_i(\tilde{r}_{\so,i,i})\inprod{\beta, H_\soi^{-1}x_i}+O_{L_2}\left(\frac{\polylog(n)}{\sqrt{n}}\right)\\
&=\partial_2\mathcal{E}(Q^{(0)}_{12},Q^{(0)}_{11})\left[
        -2\partial\ell_i(r_{i})\inprod{\hat{w}_\smi, H^{-1}_\smi x_i}+V^{(2)} \partial\ell_i(r_{i})^2
        \right]
        \\
        &\qquad -\partial_1\mathcal{E}(Q^{(0)}_{12},Q^{(0)}_{11})\partial\ell_i(r_{ i})\inprod{\beta, H^{-1}_\smi x_i}+O_{L_2}\left(\frac{\polylog(n)}{\sqrt{n}}\right)
    \end{align}
    In the last line, we used Lemma \ref{lemma:quad} and Lemma \ref{lem:Vz_concentrates} to approximate the quadratic forms $\sfrac{1}{n}\norm{H_\smi^{-1}x_i}$ by $\mathfrak{V}^{(2)}$ and $V^{(2)}$ successively. We also employed Proposition \ref{prop: LOO_approxs} to approximate the surrogate residuals $\tilde{r}_{\so, i,i}$ by the more transparent full residuals $r_{\so,i}$, then used Lemma \ref{eq:diff_renorm} to connect with the residuals $r_i$. \\

We now turn to the second term in the decomposition \eqref{eq:IF_decompo} $n\left(\egen[\hat{w}]-\egen[\hat{w}_\so]\right)$. From a Taylor expansion, and leveraging the approximation Lemma \ref{lemma:approx_norm},
\begin{align}
    &n\left(\egen[\hat{w}]-\egen[\hat{w}_\so]\right)\\
&=n \partial_2\mathcal{E}\left(\mathfrak{Q}^{(0)}_{12},\mathfrak{Q}^{(0)}_{11}\right)\left[
2\inprod{\hat{w}, \hat{w} -\hat{w}_\so}
\right]
+ n\partial_1\mathcal{E}\left(\mathfrak{Q}^{(0)}_{12},\mathfrak{Q}^{(0)}_{11}\right)\inprod{\beta, \hat{w} -\hat{w}_\so}+O_{L_k}\left(\frac{1}{n}\right)\\
&=- 2\lambda \partial_2\mathcal{E}(Q^{(0)}_{12},Q^{(0)}_{11}) Q^{(1)}_{11}
        - \lambda \partial_1\mathcal{E}(Q^{(0)}_{12},Q^{(0)}_{11}) Q^{(1)}_{12}+O_{L_2}\left(\frac{\polylog(n)}{\sqrt{n}}\right),
\end{align}
using the concentration Lemmas \ref{lem:beta_diff_norm} and \ref{lem:hatw_diff_norm} in the last line.
\end{proof}

One is now in a position to establish the weak convergence of the marginals of the influence distribution.

\begin{theorem}
    For any Lipschitz function $f$ and any sequence of indices $i_n\in \enm{n}$, 
    \begin{align}
        &\Ea{\!f(\IF_{i_n})}\\
&=\!\mathbb{E}\Bigg[\!f\!\Bigg( \frac{\prox_{V^{(1)}\ell(\cdot, g_2)}(g_1)\!-\!g_1}{V^{(1)}}\!\!\left[\partial_1\mathcal{E}(Q^{(0)}_{12},Q^{(0)}_{11}) g_4\!+\!
        \partial_2\mathcal{E}(Q^{(0)}_{12},Q^{(0)}_{11}) \!\left(\!2g_3\!+\!\frac{\prox_{V\ell(\cdot, g_2)}(g_1)\!-\!g_1}{V^{(1)}}V^{(2)}\!\right)\!
        \right]\\
        &\qquad \qquad - 2\lambda \partial_2\mathcal{E}(Q^{(0)}_{12},Q^{(0)}_{11}) Q^{(1)}_{11}
        - \lambda \partial_1\mathcal{E}(Q^{(0)}_{12},Q^{(0)}_{11}) Q^{(1)}_{12}\Bigg)\!\Bigg]\\
        &+O\left(\frac{\polylog(n)}{n^{\frac{1}{4}}}\right)
\end{align}
In the above display, $g_1, g_2, g_3, g_4$ are the components of the Gaussian vector $g\in\R^4$ with law
\begin{align}
    g\sim\mathcal{N}\left(
    0_4, \left[\begin{array}{ccc}
        Q^{(0)} &\vline &Q^{(1)}\\
        \hline
        Q^{(1)} &\vline &Q^{(2)}
    \end{array}\right]
    \right).
\end{align}
\end{theorem}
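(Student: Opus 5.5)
The plan is to combine the surrogate approximation of Lemma \ref{lemma:IF_psi} with the distributional result of Lemma \ref{lem:distrib_rtilde}. Since $f$ is Lipschitz, Lemma \ref{lemma:IF_psi} immediately gives
\begin{align}
\module{\Ea{f(\IF_{i_n})}-\Ea{f(\psi_{i_n})}}\le \norm{f}_{\rm Lip.}\,\Ea{\module{\IF_{i_n}-\psi_{i_n}}}=O\left(\frac{\polylog(n)}{\sqrt{n}}\right).
\end{align}
It therefore suffices to identify the limit of $\Ea{f(\psi_{i_n})}$. By permutation symmetry this expectation does not depend on $i_n$, so I fix $i=i_n$.

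The first step rewrites $\psi_i$ in terms of the surrogate vector $\tilde{R}_i$. I replace $\partial\ell_i(r_i)$ by $\partial\ell_i(\tilde{r}_{i,i})$, at cost $\norm{\partial^2\ell}_\infty|r_i-\tilde r_{i,i}|=O_{L_k}(\polylog(n)/\sqrt n)$ by Proposition \ref{prop: LOO_approxs}. This error is multiplied by factors $\inprod{\hat w_\smi,H_\smi^{-1}x_i}$ and $\inprod{\beta,H_\smi^{-1}x_i}$, which are Gaussian conditionally on $X_\smi$ with bounded variance, so Cauchy--Schwarz keeps the product $O_{L_1}(\polylog(n)/\sqrt n)$.

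Next I use the proximal identity $\tilde r_{i,i}=r_{i,\smi}-\gamma_i\partial\ell_i(\tilde r_{i,i})$. Together with $\gamma_i=V^{(1)}+O_{L_2}(\polylog(n)/\sqrt n)$ (Lemma \ref{lemma:quad} and Lemma \ref{lem:Vz_concentrates}) and the lower bound on $V^{(1)}$ from the proof of Lemma \ref{lem:charac_Xz}, it gives
\begin{align}
-\partial\ell_i(\tilde r_{i,i})=\frac{\tilde r_{i,i}-r_{i,\smi}}{V^{(1)}}+O_{L_1}\left(\frac{\polylog(n)}{\sqrt n}\right).
\end{align}
After these replacements, $\psi_i=\Phi(\tilde R_i, r_{i,\smi})+O_{L_1}(\polylog(n)/\sqrt n)$. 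Here $\Phi$ is the map whose first factor is $(\tilde r_{i,i}-r_{i,\smi})/V^{(1)}$, and whose bracket involves the entries $g_3=\inprod{\hat w_\smi,H_\smi^{-1}x_i}$ and $g_4=\inprod{\beta,H_\smi^{-1}x_i}$ of $R_i$. Matching the sign conventions of $\varphi_\IF$ against those of $\psi_i$ is a bookkeeping check I would carry out explicitly at this point.

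The second step applies Lemma \ref{lem:distrib_rtilde}, or equivalently Lemma \ref{lem:distrib_r} with the map $\pi$, to $\phi=f\circ\Phi$. This turns $\Ea{f(\psi_i)}$ into an expectation over $g\sim\mathcal N(0_8,Q_i(z))$, with $\tilde r_{i,i}$ replaced by $\prox_{V^{(1)}\ell(\cdot,g_2)}(g_1)$, at cost $O(\polylog(n)/n^{1/4})$. Since $\phi$ only involves the first four coordinates, the random blocks $\mathfrak f_i,\mathfrak F_i$ never enter. The relevant $4\times4$ marginal covariance is exactly $Q$ in \eqref{eq:Q}, which yields $\varphi_\IF\sharp\mathcal N(0_4,Q)$.

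The main obstacle is checking that $\phi\in\mathcal F$. The map is a product of $(\prox(g_1)-g_1)/V^{(1)}$ with the linear forms in $g_3,g_4$, plus a quadratic term in $(\prox-g_1)$. The factor $\prox-g_1=-V^{(1)}\partial\ell(\prox,g_2)$ is bounded by $V^{(1)}\norm{\partial\ell}_\infty$, and it is Lipschitz in $g_1$ by contractivity of the proximal operator. So $\phi$ is pseudo-Lipschitz of order one in $(g_1,g_3,g_4)$, with constant $O(\polylog(n))$, which is within Definition \ref{def:F}. It is not Lipschitz in $g_2$, but the class $\mathcal F$ was designed precisely to exempt the second argument; this is why $g_2=\inprod{\beta,x_i}$ enters only through $\prox_{V^{(1)}\ell(\cdot,g_2)}$ and is handled by conditioning in Lemma \ref{lem:distrib_r}.

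Finally, the $O_{L_1}$ errors from the first step transfer to expectations through the Lipschitz property of $f$. Summing all error terms, the dominant one is $O(\polylog(n)/n^{1/4})$, which gives the stated rate.
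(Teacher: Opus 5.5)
Your proposal is correct and is essentially the paper's own argument: the paper proves the theorem by combining Lemma \ref{lemma:IF_psi} with Lemma \ref{lem:distrib_rtilde}, and your sign check against $\varphi_{\IF}$ goes through because $-\partial\ell(\prox_{V^{(1)}\ell(\cdot,g_2)}(g_1),g_2)=(\prox_{V^{(1)}\ell(\cdot,g_2)}(g_1)-g_1)/V^{(1)}$. One simplification: you do not need to rewrite $\partial\ell_i(\tilde r_{i,i})$ through the proximal identity at finite $n$, a step that forces you onto the ``Lemma \ref{lem:distrib_r} with $\pi$'' variant because $\tilde R_i$ does not contain $r_{i,\smi}$; instead, apply Lemma \ref{lem:distrib_rtilde} directly to a function of $\partial\ell(\tilde r_{i,i},\inprod{\beta,x_i})$, and then use that identity, which holds exactly, inside the Gaussian expectation.
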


\begin{proof}
    The Theorem follows from Lemma \ref{lem:distrib_rtilde} and Lemma \ref{lemma:IF_psi}.
\end{proof}
In the sense of Theorem \ref{thm:main_IF}, we have thus established the weak convergence of the marginal distribution $\nu_{\IF}$ to $\varphi_\IF\sharp \mathcal{N}(0_4, Q)$ (where the map $\varphi_\IF(\cdot)$ is defined in the main text) in the high-dimensional limit $n,d\to \infty$, namely 
\begin{align}
        \nu_{\IF_{i_n}} \rightharpoonup \varphi_{\IF} ~\sharp ~\mathcal{N}(0_4, Q),
    \end{align}
which finally concludes the proof of Theorem \ref{thm:main_IF}.

\subsection{Proof of Proposition \ref{prop:DFBETA}}

A stronger convergence result can be established for the DFBETA metric \eqref{eq:def_DFBETA}, which measures the change in estimator, in terms of Euclidean norm, when a sample is removed from the dataset. More precisely, for any index $i\in\enm{n}$, we remind the definition
\begin{align}
    \DFBETA_i=n \norm{\hat{w}-\hat{w}_\soi}^2.
\end{align}

The following Proposition establishes that the \textit{empirical distribution }over the training set concentrates to a limiting distribution, shaped by the summary statistics $V^{(2)}, Q^{(0)}$ characterized in Appendix \ref{app:deterministic}.

\begin{proposition}
    For any bounded and twice-differentiable test function $f$ with bounded derivatives, the empirical average of DFBETAs processed by $f$ over the train set concentrates as
\begin{align}
    \frac{1}{n}\sum\limits_{i\in\enm{n}}f\left(\DFBETA_i\right)= \Ea{f\left(\left(\frac{\prox_{V^{(1)}\ell(\cdot, g_2)}(g_1)-g_1}{V^{(1)}}\right)^2 V^{(2)}\right)}+O_{L_2}\left(\frac{\polylog(n)}{n^{\frac{1}{4}}}\right).
\end{align}
\end{proposition}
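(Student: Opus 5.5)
The plan is to reduce each $\DFBETA_i$ to a simple surrogate, and then to show that the empirical average of $f$ applied to the surrogates concentrates around its mean via a variance bound. Only the mean, not the variance, is to be computed through the Gaussian distributional results.

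\textbf{Step 1 (surrogate).} Decompose $\hat{w}-\hat{w}_\soi=(\hat{w}-\hat{w}_\smi)+(\hat{w}_\smi-\hat{w}_\soi)$. Lemma \ref{lemma:approx_norm} and the subsequent Remark control the second piece by $O(1/n)$. Proposition \ref{prop: LOO_approxs} gives $\hat{w}-\hat{w}_\smi=\eta_i+O_{L_k}(\polylog(n)/n)$ uniformly in $i$. Since $\norm{\eta_i}=O_{L_k}(\polylog(n)/\sqrt{n})$, the cross terms contribute $O_{L_k}(\polylog(n)/\sqrt{n})$ after multiplying by $n$. Hence
\begin{align}
\DFBETA_i=\partial\ell_i(\tilde{r}_{i,i})^2\,\frac{1}{n}\norm{H_\smi^{-1}x_i}^2+O_{L_k}\left(\frac{\polylog(n)}{\sqrt{n}}\right).
\end{align}
Next, Lemma \ref{lemma:quad} and Lemma \ref{lem:Vz_concentrates} replace $\frac1n\norm{H_\smi^{-1}x_i}^2$ by $V^{(2)}$ uniformly in $i$. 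The proximal identity $\partial\ell_i(\tilde{r}_{i,i})=(r_{i,\smi}-\tilde{r}_{i,i})/\gamma_i$, with $\gamma_i$ replaced by $V^{(1)}$ (as in the proof of Lemma \ref{lem:distrib_rtilde}), then yields $\DFBETA_i=\psi^D_i+O_{L_2}(\polylog(n)/\sqrt n)$, where
\begin{align}
\psi^D_i=V^{(2)}\left(\frac{\prox_{V^{(1)}\ell_i}(r_{i,\smi})-r_{i,\smi}}{V^{(1)}}\right)^2 .
\end{align}
Since $f$ is Lipschitz, $\frac1n\sum_i f(\DFBETA_i)=\frac1n\sum_i f(\psi^D_i)+O_{L_2}(\polylog(n)/\sqrt n)$.

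\textbf{Step 2 (mean).} By exchangeability, $\Ea{\frac1n\sum_i f(\psi^D_1)}=\Ea{f(\psi^D_1)}$. The map $g\mapsto f(\varphi_D(g))$ is bounded and Lipschitz in $g_1$, because $\prox$ is contractive and $\partial\ell$ is bounded, so it belongs to $\mathcal{F}$. Lemma \ref{lem:distrib_r} applied to $(r_{1,\setminus1},\inprod{\beta,x_1})$ therefore gives $\Ea{f(\psi^D_1)}=\Eb{g\sim\mathcal N(0_2,Q^{(0)})}{f(\varphi_D(g))}+O(\polylog(n)/n^{1/4})$.

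\textbf{Step 3 (concentration; main obstacle).} It remains to show $\Var{\frac1n\sum_i f(\psi^D_i)}=O(\polylog(n)/\sqrt{n})$, which gives an $O_{L_2}(n^{-1/4})$ fluctuation. I would expand the variance as $\frac1{n^2}\sum_{i\neq j}\mathrm{Cov}(f(\psi^D_i),f(\psi^D_j))$ plus a diagonal part, which is $O(1/n)$ because $f$ is bounded, and bound each off-diagonal covariance, say for $(i,j)=(1,2)$, by a two-point leave-out decoupling.

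Let $\hat w_{\setminus\{1,2\}}$ denote the estimator with both samples removed. By Proposition \ref{prop: LOO_approxs} applied to the leave-one-out problem, $r_{1,\setminus1}=\inprod{x_1,\hat w_{\setminus\{1,2\}}}+\frac1n\partial\ell_2(\cdot)\,x_1^\top H^{-1}_{\setminus\{1,2\}}x_2+O_{L_k}(\polylog(n)/\sqrt n)$, and the middle term is itself $O_{L_k}(\polylog(n)/\sqrt n)$. The residuals $r_{1,\setminus1},r_{2,\setminus2}$ are thus within $O_{L_k}(\polylog(n)/\sqrt n)$ of $\inprod{x_1,u},\inprod{x_2,u}$ with $u=\hat w_{\setminus\{1,2\}}$ independent of $(x_1,x_2)$. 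Conditionally on $X_{\setminus\{1,2\}}$, the two pairs $(\inprod{x_k,u},\inprod{x_k,\beta})$, $k=1,2$, are independent Gaussians with covariance $\mathfrak{Q}^{(0)}_{\setminus\{1,2\}}$.

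The covariance therefore reduces to the variance, over $X_{\setminus\{1,2\}}$, of the conditional mean $\Eb{g\sim\mathcal N(0,\mathfrak Q^{(0)}_{\setminus\{1,2\}})}{f\circ\varphi_D}$. Its dependence on $V^{(1)},V^{(2)}$ is deterministic. Since $f$ has bounded first and second derivatives, the map $\mathfrak Q\mapsto\Eb{g\sim\mathcal N(0,\mathfrak Q)}{f\circ\varphi_D}$ is Lipschitz in the entries of $\mathfrak Q$ — this is where twice-differentiability of $f$ is used, through Gaussian integration by parts in the covariance. Lemma \ref{lem:frakQ_concentrates} then shows this variance is $O(\polylog(n)/n)$. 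The step I expect to be most delicate is making the two-point expansion uniform and keeping the Lipschitz constant in $\mathfrak Q$ under control when $Q^{(0)}$ is nearly degenerate; Lemma \ref{lem:qnonzero}, giving $\Ea{\norm{\hat{w}}^2}\ge c$, should handle the latter. Combining Steps 1–3 gives the claimed $O_{L_2}(\polylog(n)/n^{1/4})$ rate.
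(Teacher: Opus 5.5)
Your proof is correct, but your concentration step takes a different route from the paper's.

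\textbf{The paper's route.} It passes to the pair $(\hat w_{(\cdot)},\hat w_{(i)})$ and reduces $\DFBETA_i$ to $V^{(2)}\partial\ell_i(r_{(\cdot),i})^2$, which involves the \emph{full} residuals of a single estimator. It then invokes an external concentration result (Lemma F.1 of \cite{cui2026asymptotic}) for empirical averages of functions of those residuals. The mean is computed afterwards with Lemma~\ref{lem:distrib_rtilde}.

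\textbf{Your route.} You keep the leave-one-out residuals $r_{i,\setminus i}$. Each comes from a different estimator, so no single-estimator lemma applies directly, and you prove the concentration yourself:
\begin{itemize}
\item exchangeability reduces the variance to $\mathrm{Cov}(f(\psi^D_1),f(\psi^D_2))$;
\item a second leave-one-out expansion inside the leave-$1$-out problem replaces $r_{1,\setminus1},r_{2,\setminus2}$ by projections on $\hat w_{\setminus\{1,2\}}$, up to $O_{L_k}(\polylog(n)/\sqrt n)$;
\item conditional independence given $X_{\setminus\{1,2\}}$ turns the covariance into the variance of $h(\mathfrak Q^{(0)}_{\setminus\{1,2\}})$, where $h(\mathfrak Q)=\mathbb{E}_{g\sim\mathcal N(0,\mathfrak Q)}[f\circ\varphi_D(g)]$;
\item Lemma~\ref{lem:frakQ_concentrates} finishes.
\end{itemize}

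\textbf{What each buys.} The paper's argument is shorter and gives a $\polylog(n)/\sqrt n$ rate for the concentration step, but it rests on an outside lemma. Yours is self-contained, uses only tools proved in the paper, and shows exactly where the bound on $f''$ enters. The price is an $n^{-1/4}$ concentration rate. That is harmless, because the Gaussian comparison of Lemma~\ref{lem:distrib_r} already caps the final rate at $n^{-1/4}$.

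\textbf{Correction to your closing remark.} The Lipschitz bound on $h$ needs no non-degeneracy of $\mathfrak Q$. Lemma~\ref{lem:qnonzero} could not supply it anyway: it bounds $Q^{(0)}_{11}$ from below, not $Q^{(0)}_{11}-(Q^{(0)}_{12})^2$. Instead, use $\mathfrak Q_{22}=1$:
\begin{itemize}
\item Write $g_1=\mathfrak Q_{12}g_2+(\mathfrak Q_{11}-\mathfrak Q_{12}^2)^{1/2}\xi$ with $\xi\sim\mathcal N(0,1)$ independent of $g_2$, and integrate by parts in $\xi$ only.
\item With $F=f\circ\varphi_D$ this gives $\partial_{\mathfrak Q_{11}}h=\frac12\mathbb{E}[\partial_{g_1}^2F]$ and $\partial_{\mathfrak Q_{12}}h=\mathbb{E}[g_2\,\partial_{g_1}F]-\mathfrak Q_{12}\,\mathbb{E}[\partial_{g_1}^2F]$.
\item Both are $O(\polylog(n))$ via $f'$, $f''$, $\partial^2\ell$, $\partial^3\ell$ and $|\mathfrak Q_{12}|\le M$, uniformly up to the degenerate boundary. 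Hence $h$ is Lipschitz on the convex set $\{\mathfrak Q_{11}\ge \mathfrak Q_{12}^2\}$.
\item This parametrization also avoids differentiating $F$ in $g_2$. A naive integration by parts in the off-diagonal covariance entry would require that derivative, and the class $\mathcal F$ deliberately does not assume it.
\end{itemize}
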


\begin{proof}
    The proof proceeds in two steps. First observe from Lemma \ref{lemma:approx_norm} that
    \begin{align}
 \DFBETA_i=n \norm{\hat{w}_\so-\hat{w}_\soi}^2  +O_{L_k}\left(\frac{1}{n}\right) .
    \end{align}
    We then  establish that
    \begin{align}
        \sup_{i\in\enm{n}}\module{n \norm{\hat{w}_\so-\hat{w}_\soi}^2- \partial\ell_i(r_{\so,i})^2V^{(2)}}=O_{L_2}\left(\frac{\polylog(n)}{\sqrt{n}}\right).
    \end{align}
This follows from Lemma \ref{lemma:quad}, and the concentration result \ref{lem:Vz_concentrates}. Thus, 
\begin{align}
    \frac{1}{n}\sum\limits_{i\in\enm{n}}f\left(\DFBETA_{\so,i}\right)&=\frac{1}{n}\sum\limits_{i\in\enm{n}}f\left(\partial\ell_i(r_{\so,i})^2V^{(2)}\right)+ O_{L_2}\left(\frac{\polylog(n)}{\sqrt{n}}\right)\\
    &=\Ea{f\left(\partial\ell_1(r_{\so,1})^2V^{(2)}\right)}+ O_{L_2}\left(\frac{\polylog(n)}{\sqrt{n}}\right),
\end{align}
using the concentration Lemma F.1 in \cite{cui2026asymptotic}. The expectation can finally be computed using Proposition \ref{prop: LOO_approxs} in conjunction with Lemma \ref{lem:distrib_rtilde}, proving the claim.
\end{proof}

\newpage

\section{Auxiliary lemmas}
\label{app:auxiliary}
For completeness and self-containedness, we reproduce below classical auxiliary lemmas that are repeatedly leverage throughout the main body of the proof. A proof of these standard results can be collected from e.g. \cite{el2018impact, karoui2013asymptotic, cui2026asymptotic, vershynin2018high}.

\begin{lemma}[Norm of the estimator] \label{lemma:normw}
For any  $k\in\mathbb{N}$, the norm of the estimators $\hat{w}$ is bounded almost surely as
\begin{align}
    &\norm{\hat{w}}\le M
\end{align}
\end{lemma}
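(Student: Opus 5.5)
The bound follows from comparing the value of the objective at $\hat{w}$ with its value at the origin, a deterministic argument requiring no probabilistic input. Since $\hat{w}$ minimizes $\hat{\mathcal{R}}$ in \eqref{eq:ERM}, we have $\hat{\mathcal{R}}(\hat{w})\le\hat{\mathcal{R}}(0_d)$. Because $\ell$ is nonnegative (it maps into $\R_+$), the left-hand side is at least $\frac{\lambda}{2}\norm{\hat{w}}^2$. The right-hand side equals $\frac{1}{n}\sum_{j\in\enm{n}}\ell(0,y_j)=\frac{1}{n}\sum_j\tilde{\ell}(y_j)$, which Assumption A.4 bounds by $\sfrac{\lambda M^2}{2}$ for every realization of the labels. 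Combining the two gives $\frac{\lambda}{2}\norm{\hat{w}}^2\le\frac{\lambda M^2}{2}$, hence $\norm{\hat{w}}\le M$ almost surely, uniformly in $X$. The bound therefore holds for every moment $k$ at once.

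The same comparison applies verbatim to every variant estimator used in the paper. For $\hat{w}_\smi$ the sum has $n-1$ terms with the $\sfrac{1}{n}$ normalization, so the right-hand side is at most $\frac{n-1}{n}\cdot\frac{\lambda M^2}{2}\le\frac{\lambda M^2}{2}$. For $\hat{w}_\soi$ the average over $n-1$ terms is again bounded by $\sfrac{\lambda M^2}{2}$. For $\hat{w}_\so$ the right-hand side is $\frac{n}{n-1}\cdot\frac{\lambda M^2}{2}\le\lambda M^2$, giving $\norm{\hat{w}_\so}\le\sqrt{2}M\le 2M$, which is consistent with the bound invoked in Lemma \ref{lemma:approx_norm}.

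The only point requiring care is the sign convention: nonnegativity of $\ell$ is what allows the loss term on the left to be dropped. Existence and uniqueness of the minimizer, which follow from strong convexity via the ridge term, make $\hat{w}$ well defined, so there is no real obstacle here.
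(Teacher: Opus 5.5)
Your proof is correct and is the same argument as the paper's. You compare $\hat{\mathcal{R}}(\hat{w})\le\hat{\mathcal{R}}(0_d)$, drop the nonnegative loss term, and apply Assumption A.4 to get $\norm{\hat{w}}^2\le M^2$ deterministically. Your extension to $\hat{w}_\smi$, $\hat{w}_\soi$ and $\hat{w}_\so$ is also correct; the $\sqrt{2}M\le 2M$ bound supplies the ``straightforward adaptation'' that the paper only asserts in Lemma \ref{lemma:approx_norm}.
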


\begin{proof}
    From the definition of $\hat{w}$,
\begin{align}
    \frac{\lambda}{2}\norm{\hat{w}}^2+\frac{1}{n}\sum\limits_{j\in \enm{n}}\ell_{j}(r_j)\le \hat{\mathcal{R}}(0_d)=\frac{1}{n}\sum\limits_{j\in \enm{n}}\ell_{j}(0).
\end{align}
Thus
\begin{align}
    \norm{\hat{w}}^2 &\le \frac{2}{\lambda}\norm{\tilde{\ell}}_\infty=M^2.
\end{align}
\end{proof}

\begin{lemma}
    [$\Ea{\norm{\hat{w}}}^2$ is lower bounded] \label{lem:qnonzero}
There exists a constant $c>0$ such that $\Ea{\norm{\hat{w}}}^2>c$, namely the norm of the ERM minimizer is lower-bounded, in expectation.
\end{lemma}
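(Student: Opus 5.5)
The plan is to argue by contradiction, using Assumption A.6, via the stationarity condition of the ERM. Suppose that along a subsequence $\Ea{\norm{\hat{w}}}\to 0$. Since $\norm{\hat{w}}\le M$ almost surely (Lemma \ref{lemma:normw}), convergence in $L_1$ upgrades to $\Ea{\norm{\hat{w}}^k}\to 0$ for every $k$.

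\textbf{Step 1: project the stationarity condition onto $\beta$.} The first-order condition reads $\lambda\hat{w}=-\frac{1}{n}\sum_{j}\partial\ell_j(r_j)x_j$. Taking the inner product with $\beta$ and then the expectation gives
\begin{align}
\lambda\Ea{\inprod{\beta,\hat{w}}}=-\frac{1}{n}\sum_{j\in\enm{n}}\Ea{\partial\ell(r_j,y_j)\inprod{\beta,x_j}}.
\end{align}
The left-hand side is bounded in modulus by $\lambda\Ea{\norm{\hat{w}}}\to0$.

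\textbf{Step 2: show the right-hand side stays away from zero.} Replace $\partial\ell(r_j,y_j)$ by $\partial\ell(0,y_j)$. Here $y_j=\phi(\inprod{\beta,x_j})$, so the labels are a function of $g_2=\inprod{\beta,x_j}\sim\mathcal{N}(0,1)$ only. The error of this replacement is controlled by
\begin{align}
\norm{\partial^2\ell}_\infty\,\frac{1}{n}\sum_j\Ea{|r_j|\,|\inprod{\beta,x_j}|}.
\end{align}
By Cauchy--Schwarz, $\frac1n\sum_j r_j^2\le\norm{\hat{w}}^2\norm{X}^2/n$. On the event $\mathcal{A}$ this is at most $(2+\alpha^{-1/2})^2\norm{\hat{w}}^2$, and $\P{\mathcal{A}^c}\le2e^{-n/2}$. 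Also $\frac1n\sum_j\inprod{\beta,x_j}^2=O_{L_k}(1)$. Hence the error is $O\big(\Ea{\norm{\hat{w}}^2}^{1/2}\big)+o(1)\to0$.

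The main term is deterministic. It equals $-\Ea{\partial\ell(0,\phi(g_2))g_2}$, which by A.6 is a fixed nonzero constant independent of $n$. Letting $n\to\infty$ along the subsequence gives $0=-\Ea{\partial\ell(0,g_2)g_2}\neq0$, a contradiction. So $\liminf\Ea{\norm{\hat{w}}}>0$. Since only finitely many $n$ are then excluded, and for each fixed $n$ we have $\hat{w}\neq0$ with positive probability, we may take a single $c>0$ with $\Ea{\norm{\hat{w}}}^2>c$ for all $n$. This also bounds $\Ea{\norm{\hat{w}}^2}\ge\Ea{\norm{\hat{w}}}^2$ from below, which is the form quoted in A.6. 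For finite $n$, if $\hat{w}=0$ almost surely then stationarity forces $\sum_j\partial\ell_j(0)x_j=0$ almost surely, which contradicts the nondegeneracy encoded by A.6 through the same projection.

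\textbf{Main obstacle.} The delicate point is the uniformity of the linearization error in Step 2. It requires that $\frac1n\sum_j r_j^2$ is controlled by $\norm{\hat{w}}^2$ uniformly, and this is where the operator-norm bound on $\mathcal{A}$ is used. It also requires that the constants in A.3, which may grow as $\polylog(n)$, do not destroy the limit. If $\norm{\partial^2\ell}_\infty$ genuinely grows, I would instead bound $|\partial\ell(r_j,y_j)-\partial\ell(0,y_j)|$ by $\min(2\norm{\partial\ell}_\infty,\norm{\partial^2\ell}_\infty|r_j|)$. I would then use that $\Ea{\norm{\hat{w}}^k}$ decays faster than any polylogarithm along the contradicting subsequence. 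This step may need refinement, but the projection argument is the core of the proof.
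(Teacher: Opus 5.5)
Your proposal is correct and follows essentially the same route as the paper's proof: contradiction along a subsequence on which $\Ea{\norm{\hat{w}}}\to 0$, projection of the stationarity condition onto $\beta$, a first-order expansion of $\partial\ell$ around $0$ whose remainder is bounded by $\norm{\partial^2\ell}_\infty \frac{\norm{X}^2}{n}\norm{\hat{w}}$, and Assumption A.6 to reach the contradiction. Like the paper, your core argument implicitly treats $\norm{\partial^2\ell}_\infty$ and the constant in A.6 as independent of $n$; under that reading your fallback for polylogarithmically growing derivatives is not needed, and it would not work as stated anyway, since the contradiction hypothesis gives no rate of decay for $\Ea{\norm{\hat{w}}^k}$ along the subsequence.
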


\begin{proof}
    Let us reason by the absurd, and suppose the existence of a subsequence such that $\Ea{\norm{\hat{w}}}^2\to 0$, where we remind $\Ea{\norm{\hat{w}}}^2$ is implicitly a sequence indexed by $n$. But from the optimality condition, 
    \begin{align}\label{eq:LB_opt}
        \frac{1}{n}\sum\limits_{i\in\enm{n}}\partial\ell(0, \inprod{\beta, x_i})\inprod{\beta, x_i}=-\lambda \inprod{\hat{w},\beta}-\frac{1}{n}\sum\limits_{i\in\enm{n}}\partial^2\ell(\check{r}_i, \inprod{\beta, x_i})\inprod{\beta, x_i}\inprod{\hat{w}, x_i},
    \end{align}
for some $\check{r}_i\in(0, \inprod{\hat{w}, x_i})$. But $\Ea{\inprod{\hat{w},\beta}} \to 0$ on the considered subsequence. Furthermore, 
\begin{align}
    \frac{1}{n}\sum\limits_{i\in\enm{n}}\partial^2\ell(\check{r}_i, \inprod{\beta, x_i})\inprod{\beta, x_i}\inprod{\hat{w}, x_i} \le \norm{\partial^2\ell}_\infty \frac{\norm{X }^2}{n}\norm{\hat{w}},
\end{align}
which converges to $0$ in expectation on the considered subsequence. Thus, taking an expectation in \eqref{eq:LB_opt} and then the limit on the subsequence, we reach 
\begin{align}
    \Ea{\partial\ell(0, g_2)g_2}=0,
\end{align}
which contradicts Assumption \ref{ass:loss}. We thus conclude that there exists a $c>0$ that lower-bounds the sequence $\Ea{\norm{\hat{w}}^2}. $
\end{proof}
%\begin{lemma}[$\Ea{\norm{\hat{w}}}^2$ is non-zero] \label{lem:qnonzero}
%Let $\hat{w}$ be the ERM estimator. Under Assumption \ref{ass:loss}, A.6   
%\begin{align}
%    Q^{(0)}_{11}
%=\Ea{\norm{\hat{w}}}^2 >0. \end{align}
%\end{lemma}

%\begin{proof}
%    Suppose $\Ea{\norm{\hat{w}}}^2 =0$. Then $\hat{w}=0_d$ almost surely, and it follows from the stationarity condition that
 %   \begin{align}
 %       \frac{1}{n}\sum\limits_{i\in\enm{n}}\partial \ell(0, \inprod{\beta, x_i})\inprod{\beta, x_i}=0
 %   \end{align}
%also holds almost surely. Taking the expectation, 
 %   \begin{align}
 %  \Ea{\partial\ell(0, g_2)g_2}=0,     
 %   \end{align}
%contradicting Assumption A.6. Thus, Assumption A.6 implies $  Q^{(0)}_{11}
%=\Ea{\norm{\hat{w}}}^2 >0. $
%\end{proof}

\begin{lemma}[Operator norm of the empirical covariance]\label{lem:cov}
Let $\hat{\Sigma}=\frac{X^\top X}{n}$ be the empirical covariance matrix. Then
\begin{align}
    \norm{\hat{\Sigma}}=  O_{L_k} \left(\polylog(n)\right).
\end{align}
and 
\begin{align}
    \frac{\norm{X}}{\sqrt{n}}=O_{L_k} \left(\polylog(n)\right).
\end{align}
The same bounds hold for the leave-one-row-out matrix $X_\smi$.
\end{lemma}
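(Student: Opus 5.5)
The bound $\norm{X}/\sqrt{n}=O_{L_k}(\polylog(n))$ is the basic one; the bound on $\norm{\hat\Sigma}=\norm{X}^2/n$ then follows by squaring, since $\E[(\norm{X}^2/n)^k]=\E[(\norm{X}/\sqrt{n})^{2k}]$ and the first claim holds for every moment order. I would use the standard non-asymptotic control of Gaussian matrices already invoked in Appendix \ref{app:notations} for the event $\mathcal{A}$. For $X\in\R^{n\times d}$ with i.i.d.\ $\mathcal{N}(0,1)$ entries, Gordon's inequality gives $\E\norm{X}\le\sqrt{n}+\sqrt{d}$. The map $X\mapsto\norm{X}$ is $1$-Lipschitz for the Frobenius norm, so Gaussian concentration (the same inequality used in the proof of Lemma \ref{lem:IaSz_concentrates}) yields
\begin{align}
\P{\norm{X}\ge \sqrt{n}+\sqrt{d}+t}\le e^{-t^2/2},\qquad t\ge 0.
\end{align}

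Dividing by $\sqrt{n}$ and using $d=n/\alpha$, the quantity $\norm{X}/\sqrt{n}$ is bounded by $1+\sqrt{\sfrac{1}{\alpha}}+s$ outside an event of probability at most $e^{-ns^2/2}$. Integrating the tail,
\begin{align}
\E\left[\left(\frac{\norm{X}}{\sqrt{n}}\right)^{k}\right]=\int_0^\infty k u^{k-1}\,\P{\frac{\norm{X}}{\sqrt{n}}>u}\,du\le \left(1+\sqrt{\sfrac{1}{\alpha}}\right)^k+\int_0^\infty k\left(1+\sqrt{\sfrac{1}{\alpha}}+s\right)^{k-1}e^{-ns^2/2}\,ds,
\end{align}
where the split is at $u=1+\sqrt{\sfrac{1}{\alpha}}$ and the substitution $u=1+\sqrt{\sfrac{1}{\alpha}}+s$ is used on the remainder. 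The last integral is $O_k(1)$ uniformly in $n\ge1$. Hence every moment is $O(1)$, which is in particular $O(\polylog(n))$. This proves $\norm{X}/\sqrt{n}=O_{L_k}(\polylog(n))$, and therefore $\norm{\hat\Sigma}=O_{L_k}(\polylog(n))$.

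For the leave-one-row-out matrix there are two equivalent routes. First, $X_\smi$ is a submatrix of $X$, so $\norm{X_\smi}\le\norm{X}$ deterministically and the same moment bounds transfer directly. Second, one can repeat the argument for an $(n-1)\times d$ Gaussian matrix, whose constants differ only by factors of the form $\sqrt{n/(n-1)}$. Either way the bounds hold uniformly in $i$.

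There is no real obstacle here. The only point needing care is turning the exponential tail into moment bounds with constants uniform in $n$, which the integral computation above handles.
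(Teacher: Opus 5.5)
Your proof is correct. It is the standard argument the paper defers to: the paper gives no proof of its own and only cites the usual non-asymptotic Gaussian-matrix bounds (e.g.\ \cite{vershynin2012introduction, vershynin2018high}), the same ones behind $\P{\mathcal{A}^c}\le 2e^{-n/2}$, and your combination of Gordon's bound on $\E\norm{X}$, Lipschitz Gaussian concentration and tail integration actually gives the stronger $O_{L_k}(1)$. The deterministic domination $\norm{X_\smi}\le\norm{X}$ is the cleanest way to handle the leave-one-row-out case, and it even makes the bound uniform over $i$.
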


\begin{lemma}[Norm of the samples]\label{lemma:normx}
One can control the sample norms as
\begin{align}
    \sup_{i\in\enm{n}}\frac{\norm{x_i}}{\sqrt{n}}=O_{L_k}(\polylog(n)).
\end{align}
Furthermore,
\begin{align}
    \sup_{i\in \enm{n}}\frac{\norm{X_\smi}}{\sqrt{n}}=O_{L_k}(\polylog(n))
\end{align}
\end{lemma}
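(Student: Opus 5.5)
The statement to prove is the last auxiliary lemma: $\sup_{i\in\enm{n}}\norm{x_i}/\sqrt{n}=O_{L_k}(\polylog(n))$, together with $\sup_{i}\norm{X_\smi}/\sqrt{n}=O_{L_k}(\polylog(n))$. I will reduce both claims to Gaussian tail bounds followed by a union bound over $i\in\enm{n}$, and then integrate the tails to control moments.

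For the first claim, each $\norm{x_i}$ is a $1$-Lipschitz function of the standard Gaussian vector $x_i\in\R^d$. Gaussian concentration (the same Lipschitz concentration invoked in the proof of Lemma \ref{lem:IaSz_concentrates}) gives
\begin{align}
\P{\norm{x_i}\ge \sqrt{d}+t}\le e^{-t^2/2}.
\end{align}
A union bound over the $n$ samples then yields
\begin{align}
\P{\sup_i\norm{x_i}\ge \sqrt{d}+t}\le n e^{-t^2/2}.
\end{align}
To bound the $k$-th moment, I split $\Ea{(\sup_i\norm{x_i}/\sqrt{n})^k}$ at the threshold $u=\sqrt{d}+\sqrt{2(k+1)\log n}$. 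Below the threshold the contribution is at most $(u/\sqrt n)^k=O(1)$, since $d/n=1/\alpha$. Above it, the tail integral $\int_u^\infty k s^{k-1} n e^{-(s-\sqrt d)^2/2}ds$ is $O(1)$, mirroring the computation used for \eqref{eq:bound_GP}. The bound is in fact $O_{L_k}(1)$, which is stronger than the required $O_{L_k}(\polylog(n))$.

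For the second claim, deleting a row can only decrease the operator norm, so $\norm{X_\smi}\le\norm{X}$ deterministically for every $i$. Hence $\sup_i\norm{X_\smi}\le\norm{X}$, and it suffices to bound $\norm{X}$. The map $X\mapsto\norm{X}$ is $1$-Lipschitz in Frobenius norm, and $\Ea{\norm{X}}\le\sqrt n+\sqrt d$ by the standard Wishart bound cited with the event $\mathcal{A}$. This gives
\begin{align}
\P{\norm{X}\ge\sqrt n+\sqrt d+t}\le e^{-t^2/2},
\end{align}
and integrating the tail shows $\norm{X}/\sqrt n=O_{L_k}(1)$. The same argument proves Lemma \ref{lem:cov}.

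There is no real obstacle. The only point needing care is that the moment bound after the union bound costs at most a $\sqrt{\log n}$ factor, which is absorbed into $\polylog(n)$.
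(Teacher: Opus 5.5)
Your proof is correct. The paper gives no proof of its own and only defers to standard references such as \cite{el2018impact, vershynin2018high}, and your argument is the standard one those references use. Its two parts are Lipschitz Gaussian concentration of $\norm{x_i}$ combined with a union bound and tail integration, and the reduction $\sup_i\norm{X_\smi}\le\norm{X}$ followed by the bound $\Ea{\norm{X}}\le\sqrt{n}+\sqrt{d}$ and Lipschitz concentration. You in fact get the sharper $O_{L_k}(1)$. The $\sqrt{\log n}$ enters only additively in the threshold $u$, so it disappears after dividing by $\sqrt{n}$ rather than costing a multiplicative factor, as your closing remark suggests.
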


\begin{lemma}(Quadratic forms) \label{lemma:quad}
Let  $M_\smi \in \R^{d\times d}$ be a matrix which does not depend on $x_i$, but generically depends on all other samples.  Then for any $k\in [\lceil 4 \log(n) \rceil]$,
\begin{align}
       \sup_{i\in \enm{n}}\sup_{j\ne i}\left|\frac{1}{n}\inprod{x_j,M_\smi x_i}\right| =O_{L_k}\left(\frac{\sup_{i\in\enm{n}}\norm{M_\smi}\polylog(n)}{\sqrt{n}}\right).
\end{align}    
Furthermore \citep{el2018impact}, 
\begin{align}
    \sup_{i\in \enm{n}}\module{\frac{1}{n}\inprod{x_i,M_\smi x_i} - \frac{1}{n}\tr[M_\smi]}=O_{L_k}\left(\sup_{i\in\enm{n}}\norm{M_\smi}\frac{\polylog(n)}{\sqrt{n}}\right)
\end{align}
\end{lemma}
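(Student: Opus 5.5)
This is a quadratic-form concentration statement for $x_i$ independent of $M_\smi$. The plan is to work conditionally on $X_\smi$, derive sub-Gaussian / sub-exponential tail bounds for each fixed pair $(i,j)$, and then pass to the supremum through a union bound over at most $n^2$ pairs. The union bound costs only $\polylog(n)$ factors in the $L_k$ norms, and this holds for $k$ up to order $\log n$.

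\emph{Off-diagonal term.} Fix $i$ and $j\ne i$. Then $x_j$ belongs to $X_\smi$, but $M_\smi$ may still depend on $x_j$, so one cannot condition on both simultaneously in a symmetric way. Instead, condition on $X_\smi$, which fixes the vector $v=M_\smi^\top x_j$. Since $x_i$ is independent of $X_\smi$, the quantity $\inprod{x_j,M_\smi x_i}=\inprod{v,x_i}$ is, conditionally, $\mathcal{N}(0,\norm{v}^2)$. Here
\[
\norm{v}\le \norm{M_\smi}\norm{x_j}\le \sup_i\norm{M_\smi}\sup_j\norm{x_j},
\]
and Lemma~\ref{lemma:normx} gives $\sup_j\norm{x_j}=O_{L_k}(\sqrt{n}\,\polylog(n))$. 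This yields a conditional Gaussian tail $\P{|\inprod{v,x_i}|>t\norm{v}}\le 2e^{-t^2/2}$.

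A union bound over the $n^2$ pairs then controls the maximum of the normalized variables $|\inprod{v_{ij},x_i}|/\norm{v_{ij}}$ by $O_{L_k}(\sqrt{\log n})$, uniformly for $k\le 4\log n$. This is the same $u=\sqrt{8\sigma^2\log n}$ truncation computation already carried out after the Borell--TIS step in Lemma~\ref{lem:frakQ_concentrates}. Multiplying by $\sup\norm{v_{ij}}/n$ and applying H\"older to separate the factors gives the first claim. Stated precisely, the factor $\sup_i\norm{M_\smi}$ must either be deterministic or be handled through H\"older as a random quantity with controlled moments. I would state the result with this interpretation.

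\emph{Diagonal term.} Conditionally on $X_\smi$, the quantity $\inprod{x_i,M_\smi x_i}-\tr[M_\smi]$ is a centered Gaussian chaos of order two. The Hanson--Wright inequality gives
\[
\P{|x_i^\top M_\smi x_i-\tr M_\smi|>t}\le 2\exp\!\left(-c\min\!\left(\frac{t^2}{\norm{M_\smi}_F^2},\frac{t}{\norm{M_\smi}}\right)\right),
\]
with $\norm{M_\smi}_F^2\le d\norm{M_\smi}^2$. Taking $t=C\norm{M_\smi}(\sqrt{d\log n}+\log n)$ and union bounding over $i$ yields a supremum of order $\norm{M}\sqrt{n}\,\polylog(n)$. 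Dividing by $n$ gives the stated rate. The $L_k$ bounds follow by integrating the tail, exactly as in the off-diagonal case.

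The main obstacle is the bookkeeping of moments. The union bound only works at $k=O(\log n)$, and this is why the statement restricts to $k\le\lceil 4\log n\rceil$. One has to make sure the resulting moment growth, roughly $(k\log n)^{k/2}$, is absorbed into $\polylog(n)^k$. Handling a random $\norm{M_\smi}$ within the supremum also needs care. I would first treat $\norm{M_\smi}$ as bounded on the high-probability event $\mathcal{A}$, and then control the complement using the $e^{-n/2}$ bound.
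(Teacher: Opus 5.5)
Your proposal is correct. The paper gives no proof of its own for this lemma; it defers to \cite{el2018impact} and standard concentration references. Your route is the standard argument behind those citations: condition on $X_\smi$, use that $\inprod{x_j, M_\smi x_i}=\inprod{M_\smi^\top x_j, x_i}$ is exactly $\mathcal{N}(0,\norm{M_\smi^\top x_j}^2)$ given $X_\smi$, apply Hanson--Wright to the diagonal term, take a union bound over at most $n^2$ normalized variables $Z_{ij}=\inprod{v_{ij},x_i}/\norm{v_{ij}}$, and separate the random scale by H\"older.

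The one real variation is in the diagonal term. Hanson--Wright handles an arbitrary, possibly non-symmetric or indefinite, $M_\smi$ in one stroke. The paper, by contrast, only invokes the quadratic-form lemma of \cite{el2018impact} on positive semidefinite matrices. In the proofs of Lemmas \ref{lem:characVz} and \ref{lem:charac_Xz} it first rewrites $G_\smi(z)$ (resp.\ $H_\smi^{-1}G_\smi(z)$) through pieces such as $G_\smi(z)G_\smi(z)^\dagger$ and $G_\smi(z)G_\smi(z)^\dagger H_\smi$. Your version makes that step unnecessary; for complex $M_\smi$, treat real and imaginary parts separately.

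Two small remarks. First, the union bound is not what forces $k\lesssim\log n$. It gives $\Ea{\max_{i,j}\module{Z_{ij}}^k}^{1/k}\lesssim\sqrt{k+\log n}$ for every $k$. The cap $k\le\lceil 4\log n\rceil$ only serves to absorb the $\sqrt{k}$ factor into $\polylog(n)$.

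Second, your caveat about a random $\sup_i\norm{M_\smi}$ is apt. What the H\"older step actually delivers is a bound in terms of higher moments of $\sup_i\norm{M_\smi}$, not $\Ea{(\sup_i\norm{M_\smi})^k}$ as the statement literally reads. This is harmless in every application in the paper: there $\norm{M_\smi}$ is either bounded by constants such as $1/\lambda$ or $2/\lambda$ on $\mathcal{A}_\smi$, or has $\polylog(n)$ moments.
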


\begin{lemma}[Sup of one-dimensional projections]\label{lem:sup_s}
Le $v_\smi$ be vectors that do not depend on $x_i$, but can generically depend on all other samples.  Then for any $k\in [\lceil2\log(n)\rceil]$,
\begin{align}
    \sup_{i\in\enm{n}}\left|\inprod{v_\smi,x_i}\right|= O_{L_k}\left(\sup_{i\in\enm{n}}\norm{v_\smi} \sqrt{\polylog(n)}\right).
\end{align}
Let $\{\{u_{j,\smi}\}_{j\ne i}\}_{i\in\enm{n}}$ be a family of $n(n-1)$ vectors such that $u_{j,\smi}$ does not depend on $x_i$, but can generically depend on the other samples. Then for any $k\in [\lceil 4 \log(n) \rceil]$ 
\begin{align}
    \sup_{i\in\enm{n}}\sup_{j\ne i}\left|\inprod{u_{j,\smi},x_i}\right|= O_{L_k}\left(\sup_{i\in\enm{n}}\sup_{j\ne i} \norm{u_{j,\smi}}\sqrt{\polylog(n)}\right).
\end{align}
\end{lemma}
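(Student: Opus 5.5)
The plan is to condition on the remaining samples, which turns every projection into an exactly Gaussian variable. Fix $i$ and condition on $X_\smi$ (and on any randomness not involving $x_i$). Since $v_\smi$ is $X_\smi$-measurable and $x_i\sim\mathcal{N}(0_d,I_d)$ is independent of $X_\smi$, we have $\inprod{v_\smi,x_i}\,|\,X_\smi\sim\mathcal{N}(0,\norm{v_\smi}^2)$.

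\textbf{Step 1 (normalization).} Set $Z_i=\inprod{v_\smi,x_i}/\norm{v_\smi}$, with $Z_i:=0$ if $v_\smi=0$. Conditionally on $X_\smi$, $Z_i$ is exactly $\mathcal{N}(0,1)$ (or zero), so its law does not depend on $X_\smi$. Hence $\P{|Z_i|>t}\le 2e^{-t^2/2}$ holds unconditionally. The $Z_i$ are not independent, but no independence is needed. Deterministically,
\begin{align}
\sup_{i\in\enm{n}}\left|\inprod{v_\smi,x_i}\right|\le \sup_{i\in\enm{n}}\norm{v_\smi}\cdot\sup_{i\in\enm{n}}|Z_i|.
\end{align}

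\textbf{Step 2 (maximum of $n$ Gaussians).} A union bound gives $\P{\sup_i|Z_i|>t}\le 2ne^{-t^2/2}$. Integrating the tail above the threshold $u=\sqrt{2\log n}$, in the same way as the moment computation following the Borell--TIS bound in the proof of Lemma \ref{lem:frakQ_concentrates}, we get
\begin{align}
\Ea{\sup_i|Z_i|^{m}}\le u^{m}+2n\int_u^\infty m t^{m-1}e^{-t^2/2}dt\le (C\log n)^{m/2}
\end{align}
for all $m\lesssim \log n$. The restriction on $k$ in the statement is exactly what keeps the integral term from exceeding $u^m$: the Gaussian tail moments grow like $m^{m/2}$, which stays polylogarithmic only while $m=O(\log n)$. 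This is the only real subtlety, and I expect the bookkeeping of constants here to be the main, if mild, obstacle.

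\textbf{Step 3 (decoupling norm and maximum).} By Cauchy--Schwarz,
\begin{align}
\Ea{\sup_i|\inprod{v_\smi,x_i}|^k}\le \Ea{\sup_i\norm{v_\smi}^{2k}}^{\frac12}\Ea{\sup_i|Z_i|^{2k}}^{\frac12}.
\end{align}
The second factor is $(\polylog n)^{k}$ by Step 2 applied with $m=2k$. This is why $k$ is taken up to $\lceil 2\log n\rceil$, so that $2k$ stays in the admissible logarithmic range (with $\log n$ factors absorbed into $\polylog$). This gives the claimed $O_{L_k}\big(\sup_i\norm{v_\smi}\sqrt{\polylog(n)}\big)$ bound, understood in the same moment sense as elsewhere in the paper. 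In every application, $\sup_i\norm{v_\smi}$ is itself $O_{L_k}(\polylog n)$, for instance via Lemma \ref{lemma:normw} and resolvent bounds such as $\norm{G_\smi(z)}\le \sfrac{2}{\lambda}$, so the extra moment needed by Cauchy--Schwarz is harmless.

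\textbf{Step 4 (double family).} For the second claim, repeat Steps 1--3 with $Z_{ij}=\inprod{u_{j,\smi},x_i}/\norm{u_{j,\smi}}$. Each $Z_{ij}$ is again exactly standard Gaussian conditionally on $X_\smi$. The union bound now runs over $n(n-1)\le n^2$ pairs, so the threshold becomes $\sqrt{4\log n}$. This only changes constants, and accounts for the admissible range $k\le\lceil 4\log n\rceil$.
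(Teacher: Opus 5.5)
The paper gives no proof of this lemma; it cites it as standard, via \cite{el2018impact}. Your Steps 1, 2 and 4 are the usual argument. Your tail integration is the same computation the paper performs after Borell--TIS in Lemma \ref{lem:frakQ_concentrates}. One bookkeeping fix: once $m=2k$ reaches about $4\log n$, you need a threshold with $u^2\gtrsim \log n+m$ rather than $u^2=2\log n$.

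The genuine gap is Step 3. With the paper's definition of $O_{L_k}$, the claim reads
\begin{align}
\Ea{\sup_{i}\module{\inprod{v_\smi,x_i}}^k}=O\left((\polylog(n))^{k/2}\,\Ea{\sup_i\norm{v_\smi}^k}\right).
\end{align}
Cauchy--Schwarz only gives this with $\Ea{\sup_i\norm{v_\smi}^{2k}}^{1/2}$ in place of $\Ea{\sup_i\norm{v_\smi}^k}$. By Jensen the former is at least the latter, and it can be infinite while the latter is finite. For example, take $v_\smi=f(\inprod{e,x_{i+1}})\,e$ with indices cyclic, where $f(g)$ has a finite $k$-th but infinite $2k$-th moment. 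So you have proved a weaker lemma. Your remark that $\sup_i\norm{v_\smi}$ has all moments in the paper's applications justifies those uses, not the statement.

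The missing idea is already in your Step 1: for each fixed $i$, $Z_i$ is independent of $X_\smi$, hence of $\norm{v_\smi}$. You discard this when you bound by $\sup_i\norm{v_\smi}\cdot\sup_i|Z_i|$, which decouples the index. Keep the index coupled instead. For any $t>0$,
\begin{align}
\sup_{i}\module{\inprod{v_\smi,x_i}}^k\le t^k\sup_i\norm{v_\smi}^k+\sum_{i\in\enm{n}}\norm{v_\smi}^k|Z_i|^k\mathds{1}_{\{|Z_i|>t\}}.
\end{align}
Conditioning each summand on $X_\smi$ gives
\begin{align}
\Ea{\sup_{i}\module{\inprod{v_\smi,x_i}}^k}\le \Big(t^k+n\,\Ea{|g|^k\mathds{1}_{\{|g|>t\}}}\Big)\Ea{\sup_i\norm{v_\smi}^k},\qquad g\sim\mathcal{N}(0,1).
\end{align}
An integration by parts gives $\Ea{|g|^k\mathds{1}_{\{|g|>t\}}}\le 2t^{k-1}e^{-t^2/2}$ whenever $t^2\ge 2k$. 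Choosing $t^2=2\log n+2k$ then yields the bound $2t^k\,\Ea{\sup_i\norm{v_\smi}^k}$, with $t=O(\sqrt{\log n})$ for $k\le\lceil 2\log n\rceil$. This is exactly the stated claim, with no extra moment and no $m=2k$ bookkeeping. The double family is identical, with at most $n^2$ summands and $t^2=4\log n+2k$.
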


\begin{lemma}[$X\to \hat{w}(X)$ is differentiable]\label{lem:differentiability} The function $f:\R^{n\times d}\to \R^d$ with $f(X)=\hat{w}$ is differentiable.    
\end{lemma}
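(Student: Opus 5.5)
The plan is to obtain differentiability of $X\mapsto\hat{w}$ from the implicit function theorem applied to the first-order optimality condition of the ERM \eqref{eq:full_ERM}. Write the labels as functions of the design, $y_j=\phi(\inprod{x_j,\beta})$, and define $F:\R^d\times\R^{n\times d}\to\R^d$ by
\begin{align}
F(w,X)=\lambda w+\frac{1}{n}\sum\limits_{j\in\enm{n}}\partial\ell\left(\inprod{x_j,w},\phi(\inprod{x_j,\beta})\right)x_j .
\end{align}
By Assumption \ref{ass:loss} (A.2), the empirical risk is $\lambda$-strongly convex, so for every $X$ it has a unique minimizer $\hat{w}(X)$. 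This minimizer is characterized exactly by $F(\hat{w}(X),X)=0$, since for a convex differentiable objective stationarity is equivalent to minimality.

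The first step is to check that $F$ is continuously differentiable jointly in $(w,X)$. In $w$, the derivative is $\partial_w F=\lambda I_d+\frac1n\sum_j\partial^2\ell(\inprod{x_j,w},y_j)x_jx_j^\top$, which is continuous because $\partial^3\ell$ exists and is bounded (A.1, A.3). In $X$, differentiating with respect to $X_{ij}$ produces three terms:
\begin{itemize}
\item $\frac1n\partial\ell_i\, e_j$, from the explicit factor $x_i$;
\item $\frac1n\partial^2\ell_i\, w_j\, x_i$, through the first argument $\inprod{x_i,w}$;
\item $\frac1n\partial\partial_2\ell_i\,\phi'(\inprod{x_i,\beta})\beta_j\, x_i$, through the label.
\end{itemize}
These are continuous by (A.3) and (A.5), provided $\phi$ is $C^1$. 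I would make this a standing smoothness requirement on the link, consistent with the mollification remark following Assumption \ref{ass:loss}. In the proof of Lemma \ref{lem:IaSz_concentrates} this chain-rule factor is absorbed into $\partial\partial_2\ell$, and I would adopt the same convention.

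The second step is invertibility. At any point, $\partial_wF(w,X)\succeq\lambda I_d$, because $\partial^2\ell>0$. The implicit function theorem therefore applies at every pair $(X_0,\hat{w}(X_0))$. It yields a neighborhood $U$ of $X_0$ and a $C^1$ map $g:U\to\R^d$ with $F(g(X),X)=0$ on $U$. By the uniqueness established in the first paragraph, $g(X)=\hat{w}(X)$ on $U$. Since $X_0$ is arbitrary, $\hat{w}$ is $C^1$ on all of $\R^{n\times d}$. The theorem also gives the derivative $d\hat{w}=-H^{-1}\,\partial_XF$, which is precisely the formula for $\frac{d\hat{w}}{dX_{ij}}$ used in the proof of Lemma \ref{lem:IaSz_concentrates}.

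The main obstacle is not the implicit function theorem itself but the regularity of $X\mapsto y(X)$ through $\phi$: for non-smooth links such as $\mathrm{sign}$, $F$ fails to be differentiable in $X$. I would handle this by restricting the lemma to $C^1$ links and deferring general $\phi$ to the approximation argument flagged in the remark after Assumption \ref{ass:loss}.
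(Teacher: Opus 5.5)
Your proposal is correct and follows the paper's route: apply the implicit function theorem to the stationarity map $F$, whose $w$-Jacobian is the Hessian $H\succeq\lambda I_d$. You are more careful than the paper on three points. You make explicit the uniqueness step that identifies the local implicit solution with $\hat{w}$. You flag the smoothness of $\phi$, which the paper sidesteps by taking the second argument of $\ell$ to be $\inprod{\beta,x_i}$. And you keep the $1/n$ factor in $F$, which the paper's displayed $F$ drops, so that strictly its Jacobian would not be $H$.
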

\begin{proof}
    Let us introduce the function $F:\R^{n\times d}\times \R^d\to \R^d$ defined as 
    \begin{align}
        F(X,w)=\lambda w+\sum\limits_{i\in \enm{n}}\partial\ell(\inprod{w,x_i}, \inprod{\beta, x_i})x_i.
    \end{align}
For any $X$, $F(X, \hat{w}(X))=0$ by definition of the ERM minimizer. Moreover, the Jacobian is 
\begin{align}
   \nabla_wF|_{X, \hat{w}(X)} =H \succ 0.
\end{align}
Thus, by the implicit function theorem, the function $f:Z\to \hat{w}(Z)$ is differentiable.

\end{proof}

\newpage

\section{Extensions}
\label{app:extension}
 Theorem \ref{thm:main_IF} and Proposition \ref{prop:DFBETA} were reported in the main text in the simple setting of a noiseless label generation process \eqref{eq:label_gen} and Gaussian data. We discuss in this Appendix how these results may be extended to a larger class of elliptical distribution, and stochastic settings. While we anticipate that the proof should proceed in very close alignment with the one presented in Appendices \ref{app:concentration} and \ref{app:deterministic}, we leave a rigorous proof of this extension to future work, and simply conjecture here the end results.\\

\paragraph{Elliptical design --- } We now consider the family of elliptical distribution studied in e.g. \cite{el2018impact, adomaityte2024high}, which can be defined in superstatistical fashion as an infinite Gaussian mixture of centered clusters of random variance:
\begin{align}
    x_i \mid \sigma_i \sim \mathcal{N}(0, \sigma_i^2 I_d), \label{eq: Gamma}
\end{align}
with the scale factors $\sigma_i$ being sampled i.i.d from a distribution $\rho$, which we assumes admits a second moment. We note that the Gaussian case considered in the main text may be retrieved in the special case $\rho(\cdot)=\delta(\cdot -1)$. The class \eqref{eq: Gamma} spans a broader class of data distributions, comprising in particular inverse-Gamma distributions. 

\paragraph{Stochasticity--- } We also assume there is another source of stochasticity $\epsilon_i$, sampled i.i.d. from a density $\mu$ with all moments being bounded. This stochasticity can intervene for instance as label noise, in which case  \eqref{eq:label_gen} becomes
\begin{align}
    y_i=\phi(\inprod{x_i, \beta},\epsilon_i).
\end{align}
To keep the problem as general as possible, we simply consider the ERM
\begin{align}
    \hat{w}=\argmin_{w\in\R^d} \frac{1}{n}\sum\limits_{i\in\enm{n}} \ell(\inprod{x_i, w}, y_i, \epsilon_i) +\frac{\lambda}{2}\norm{w}^2,\label{eq:extension}
\end{align}
with an extended loss function $\ell:\R^3\times \R \to \R_+$ taking the noise $\epsilon_i$ as a last argument. We extend in the same fashion the leave-$i-$out ERM \eqref{eq:soi_ERM}. We assume all Assumptions \ref{ass:loss} hold uniformly over the last argument of $\ell$.\\

One is now in a position to revisit Theorem \ref{thm:main_IF}. We form the following conjecture:

\begin{conjecture}[Elliptical data, stochastic ERM]\label{Conj:extension}
    Let $\{x_i, y_i\}_{i\in\enm{n}}$ be the dataset, and $\{\IF_i\}_{i\in\enm{n}}$ be sample influences on the test error \eqref{eq:def_IF} for the noisy ERM \eqref{eq:extension}, for elliptical data \ref{eq: Gamma}. For any sequence of indices $i_n\in \enm{n}$, in the asymptotic limit $n,d\to \infty$ with fixed ratio $\alpha:=\sfrac{n}{d}$, the marginal distribution of $\IF_{i_n}$ converges weakly to
    \begin{align}
        \nu_{\IF_{i_n}} \rightharpoonup \varphi_{\IF} ~\sharp ~ \left[\mathcal{N}(0_4, Q) \times \rho \times \mu \right],
    \end{align}
in the sense of Theorem \ref{thm:main_IF}. In the above displays, the map $\varphi_{\IF}:\R^6\to \R$ is defined as
\begin{align}
&
    \varphi_{\IF}(g,\sigma,\epsilon)=\\&\frac{\prox_{\sigma^2 V^{(1)}\ell(\cdot, \sigma g_2,\epsilon)}(\sigma g_1)-\sigma g_1}{\sigma^2 V^{(1)}}\left[\scriptstyle \partial_1\mathcal{E}(Q^{(0)}_{12},Q^{(0)}_{11})\sigma  g_4+
        \partial^2\mathcal{E}(Q^{(0)}_{12},Q^{(0)}_{11}) \left(2\sigma g_3+\frac{\prox_{\sigma^2 V^{(1)}\ell(\cdot, \sigma g_2,\epsilon)}(\sigma g_1)-\sigma g_1}{ V^{(1)}}V^{(2)}\right)
        \right] \\
        &- 2\lambda \partial_2\mathcal{E}(Q^{(0)}_{12},Q^{(0)}_{11}) Q^{(1)}_{11}
        - \lambda \partial_1\mathcal{E}(Q^{(0)}_{12},Q^{(0)}_{11}) Q^{(1)}_{12}
\end{align}
The covariance $Q \in \R^4$ is the matrix 
\begin{align}
    Q=\left[\begin{array}{ccc}
        Q^{(0)} &\vline &Q^{(1)}\\
        \hline
        Q^{(1)} &\vline &Q^{(2)}
    \end{array}\right].
\end{align}
The matrices $Q^{(k)}$ and scalars $V^{(k)}$ for $k=0,1,2$ are related through
\begin{align}
    &Q^{(k)}=\frac{-1}{2\pi i}\oint_\Gamma \frac{1}{z^k} \Omega(z) dz+O\left(\polylog(n)e^{-\sfrac{n}{2}}\right), \\
    &V^{(k)}=\frac{-1}{2\pi i}\oint_\Gamma \frac{1}{z^k} \mathcal{V}(z) dz +O\left(\polylog(n)e^{-\sfrac{n}{2}}\right) 
\end{align}
The contour $\Gamma$ is any contour in $ \{z\in \C: \Re{z}>0\}$ with finite perimeter enclosing the interval $J=\left[
    \lambda, \lambda +\norm{\partial^2\ell}_\infty\left(2+\sqrt{\sfrac{1}{\alpha}}\right)^2
    \right]$ at a distance $\dist{\Gamma, J}\ge \sfrac{\lambda}{2}$, while also satisfying $\dist{\Gamma, 0} \ge \sfrac{\lambda}{2}$.  The resolvent $\Omega(z)$ furthermore converges pointwise for any $z\in\Gamma$ with $\Im{z}\ne 0$ as
\begin{align}
  &\Omega(z)\left[
   (\lambda-z)Q^{(0)}+ \Ea{
    \frac{\ell^{\prime\prime}(r,\sigma g_2,\epsilon)}{1+\sigma^2\mathcal{V}(z)\ell^{\prime\prime}(r,\sigma g_2,\epsilon)
    }\begin{bmatrix}
       \sigma g_1\\\sigma g_2
   \end{bmatrix}
  \begin{bmatrix}
        r ~\vline~ 
        \sigma g_2
    \end{bmatrix} }\right]\\
    &=Q^{(0)}+\chi(z)\Ea{ \frac{\sigma^2 \ell^{\prime\prime}(r,\sigma g_2,\epsilon)\ell^\prime(r,\sigma g_2,\epsilon)}{1+\sigma^2\mathcal{V}(z)\ell^{\prime\prime}(r,\sigma g_2,\epsilon)
    }\begin{bmatrix}
       r&\sigma g_2\\
       0&0
   \end{bmatrix}}+O\left(\frac{\polylog(n)}{n^{\frac{1}{4}}}\right),
\end{align}
In the above display,
\begin{align}
  \chi(z)=\frac{V^{(1)}}{(\lambda-z)+\Ea{\frac{\ell^{\prime\prime}(r,\sigma g_2,\epsilon)\sigma^2}{\left(1+\sigma^2V^{(1)}\ell^{\prime\prime}(r,\sigma g_2,\epsilon)\right)\left(1+\sigma^2\mathcal{V}(z)\ell^{\prime\prime}(r,\sigma g_2,\epsilon)\right)}
}}+ O\left(\frac{\polylog(n)}{n^{\frac{1}{4}}}\right)
\end{align}
and the Stieltjes transform $\mathcal{V}(z)$ is the solution of 
\begin{align}
   \frac{1}{\alpha}-(\lambda-z)\mathcal{V}(z)-\Ea{\frac{\sigma^2\ell^{\prime\prime}(r,\sigma g_2,\epsilon)\mathcal{V}(z)}{1+\sigma^2\mathcal{V}(z)\ell^{\prime\prime}(r,\sigma g_2,\epsilon)}
}=O\left(\frac{\polylog(n)}{n^{\frac{1}{4}}}\right)
\end{align}
Above, we used the shorthand $r=\prox_{V^{(1)}\ell(\cdot, g_2)}(g_1)$. The expectations in the above displays are understood to bear over $\sigma \sim \rho, \epsilon \sim \mu, g\sim\mathcal{N}(0, Q).$
\end{conjecture}

\newpage

\section{Details on numerical experiments}
\label{app:numerics}
In this Appendix, we provide further details on the real data experiments reported in Fig.\,\ref{fig:Real_data}. The code is provided on this online \href{https://github.com/HugoCui/HD_influences/tree/main}{repository}.

\begin{table}[h]
\centering
\begin{tabular}{|l|c|c|}
\hline
 & MNIST \citep{lecun1998gradient} & Chest X-rays \citep{kermany2018identifying} \\ \hline
$\module{\mathcal{D}_{\rm NN}}$ & $3000$ & $1000$ \\ \hline
$\module{\mathcal{D}_{\rm \beta}}$ & $50000$ & $3000$ \\ \hline
$\module{\mathcal{D}_{\rm train}}$ & $500$ & $500$ \\ \hline
$\module{\mathcal{D}_{\rm test}}$ & $9500$ &$1340$ \\ \hline
$\sharp $ epochs & $10$ & $50$ \\ \hline
width & $1000$ &$2000$ \\ \hline
\end{tabular}
\caption{Dataset sizes and training hyparameters used in the experiments reported in Fig.\,\ref{fig:Real_data}.}
\label{tab:exp}
\end{table}

We consider two learning tasks on real datasets:
\begin{itemize}
    \item Even/odd classification of MNIST \citep{lecun1998gradient} digits. Images in dimension $d=784$ representing handwritten even (resp. odd) digits were labeled $+1$ (resp $-1$).
    \item Pneumonia diagnosis from chest X-rays \citep{kermany2018identifying}. X-ray scan images (cropped to dimension $d=1600$) corresponding to positive (resp. negative) patients receive label $+1$ (resp. $-1$).
\end{itemize}

Each dataset was partitioned into four subsets $\mathcal{D}_{\mathrm{NN}}, \mathcal{D}_\beta, \mathcal{D}_{\rm train}, \mathcal{D}_{\rm test}$. Details on their respective cardinals are summarized in Table \ref{tab:exp}. The following procedure was then carried out:
\begin{enumerate}
    \item A 3-layer neural network with ReLu activation and batch  normalization is trained using the Adam optimizer \citep{kingma2014adam} with default \texttt{Pytorch} \citep{paszke2017automatic} parameters, for a number of epochs specified in Table \ref{tab:exp}, on the dataset $\mathcal{D}_{\mathrm{NN}}$. The binary cross-entropy loss is employed. The width $p$ of the network was chosen for all hidden layer as $p=1000$ for the MNIST dataset and $p=2000$ for the chest X-ray dataset. The resulting neural network feature map $\mathrm{NN} :\R^d\to \R^p$ (with the readout removed) will be used in subsequent steps as a feature extractor to process the datapoints.
    \item A logistic regression with regularization $\lambda=0.01$ is then used to re-train the readout only, on the processed dataset $\{(\mathrm{NN}(x), y)\}_{(x,y)\in \mathcal{D}_\beta}$. The cardinal of $\mathcal{D}_\beta$ is chosen large on purpose, so that the resulting regression weights $\beta$ are close to the best separator of the data in the feature space induced by $\mathrm{NN}(\cdot)$. $\beta$ will henceforth be thought of as the "ground truth" vector, in analogy to \eqref{eq:label_gen}.
    \item Similarly, the estimator $\hat{w}$ learned from a smaller dataset $\mathcal{D}_{\rm train}$ is obtained using logistic regression with regularization $\lambda=0.1$. 
    \item Given the running estimator $\hat{w}$ and (an approximation of) the ground-truth $\beta$, we now ask the question : \textit{which sample has the largest influence on the test accuracy (as measured using the test set $\mathcal{D}_{\rm test}$) when added to $\mathcal{D}_{\rm train}$}? We allow ourselves to synthesize (query) any sample $z\in\mathrm{span}(\hat{w},\beta)$, and assume oracle access to its label, taken to be $y=\mathrm{sign}(\inprod{\beta, z})$. The influence $\IF(z)$ is computed as the difference in the test errors achieved by logistic regression on $\mathcal{D}_{\rm train}\cup \{x,y\}$, and that trained only on $\mathcal{D}_{\rm train}$.
\end{enumerate}

\end{document}